\let\latex@xfloat=\@xfloat
\def\@xfloat #1[#2]{%
  \latex@xfloat #1[#2]%
  \def\baselinestretch{1}
  \@normalsize\normalsize
  \normalsize
}
  \newtheorem{lemma}{Lemma}
  \newtheorem{proposition}{Proposition}
\def\theselectedpubs#1{
  \chapter*{Key Publications}
  \list{[\arabic{enumi}]}{
    \settowidth\labelwidth{[#1]}
    \leftmargin\labelwidth   \advance\leftmargin\labelsep
    \ifopenbib
      \listparindent -1.5em
      \advance\leftmargin-\listparindent
      \itemindent\listparindent
      \parsep 0pt%
    \fi
    \usecounter{enumi}}%
  \ifopenbib
    \def\newblock{\par}
    \let\\=\@centercr
    \@rightskip\@flushglue   \rightskip\@rightskip
    \leftskip\z@
  \else
    \def\newblock{\hskip .11em plus .33em minus -.07em}%
  \fi
  \sloppy
  \sfcode`\.=1000\relax}
\DeclareMathOperator*{\argmax}{argmax}
\DeclareMathOperator{\E}{\mathbb{E}}
\newtheorem{theorem}{Theorem}
\newtheorem{problem}{Problem}
\newcommand{\tuple}[1]{\ensuremath{\left \langle #1 \right \rangle }}
\newcommand{\I}{\mathcal{I}}
\begin{document}
\title{Artificial Intelligence for Low-Resource Communities: Influence Maximization in an Uncertain World}
\author{Amulya Yadav}
\majorfield{Computer Science}

\committee{Milind Tambe & (Chairperson) \\*
       Kristina Lerman\\*
       Aram Galstyan\\*
       Eric Rice & (Outside Member) \\*
       Dana Goldman &(External Member)}   
\begin{preface}
\prefacesection{Acknowledgment}
This thesis took a lot of time to write. If I were to live to the average human age of 60 years, then this thesis would have taken almost 10 percent of my life to write. During this period of time, a lot of good things have happened to me. I have had the chance to live in one of the world's best cities (Los Angeles) for five years. I got the opportunity to travel to 17 different countries in 5 different continents, and I also got to extend my student life by five additional years (which was one of the reasons why I came for a PhD). These five years have also made me experience a lot of things, some for the very first time. I have experienced both triumphs and defeats. There have been moments of joy interspersed with moments of sorrow. But in all of these five years, the thing that I cherish the most is that I was fortunate enough to meet and bond with an incredible group of people (who I now call my friends, mentors, and collaborators), who were there with me to share each and every moment of my life in the last five years. They were there when I was happy, and also when I was sad, and writing this thesis would have been impossible without them. As a result, each one of these people is a valued ``co-author" on this thesis.

One of the most important people during my PhD life has been my advisor, Milind Tambe.	Thank you so much Milind for taking me on under your guidance. This thesis would not have seen the light of the day, had it not been for your continuous support and encouragement. You have always given me the freedom to choose my own problems, and for that, I am grateful to you. You have not only taught me how to be a good researcher, but more importantly, you have taught me how to interact with people professionally on a day-to-day basis. I remember you mentioning that I have excellent social communication skills; those are your skills that I have tried to emulate as much as I could. Finally, thank you for creating such a positive and friendly environment for the members of your research group, as today, I am leaving with not just a PhD, but with so many close friends that I can continue to count upon (and be counted upon) in times of need. This would not have been possible without you. Thank you for being my mentor over the years but most importantly, thank you for being my friend. Thank you for making me a Doctor, Milind!

Next, I would like to thank my unofficial co-advisor, Eric Rice. Thank you so much Eric for everything that you have done for me over the years. None of the good things that have happened to me during my PhD would have happened if you had not given me the problem of this thesis in my first year. It has been an absolute pleasure learning from you and working with you. I still remember you answering questions from mean audience members during my conference presentations; and I really appreciate you looking out for me on this and other innumerable occasions. Also, that afternoon that I spent with you walking down Venice beach trying to get data from homeless youth remains by far, the coolest thing that I have done during my PhD. Thank you, Eric!

I would also like to thank my other thesis committee members: Kristina Lerman, Aram Galstyan, and Dana Goldman. Thank you so much for your invaluable feedback and mentorship over the years. This thesis would not be half as good without your timely and wonderful suggestions.

Over the years, I have also had the honor of collaborating and interacting with some great minds around the world. In particular, I would like to thank Eric Shieh and Thanh Nguyen for letting me write my first papers; Albert Jiang, Pradeep Varakantham, Phebe Vayanos, Leandro Marcolino, Eugene Vorobeychik and Hau Chan for your valuable insights that improved my research; Aida Rahmattalabi for being the most hard-working co-author that I have ever worked with; Ritesh Noothigattu for showing such great enthusiasm towards research (and almost everything else); Donnabell Dmello and Venil Noronha for painstakingly developing games for me; and finally, Robin Petering, Jaih Craddock, Laura Onasch-Vera and Hailey Winetrobe for implementing my research in the real-world. I don't think I would have been able to write my papers without your help. Thank you for making this thesis what it is!

This brings me to the present and past members of the Teamcore research group, who have created a home away from home for me during my PhD. It would have been impossible for me to spend these five years in a foreign country without the help and support of all my friends: Bryan Wilder, Elizabeth Bondi, Aida Rahmattalabi, Kai Wang,  Han Ching Ou, Biswarup Bhattacharya, Sarah Cooney, Francesco Della Fave, Albert Jiang, Will Haskell, Eric Shieh, Leandro Marcolino, Matthew Brown, Rong Yang, Jun Young Kwak, Chao Zhang, Yundi Qian, Fei Fang, Ben Ford, Elizabeth Orrico and Becca Funke. Thank you all for providing me with so many laughs over the years, and for your constant support. 

There are some people that I would like to thank in particular. Debarun Kar and Arunesh Sinha, thank you so much for serving as my partners in crime (for all our devious plans) during my PhD, both of you are no less than brothers to me, and my PhD would have been a lot less colorful had it not been for both of you. Yasaman Abbasi, thank you so much for treating me like your little brother, for bringing me so much free food, and for hiding my phone so many times :). Sara Marie McCarthy, thank you so much for all our post-gym chats on literally every topic known to mankind, for our heart-to-heart discussions on what we want to do in life, for your infectious bubbly nature, and for never ever giving me M\&M's when I wanted them. I'm going to miss you :). Shahrzad Gholami, thank you so much for being such a wonderful neighbor, I'm sorry for annoying and disturbing you so much over the years, and I promise I won't do it again :). Aaron Schlenker, thank you for trusting me with all your secrets, for those innumerable FIFA games that we played; Thanh Hong Nguyen and Haifeng Xu for being such great travel companions during conferences; and finally, Omkar Thakoor for our numerous discussions on football and life. I would also like to thank Aaron Ferber for teaching me about the world of finance, for being a reluctant gym buddy, and for being a great sport. I don't know how to even begin to say goodbye to all of you, and thus, I won't. We'll stay in touch, guys. I promise :)

There have been some other people who have played an important role during my PhD. Thank you Amandeep Singh and Vivek Tiwari for being such great friends over the years, for the constant laughs, and for keeping me sane during these five years. I look forward to many more years of friendship in the future. I would also like to thank my roommates: Vishnu Ratnam, Swarnabha Chattaraj and Pankaj Rajak. Thank you for putting up with my sub-standard cooking over the years, and for your constant care when I was sick in the last five years, and for all the amazing memories that we co-created.

Lastly, but most importantly, I would like to thank my family for their love and support throughout my life, which has made me the person I am today. Thank you Mamma and Dadi for being so protective of me, and for pampering me with delicious food over the years. Thank you to my sisters: Riya, Nupur and Shalaka for sending rakhis all these years, even though I was not in India. Thank you to my brothers: Ayush, Vasu and Mannu for being my cute little bros. Thank you Suman Mausi and Munna Mausi for always being there for me when I needed you. Thank you Ramsingh Mausaji and Mohan Mausaji for taking such good care of me throughout my life. Thank you Mamaji and Mamiji for your constant encouragement and motivation.

In particular, I want to thank my parents. Thank you Mummy and Papa: I cannot even begin to imagine how I could have made this journey if it were not for your blessings. Thank you for always supporting my dreams and for letting me stay away from home for such an extended period of time. Both of you have sacrificed your careers and lives just so that I could become something in life, and for that I am eternally grateful to you. Thank you for always putting my well-being first, and I promise that I will put your well-being first in the years to come. Thank you for always being by my side in my good and bad times. Whatever I have done (or will do) in my life, I did it to make you proud! I love you! 

\tableofcontents
\listoffigures 
\listoftables

\prefacesection{Abstract}
The potential of Artificial Intelligence (AI) to tackle challenging problems that afflict society is enormous, particularly in the areas of healthcare, conservation and public safety and security. Many problems in these domains involve harnessing social networks of under-served communities to enable positive change, e.g., using social networks of homeless youth to raise awareness about Human Immunodeficiency Virus (HIV) and other STDs. Unfortunately, most of these real-world problems are characterized by uncertainties about social network structure and influence models, and previous research in AI fails to sufficiently address these uncertainties, as they make several unrealistic simplifying assumptions for these domains. 

This thesis addresses these shortcomings by advancing the state-of-the-art to a new generation of algorithms for interventions in social networks. In particular, this thesis describes the design and development of new influence maximization algorithms which can handle various uncertainties  that commonly exist in real-world social networks (e.g., uncertainty in social network structure, evolving network state, and availability of nodes to get influenced). These algorithms utilize techniques from sequential planning problems and social network theory to develop new kinds of AI algorithms. Further, this thesis also demonstrates the real-world impact of these algorithms by describing their deployment in three pilot studies to spread awareness about HIV among actual homeless youth in Los Angeles. This represents one of the first-ever deployments of computer science based influence maximization algorithms in this domain. Our results show that our AI algorithms improved upon the state-of-the-art by $\sim$ 160\% in the real-world. We discuss research and implementation challenges faced in deploying these algorithms, and lessons that can be gleaned for future deployment of such algorithms. The positive results from these deployments illustrate the enormous potential of AI in addressing societally relevant problems.
\end{preface}

\chapter{Introduction}
\label{sec:intro}
The field of Artificial Intelligence (AI) has pervaded into many aspects of urban human living, and there are many AI based applications that we use on a daily basis. For example, we use AI based navigation systems (e.g., Google Maps, Waze) to find the quickest way home; we use AI based search engines (e.g., Google, Bing) to search for relevant information; and we use AI based personal assistant systems (e.g., Siri, Alexa) to organize our daily schedules, among other things.

Unfortunately, a significant proportion of people all over the world have not benefited from these AI technologies, primarily because they do not have access to these technologies. In particular, this lack of access to AI technologies is endemic to ``\textit{low-resource communities}", which are communities suffering from financial and social impoverishment, among other ills. Moreover, apart from lack of technology access, these communities suffer from completely different kinds of problems, which have not been tackled by AI and computer science as much \cite{Gomes2019}. For example, as shown in Figure \ref{fig:example}, homeless youth communities in North America do not have access to public health services, drug addicted people in North America \cite{rahmattalabi2019exploring} do not have access to rehabilitation facilities, and low-literate farmers in India do not have access to good governance \cite{mothilal2019optimizing}, etc. \textit{At a very high level, this thesis attempts to answer whether Artificial Intelligence can be utilized to solve the problems faced by these (and other) low-resource communities.} 

\begin{figure}[ht]    
    \centering
    \subfloat[Homeless Youth: Access to Public Health Facilities]{
    \includegraphics[height = 0.18\textwidth]{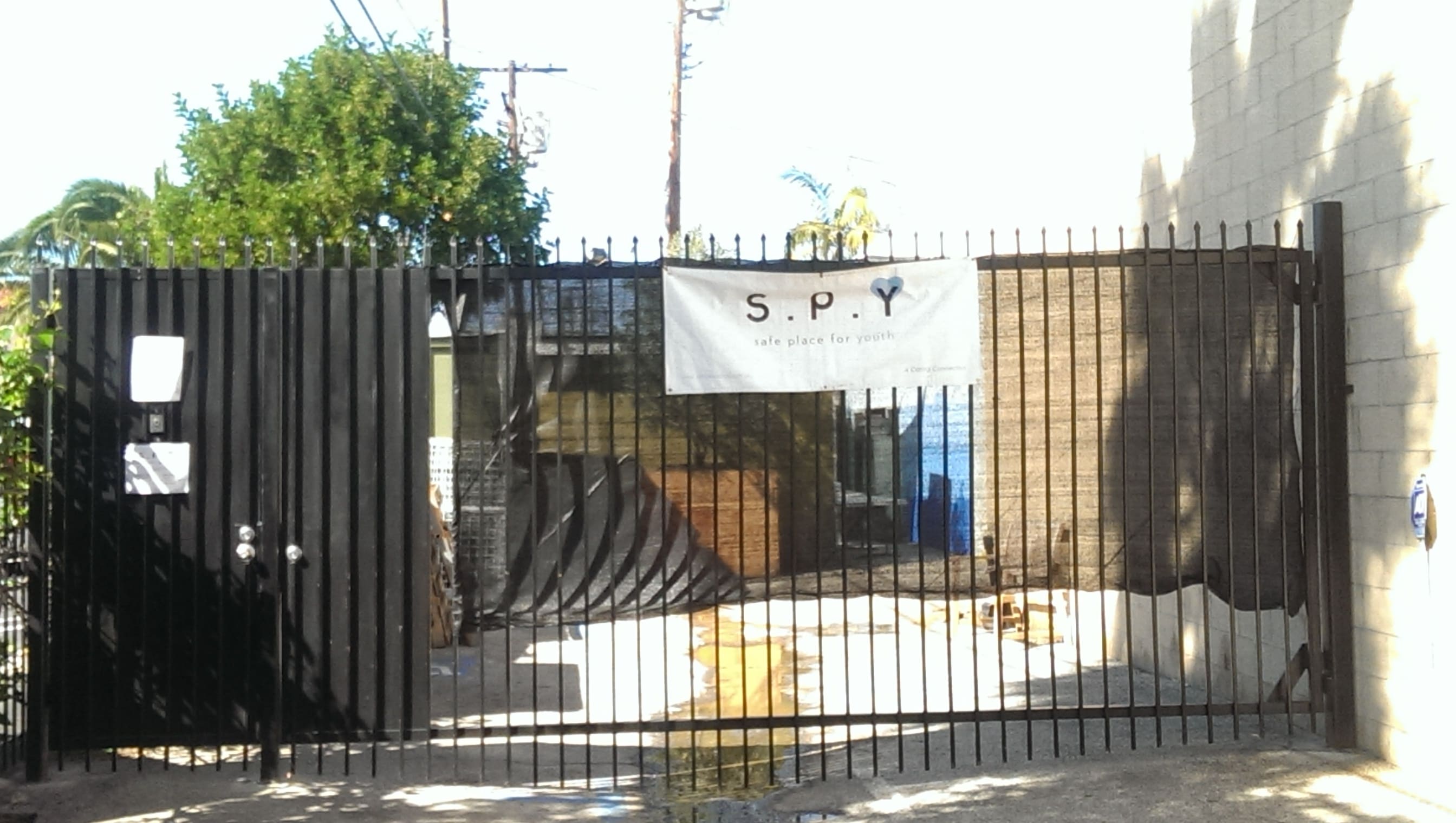}    
    }
    \subfloat[Drug Addicts: Access to Rehabilitation Facilities]{
    \includegraphics[height = 0.18\textwidth]{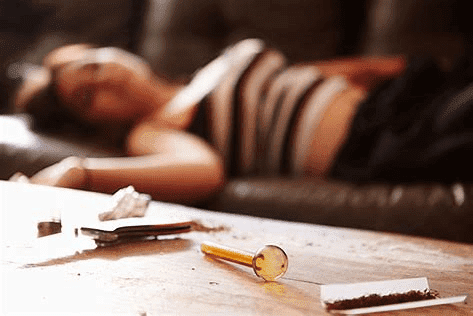}    
    }
    \subfloat[Low-Literate Farmers: Access to Grievance Redressal]{
    \includegraphics[height = 0.18\textwidth]{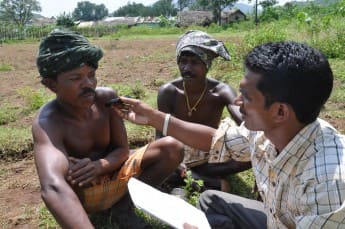}    
    }
    \caption{Low-Resource Communities and Their Problems}
    \label{fig:example}
\end{figure}

More specifically, this thesis focuses on how several challenges faced by these low-resource communities can be tackled by harnessing the real-world social networks of these communities. Since ancient times, humans have intertwined themselves into various social networks. These networks can be of many different kinds, such as friendship based networks, professional networks, etc. Besides these networks being used for more direct reasons (e.g., friendship based networks used for connecting with old and new friends, etc.), these networks also play a critical role in the formulation and propagation of opinions, ideas and information among the people in that network. In recent times, this property of social networks has been exploited by governments and non-profit organizations to conduct social and behavioral interventions among low-resource communities, in order to enable positive behavioral change among these communities. For example, non-profit agencies called homeless youth service providers conduct intervention camps periodically, where they train a small set of influential homeless youth as ``\textit{peer leaders}'' to spread awareness and information about HIV and STD prevention among their peers in their social circles. Unfortunately, such real-world interventions are almost always plagued by limited resources and limited data, which creates a computational challenge. This thesis addresses these challenges by providing algorithmic techniques to enhance the targeting and delivery of these social and behavioral interventions.

From a computer science perspective, the question of finding out the most ``\textit{influential}'' people in a social network is well studied in the field of influence maximization, which looks at the problem of selecting the $K$ (an input parameter) most influential nodes in a social network (represented as a graph), who will be able to influence the most number of people in the network within a given time periosd. Influence in these networks is assumed to spread according to a known \textit{influence model} (popular ones are independent cascade \cite{leskovec2007cost} and linear threshold \cite{chen2010scalable}). Since the field's inception in 2003 by Kempe et. al. \cite{kempe2003maximizing}, influence maximization has seen a lot of progress over the years \cite{leskovec2007cost,kimura2006tractable,chen2010scalable,cohen2014sketch,Borgs14,tang2014influence,bharathi2007competitive,kostka2008word,borodin2010threshold,lerman2016majority,ghosh2009leaders,ghosh2010community,ver2013information,galstyan2009maximizing,galstyan2008influence}.

\section{Problem Addressed}
\label{intro.sec:problem}
Unfortunately, most models and algorithms from previous work suffer from serious limitations. In particular, there are different kinds of uncertainties, constraints and challenges that need to be addressed in real-world domains involving low-resource communities, and previous work has failed to provide satisfactory solutions to address these limitations. 

Specifically, most previous work suffers from \textit{five major limitations}. First, almost every previous work focuses on single-shot decision problems, where only a single subset of graph nodes is to be chosen and then evaluated for influence spread. Instead, most realistic applications of influence maximization would require selection of nodes in multiple stages. For example, homeless youth service providers conduct multiple intervention camps sequentially, until they run out of their financial budget (instead of conducting just a single intervention camp). 

Second, the state of the social network is not known at any point in time; thus, the selection of nodes in multiple stages (which is un-handled in previous work) introduces additional uncertainty about which network nodes are influenced at a given point in time, which complicates the node selection procedure. Addressing this uncertainty is critical as otherwise, you can keep re-influencing nodes which have been already influenced via diffusion of information in previous interventions.

Third, network structure is assumed to be known with certainty in most previous work, which is untrue in reality, considering that there is always noise in any network data collection procedure. In particular, collecting network data from low-resource communities is cumbersome, as it entails surveying members of the community (e.g., homeless youth) about their friend circles. Invariably, the social networks that we get from homeless youth service providers have some friendships which we know with certainty (i.e., certain friendships) , and some other friendships which we are uncertain about (i.e., uncertain friendships). 

Fourth, previous work assumes that seed nodes of our choice can be influenced deterministically, which is also an unrealistic assumption. In reality, some of our chosen seed nodes (e.g., homeless youth) may be unwilling to spread influence to their peers, so one needs to explicitly consider a situation when the influencers cannot be influenced.

Finally, despite two decades of research in influence maximization algorithms, none of these previous algorithms and models have ever been tested in the real world (atleast with low-resource communities). This leads us to a natural question: Are these sophisticated AI algorithms actually needed in the real-world? Can one get near-optimal empirical performance from simple heuristics instead? Finally, the usability of these algorithms is also unknown in the real-world.


\section{Motivating Domain}
\label{domain}
This thesis attempts to resolve these limitations by developing fundamental algorithms for influence maximization which can handle these uncertainties and constraints in a principled manner. While these algorithms are not domain specific, and can easily be applied to other domains (e.g., preventing drug addiction, raising awareness about governance related grievances of low-literate farmers, etc.), this thesis uses an important domain for motivation, where influence maximization could be used for social good: raising awareness about Human Immunodeficiency Virus (HIV) among homeless youth.

HIV-AIDS is a dangerous disease which claims 1.5 million lives annually \cite{unaids}, and homeless youth are particularly vulnerable to HIV due to their involvement in high risk behavior such as unprotected sex, sharing drug needles, etc. \cite{nchc}. To prevent the spread of HIV, many homeless shelters conduct intervention camps, where a select group of homeless youth are trained as ``peer leaders" to lead their peers towards safer practices and behaviors, by giving them information about safe HIV prevention and treatment practices. These peer leaders are then tasked with spreading this information among people in their social circle. 

\begin{figure}[htp]
\subfloat[Homeless youth at My Friend's Place]{\includegraphics[height=1.5in,width=0.48\columnwidth]{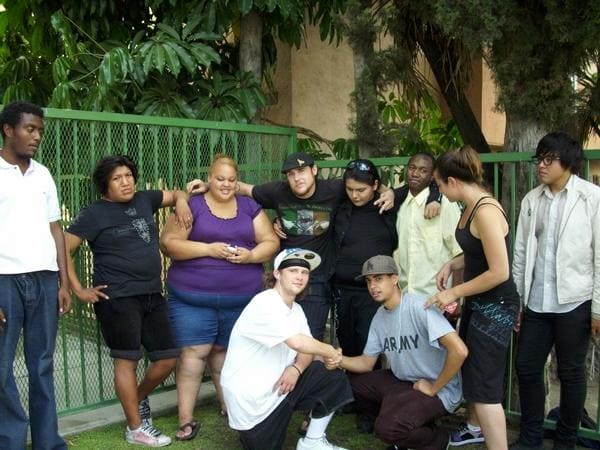}\label{fig:shelterpoe}}
\hspace{2mm}
\subfloat[My Friend's Place Team with a Co-Author]{\includegraphics[height=1.5in,width=0.48\columnwidth]{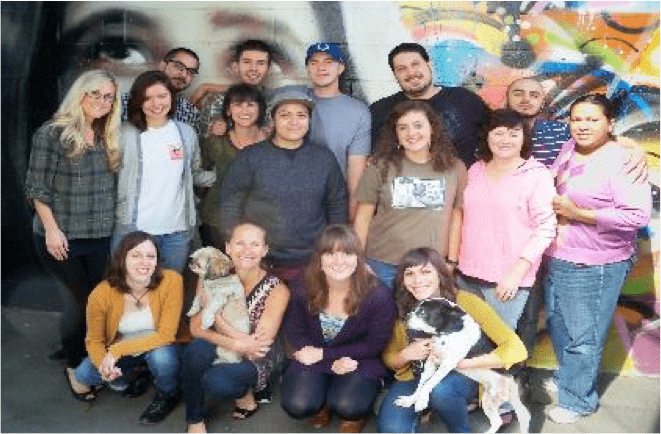}\label{fig:shelterloi}}
\caption{One of the Homeless Shelters Where We Conducted Deployments of our Algorithms}
\end{figure}

However, due to financial/manpower constraints, the shelters can only organize a limited number of intervention camps. Moreover, in each camp, the shelters can only manage small groups of youth ($\sim$3-4) at a time (as emotional and behavioral problems of youth makes management of bigger groups difficult). Thus, the shelters prefer a series of small sized camps organized sequentially \cite{rice2012}. As a result, the shelter cannot intervene on the entire target (homeless youth) population. Instead, it tries to maximize the spread of awareness among the target population (via word-of-mouth influence) using the limited resources at its disposal. To achieve this goal, the shelter uses the friendship based social network of the target population to strategically choose the participants of their limited intervention camps. Unfortunately, the shelters' job is further complicated by a lack of complete knowledge about the social network's structure \cite{rice2010positive}. Some friendships in the network are known with certainty whereas there is uncertainty about other friendships. 

Thus, the shelters face an important challenge: they need a sequential plan to choose the participants of their sequentially organized interventions. This plan must address three key points: (i) it must deal with network structure uncertainty;  (ii) it needs to take into account new information uncovered during the interventions, which reduces the uncertainty in our understanding of the network; and (iv) the intervention approach should address the challenge of gathering information about social networks of homeless youth, which usually costs thousands of dollars and many months of time \cite{rice2012}. 

\section{Main Contributions}
\label{intro.sec:motivation}
This thesis focuses on providing innovative techniques and significant advances for addressing the challenges of uncertainties in influence maximization problems, including 1) uncertainty in the social network structure; 2) uncertainty about the state of the network; and 3) uncertainty about willingness of nodes to be influenced. Some key research contributions include:
\begin{itemize}
\item new influence maximization algorithms for homeless youth service providers based on Partially Observable Markov Decision Process (or POMDP) planning.
\item real-world evaluation of these algorithms with 173 actual homeless youth across two different homeless shelters in Los Angeles. 
\end{itemize}

\subsection{Influence Maximization Under Real-World Constraints}
As the \textit{first contribution}, we use the homeless youth domain to motivate the definition of the Dynamic Influence Maximization Under Uncertainty (DIME) problem \cite{yadav2016using}, which models the aforementioned challenge faced by the homeless youth service providers accurately. Infact, the sequential selection of network nodes in multiple stages in DIME sets it apart from any other previous work in influence maximization \cite{leskovec2007cost,kimura2006tractable,chen2010scalable,cohen2014sketch}. As the \textit{second contribution}, we introduce a novel Partially Observable Markov Decision Process (POMDP) based model for solving DIME, which takes into account uncertainties in network structure and evolving network state . As the \textit{third contribution}, since conventional POMDP solvers fail to scale up to sizes of interest (our POMDP had $2^{300}$ states and ${150 \choose 6}$ actions), we design three scalable (and more importantly, ``\textit{deployable}") algorithms, which use our POMDP model to solve the DIME problem. 

Our first algorithm PSINET \cite{yadav2015preventing} relies on the following key ideas: (i) compact representation of transition probabilities to manage the intractable state and action spaces; (ii) combination of the QMDP heuristic with Monte-Carlo simulations to avoid exhaustive search of the entire belief space; and (iii) voting on multiple POMDP solutions, each of which efficiently searches a portion of the solution space to improve accuracy. Unfortunately, even though PSINET was able to scale up to real-world sized networks, it completely failed at scaling up in the number of nodes that get picked in every round (intervention). To address this challenge, we designed HEAL, our second algorithm. HEAL \cite{yadav2016using} hierarchically subdivides our \textit{original POMDP} at two layers: (i) In the top layer, graph partitioning techniques are used to divide the \textit{original POMDP} into \textit{intermediate POMDPs}; (ii) In the second level, each of these \textit{intermediate POMDPs} is further simplified by sampling uncertainties in network structure repeatedly to get \textit{sampled POMDPs}; (iii) Finally, we use aggregation techniques to combine the solutions to these simpler POMDPs, in order to generate the overall solution for the \textit{original POMDP}. Finally, unlike PSINET and HEALER, our third algorithm CAIMS \cite{yadav2018} explicitly models uncertainty in availability (or willingness) of network nodes to get influenced, and relies on the following key ideas: (i) action factorization in POMDPs to scale up to real-world network sizes; and (ii) utilization of Markov nets to represent the exponentially sized belief state in a compact and lossless manner.

\subsection{Real World Evaluation of Influence Maximization Algorithms}
For real-world evaluation, we deployed our influence maximization algorithms in the field (with homeless youth) to provide a head-to-head comparison of different influence maximization algorithms \cite{yadav2017influence}. Incidentally, these turned out to be the first such deployments in the real-world. We collaborated with Safe Place for Youth\footnote{\label{ftnote1}http://safeplaceforyouth.nationbuilder.com/} and My Friends Place \footnote{\label{ftnote2}http://myfriendsplace.org/}(two homeless youth service providers in Los Angeles) to conduct three different pilot studies with 173 homeless youth in these centers. These deployments helped in establishing the superiority of my AI based algorithms (HEALER and DOSIM), which significantly outperformed Degree Centrality (the current modus operandi at drop-in centers for selecting influential seed nodes) in terms of both spread of awareness and adoption of safer behaviors. Specifically, HEALER and DOSIM outperformed Degree Centrality (the current modus operandi) by $\sim$160\% in terms of information spread among homeless youth in the real-world. These highly encouraging results are starting to lead to a change in standard operating practices at drop-in centers as they have begun to discard their previous approaches of spreading awareness in favor of our AI based algorithms. More importantly, it illustrates one way (among many others) in which Artificial Intelligence techniques can be harnessed for benefiting low-resource communities such as the homeless youth.

\section{Overview of Thesis}
This thesis is organized as follows. Chapter \ref{chapter:relatedwork} provides an overview of the related work in this area. Chapter \ref{chapter:background} introduces fundamental background material necessary to understand the research presented in the thesis.  Next, in Chapter \ref{chapter:DIME}, we present a mathematical formulation of the Dynamic Influence Maximization under Uncertainty (DIME) problem (which is the problem faced by homeless youth service providers) and provides a characterization of its theoretical complexity. Chapter \ref{chapter:POMDP} then introduces the Partially Observable Markov Decision Process (POMDP) model for DIME. Chapter \ref{chapter:PSINET} explains the first PSINET algorithm which utilizes the QMDP heuristic to solve the DIME problem. Next, Chapter \ref{chapter:HEALER} introduces the HEALER algorithm which relies on a hierarchical ensembling heuristic approach to scale up to larger instances of the DIME problem. For real-world evaluation of the algorithms, Chapter \ref{chapter:PILOT} presents results from the real-world pilot studies that we conducted. Chapter \ref{chapter:CAIMS} introduces the CAIMS algorithm which handles uncertainty in availability of nodes to get influenced. Finally, chapter \ref{chapter:conclusion} concludes the thesis and presents possible future directions.

\chapter{Related Work}
\label{chapter:relatedwork}

\section{Social Work Research in Peer-Led Interventions}
Given the important role that peers play in the HIV risk behaviors of homeless youth \cite{rice2012position,green2013shared}, it has been suggested in social work research that \textit{peer leader based interventions} for HIV prevention be developed for these youth \cite{arnold2009comparisons,rice2012position,green2013shared}. These interventions are desirable for homeless youth (who have minimal health care access, and are distrustful of adults), as they take advantage of existing relationships \cite{rice2013should}. These interventions are also successful in focusing limited resources to select influential homeless youth in different portions of large social networks \cite{arnold2009comparisons,medley2009effectiveness}.  However, there are still open questions about ``\textit{correct}" ways to select peer leaders in these interventions, who would maximize awareness spread in these networks.

Unfortunately, very little previous work in the area of real-world implementation of influence maximization has used AI or algorithmic approaches for peer leader selection, despite the scale and uncertainty in these networks; instead relying on convenience selection or simple centrality measures. Kelly et. al. \cite{kelly1997randomised} identify peer leaders based on personal traits of individuals, irrespective of their structural position in the social network.  Moreover, selection of the most popular youth (i.e., Degree Centrality based selection) is the most popular heuristic for selecting peer leaders \cite{valente2012network}. However, as we show later, Degree Centrality is ineffective for \textit{peer-leader based interventions}, as it only selects peer leaders from a particular area of the network, while ignoring other areas. 

\section{Influence Maximization}
On the other hand, a significant amount of research has occured in Computer Science in the field of computational influence maximization, which has led to the development of several algorithms for selecting ``seed nodes" in social networks. The influence maximization problem, as stated by Kempe et. al. \cite{kempe2003maximizing}, takes in a social network as input (in the form of a graph), and outputs a set of $K$ `\textit{seed nodes}' which maximize the expected influence spread in the social network within $T$ time steps. Note that the expectation of influence spread is taken with respect to a probabilistic influence model, which is also provided as input to the problem.  

\subsection{Standard Influence Maximization}
There are many algorithms for finding `\textit{seed sets}' of nodes to maximize influence spread in networks \cite{kempe2003maximizing,leskovec2007cost,Borgs14,tang2014influence,lerman2016majority,ghosh2009leaders,ghosh2010community,ver2013information,galstyan2009maximizing,galstyan2008influence}. However, all these algorithms assume \textit{no uncertainty in the network structure} and select a single seed set. In contrast, we select several seed sets sequentially in our work to select intervention participants, as that is a natural requirement arising from our homeless youth domain. Also, our work takes into account uncertainty about the network structure and influence status of network nodes (i.e., whether a node is influenced or not). Finally, unlike most previous work \cite{kempe2003maximizing,leskovec2007cost,Borgs14,tang2014influence,lerman2016majority,ghosh2009leaders,ghosh2010community,ver2013information,galstyan2009maximizing,galstyan2008influence}, we use a different influence model as we explain later. 

There is another line of work by Golovin et. al. \cite{golovin2011adaptive}, which introduces adaptive submodularity and discusses adaptive sequential selection (similar to our problem). They prove that a Greedy algorithm provides a $(1-1/e)$ approximation guarantee. However, unlike our work, they assume no uncertainty in network structure. Also, while our problem can be cast into the adaptive stochastic optimization framework of \cite{golovin2011adaptive}, our influence function is not adaptive submodular (as shown later), because of which their Greedy algorithm loses its approximation guarantees. 

Recently, after the development of the algorithms in this thesis, some other algorithms have also been proposed in the literature to solve similar influence maximization problems in the homeless youth domain. For example, \cite{wilder122018maximizing} proposes the ARISEN algorithm which deals with situations where you do not know anything about the social network of homeless youth at all, and it proposes a policy which trades off network mapping with actual influence spread in the social network. However, ARISEN was found to be difficult to implement in practice, and as a result, \cite{wilder2018end} proposes the CHANGE agent which utilizes insights from the friendship paradox \cite{feld1991your} to learn about the most promising parts of the social network as quickly as possible. Moreover, based on results from the real-world pilot studies detailed in this thesis, \cite{lilyhu} proposes new diffusion models for real-world networks which fit empirical diffusion patterns observed in the pilot studies much more convincingly.

An orthogonal line of work is \cite{singer2012win} which solves the following problem: how to incentivize people in order to be influencers? Unlike us, they solve a mechanism-design problem where nodes have private costs, which need to be paid for them to be influencers. However, in our domains of interest, while there is a lot of uncertainty about which nodes can be influenced in the network, monetary gains/losses are not the reason behind nodes getting influenced or not. Instead, nodes do not get influenced because they are either not available or willing to get influenced. 

In another orthogonal line of work, \cite{yadav-explain} proposed XplainIM, a machine learning based explanation system  to explain the solutions of HEALER \cite{yadav2016using} to human subjects. The problem of explaining solutions of influence maximization algorithms was first posed in\cite{yadav-ideas}, but their main focus was on justifying the need for such explanations, as opposed to providing any practical solutions to this problem. Thus, XplainIM represents the first step taken towards solving this problem (of explaining influence maximization solutions). Essentially, they propose using a Pareto frontier of decision trees as their interpretable classifier in order to explain the solutions of HEALER. 

\subsection{Competitive Influence Maximization}
Yet another field of related work involves two (or more) players trying to spread their own `competing' influence in the network (broadly called influence blocking maximization, or IBM). Some research exists on IBM where all players try to maximize their own influence spread in the network, instead of limiting others \cite{bharathi2007competitive,kostka2008word,borodin2010threshold}. \cite{Tsai12a} try to model IBM as a game theoretic problem and provide scale up techniques to solve large games. Just like our work, \cite{tsai2013bayesian} consider uncertainty in network structure. However, \cite{tsai2013bayesian} do not consider sequential planning (which is essential in our domain) and thus, their methods are not reusable in our domain.

\section{POMDP Planning}
The final field of related work is planning for reward/cost optimization. In POMDP literature, a lot of work has happened along two different paradigms: \textit{offline and online POMDP planning}. 

\subsection{Offline POMDP Planning}
In the paradigm of offline POMDP planning, algorithms are desired which precompute the entire POMDP policy (i.e., a mapping from every possible belief state to the optimal action for that belief) ahead of time, i.e., before execution of the policy begins. In 1973,  \cite{smallwood1973optimal} proposed a dynamic programming based algorithm for optimally solving a POMDP. Improving upon this, a number of exact algorithms leveraging the piecewise-linear and convex aspects of the POMDP value function have been proposed in the POMDP literature \cite{monahan1982state,littman1996algorithms,cassandra1997incremental,zhang2001speeding}. Recently, several approximate offline POMDP algorithms have also been proposed \cite{hauskrecht2000value,pineau2006anytime}. Some notable offline planners include GAPMIN \cite{poupart2011closing} and Symbolic Perseus \cite{spaan2005perseus}. Currently, the leading offline POMDP solver is SARSOP \cite{kurniawati2008sarsop}. Unfortunately, all of these offline POMDP methods fail to scale up to any realistic problem sizes, which makes them difficult to use for real-world problems.

\subsection{Online POMDP Planning}
In the paradigm of online POMDP planning, instead of computing the entire POMDP policy, only the best action for the current belief state is found. Upon reaching a new belief state, online planning again plans for this new belief. Thus, online planning interleaves planning and execution at every time step. Recently, it has been suggested that online planners are able to scale up better \cite{paquet2005online}, and therefore we focus on online POMDP planners in this thesis.  For online planning, we mainly focus on the literature on Monte-Carlo (MC) sampling based online POMDP solvers since this approach allows significant scale-ups. \cite{silver2010monte} proposed the \textit{Partially Observable Monte Carlo Planning} (POMCP) algorithm that uses Monte-Carlo tree search in online planning. Also, \cite{somani2013despot} present the DESPOT algorithm, that improves the worst case performance of POMCP. \cite{bai2014thompson} used Thompson sampling to intelligently trade-off between exploration and exploitation in their D\textsuperscript{2}NG-POMCP algorithm. These algorithms maintain a search tree for all sampled histories to find the best actions, which may lead to better solution qualities, but it makes these techniques less scalable (as we show in our experiments). Therefore, our algorithm does not maintain a search tree and uses ideas from {\it Q}\textsubscript{\it MDP} heuristic \cite{littman1995learning} and hierarchical ensembling to find best actions. Yet another related work is FV-POMCP \cite{amato2015scalable,Katt17ICML}, which was proposed to handle issues with POMCP's \cite{silver2010monte} scalability. Essentially, FV-POMCP relies on a factorized action space to scale up to larger problems. In our work, we complement their advances to build CAIMS, which leverages insights from social network theory to factorize action spaces in a provably ``\textit{lossless}" manner, and to represent beliefs in an accurate manner using Markov networks.

\chapter{Background}
\label{chapter:background}
In this chapter, we provide general background information about influence maximization problems and how we represent social networks inside our influence maximization algorithms. Next, we discuss well-known diffusion spread models (along with the model that we use) used inside influence maximization. We also describe the well-known Greedy algorithm for influence maximization. Finally, we describe background information about the POMDP model and a well-known algorithm for solving POMDPs.

\section{Influence Maximization Problem}
Given a social network $G$ and a parameter $K$, the influence maximization problem asks to find an optimal $K$ sized set of nodes of maximum influence in the social network. In other words, given a social network $G$ and an influence model $M$ of a diffusion process that take place on network $G$, the goal is to find $K$ initial seeders in the network who will lead to most number of people receiving the message. More formally, for any $K$ sized subset of nodes $A$, let $\delta(A)$ denote the expected number of individuals in the network who will receive the message, given that $A$ is the initial set of seeders. Then, the influence maximization problem takes as \textit{input} (i) the social network $G$, (ii) the influence model $M$, and (iii) the number of nodes to choose $K$, and produces as \textit{output} an optimal subset of nodes $S=\argmax_A \delta(A)$.

\section{Network Representation}
In influence maximization problems, we represent social networks $G=(V,E)$ as directed graphs (consisting of \textit{nodes} and \textit{directed edges}) where each \textit{node} represents a person in the social network and a \textit{directed edge} between two nodes $A$ and $B$ (say) represents that node $A$ \textit{considers} node $B$ as his/her friend. \textit{We assume directed-ness of edges as sometimes homeless shelters assess that the influence in a friendship is very much uni-directional; and to account for uni-directional follower links on Facebook}. Otherwise friendships are encoded as two uni-directional links.
 
\subsection{Uncertain Network} The uncertain network is a directed graph $G=(V,E)$  with $|V| = N$ nodes and $|E| = M$ edges. The edge set $E$ consists of two disjoint subsets of edges: $E_c$ (the set of certain edges, i.e., friendships which we are certain about) and $E_u$ (the set of uncertain edges, i.e., friendships which we are uncertain about). Note that uncertainties about friendships exist because HEALER's Facebook application misses out on some links between people who are friends in real life, but not on Facebook.

\begin{figure}[h]
\center{\includegraphics[scale=.55]
{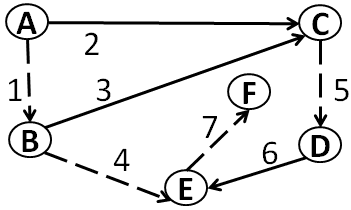}}
\caption{\label{fig:backgrnd_uncertainG} Uncertain Network}
\end{figure} 

To model the uncertainty about missing edges, every uncertain edge $e \in E_u$ has an existence probability $u(e)$ associated with it, which represents the likelihood of ``existence" of that uncertain edge. For example, if there is an uncertain edge $(A,B)$ (i.e., we are unsure whether node $B$ is node $A$'s friend), then $u(A,B) = 0.75$ implies that $B$ is $A$'s friend with a 0.75 chance. In addition, each edge $e \in E$ (both certain and uncertain) has a propagation probability $p(e)$ associated with it. A propagation probability of 0.5 on directed edge $(A,B)$ denotes that if node $A$ is influenced (i.e., has information about HIV prevention), it influences node $B$ (i.e., gives information to node $B$) with a 0.5 probability in each subsequent time step. This graph $G$ with all relevant $p(e)$ and $u(e)$ values represents an uncertain network and serves as an input to the DIME problem. Figure \ref{fig:backgrnd_uncertainG} shows an uncertain network on 6 nodes (\textit{A} to \textit{F}) and 7 edges. The dashed and solid edges represent uncertain (edge numbers 1, 4, 5 and 7) and certain (edge numbers 2, 3 and 6) edges, respectively.

\section{Influence Model}
In previous work, different kinds of influence spread models have been proposed and used. We now discuss some of the well-known models and then describe the influence model that is used in this thesis.

\subsection{Independent Cascade Model}
The independent cascade model \cite{kempe2003maximizing} associates a propagation probability $p(e)$ to each edge $e \in E$ of the social network. This propagation probability $p(e)$ denotes the likelihood with which influence spreads along edge $e$ in the network. The influence spread process begins with an initial set of activated (or \textit{influenced}) nodes called ``\textit{seed nodes}" $A_0$ and then proceeds in a series of discrete time-steps $t \in [1,T]$. At each time step $t$, every node that was influenced at time step $t-1$ tries to influence their un-influenced neighbors (and they do so according to the propagation probabilities on the respective edges). This process keeps on repeating until either $T$ time steps are reached or the entire network is influenced.
 
\subsection{Linear Threshold Model}
The linear threshold model \cite{kempe2003maximizing} associates a weight $w_e$ on each edge $e \in E$ of the social network. Further, each node $v \in V$ has a threshold $\epsilon_v \in [0,1]$. This threshold represents the fraction of neighbors of $v$ that must become influenced in order for node $v$ to become influenced. Again, the influence spread process begins with an initial set of ``\textit{seed nodes}" $A_0$ and then proceeds in a series of discrete time-steps $t \in [1,T]$. At each time step $t$, each un-influenced node which satisfies the following condition becomes influenced: $\sum\limits_{e \sim v} w_e \geqslant \epsilon_v$.  This process keeps on repeating until either $T$ time steps are reached or the entire network is influenced.

\subsection{Our Influence Model}
We use a variant of the independent cascade model \cite{yan2011influence}. In the standard independent cascade model, all nodes that get influenced at round $t$ get a \textbf{single} chance to influence their un-influenced neighbors at time $t+1$. If they fail to spread influence in this \textbf{single} chance, they don't spread influence to their neighbors in future rounds. Our model is different in that we assume that nodes get \textbf{multiple} chances to influence their un-influenced neighbors. If they succeed in influencing a neighbor at a given time step $t'$, they stop influencing that neighbor for all future time steps. Otherwise, if they fail in step $t'$, they try to influence again in the next round. This variant of independent cascade has been shown to empirically provide a better approximation to real influence spread than the standard independent cascade model \cite{cointet2007,yan2011influence}. Further, we assume that nodes that get influenced at a certain time step remain influenced for all future time steps. 

\begin{algorithm}[h]
\label{alg:greedyalgo}
\caption{Greedy Algorithm}
\KwIn{Graph $G$, Influence Model $M$, Number of Nodes $K$}
\KwOut{Best Action $A$}
$A \leftarrow \phi$ ; \\\label{greflow:1}
\For {$i \in [1,K]$} {
	$Pick\mbox{ }v:\mbox{ }\delta(A \cup \{v\}) - \delta(A) \mbox{ is maximized}$;\\\label{greflow:2}
	$A \leftarrow A	\cup \{v\}$;\\\label{greflow:3}
	}
	return $A$;\\
\end{algorithm}

\section{Greedy Algorithm for Influence Maximization}
In order to solve influence maximization problems, there exists a well-known approximation algorithm (called the Greedy algorithm), which was first proposed by \cite{kempe2003maximizing} in the context of influence maximization. Algorithm 1 shows the overall flow of this algorithm, which iteratively builds the set of $K$ nodes that should be output for the influence maximization problem. In each iteration, the node which increases the marginal gain (in the expected solution value) by the maximum amount is added (Steps \ref{greflow:2} and \ref{greflow:3}) to the output set. Finally, after $K$ iterations, the set of nodes is returned. It is well known that if the function $\delta(A)$ can be shown to be submodular, then this Greedy algorithm outputs a $(1-1/e)$ approximation guarantee \cite{kempe2003maximizing}. Unfortunately, we later show that for our problem, this Greedy algorithm does not have any guarantees due to submodularity. As a result, we use POMDPs to solve our problem. Here, we provide a high-level overview of the POMDP model, and in later sections, we will describe how our POMDP algorithms work.

\section{Partially Observable Markov Decision Processes}
Partially Observable Markov Decision Processes (POMDPs) are a well studied model for sequential decision making under uncertainty \cite{puterman2009markov}. Intuitively, POMDPs model situations wherein an agent tries to maximize its expected long term \textit{rewards} by taking various \textit{actions}, while operating in an environment (which could exist in one of several \textit{states} at any given point in time) which reveals itself in the form of various \textit{observations}. The key point is that the exact state of the world is not known to the agent and thus, these actions have to be chosen by reasoning about the agent's probabilistic beliefs (belief state). The agent, thus, takes an action (based on its current belief), and the environment transitions to a new world state. However, information about this new world state is only partially revealed to the agent through observations that it gets upon reaching the new world state. Hence, based on the agent's current belief state, the action that it took in that belief state, and the observation that it received, the agent updates its belief state. The entire process repeats several times until the environment reaches a terminal state (according to the agent's belief).

More formally, a full description of the POMDP includes the sets of possible environment \textit{states}, the set of \textit{actions} that the agent can take, and the set of possible \textit{observations} that the agent can observe. In addition, the full POMDP description includes a \textit{transition matrix}, for storing \textit{transition probabilities}, which specify the probability with which the environment transitions from one \textit{state} to another, conditioned on the immediate \textit{action} taken. Another component of the POMDP description is the \textit{observation matrix}, for storing \textit{observation probabilities}, which specify the probability of getting different \textit{observations} in different \textit{states}, conditioned on the \textit{action} taken to reach that \textit{state}. Finally, the POMDP description includes a \textit{reward} matrix, which specifies the agent's \textit{reward} of taking \textit{actions} in different \textit{states}.	

A POMDP policy $\Pi$ provides a mapping from every possible belief state (which is a probability distribution over world states) to an action $a=\Pi(\beta)$. Our aim is to find an optimal policy $\Pi^*$ which, given an initial belief $\beta_0$, maximizes the expected cumulative long term reward over H horizons (where the agent takes an action and gets a reward in each time step until the horizon H is reached). Computing optimal policies offline for finite horizon POMDPs is PSPACE-Complete. Thus, focus has recently turned towards online algorithms, which only find the best action for the current belief state \cite{paquet2005online,silver2010monte}. In order to illustrate some solution methods for POMDPs, we now provide a high-level overview of POMCP \cite{silver2010monte}, a highly popular online POMDP algorithm. 

\begin{figure}[h]
\center{\includegraphics[scale=.4]
{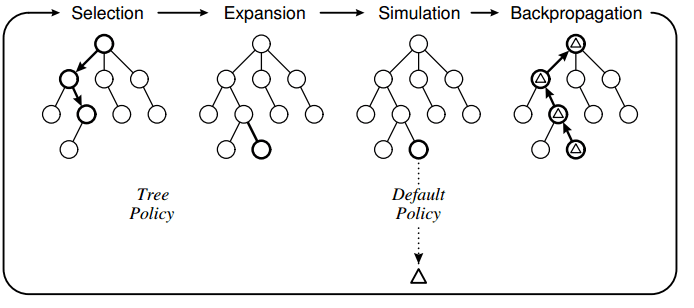}}
\caption{\label{fig:mctsoutline} UCT Tree Generation In POMCP}
\end{figure} 

\section{POMCP: An Online POMDP Planner}
POMCP \cite{silver2010monte} uses UCT based Monte-Carlo tree search (MCTS) \cite{browne2012survey} to solve POMDPs. At every stage, given the current belief state $b$, POMCP incrementally builds a UCT tree (as in Figure \ref{fig:mctsoutline}) that contains statistics that serve as empirical estimators (via MC samples) for the POMDP Q-value function $Q(b,a) = R(b,a) + \sum\limits_z P(z|b,a) max_{a'} Q(b',a')$. The algorithm avoids expensive belief updates by maintaining the belief at each UCT tree node as an unweighted particle filter (i.e., a collection of all states that were reached at that UCT tree node via MC samples). In each MC simulation, POMCP samples a start state from the belief at the root node of the UCT tree, and then samples a trajectory that first traverses the partially built UCT tree, adds a node to this tree if the end of the tree is reached before the desired horizon, and then performs a random rollout to get one MC sample estimate of $Q(b,a)$. Finally, this MC sample estimate of $Q(b,a)$ is propagated up the UCT tree to update Q-value statistics at nodes that were visited during this trajectory. Note that the UCT tree grows exponentially large with increasing state and action spaces. Thus, the search is directed to more promising areas of the search space by selecting actions at each tree node $h$ according to the UCB1 rule \cite{kocsis2006bandit}, which is given by: $a = argmax_a \hat{Q}(b_h, a) + c \sqrt{log(N_h+1)/n_{ha}}$. Here, $\hat{Q}(b_h,a)$ represents the Q-value statistic (estimate) that is maintained at node $h$ in the UCT tree. Also, $N_h$ is the number of times node $h$ is visited, and $n_{ha}$ is the number of times action $a$ has been chosen at tree node $h$ (POMCP maintains statistics for $N_h$ and $n_{ha} \forall a \in A$ at each tree node $h$). While POMCP handles large state spaces (using MC belief updates), it is unable to scale up to large action sizes (as the branching factor of the UCT tree blows up).

\chapter{Dynamic Influence Maximization Under Uncertainty}
\label{chapter:DIME}
In this chapter, we formally define the Dynamic Influence Maximization Under Uncertainty (or DIME) problem, which models the problems faced by homeless youth service providers, and is the primary focus of attention in this thesis. We also characterize the theoretical complexity of the DIME problem in this chapter.

\section{Problem Definition}
Given the \textit{uncertain network} as input, we plan to run for \textit{$T$ rounds} (corresponding to the number of interventions organized by the homeless shelter).
In each round, we will choose \textit{$K$ nodes} (youth) as intervention participants. These participants are assumed to be influenced post-intervention with certainty. Upon influencing the chosen nodes, we will `\textit{observe}' the true state of the \textit{uncertain edges} (friendships) out-going from the selected nodes. This translates to asking intervention participants about their 1-hop social circles, which is within the homeless shelter's capabilities \cite{rice2012position}. 

After each round, influence spreads in the network according to our influence model for \textit{$L$ time steps}, before we begin the next round. This $L$ represents the time duration in between two successive intervention camps. \textit{In between rounds, we do not observe the nodes that get influenced during $L$ time steps}. We only know that explicitly chosen nodes (our intervention participants in all past rounds) are influenced. Informally then, given an uncertain network $G_0=(V, E)$ and integers $T$, $K$, and $L$ (as defined above), our goal is to find an online policy for choosing \textit{exactly} $K$ nodes for $T$ successive rounds (interventions) which maximizes influence spread in the network at the end of $T$ rounds.

We now provide notation for defining an online policy formally. Let $\bm{\mathcal{A}}=\{A \subset V \mbox{ s.t. } |A|=K \}$ denote the set of $K$ sized subsets of $V$, which represents the set of possible choices that we can make at every time step $t \in [1,T]$. Let $A_i \in \bm{\mathcal{A}} \mbox{ } \forall i \in [1,T]$ denote our choice in the $i^{th}$ time step. Upon making choice $A_i$, we `\textit{observe}' uncertain edges adjacent to nodes in $A_i$, which updates its understanding of the network. Let $G_i \mbox{ } \forall \mbox{ } i \in [1,T]$ denote the uncertain network resulting from $G_{i-1}$ with \textit{observed} (additional edge) information from $A_i$. Formally, we define a history $H_i \mbox{ } \forall \mbox{ } i \in [1,T]$ of length $i$ as a tuple of past choices and observations $H_i = \tuple{G_0, A_1, G_1, A_2,..,A_{i-1},G_i}$. Denote by $\bm{\mathcal{H}_i} = \{ H_k \mbox{ s.t. } k \leqslant i \}$ the set of all possible histories of length less than or equal to $i$. Finally, we define an $i$-step policy $\bm{\Pi_i} \colon \bm{\mathcal{H}_i} \to \bm{\mathcal{A}}$ as a function that takes in histories of length less than or equal to $i$ and outputs a $K$ node choice for the current time step. We now provide an explicit problem statement for DIME.

\begin{problem}{\textbf{DIME Problem}}
Given as input an uncertain network $G_0=(V, E)$ and integers $T$, $K$, and $L$ (as defined above). Denote by $\mathcal{R}(H_T, A_T)$ the \textit{expected total number of influenced nodes at the end of round $T$}, given the $T$-length history of previous observations and actions $H_T$, along with $A_T$, the action chosen at time $T$. Let $\E_{H_T,A_T \sim \Pi_T} [\mathcal{R}(H_T,A_T)]$ denote the expectation over the random variables $H_T=\tuple{G_0, A_1,..,A_{T-1},G_T}$ and $A_T$, where $A_i$ are chosen according to $\Pi_T(H_i)  \mbox{ } \forall \mbox{ } i \in [1,T]$, and $G_i$ are drawn according to the distribution over uncertain edges of $G_{i-1}$ that are revealed by $A_i$. The objective of DIME is to find an optimal $T$-step policy $\bm{\Pi_T^*} = \argmax_{\Pi_T} \E_{H_T,A_T \sim \Pi_T}[\mathcal{R}(H_T, A_T)]$. 
\end{problem} 

\section{Characterization of Theoretical Complexity}
Next, we show hardness results about the DIME problem. First, we analyze the value of having complete information in DIME. Then, we characterize the computational hardness of DIME.

\subsection{The Value of Information} We characterize the impact of insufficient information (about the uncertain edges) on the achieved solution value. We show that no algorithm for DIME is able to provide a good approximation to the \textit{full-information solution value} (i.e., the best solution achieved w.r.t. the underlying ground-truth network), even with infinite computational power. 

\begin{figure}[htb]
\center{\includegraphics[scale=.5]
{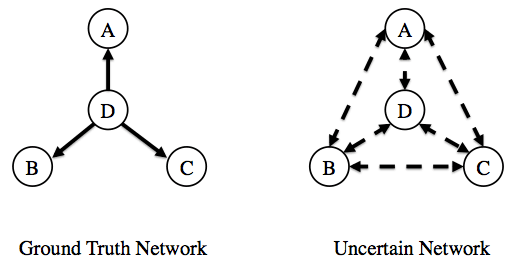}}
\caption{\label{fig:Figure1} Counter-example for Theorem 1}
\end{figure}

\begin{theorem}\label{Th:4}
Given an uncertain network with $n$ nodes, for any $\epsilon > 0$, there is no algorithm for the DIME problem which can guarantee a $n^{-1+\epsilon}$ approximation to $OPT_{full}$, the \textit{full-information solution value}. 
\end{theorem}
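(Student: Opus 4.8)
The plan is to establish this as an \emph{information-theoretic} impossibility result rather than a computational one: I would exhibit an explicit family of uncertain networks (the counter-example sketched in Figure~\ref{fig:Figure1}) on which even a computationally unbounded policy is provably far from $OPT_{full}$, because the input $G_0$ simply does not contain enough information to localize where the influence mass lies. Concretely, I would build a gadget with two ingredients. First, a shared \emph{reward set} $C$ of $\Theta(n)$ nodes that is an \emph{antichain} (no edges internal to $C$); the antichain property is crucial, since it guarantees that $C$ has no single node which, if seeded directly, activates the rest --- the only way to influence $C$ en masse is to push influence into it from outside. Second, $r=\Theta(n)$ mutually indistinguishable ``key'' nodes $m_1,\dots,m_r$, each equipped with \emph{uncertain} edges (propagation probability $1$) to every node of $C$, and with $C$ receiving in-edges only from the keys. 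The ground truth realizes the uncertain edges of exactly one key $m_{i_0}$ and of no other key, so that in the true network seeding $m_{i_0}$ reaches all of $C$, while seeding any other key, or any node of $C$, reaches essentially nothing.

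Next I would lower-bound the benchmark: an omniscient solver that knows the realized network seeds $m_{i_0}$ in its first round and, after $L\ge 1$ diffusion steps, influences all of $C$, so $OPT_{full}\ge |C|=\Omega(n)$. I would then upper-bound what \emph{any} policy can achieve. The key observation is that across all $T$ rounds a policy seeds at most $TK$ distinct nodes, hence at most $TK$ distinct keys; and since every key looks identical in $G_0$ and the only signal a policy ever receives about a key is obtained by spending a seed on it, a policy that has not yet seeded $m_{i_0}$ has received no information distinguishing $m_{i_0}$ from the untouched decoys. Making this precise via a coupling --- run the policy against the all-inactive network and against the network in which $m_{i_0}$ is the unique active key; the two executions are identical until the policy happens to seed $m_{i_0}$ --- shows that the probability the policy ever seeds the active key is at most $TK/r$. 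Because no node other than $m_{i_0}$ can inject influence into the antichain $C$, it follows that $\E[\text{value}]\le (TK/r)\,|C| + O(TK) = O(TK)$, a constant in $n$ for fixed $T,K,L$.

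Combining the two bounds gives a ratio $\E[\text{value}]/OPT_{full} = O(1/n)$. Since $1/n = o(n^{-1+\epsilon})$ for every fixed $\epsilon>0$, for all sufficiently large $n$ the achievable ratio drops below $n^{-1+\epsilon}$, which is exactly the claimed impossibility. Tuning $r=|C|=\Theta(n)$ --- so that both ``enough decoys to hide the active key'' and ``a large enough reward to create the gap'' hold simultaneously --- is what lets the bound reach the $n^{-1}$ regime rather than stalling at, say, $n^{-1/2}$, which is all one gets if the reward components are private to each key and therefore compete with the decoys for the node budget.

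I expect the main obstacle to be the algorithm-side upper bound, specifically ruling out that \emph{adaptivity} helps: a policy observes the out-edges of every node it seeds, so I must argue that these observations carry no information about $i_0$ until a seed is wastefully spent on $m_{i_0}$ itself. The coupling argument above handles deterministic policies; to obtain the statement against \emph{all} policies (as the theorem asserts), I would either invoke Yao's principle with $i_0$ drawn uniformly from $\{1,\dots,r\}$, or directly use the averaging bound $\sum_i \Pr[\text{policy seeds } m_i]\le TK$ to locate a key seeded with probability at most $TK/r$ and let the adversary activate precisely that one. A secondary point needing care is confirming that the gadget admits no ``shortcut'' node --- no node whose seeding influences a large part of $C$ irrespective of the realization --- which is exactly what the antichain structure of $C$, together with $C$ having in-edges only from the keys, is designed to guarantee.
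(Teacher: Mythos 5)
Your proposal is correct, and it rests on the same underlying principle as the paper's proof --- an information-theoretic impossibility created by a symmetric uncertain input that hides the identity of the one node whose seeding yields $\Omega(n)$ reward, so that any policy earns only $O(1)$ in expectation while $OPT_{full}=\Omega(n)$. The constructions and the level of rigor differ, however. The paper simply sets $K=L=T=1$, takes the complete uncertain network with $u(e)=0.5$ and $p(e)=1$, and lets the ground truth be a star: by symmetry the single non-adaptive choice hits the center with probability $1/n$, giving expected reward $2-1/n$ against $OPT_{full}=n$. Because the theorem quantifies over all DIME instances, collapsing to one round and one seed is legitimate and makes the argument essentially one line. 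Your gadget (an antichain reward set fed only by $r=\Theta(n)$ indistinguishable keys, exactly one of which is realized) together with the coupling/Yao averaging bound $\sum_i \Pr[\text{seed }m_i]\le TK$ buys something the paper's sketch does not: a bound that survives arbitrary fixed $T$ and $K$ and explicitly rules out any benefit from adaptivity, i.e., from the edge observations revealed by earlier seeds. That extra care is genuinely needed in the multi-round setting (where a policy could in principle burn early seeds to learn the realization), and your antichain structure correctly forecloses the ``shortcut node'' loophole. The only point worth stating explicitly in a full write-up is that your adversarial realization has positive probability under the prior $u(\cdot)$ on the uncertain edges, so that $OPT_{full}$ is evaluated on a legitimate ground-truth network --- the same (implicit) step the paper takes when it declares the star to be the ground truth underlying the complete uncertain graph.
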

\begin{proof}[Sketch]
We prove this statement by providing a counter-example in the form of a specific (ground truth) network for which there can exist no algorithm which can guarantee a $n^{-1+\epsilon}$ approximation to $OPT_{full}$. Consider an input to the DIME problem, an \textit{uncertain network} with $n$ nodes with $2 * {n \choose 2}$ uncertain edges between the $n$ nodes, i.e., it's a completely connected uncertain network consisting of \textit{only} uncertain edges (an example with $n=3$ is shown in Figure \ref{fig:Figure1}). Let $p(e)=1$ and $u(e)=0.5$ on all edges in the \textit{uncertain network}, i.e., all edges have the same propagation and existence probability. Let $K=1$, $L=1$ and $T=1$, i.e., we just select a single node in one shot (in a single round). 

Further, consider a star graph (as the ground truth network) with $n$ nodes such that propagation probability $p(e) = 1$ on all edges of the star graph (shown in Figure 1). Now, any algorithm for the DIME problem would select a single node in the \textit{uncertain network} uniformly at random with equal probability of $1/n$ (as information about all nodes is symmetrical). In expectation, the algorithm will achieve an expected reward  $\{1/n \times (n)\} + \{1/n \times (1) + ... + 1/n \times (1)\} = 1/n \times(n) + (n-1)/n \times 1 = 2 - 1/n$. However, given the ground truth network, we get $OPT_{full}=n$, because we always select the star node. As $n$ goes to infinity, we can at best achieve a $n^{-1}$ approximation to $OPT_{full}$. Thus, no algorithm can achieve a $n^{-1+\epsilon}$ approximation to $OPT_{full}$ for any $\epsilon > 0$.   
\end{proof}

\subsection{Computational Hardness}
We now analyze the hardness of computation in the DIME problem in the next two theorems.

\begin{theorem}\label{Th:1}
The DIME problem is NP-Hard.
\end{theorem}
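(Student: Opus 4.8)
The plan is to prove NP-hardness by exhibiting standard (single-shot) influence maximization as a special case of DIME and then invoking the known NP-hardness of influence maximization due to Kempe et al. \cite{kempe2003maximizing}. The guiding idea is that DIME strictly generalizes the classical problem: by switching off the multi-round and network-uncertainty features, I should recover exactly the problem whose hardness is already established, so that any polynomial-time DIME solver would also solve the classical problem.

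Concretely, first I would restrict the DIME instance by setting $T=1$ (a single round) and taking the edge set to be entirely certain, i.e. $E_u = \emptyset$ and $E=E_c$. With $T=1$, a $T$-step policy $\bm{\Pi_T}$ collapses to a single choice of a $K$-sized node set $A_1 \in \bm{\mathcal{A}}$, and since there are no uncertain edges the history $H_1 = \tuple{G_0}$ carries no random network information. Consequently the DIME objective $\E_{H_T,A_T \sim \Pi_T}[\mathcal{R}(H_T,A_T)]$ reduces to $\mathcal{R}(A_1)=\delta(A_1)$, the expected number of nodes influenced by seeding $A_1$ and letting influence diffuse for $L$ time steps, which is precisely the standard influence maximization objective with horizon $L$. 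Thus solving this restricted DIME instance is identical to solving influence maximization on $G$ with $K$ seeds, giving a polynomial-time many-one reduction in the trivial direction.

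The one subtlety I would need to address is that DIME uses the multiple-chance variant of the independent cascade model rather than the standard model for which hardness was originally proved, and I expect reconciling this to be the main (though minor) obstacle. The cleanest way to handle it is to observe that the known reduction (from Set Cover / Vertex Cover) can be carried out with propagation probability $p(e)=1$ on every edge. When $p(e)=1$, an influenced node influences each un-influenced neighbor with certainty at its very first opportunity, so the standard and multiple-chance variants coincide exactly; diffusion becomes deterministic and $\delta(A)$ equals the number of nodes reachable from $A$ within $L$ hops. Taking $L$ at least the graph diameter, maximizing $\delta(A)$ becomes a maximum-coverage problem, reproducing the Set Cover reduction of \cite{kempe2003maximizing} verbatim under our influence model.

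Finally, I would conclude that since influence maximization under our variant model (via the $p(e)=1$ construction) is NP-hard and embeds as the $T=1$, $E_u=\emptyset$ special case of DIME, the DIME problem is NP-hard as well. I would note that this only establishes a lower bound on DIME's difficulty: the sequential decision-making and structural-uncertainty aspects make the full problem harder still, a fact captured separately by the PSPACE-completeness of the associated finite-horizon POMDP.
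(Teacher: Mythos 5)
Your proposal is correct and follows essentially the same route as the paper's proof sketch, which restricts DIME to the special case $E_u=\emptyset$, $T=1$, $p(e)=1$ and observes that this degenerates to classical influence maximization, known to be NP-hard. Your additional care in noting that $p(e)=1$ makes the multiple-chance cascade variant coincide with the standard one is a worthwhile elaboration of a point the paper leaves implicit, but it is the same reduction.
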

\begin{proof}[Sketch]
Consider the case where $E_u = \Phi$, $L=1$, $T=1$ and $p(e) = 1\mbox{} \forall e \in E$. This degenerates to the classical influence maximization problem which is known to be NP-hard. Thus, the DIME problem is also NP-hard.   
\end{proof}

Some NP-Hard problems exhibit nice properties that enable approximation guarantees for them. Golovin et. al. \cite{golovin2011adaptive} introduced adaptive submodularity, an analog of submodularity for adaptive settings. Presence of adaptive submodularity ensures that a simply greedy algorithm provides a $(1-1/e)$ approximation guarantee w.r.t. the optimal solution defined on the \textit{uncertain network}. However, as we show next, while DIME can be cast into the adaptive stochastic optimization framework of \cite{golovin2011adaptive}, our influence function is not adaptive submodular, because of which their Greedy algorithm does not have a $(1-1/e)$ approximation guarantee. 

\begin{figure}[htb]
\center{\includegraphics[scale=0.3]
{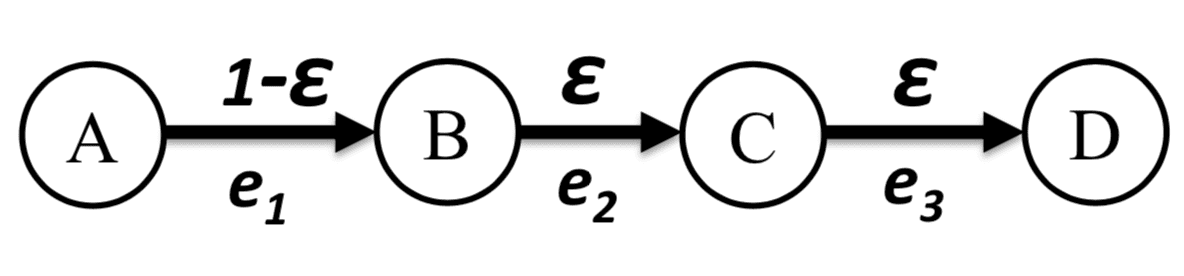}}
\caption{\label{fig:failure} Failure of Adaptive Submodularity}
\end{figure}

\begin{theorem}\label{Th:2}
The influence function of DIME is not adaptive submodular.
\end{theorem}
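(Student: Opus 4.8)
The plan is to disprove adaptive submodularity by exhibiting a single explicit counterexample, since one violating instance suffices. Recall that the DIME selection problem fits the adaptive stochastic optimization framework of \cite{golovin2011adaptive}, with the uncertain-edge existences (together with the unobserved diffusion outcomes) playing the role of the random realization, and with the observation after choosing a node being \emph{only} the existence of that node's out-going uncertain edges. Writing $\Delta(e\mid\psi)$ for the conditional expected marginal gain of adding node $e$ given a partial realization $\psi$, the function is adaptive submodular iff $\Delta(e\mid\psi)\ge\Delta(e\mid\psi')$ for every pair $\psi\subseteq\psi'$ and every $e\notin\mathrm{dom}(\psi')$. Hence it is enough to build one small network, together with $\psi\subseteq\psi'$ and a node $e$, for which $\Delta(e\mid\psi)<\Delta(e\mid\psi')$.

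The counterexample I would use (Figure \ref{fig:failure}) exploits the fact that, between rounds, the diffusion outcome is \emph{not} observed. Take a chain $s\to t\to B$, where $B$ is a large component of $\ell$ internally connected nodes, the edge $t\to B$ is \emph{uncertain} with existence probability $q\in(0,1)$, and all other edges are certain; set every propagation probability $p(e)=1$. Add one further node $a$ whose only edge is a certain $a\to B$, so that $a$ also reaches all of $B$, and let $K=1$ over two rounds. I would then compare $\psi=\tuple{s\text{ selected}}$ with its extension $\psi'=\tuple{s\text{ selected},\,t\text{ selected},\,t\to B\text{ observed absent}}$.

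Under $\psi$ only $s$ has been chosen; $s$ influences $t$ along the certain edge, and $t$ in turn influences all of $B$ exactly when the \emph{unobserved} edge $t\to B$ exists, which occurs with probability $q$. Thus $B$ is already covered with probability $q$, and adding $a$ yields $\Delta(a\mid\psi)=1+\ell(1-q)$. Under $\psi'$ we have additionally observed $t\to B$ to be absent, so $B$ is certainly un-influenced; adding $a$ now covers all of $B$, giving $\Delta(a\mid\psi')=1+\ell$. Since $q>0$ we obtain $\Delta(a\mid\psi)<\Delta(a\mid\psi')$, so the expected marginal gain of $a$ \emph{increases} as we condition on the larger realization, contradicting the diminishing-returns inequality and showing the DIME influence function is not adaptive submodular.

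The step I would treat most carefully is the bookkeeping of the observation model, which is where the whole effect lives: because only out-going edges of \emph{selected} nodes are revealed and the intermediate diffusion is never observed, under $\psi$ one is genuinely forced to hedge over whether $s$'s influence has already reached $B$ through $t$, whereas the single extra observation in $\psi'$ collapses that uncertainty in the favorable direction. I would verify (i) that $\psi'$ is a consistent, positive-probability extension of $\psi$, which it is because it requires $t\to B$ absent (probability $1-q>0$); (ii) that $a$ is never pre-influenced, which holds since $a$ has no in-edges; and (iii) that the gap survives the multi-chance variant of the independent-cascade model used here, which it does because $p(e)=1$ makes every existing edge fire with certainty, leaving edge existence as the only randomness. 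Intuitively, the argument shows that \emph{learning an uncertain edge is absent can raise a node's marginal value}, and it is precisely this partial observability of the network state that destroys adaptive submodularity and, with it, the $(1-1/e)$ guarantee of the greedy algorithm of \cite{golovin2011adaptive}.
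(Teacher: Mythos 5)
Your proof is correct and, like the paper's, works by exhibiting one explicit counterexample to the diminishing-returns inequality $\Delta(e\mid\psi)\ge\Delta(e\mid\psi')$ for nested partial realizations $\psi\subseteq\psi'$; the gadget and the mechanism you exploit, however, are genuinely different. The paper uses a directed four-node path $a\to b\to c\to d$ with $p(e)=1$, horizon $L=2$, $u(e_1)=1-\epsilon$ and $u(e_2)=u(e_3)=\epsilon$, and compares the marginal gain of node $b$ under $\Psi_1$ (only $a$ selected, $e_1$ known to exist, gain $\epsilon^2$) with its gain under the larger realization $\Psi_2$ (both $a$ and $c$ selected, gain $\epsilon$); the violation there is driven jointly by the horizon cap (node $d$ is reachable from $b$ but not from $a$ within two steps) and by the fact that under the smaller realization $b$ is already surely influenced, so adding it buys almost nothing. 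Your construction isolates a single, cleaner effect: observing that an uncertain edge is \emph{absent} makes a whole region certainly uninfluenced and thereby \emph{raises} the marginal value of an alternative seed into that region, from $1+\ell(1-q)$ to $1+\ell$, so the violation can be made as large as desired (gap $q\ell$) rather than the paper's $\epsilon-\epsilon^2$. Both routes are valid; to make yours fully rigorous you should pin down the details you yourself flag — choose $L$ and $T$ large enough that influence actually traverses $s\to t\to B$ and that $a$ covers all of $B$ within the remaining rounds, fix the internal topology of $B$ (e.g., a star whose center receives the edges from $t$ and from $a$), and state that $\psi'$ is a positive-probability extension of $\psi$ because $t$'s outgoing uncertain edge is revealed only once $t$ is selected. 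None of these is a substantive obstacle.
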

\begin{proof}
	The definition of adaptive submodularity requires that the expected marginal increase of influence by picking an additional node $v$ is more when we have less observation. Here the expectation is taken over the random states that are consistent with current observation. We show that this is not the case in DIME problem. Consider a path with $4$ nodes $a,b,c,d$ and three \emph{directed} edges $e_1 = (a,b)$ and $e_2 = (b,c)$ and $e_3 = (c,d)$ (see Figure \ref{fig:failure}). Let $p(e_1) = p(e_2) = p(e_3)=1$, i.e., propagation probability is $1$; $L=2$, i.e., influence stops after two round; and $u(e_1) =1-\epsilon$ $u(e_2) = u(e_3) = \epsilon$ for some small enough $\epsilon$ to be set. That is the only uncertainty comes from incomplete knowledge of the existence of edges. 
	
	Let $\Psi_1 = \{e_1 \text{ exists} \}$ and $\Psi_2 = \{e_1,e_3 \text{ exists} \}$. Then $\mathbb{E}_{\Phi}\left[f(a,b,c)|\Phi \sim \Psi_2 \right] = 4$ since all nodes will be influenced. $\mathbb{E}_{\Phi}\left[f(a,c)|\Phi \sim \Psi_2 \right] = 4-\epsilon$ since the only uncertain node is $b$ which will be influenced with probability $1-\epsilon$. Therefore,
	\begin{equation}\label{eq:withb}
		\mathbb{E}_{\Phi}\left[f(a,b,c)|\Phi \sim \Psi_2 \right] - \mathbb{E}_{\Phi}\left[f(a,c)|\Phi \sim \Psi_2 \right] = \epsilon.
	\end{equation}
		Now 
	 $\mathbb{E}_{\Phi}\left[f(a,b)|\Phi \sim \Psi_1 \right] = 2 + \epsilon + \epsilon^2$ since $a,b$ will be surely influenced, $c$ and $d$ will be influenced with probability $\epsilon$ and $\epsilon^2$ respectively. On the other hand, $\mathbb{E}_{\Phi}\left[f(a)|\Phi \sim \Psi_1 \right] = 2 + \epsilon $ since $b$ will be surely influenced (since $e_1$ exists) and $c$ will be influenced with probability $\epsilon$. Since $L=2$, $d$ cannot be influenced. As a result, 
	 	\begin{equation}\label{eq:nob}
	 	\mathbb{E}_{\Phi}\left[f(a,b)|\Phi \sim \Psi_2 \right] - \mathbb{E}_{\Phi}\left[f(a)|\Phi \sim \Psi_2 \right] = \epsilon^2.
	 	\end{equation}
	 	
	 	Combining Equation \eqref{eq:withb} and \eqref{eq:nob}, we know that DIME is not adaptive submodular.
\end{proof}

\chapter{POMDP Model for DIME Problem}
\label{chapter:POMDP}
The above theorems show that DIME is a hard problem as it is difficult to even obtain any reasonable approximations. We model DIME as a POMDP \cite{puterman2009markov} because of two reasons. First, POMDPs are a good fit for DIME as (i) we conduct several interventions sequentially, similar to sequential POMDP actions; and (ii) we have \textit{partial observability} (similar to POMDPs) due to uncertainties in network structure and influence status of nodes. Second, POMDP solvers have recently shown great promise in generating near-optimal policies efficiently \cite{silver2010monte}. We now explain how we map DIME onto a POMDP.

\section{POMDP States} A POMDP state in our problem is a pair of binary tuples $s = \tuple{W, F}$ where $W$ denotes the influence status of network nodes, i.e., $W_i = 1$ denotes that node $i$ is influenced and $W_i = 0$ denotes that node $i$ is not influenced. Similarly, $F$ denotes the existence of uncertain edges, where $F_i = 1$ denotes that the $i^{th}$ uncertain edge exists in reality, and $F_i = 0$ denotes that the $i^{th}$ uncertain edge does not exist in reality. We have an exponential state space, as in a social network with $N$ nodes and $M$ uncertain edges, the total number of possible states in our POMDP is $2^{N+M}$.

\section{POMDP Actions } Every choice of a subset of $K$ nodes in the social network is a possible POMDP action. More formally, $A = \{ a \subset V s.t. |a| = K\}$ represents the set of all valid actions in our POMDP. For example, in Figure \ref{fig:backgrnd_uncertainG}, one possible action is $\{A,B\}$ (when $K=2$). We have a combinatorial action space, as in a social network with $N$ nodes and the size of selected subset is $K$, the total number of possible actions in our POMDP is ${N \choose K}$.

\section{POMDP Observations } Upon taking a POMDP action, we ``\textit{observe}" the ground reality of the uncertain edges outgoing from the nodes chosen in that action. Consider $\Theta(a) = \{ \mbox{e }|\mbox{ e = (x,y) \text{s.t.} x} \in a \mbox{ } \wedge\mbox{ e} \in E_u \}\mbox{ }\forall a \in A$, which represents the (ordered) set of uncertain edges that are observed when we take POMDP action $a$. Then, our POMDP observation upon taking action $a$ is defined as $o(a) = \{F_{e} | e \in \Theta(a)\}$, i.e., the F-values (described in the POMDP state description) of the observed uncertain edges. For example, by taking action $\{B,C\}$ in Figure \ref{fig:backgrnd_uncertainG}, the values of $F_4$ and $F_5$ (i.e., the F-values of uncertain edges in the 1-hop social circle of nodes $B$ and $C$) would be observed. We have an exponential observation space, as the number of possible observations is exponential in the number of edges that are outgoing from the nodes selected in the action.

\section{POMDP Rewards } The reward $R(s,a,s')$ of taking action $a$ in state $s$ and reaching state $s'$ is the number of newly influenced nodes in $s'$. More formally, $R(s,a,s') = (\|s'\|-\|s\|)$, where $\|s'\|$ is the number of influenced nodes in $s'$. Over a time horizon, the long term reward of the POMDP equals the total number of nodes that are influenced in the social network (because of telescoping sum rule).

\section{POMDP Initial Belief State } The initial belief state is a distribution $\beta_0$ over all states $s \in S$. The support of $\beta_0$ consists of all states $s = \tuple{W, F}$ s.t. $W_i=0 \mbox{ } \forall \mbox{ } i \in [1,|V|]$, i.e., all states in which all network nodes are un-influenced (as we assume that all nodes are un-influenced to begin with). Inside its support, each $F_i$ is distributed independently according to $P(F_i=1)= u(e)$ (where $u(e)$ is the existence probability on edge $e$).

\section{POMDP Transition And Observation Probabilities } Computation of exact transition probabilities $T(s'|s,a)$ requires considering all possible paths in a graph through which influence could spread, which is $\mathcal{O}(N!)$ ($N$ is number of nodes in the network) in the worst case. Moreover, for large social networks, the size of the transition and observation probability matrix is prohibitively large (due to exponential sizes of state and action space). Therefore, instead of storing huge transition/observation matrices in memory, we follow the paradigm of large-scale online POMDP solvers \cite{silver2010monte,eck2015ask} by using a generative model $\Lambda(s, a) \sim (s', o, r)$ of the transition and observation probabilities. This generative model allows us to generate on-the-fly samples from the exact distributions $T(s'|s,a)$ and $\Omega(o|a,s')$ at very low computational costs. Given an initial state $s$ and an action $a$ to be taken, our generative model $\Lambda$ simulates the random process of influence spread to generate a random new state $s'$, an observation $o$ and the obtained reward $r$. Simulation of the random process of influence spread is done by ``\textit{playing}" out propagation probabilities (i.e., flipping weighted coins with probability $p(e)$) according to our influence model to generate sample $s'$. The observation sample $o$ is then determined from $s'$ and $a$. Finally, the reward sample $r = (\|s'\|-\|s\|)$ (as defined above). This simple design of the generative model allows significant scale and speed up (as seen in previous work \cite{silver2010monte} and also in our experiments).

This completes the discussion of our POMDP model for the DIME problem. Unfortunately, for real-world networks of homeless youth (which had $\sim$300 nodes), our POMDP model had $\sim2^450$ states and ${300 \choose 5}$ actions. Due to this huge explosion in the state and action spaces, current state-of-the-art offline and online POMDP solvers were unable to scale up to this problem. Initial experiments with the ZMDP solver \cite{zmdp} showed that state-of-the-art offline POMDP planners ran out of memory on networks having a mere 10 nodes. Thus, we focused on online planning algorithms and tried using POMCP \cite{silver2010monte}. Unfortunately, our experiments showed that even POMCP runs out of memory on networks having 30 nodes. This happens because POMCP \cite{silver2010monte} keeps the entire search tree over sampled histories in memory, disabling scale-up to the problems of interest in this paper. Hence, in the next couple of chapters, we will discuss two novel POMDP algorithms, i.e., PSINET \cite{yadav2015preventing} and HEALER \cite{yadav2016using}, that exploit problem structure to scale up to real-world nework sizes.



\chapter{PSINET}
\label{chapter:PSINET}
This chapter presents PSINET (or \textbf{P}OMDP based \textbf{S}ocial \textbf{I}nterventions in \textbf{N}etworks for \textbf{E}nhanced HIV \textbf{T}reatment), a novel Monte Carlo (MC) sampling online POMDP algorithm which addresses the shortcomings in POMCP \cite{silver2010monte}. At a high level, PSINET \cite{yadav2015preventing,yadav2016psinet} makes two significant advances over POMCP. First, it introduces a novel transition probability heuristic (by leveraging ideas from social network analysis) that allows storing the entire transition probability matrix in an extremely compact manner (for the real-world homeless youth network, the size of the transition probability matrix is reduced from a matrix containing $2^{300}\times{450 \choose 5} \times2^{300}$ numbers to just $300$ numbers). Second, PSINET utilizes the QMDP heuristic \cite{littman1995learning} to enable scale-up and eliminates the search tree of POMCP.


\section{$1^{st}$ Key Idea: Transition Probability Heuristic}
 In this section, we explain our transition probability heuristic that we use for estimating our POMDP's transition probability matrix. Essentially, we need to come up with a way of finding out the final state of the network (probabilistically) prior to the beginning of the next intervention round. Prior to achieving the final state, the network evolves in a pre-decided number of time-steps. Each time step corresponds to a period in which friends can talk to their friends. Therefore, a time step value of 3 implies allowing for friends at 3 hops distance to be influenced. 

However, we make an important assumption that we describe next. Consider two different chains of length four (nodes) as shown in Figure \ref{fig:FigureChain}. In Chain 1, only the node at the head of the chain is influenced (shown in black) and the remaining three nodes are not influenced (shown in white). The probability of the tail node of this chain getting influenced is \textit{(0.5)}\textsuperscript{3} (assuming no edge is uncertain and probability of propagation is 0.5 on all edges). In Chain 2, all nodes except the tail node is already influenced. In this case, the tail node gets influenced with a probability \textit{0.5 + (0.5)}\textsuperscript{2}\textit{ + (0.5)}\textsuperscript{3}. Thus, it is highly unlikely that influence will spread to the end node of the first chain as opposed to the second chain. For this reason, we only keep chains of the form of Chain 2 and accordingly prune our graph (explained next).

\begin{figure}[htb]
\center{\includegraphics[scale=.45]
{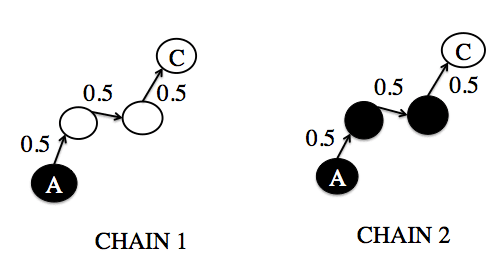}}
\caption{\label{fig:FigureChain} Chains in social networks}
\end{figure}

Given action $\alpha$, we construct a weighted adjacency matrix for graph $G_{\sigma}$ (created from graph $G$) s.t.

\begin{equation}
G_{\sigma}(i,j) = \begin{cases} 1 &\mbox{if } (i,j) \in E_c \wedge (W[i]=1 \vee \alpha[i]=1)\\
u(i,j) & \mbox{if } (i,j) \in E_u \wedge (W[i]=1 \vee \alpha[i]=1)\\ 
0 & \mbox{if } otherwise. \end{cases} 
\end{equation}

$G_{\sigma}$ is a \emph{pruned} graph which contains only edges outgoing from influenced nodes. We prune the graph because influence can only spread through edges which are outgoing from influenced nodes. Note that $G_{\sigma}$ does not consider influence spreading along a path consisting of more than one uninfluenced node, as this event is highly unlikely in the limited time in between successive interventions. However, nodes connected to a chain (of arbitrary length) of influenced nodes get influenced more easily due to reinforced efforts of all influenced nodes in the chain. Note that with respect to the chains in Figure \ref{fig:FigureChain},  $G_{\sigma}$ only considers chains of type 2 and prunes away chains of type 1.

Using these assumptions, we use $G_{\sigma}$ to construct a diffusion vector $\bf{D}$, the $i^{th}$ element of which gives us a measure of the probability of the $i^{th}$ node to get influenced.  This diffusion vector $\bf{D}$ is then used to estimate $T(s,\alpha,s')$.  

\begin{figure}[htb]
\center{\includegraphics[scale=.35]
{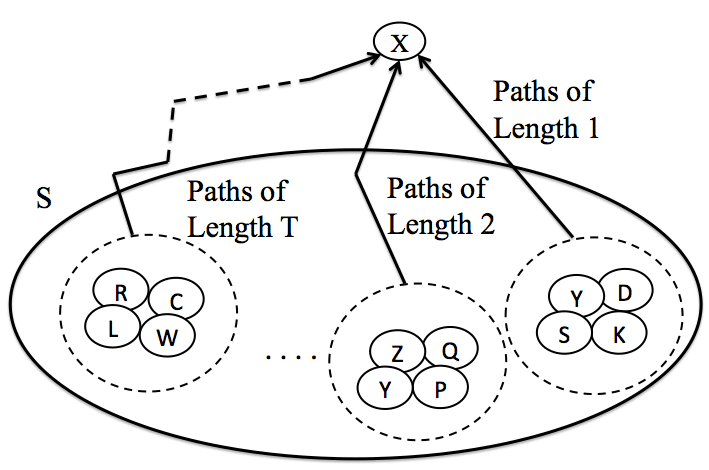}}
\caption{\label{fig:Figureintuition}X is any uninfluenced node. S (the big oval) denotes the set of all influenced nodes. All these nodes have been categorized according to their path length from node X. For e.g., all nodes having a path of length 1 (i.e., Y, D, S, K) are distinguished from all nodes having path of length T (i.e., R, W, L, C). Note that node Y has paths of length 1 and 2 to node X.}
\end{figure} 

Figure \ref{fig:Figureintuition} illustrates the intuition behind our transition probability heuristic. For each uninfluenced node \textit{X} in the graph, we calculate the total number of paths (like Chain 2 in Figure \ref{fig:FigureChain}) of different lengths \textit{L=1, 2,...,$\mathcal{T}$} from influenced nodes to node \textit{X}. Since influence spreads on chains of different lengths according to different probabilities, the probabilities along all paths of different lengths are combined together to determine an approximate probability of node \textit{X} to get influenced before the next intervention round. Since we consider all these paths independently (instead of calculating joint probabilities), our approach produces an approximation. Next, we formalize this intuition of the transition probability heuristic.


A known result states that if $G$ is a graph's adjacency matrix, then $G^r(i,j)$ ($G^r$ = $G$ multiplied $r$ times) gives the number of paths of length $r$ between nodes $i$ and $j$ \cite{diestel2005graph}. Additionally, note that if all edges $e_i$ in a path of length $r$ have different propagation probabilities $p(e_i)\mbox{ }\forall\mbox{ } i \mbox{ }\in [1,r]$, the probability of influence spreading between two nodes connected through this path of length $r$ is $\Pi_{i=1}^r p(e_i)$. For simplicity, we assume the same $p(e)\mbox{ } \forall e \in E$; hence, the probability of influence spreading becomes $p^r$. Using these results, we construct diffusion vector $\bf{D}$:
\begin{equation}
\bf{D(p,T)_{nx1}} = \sum\nolimits_{t\in[1,T]} \Big( \left( p \overline{G}_{\sigma} \right)^t * \bf{1_{nx1}} \Big)
\end{equation}

Here, $\bf{D(p, T)}$ is a column vector of size nx1, $\bf{p}$ is the constant propagation probability on the edges, $\bf{T}$ is a variable parameter that measures number of hops considered for influence spread (higher values of $\bf{T}$ yields more accurate $\bf{D(p, T)}$ but increases the runtime), $\bf{1_{nx1}}$ is a nx1 column vector of 1's and $\overline{G}_{\sigma}$ is the transpose of $G_{\sigma}$. This formulation is similar to diffusion centrality \cite{banerjee2013diffusion} where they calculate influencing power of nodes. However, we calculate power of nodes to get influenced (by using $\overline{G}_{\sigma}$). 

\begin{proposition}\label{lem:(2)}
$\bf{D_i}$, the $i^{th}$ element of $\bf{D(p,T)_{nx1}}$, upon normalization, gives an approximate probability of the $i^{th}$ graph node to get influenced in the next round.
\end{proposition}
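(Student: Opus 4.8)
The plan is to fix an arbitrary uninfluenced node $i$ and show that the $i$-th coordinate of $\mathbf{D}(p,T)$ is exactly the sum, over all admissible chains terminating at $i$, of the per-chain probability that influence reaches $i$ along that chain, under the two simplifying assumptions already baked into $G_\sigma$: that influence travels only along ``Chain~2''-type paths (whose source and interior nodes are all influenced or selected) and that distinct chains act independently. Since under these assumptions the marginal probability of $i$ becoming influenced is (approximately) the aggregated contribution of all such chains, identifying $\mathbf{D}_i$ with this aggregate — and then rescaling into $[0,1]$ — yields the claimed approximate influence probability.

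First I would unfold the matrix expression. Writing out the $i$-th coordinate, $\big[(p\overline{G}_\sigma)^t \mathbf{1}\big]_i = \sum_j p^t (\overline{G}_\sigma^{\,t})_{ij} = \sum_j p^t (G_\sigma^{\,t})_{ji}$, using $\overline{G}_\sigma = G_\sigma^{\top}$ so that $\overline{G}_\sigma^{\,t} = (G_\sigma^{\,t})^{\top}$. By the walk-counting identity cited above (Diestel), $(G_\sigma^{\,t})_{ji}$ equals the weighted number of length-$t$ walks from $j$ to $i$ in $G_\sigma$, each walk weighted by the product of its edge entries — namely $1$ for each certain edge and $u(e)$ for each uncertain edge traversed. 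The transpose is the crucial modeling choice: whereas $G_\sigma^{\,t}$ would count chains emanating from $i$ (its power to influence), $(G_\sigma^{\,t})_{ji}$ counts chains arriving at $i$, which is precisely node $i$'s propensity to \emph{get} influenced. Moreover, because $G_\sigma$ zeroes out every edge whose tail is neither influenced nor selected, any surviving length-$t$ walk into $i$ has all of its source and interior nodes already activated, so these are exactly the ``Chain~2'' configurations retained by the pruning, while ``Chain~1'' configurations are automatically discarded.

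Next I would attach the probabilistic meaning. Along a single admissible chain of length $t$ with uniform propagation probability $p$ per hop, influence traverses the whole chain with probability $p^t = \prod_{k=1}^t p(e_k)$ (the path-spread fact), further scaled by the existence likelihoods $u(e)$ of any uncertain edges it uses. Hence $\sum_j p^t (G_\sigma^{\,t})_{ji}$ is the total probability mass contributed by all length-$t$ chains into $i$, and summing over $t \in [1,T]$ gathers the contributions of chains of every length up to the horizon, giving $\mathbf{D}_i = \sum_{t=1}^T \sum_j p^t (G_\sigma^{\,t})_{ji}$. Treating the chains as independent events and adding their success probabilities (rather than performing exact inclusion--exclusion over shared edges) is what makes the quantity an \emph{approximation} and what can push it above $1$; normalizing $\mathbf{D}$ then returns a value in $[0,1]$ interpretable as an approximate influence probability for node $i$.

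The hard part will be pinning down the sense of ``approximate'': the additive aggregation over chains double-counts the overlap between chains that share edges or endpoints, so $\mathbf{D}_i$ is really an over-estimate of the true union probability $\Pr[i \text{ influenced}]$ rather than an exact marginal. I would therefore not attempt an equality, but instead argue (i) that each summand is the correct per-chain spread probability, so the construction is faithful to the stated independent-paths heuristic, and (ii) that monotonicity in the chain counts makes the normalized score rank nodes consistently with their true influenceability, which is all the downstream transition-probability estimate requires. Establishing a quantitative bound on the gap between $\mathbf{D}_i$ and the exact marginal — via a union bound or a first-order truncation argument — is the natural place to make ``approximate'' fully rigorous, but since the proposition only asserts the interpretation, the core obligation remains the exact combinatorial identity of the second paragraph.
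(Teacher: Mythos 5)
Your argument is correct and takes essentially the same route as the paper: the paper in fact offers no formal proof of this proposition at all, only the informal construction in the preceding paragraphs (walk counting via powers of $G_{\sigma}$, the transpose so that entries count chains arriving at a node rather than leaving it, the per-chain spread probability $p^t$, and the treat-paths-independently approximation), and your proposal is a faithful — and somewhat more explicit — formalization of exactly that reasoning. Your closing caveat that $\mathbf{D}_i$ over-counts overlapping chains and is thus only an approximate (indeed inflated) marginal is precisely the concession the paper itself makes in one line (``since we consider all these paths independently\ldots our approach produces an approximation''), so there is no gap between your write-up and what the paper intends.
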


Consider the set $\bigtriangleup = \{i \mbox{ } | \mbox{ }W'[i]=1 \wedge W[i]=0 \wedge \alpha[i]=0\}$, which represents nodes which were uninfluenced in the initial state $s$ ($W[i]=0$) and which were not selected in the action ($\alpha[i]=0$), but got influenced by other nodes in the final state $s'$ ($W'[i]=1$). Similarly, consider the set $\Phi = \{j \mbox{ } | \mbox{ }W'[j]=0 \wedge W[j]=0 \wedge \alpha[j]=0\}$, which represents nodes which were not influenced even in the final state $s'$ ($W'[j]=0$).\ 
Using $\bf{D_i}$ values, we can now calculate $T(s,\alpha,s') = {\displaystyle \Pi_{i \in \bigtriangleup} \bf{D_i}}{\displaystyle \Pi_{j \in \Phi} (1 - \bf{D_j})}$, i.e., we multiply influence probabilities $\bf{D_i}$ for nodes which are influenced in state $s'$, along with probabilities of not getting influenced $(1 - \bf{D_j})$ for nodes which are not influenced in state $s'$. This heuristic allows storing transition probability matrices in a compact manner, as only a single number for each network node (specifying the probability that the node will be influenced) needs to be maintained. Next, we discuss the QMDP heuristic used inside PSINET and the overall flow of the algorithm.

 
\section{$2^{nd}$ Key Idea: Leveraging the QMDP Heuristic}

\subsection{QMDP} It is a well known approximate offline planner, and it relies on $Q(s,a)$ values, which represents the value of taking action $a$ in state $s$. It precomputes these $Q(s,a)$ values for every $(s,a)$ pair by approximating them by the future expected reward obtainable if the environment is fully observable \cite{littman1995learning}. Finally, QMDP's approximate policy $\Pi$ is given by $\Pi(b) = \argmax_{a} \sum_s Q(s,a)b(s)$ for belief $b$. Our intractable POMDP state/action spaces makes it infeasible to calculate $Q(s,a)$  $\forall$ $(s,a)$. Thus, we propose to use a MC sampling based online variant of QMDP in PSINET.

\subsection{PSINET Algorithm Flow} Algorithm 2 shows the flow of PSINET. In Step \ref{qmdpflow:1}, we randomly sample all $e \in E_u$ in $G$ (according to $u(e)$) to get $\Delta$ different graph instances. Each of these instances is a different POMDP as the h-values of nodes are still partially observable. Since each of these instances fixes $f(e)\mbox{ } \forall e \in E_u$, the belief $\beta$ is represented as an un-weighted particle filter where each particle is a tuple of h-values of all nodes. This belief is shared across all instantiated POMDPs. For every graph instance $\delta \in \Delta$, we find the best action $\alpha_{\delta}$ in graph $\delta$, for the current belief $\beta$ in step \ref{qmdpflow:3}. In step \ref{qmdpflow:5}, we find the best action $\kappa$ for belief $\beta$, over all $\delta \in \Delta$ by voting amongst all the actions chosen by $\delta \in \Delta$. Then, in step \ref{qmdpflow:6}, we update the belief state based on the chosen action $\kappa$ and the current belief $\beta$. PSINET can again be used to find the best action for this or any future updated belief states. We now detail the steps in Algorithm 2.

\begin{algorithm}[t!]
\label{alg:OnlineQMDP}
\caption{PSINET}
\KwIn{Belief state $\beta$, Uncertain graph $G$}
\KwOut{Best Action $\kappa$}
Sample graph to get $\Delta$ different instances; \\\label{qmdpflow:1}
\For {$\delta \in \Delta$} {
	$FindBestAction(\delta, \alpha_{\delta}, \beta)$;\\\label{qmdpflow:3}
	}
	$\kappa = VoteForBestAction(\Delta, \alpha)$\\\label{qmdpflow:5}
	$UpdateBeliefState(\kappa, \beta)$;\\\label{qmdpflow:6}
	return $\kappa$;\\
\end{algorithm}

\textbf{Sampling Graphs} In Step \ref{qmdpflow:1}, we randomly keep or remove uncertain edges to create one graph instance. As a single instance might not represent the real network well, we instantiate the graph $\Delta$ times and use each of these instances to vote for the best action to be taken. 

\textbf{FindBestAction} Step \ref{qmdpflow:3} uses Algorithm 3, which finds the best action for a single network instance, and works similarly for all instances. For each instance, we find the action which maximizes long term rewards averaged across $n$ (we use $n=2^8$) MC simulations starting from states (particles) sampled from the current belief $\beta$. Each MC simulation samples a particle from $\beta$ and chooses an action to take (choice of action is explained later). Then, upon taking this action, we follow a uniform random rollout policy (until either termination, i.e., all nodes get influenced, or the horizon is breached) to find the long term reward, which we get by taking the ``selected" action. This reward from each MC simulation is analogous to a $Q(s,a)$ estimate. Finally, we pick the action with the maximum average reward.

\begin{algorithm}[t!]
\label{alg:FindBestAction}
\caption{FindBestAction}
\KwIn{Graph instance $\delta$, belief $\beta$, $\bf{N}$ simulations}
\KwOut{Best Action $\alpha_{\delta}$}
Initialize $counter = 0$;\\\label{newflow:1}
\While {$counter++ < \bf{N}$} {
	$s = SampleStartStateFromBelief(\beta)$;\\\label{newflow:2}
	$a = UCT\_MultiArmedBandit(s)$;\\\label{newflow:3}
	$\{s',r\} = SimulateRolloutPolicy(s,a)$;\\\label{newflow:4}
	}
	$\alpha_{\delta} = \mbox{action with max average reward}$;\\\label{newflow:7}
	return $\alpha_{\delta}$;
\end{algorithm}

\textbf{Multi-Armed Bandit} We can only calculate $Q(s,a)$ for a select set of actions (due to our intractable action space). To choose these actions, we use a UCT implementation of a multi-armed bandit to select actions, with each bandit arm being one possible action. Every time we sample a new state from the belief, we run UCT, which returns the action which maximizes this quantity: $\Upsilon(s,a)=Q_{MC}(s,a)+c_0\sqrt{\frac{\log N(s)}{N(s,a)}}$. Here, $Q_{MC}(s,a)$ is the running average of Q(s,a) values across all MC simulations run so far. $N(s)$ is number of times state $s$ has been sampled from the belief. $N(s,a)$ is number of times action $a$ has been chosen in state $s$ and $c_0$ is a constant which determines the exploration-exploitation tradeoff for UCT. High $c_0$ values make UCT choose rarely tried actions more frequently, and low $c_0$ values make UCT select actions having high $Q_{MC}(s,a)$ to get an even better $Q(s,a)$ estimate. Thus, in every MC simulation, UCT strategically chooses which action to take, after which we run the rollout policy to get the long term reward.

\textbf{Voting Mechanisms} In Step \ref{qmdpflow:5}, each network instance votes for the best action (found using Step \ref{qmdpflow:3}) for the uncertain graph and the action with the highest votes is chosen. We propose three different voting schemes:

\begin{itemize}
\item \textbf{PSINET-S} Each instance's vote gets equal weight.

\item \textbf{PSINET-W} Every instance's vote gets weighted differently. The instance which removes $x$ uncertain edges has a vote weight of $W(x)= x \mbox{ }\forall x \leq m/2$ and $W(x)=m - x \mbox{ }\forall x > m/2$. This weighting scheme approximates the probabilities of occurrences of real world events by giving low weights to instances which removes either too few or too many uncertain edges, since those events are less likely to occur. Instances which remove $m/2$ uncertain edges get the highest weight, since that event is most likely.

\item \textbf{PSINET-C} Given a ranking over actions from each instance, the Copeland rule makes pairwise comparisons among all actions, and picks the one preferred by a majority of instances over the highest number of other actions \cite{pomerol2000multicriterion}. Algorithm 3 is run $D$ times for each instance to generate a partial ranking. 
\end{itemize} 
 
\textbf{Belief State Update} Recall that every MC simulation samples a particle from the belief, after which UCT chooses an action. Upon taking this action, some random state (particle) is reached using the transition probability heuristic. This particle is stored, indexed by the action taken to reach it. Finally, when all simulations are done, corresponding to every action $\alpha$ that was tried during the simulations, there will be a set of particles that were encountered when we took action $\alpha$ in that belief. The particle set corresponding to the action that we finally choose, forms our next belief state.

\section{Experimental Evaluation}
We provide two sets of results. First, we show results on artificial networks to understand our algorithms' properties on abstract settings, and to gain insights on a range of networks. Next, we show results on the two real world homeless youth networks that we had access to. In all experiments, we select 2 nodes per round and average over 20 runs, unless otherwise stated. PSINET-(S and W) use 20 network instances and PSINET-C uses 5 network instances (each instance finds its best action 5 times) in all experiments, unless otherwise stated. The propagation and existence probability values were set to 0.5 in all experiments (based on findings by \cite{kelly1997randomised}), although we relax this assumption later in the section. In this section, a $\tuple{X, Y, Z}$ network refers to a network with $X$ nodes, $Y$ certain and $Z$ uncertain edges. We use a metric of ``indirect influence spread" (IIS) throughout this section, which is number of nodes ``indirectly" influenced by intervention participants. For example, on a 30 node network, by selecting 2 nodes each for 10 interventions (horizon), 20 nodes (a lower bound for any strategy) are influenced with certainty. However, the total number of influenced nodes might be 26 (say) and thus, the IIS is 6. \textit{All comparison results are statistically significant under bootstrap-t ($\alpha = 0.05$)}.

\begin{figure}[htp]
\subfloat[Solution Quality]{\includegraphics[height=1.5in,width=0.48\columnwidth]{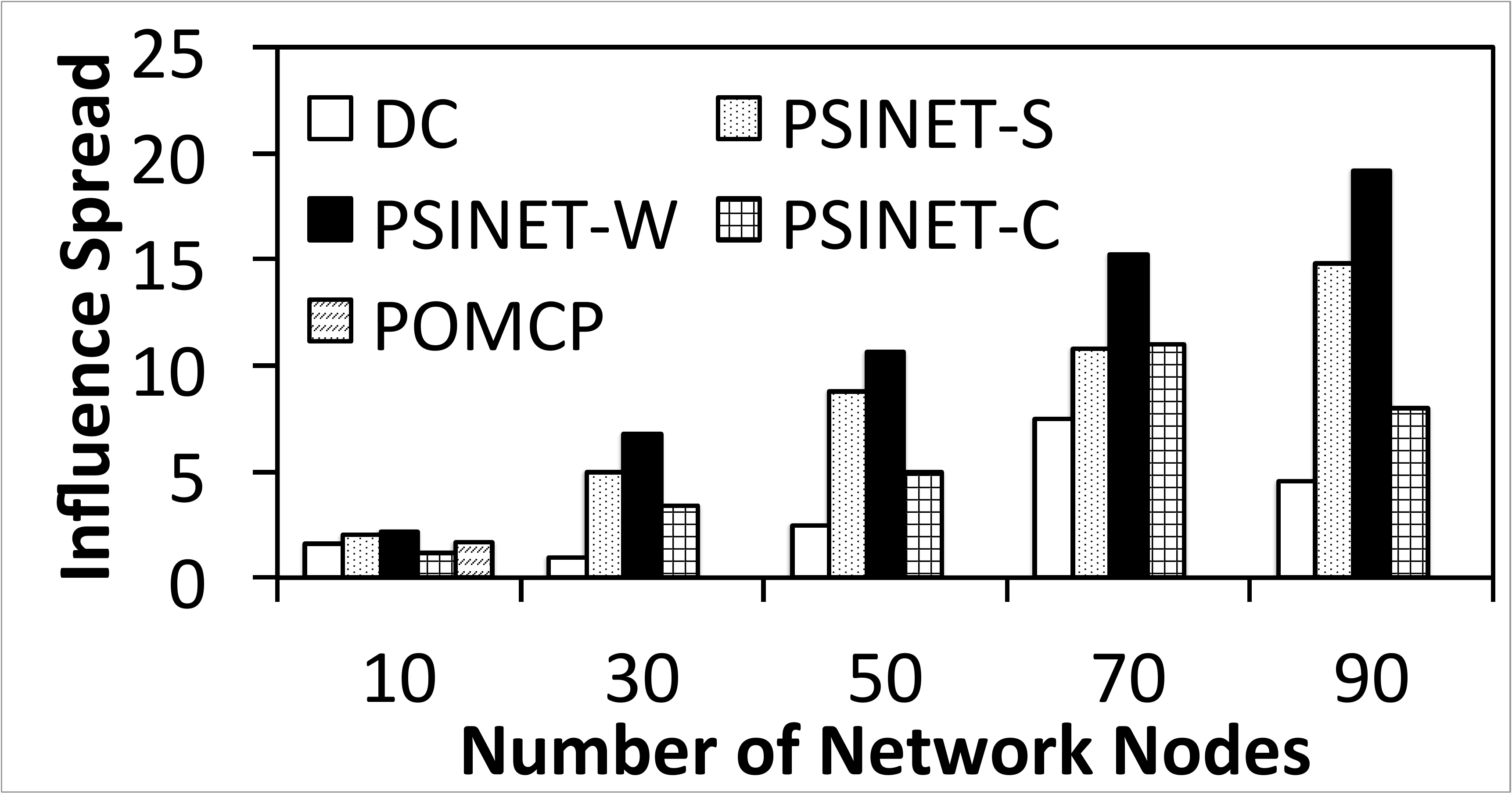}\label{fig:qmdpSolQual}}
\hspace{2mm}
\subfloat[Runtime]{\includegraphics[height=1.5in,width=0.48\columnwidth]{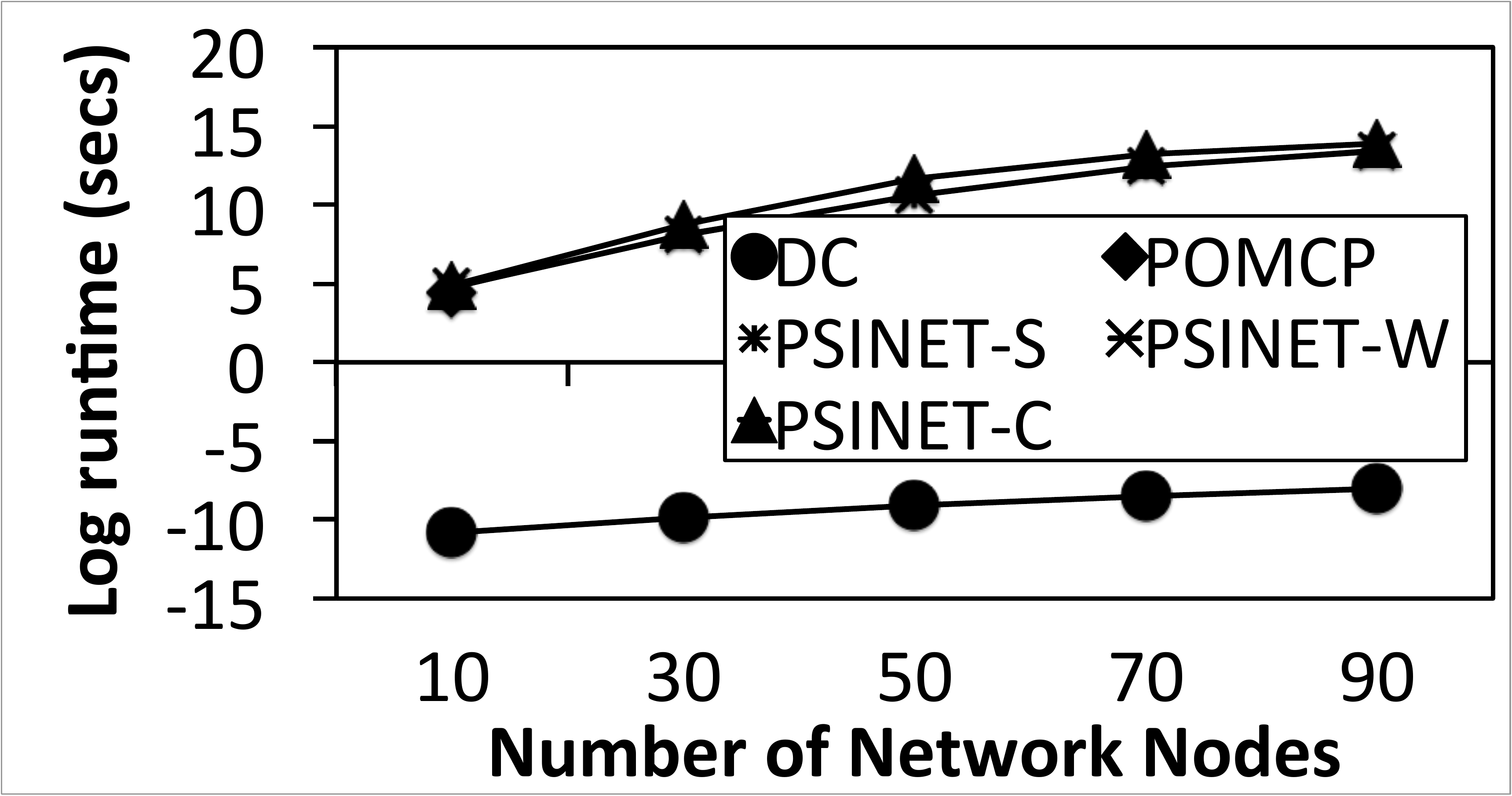}\label{fig:qmdpRuntime}}
\caption{\small Comparison on BTER graphs}
\end{figure}

\textbf{Artificial networks} First, we compare all algorithms on Block Two-Level Erdos-Renyi (BTER) networks (having degree distribution $X_d \propto d^{-1.2}$, where $X_d$ is number of nodes of degree $d$) of several sizes, as they accurately capture observable properties of real-world social networks \cite{seshadhri2012community}. Figures \ref{fig:qmdpSolQual} and \ref{fig:qmdpRuntime} show solution quality and runtimes (respectively) of Degree Centrality (DC) (which selects nodes based on their out-degrees, and $e \in E_u$ add $u(e)$ to node degrees), POMCP and PSINET-(S,W and C). We choose DC as our baseline as it is the current modus operandi of agencies working with homeless youth. X-axis is number of network nodes and Y-axis shows IIS across varying horizons (number of interventions) in Figure \ref{fig:qmdpSolQual} and log of runtime (in seconds) (Figure \ref{fig:qmdpRuntime}).

Figure \ref{fig:qmdpSolQual} shows that all POMDP based algorithms beat DC by $\sim$60\%, which shows the value of our POMDP model. Further, it shows that PSINET-W beats PSINET-(S and C). Also, \textit{POMCP runs out of memory on 30 node graphs}. Figure \ref{fig:qmdpRuntime} shows that DC runs quickest (as expected) and all PSINET variants run in almost the same time. Thus, Figures \ref{fig:qmdpSolQual} and \ref{fig:qmdpRuntime} tell us that while DC runs quickest, it provides the worst solutions. Amongst the POMDP based algorithms, PSINET-W is the best algorithm that can provide good solutions and can scale up as well. Surprisingly, PSINET-C performs worse than PSINET-(W and S) in terms of solution quality. Thus, we now focus on PSINET-W.

\begin{figure}[htp]
\subfloat[Solution Quality]{\includegraphics[height=1.5in,width=0.48\columnwidth]{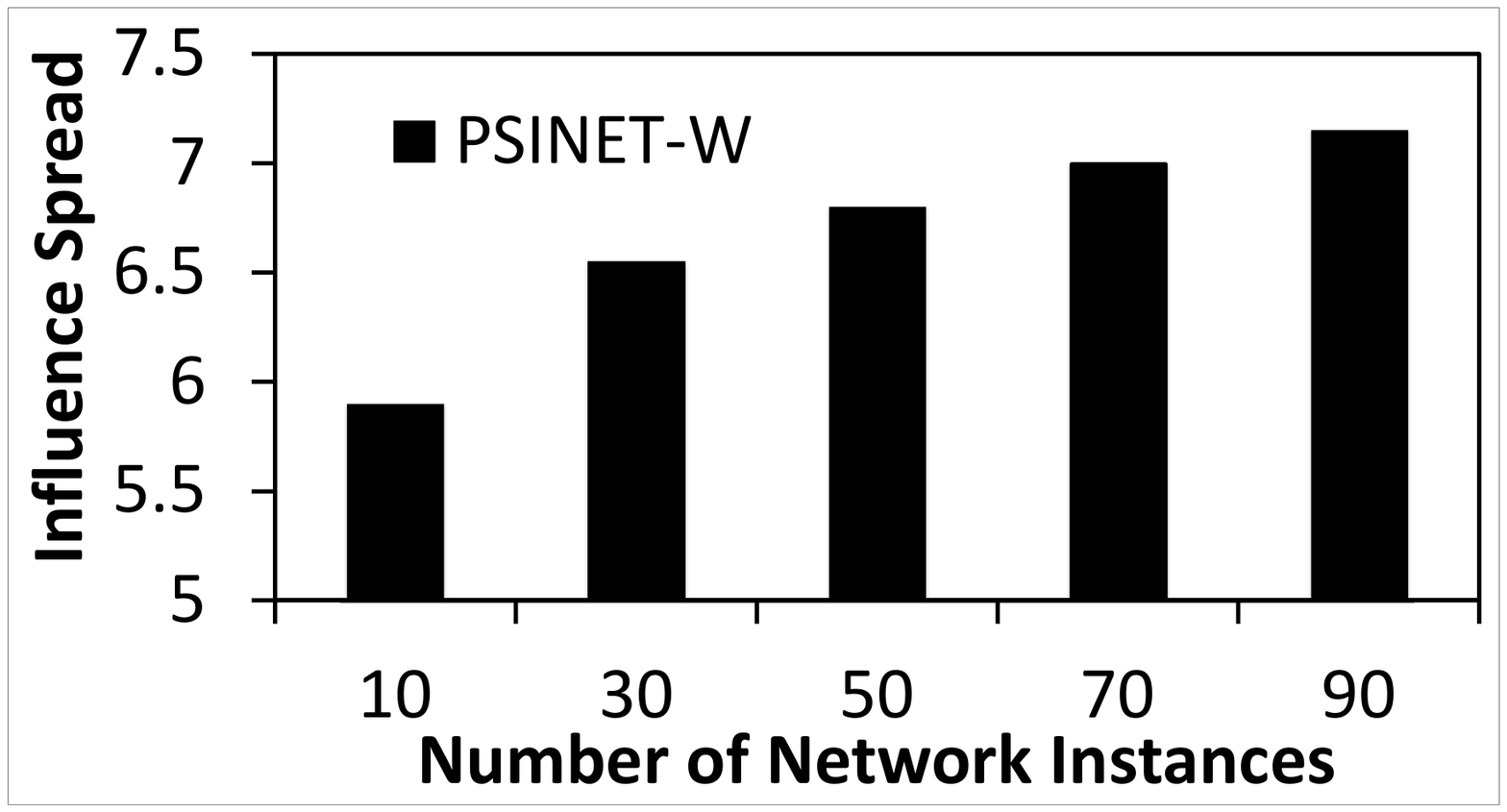}\label{fig:SolQual2}}
\hspace{2mm}
\subfloat[Runtime]{\includegraphics[height=1.5in,width=0.48\columnwidth]{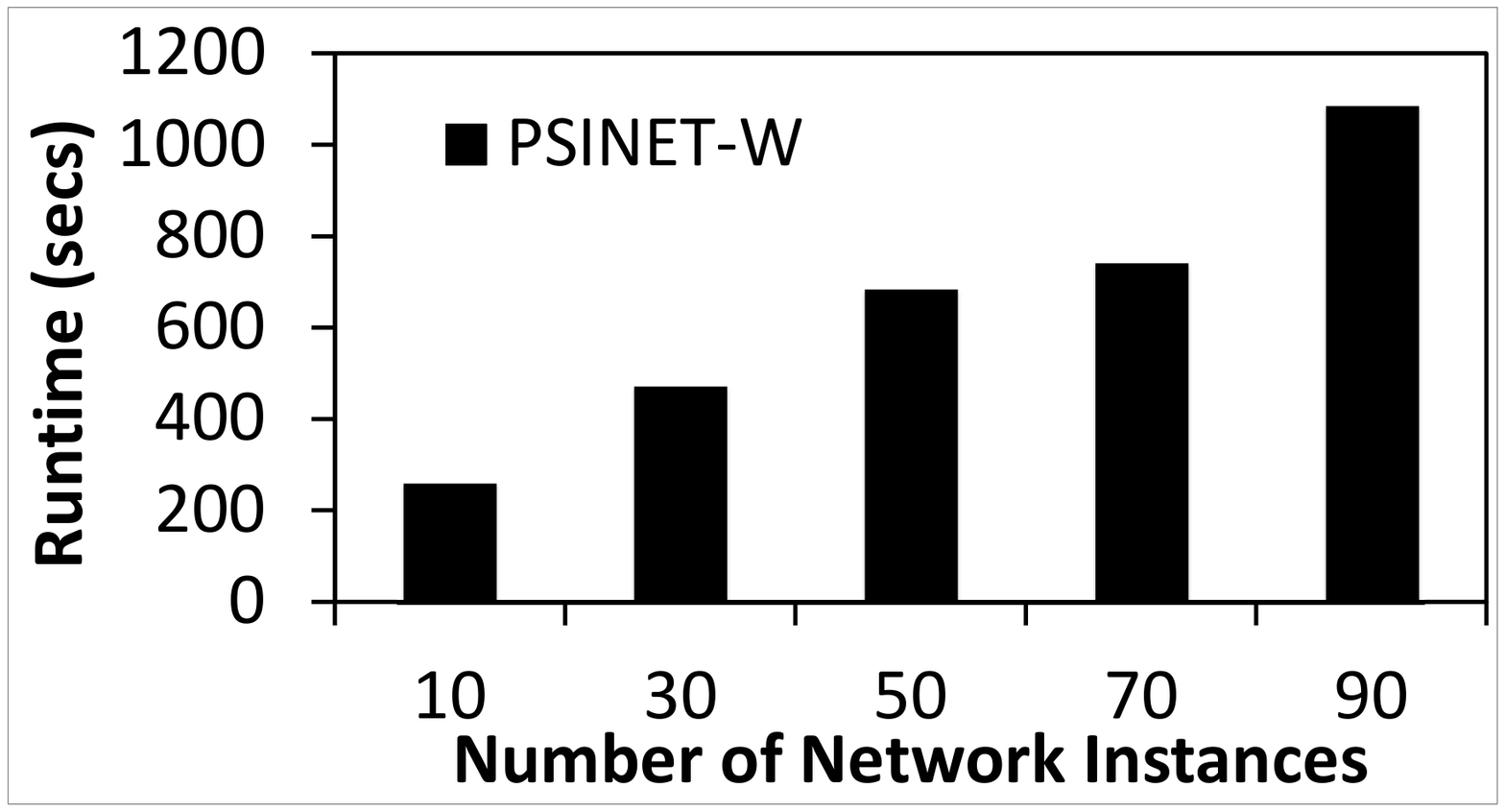}\label{fig:Runtime2}}
\caption{\small Increasing number of graph instances}
\end{figure}

Having shown the impact of POMDPs, we analyze the impact of increasing network instances (which implies increasing number of votes in our algorithm) on PSINET-W. Figures \ref{fig:SolQual2} and \ref{fig:Runtime2} show solution quality and runtime respectively of PSINET-W with increasing network instances, for a $\tuple{40,71,41}$ BTER network with a horizon of 10. X-axis is number of network instances and Y-axis shows IIS (Figure \ref{fig:SolQual2}) and runtime (in seconds) (Figure \ref{fig:Runtime2}). These figures show that increasing the number of instances increases IIS as well as runtime. Thus, a solution quality-runtime tradeoff exists, which depends on the number of network instances. Greater number of instances results in better solutions and slower runtimes and vice versa. However, for 30 vs 70 instances, the gain in solution quality is $<$5\% whereas the runtime is $\sim$2X, which shows that increasing instances beyond 30 yields marginal returns.

\begin{figure}[htp]
\subfloat[Varying $p(e)$]{\includegraphics[height=1.5in,width=0.48\columnwidth]{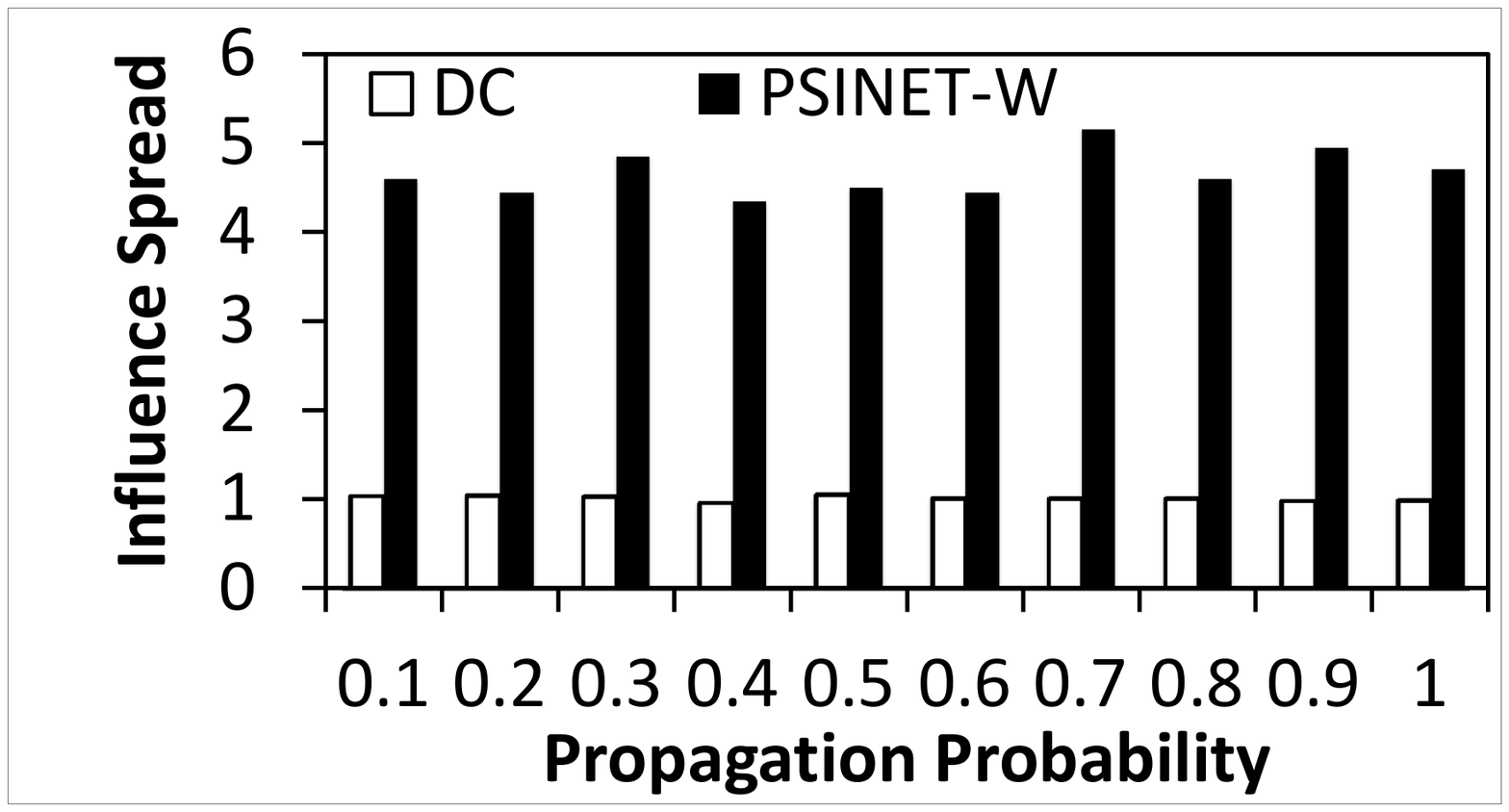}\label{fig:Propagation}}
\hspace{2mm}
\subfloat[ER networks]{\includegraphics[height=1.5in,width=0.48\columnwidth]{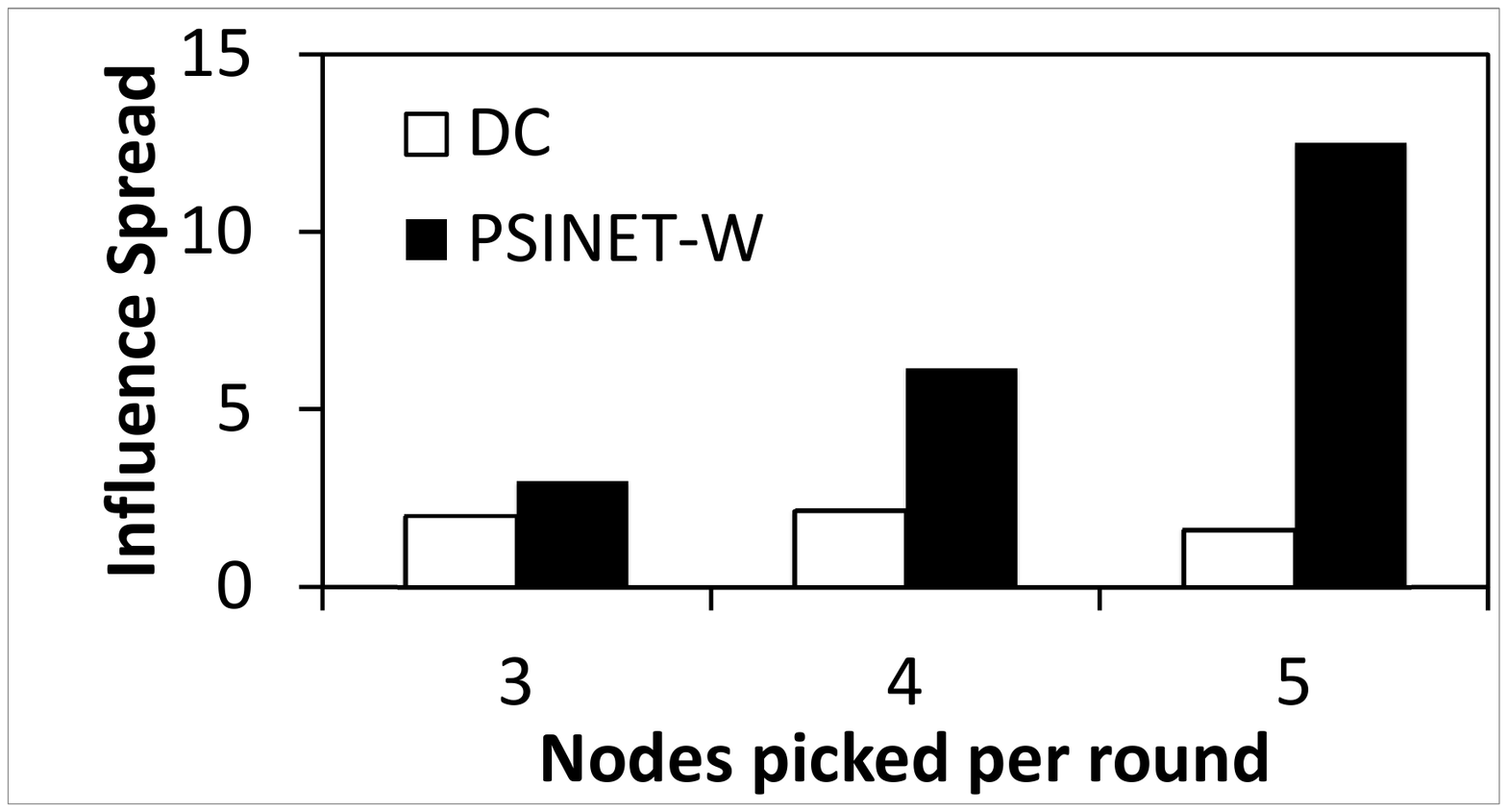}\label{fig:Scale}}
\caption{\small Comparison of DC with PSINET-W}
\end{figure}

Next, we relax our assumptions about propagation ($p(e)$) probabilities, which were set to 0.5 so far. Figure \ref{fig:Propagation} shows the solution quality, when PSINET-W and DC are solved with different $p(e)$ values respectively, for a $\tuple{40,71,41}$ BTER network with a horizon of 10. X-axis shows $p(e)$ and Y-axis shows IIS. This figure shows that varying $p(e)$ minimally impacts PSINET-W's improvement over DC, which shows our algorithms' robustness to these probability values (We get similar results upon changing $u(e)$). In Figure \ref{fig:Scale}, we show solution qualities of PSINET-W and DC on a $\tuple{30,31,27}$ BTER network (horizon=3) and vary number of nodes selected per round ($K$). X-axis shows increasing $K$, and Y-axis shows IIS. This figure shows that even for a small horizon of length 3, which does not give many chances for influence to spread, PSINET-W significantly beats DC with increasing $K$. 

\begin{figure}[htb]
\center{\includegraphics[height=2.8in,width=1\columnwidth]
{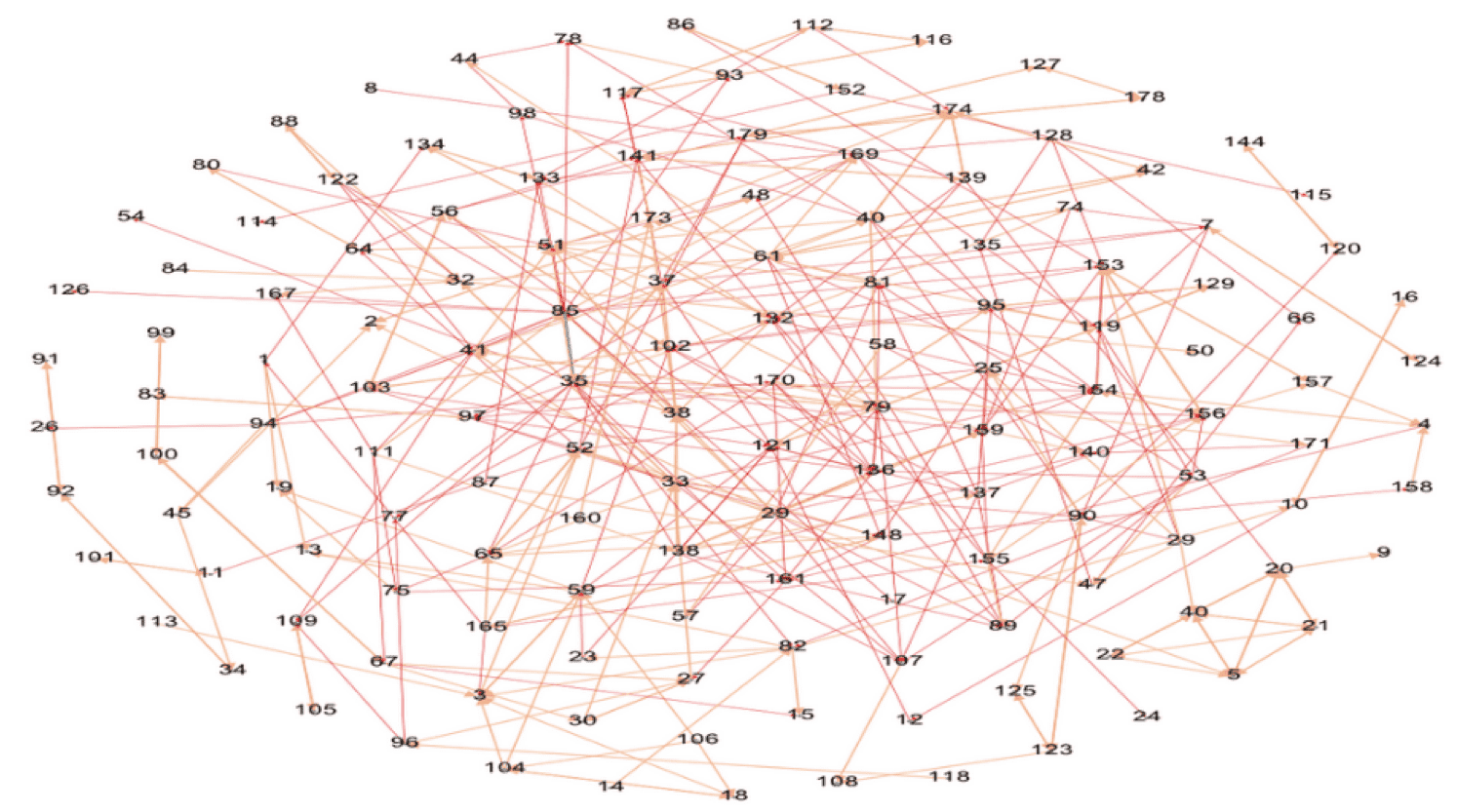}}
\caption{\label{fig:Figure17} A friendship based social network of homeless people visiting My Friend's Place}
\end{figure}

\begin{figure}[htp]
\subfloat[Solution Quality]{\includegraphics[height=1.5in,width=0.48\columnwidth]{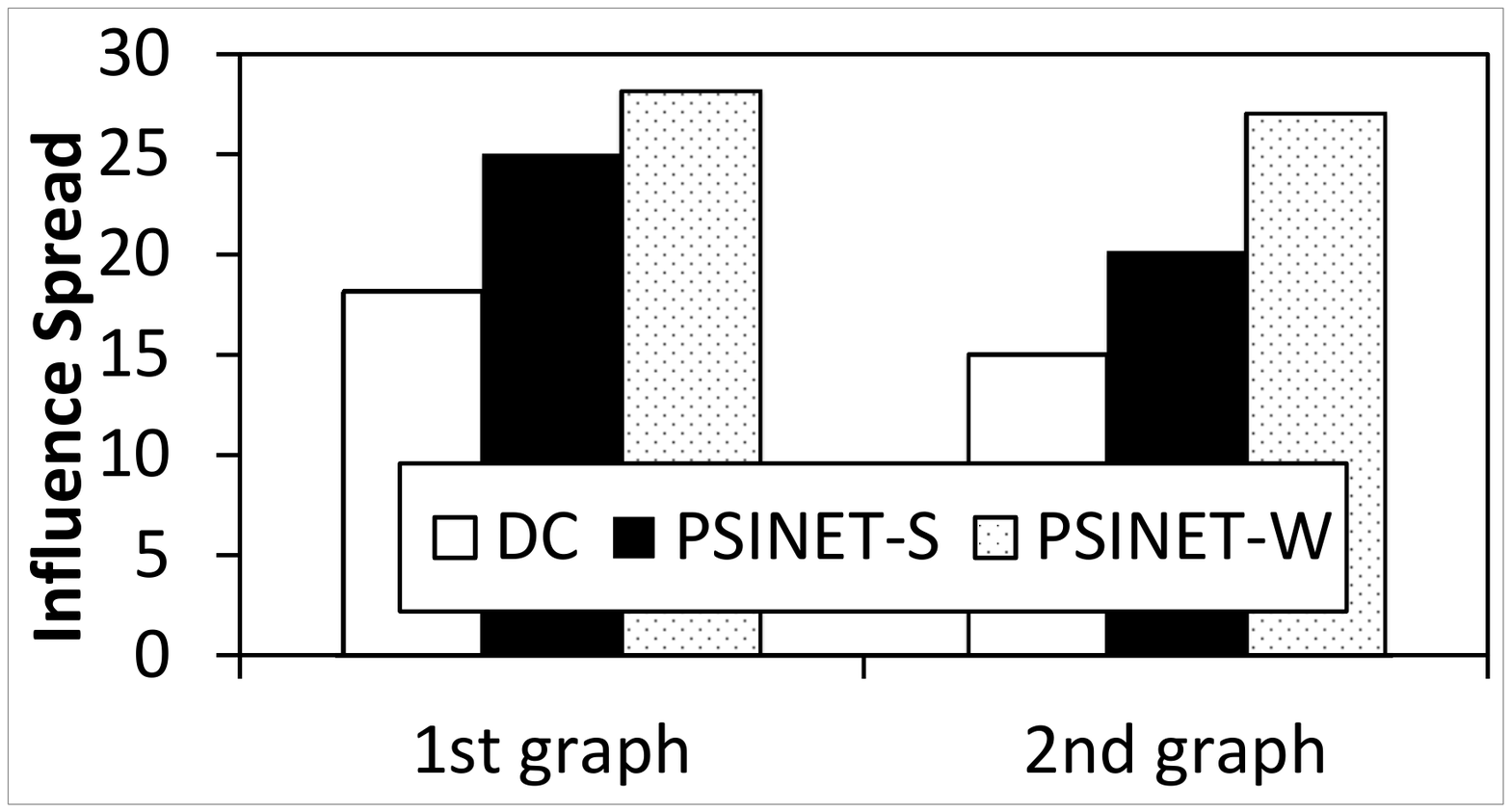}\label{fig:Figure18}}
\hspace{2mm}
\subfloat[Sample BTER graph]{\includegraphics[height=1.5in,width=0.48\columnwidth]{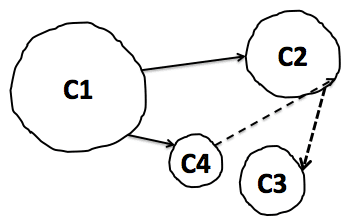}\label{fig:Figure19}}
\caption{\small Real world networks}
\end{figure}


\textbf{Real World Networks} Figure \ref{fig:Figure17} shows one of the two real-world friendship based social networks of homeless youth (created by our collaborators through surveys and interviews of homeless youth attending My Friend's Place), where each numbered node represents a homeless youth. Figure \ref{fig:Figure18} compares PSINET variants and DC (horizon = 30) on these two real-world social networks (each of size  around $\tuple{155,120,190}$).  The x-axis shows the two networks and the y-axis shows IIS. This figure clearly shows that all PSINET variants beat DC on both real world networks by around 60\%, which shows that PSINET works equally well on real-world networks. Also, PSINET-W beats PSINET-S, in accordance with previous results. Above all, this signifies that we could improve the quality and efficiency of HIV based interventions over the current modus operandi of agencies by around 60\%.


We now differentiate between the kinds of nodes selected by DC and PSINET-W for the sample BTER network in Figure \ref{fig:Figure19}, which contains nodes segregated into four clusters (C1 to C4), and node degrees in a cluster are almost equal. C1 is biggest, with slightly higher node degrees than other clusters, followed by C2, C3 and C4. DC would first select all nodes in cluster C1, then all nodes in C2 and so on. Selecting all nodes in a cluster is not ``smart", since selecting just a few cluster nodes influences all other nodes. PSINET-W realizes this by looking ahead and spreads more influence by picking nodes in different clusters each time. For example, assuming \textit{k=2}, PSINET-W picks one node in both C1 and C2, then one node in both C1 and C4, etc.

%
%
%

\section{Implementation Challenges}
Looking towards the future of testing the deployment of this procedure in agencies, there are a few implementation challenges that will need to be faced. First, collecting accurate social network data on homeless youth is a technical and financial burden beyond the capacity of most agencies working with these youth. Members of this team had a large three year grant from the National Institute of Mental Health to conduct such work in only two agencies. Our solution, moving forward (with other agencies) would be to use staff at agencies to delineate a first approximation of their homeless youth social network, based on their ongoing relationships with the youth. The POMDP procedure would subsequently be able to correct the network graph iteratively (by resolving uncertain edges via POMDP observations in each step). This is feasible because, as mentioned, homeless youth are more willing to discuss their social ties  in an intervention \cite{rice2012mobilizing}. We see this as one of the major strengths of this approach. 

Second, our prior research on homeless youth \cite{rice2013should} suggests that some structurally important youth may be highly anti-social and hence a poor choice for change agents in an intervention. We suggest that if such a youth is selected by the POMDP program, we then choose the next best action (subset of nodes) which does not include that ``anti-social" youth. Thus, the solution may require some ongoing management as certain individuals either refuse to participate as peer leaders or based on their anti-social behaviors are determined by staff to be inappropriate. 

Third, because of the history of neglect and abuse suffered by most of these youth, many are highly suspicious of adults. Including a computer-based selection procedure into the recruitment of peer leaders may raise suspicions about invasion of privacy for these youth. We suggest an ongoing public awareness campaign in the agencies working with this program to help overcome such fears and to encourage participation. Along with this issue, there is a secondary issue about protection of privacy for the individuals involved. Agencies collect information on their youth, but most of this information is not to be shared with researchers. We suggest working with agencies to create procedures which allow them to implement the POMDP program without having to provide identifying information to our team.

\section{Conclusion}
This paper presents PSINET, a POMDP based decision support system to select homeless youth for HIV based interventions. Previous work in strategic selection of intervention participants does not handle uncertainties in the social network's structure and evolving network state, potentially causing significant shortcomings in spread of information. PSINET has the following key novelties: (i) it handles uncertainties in network structure and evolving network state; (ii) it addresses these uncertainties by using POMDPs in influence maximization; and (iii) it provides algorithmic advances to allow high quality approximate solutions for such POMDPs. Simulations show that PSINET achieves around 60\% improvement over the current state-of-the-art. PSINET was developed in collaboration with My Friend's Place and has been reviewed by their officials.

Unfortunately, even though PSINET was able to scale up to real-world sized networks, it completely failed at scaling up in the number of nodes that get picked in every round (intervention). Thus, while PSINET was successful in scaling up to the required POMDP state space, it failed to deal with the explosion in action space that occurred with scale up in the number of nodes picked per round. To address this challenge, we designed HEALER, which we present next.

\chapter{HEALER}
\label{chapter:HEALER}
This chapter presents HEALER (or \textbf{H}ierarchical \textbf{E}nsembling based \textbf{A}gent which p\textbf{L}ans for \textbf{E}ffective \textbf{R}eduction in HIV Spread), an online POMDP algorithm which has a better scale-up performance than PSINET \cite{yadav2015preventing}. Internally, HEALER \cite{yadav2016using,yadav2017maximizing} is comprised of two different algorithms: HEAL and TASP. We now discuss these algorithms in detail.

\section{HEAL}
HEAL solves the \textit{original POMDP} using a novel \textit{hierarchical ensembling heuristic}: it creates ensembles of imperfect (and smaller) POMDPs at \textit{two} different layers, in a hierarchical manner (see Figure \ref{fig:Flow}). HEAL's \textit{top layer} creates an ensemble of smaller sized \textit{intermediate POMDPs} by subdividing the original \textit{uncertain network} into several smaller sized \textit{partitioned networks} by using graph partitioning techniques \cite{lasalle2013multi}. Each of these partitioned networks is then mapped onto a POMDP, and these \textit{intermediate POMDPs} form our \textit{top layer} ensemble of POMDP solvers.

\begin{figure}[t]
\center{\includegraphics[scale=.5]
{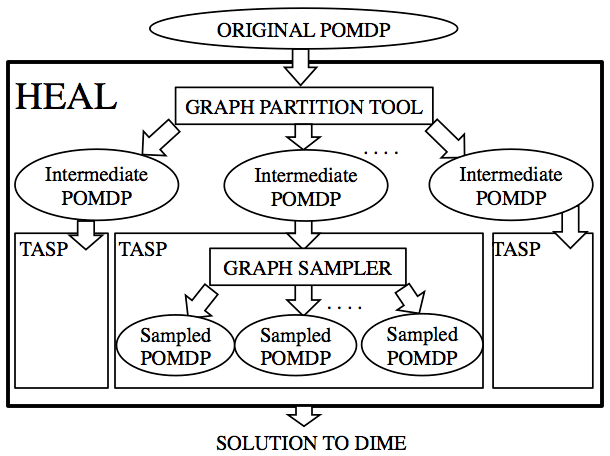}}
\caption{\label{fig:Flow} Hierarchical decomposition in HEAL}
\end{figure} 

In the bottom layer, each \textit{intermediate POMDP} is solved using TASP (\textbf{T}ree \textbf{A}ggregation for \textbf{S}equential \textbf{P}lanning), our novel POMDP planner, which subdivides the POMDP into another ensemble of smaller sized \textit{sampled POMDPs}. Each member of this \textit{bottom layer} ensemble is created by randomly sampling uncertain edges of the partitioned network to get a sampled network having no uncertain edges, and this sampled network is then mapped onto a \textit{sampled POMDP}. Finally, the solutions of POMDPs in both the \textit{bottom} and \textit{top layer} ensembles are aggregated using novel techniques to get the solution for HEAL's original POMDP. 

HEAL uses several novel heuristics. First, it uses a novel two-layered \textit{hierarchical ensembling heuristic}. Second, it uses graph partitioning techniques to partition the uncertain network, which generates partitions that minimize the edges going across partitions (while ensuring that partitions have similar sizes). Since these partitions are ``almost" disconnected, we solve each partition separately. Third, it solves the \textit{intermediate POMDP} for each partition by creating smaller-sized \textit{sampled POMDPs} (via sampling uncertain edges), each of which is solved using a novel tree search algorithm, which avoids the exponential branching factor seen in PSINET \cite{yadav2015preventing}. Fourth, it uses novel aggregation techniques to combine solutions to these smaller POMDPs rather than simple plurality voting techniques seen in previous ensemble techniques \cite{yadav2015preventing}.

These heuristics enable scale up to real-world sizes (at the expense of sacrificing performance guarantees), as instead of solving one huge problem, we now solve several smaller problems. However, these heuristics perform very well in practice. Our simulations show that even on smaller settings, HEAL achieves a 100X speed up over PSINET, while providing a 70\% improvement in solution quality; and on larger problems, \textit{where PSINET is unable to run at all}, HEAL continues to provide high solution quality. Now, we elaborate on these heuristics by first explaining the TASP solver.

\begin{algorithm}[h]
\label{alg:TASP}
\caption{TASP Solver}
\KwIn{Uncertain network $G$, Parameters $K$, $T$, $L$}
\KwOut{Best $K$ node action $\kappa$}
Create ensemble of $\Delta$ different POMDPs; \\\label{taspflow:1}
\For {$\delta \in \Delta$} {
	$\alpha^{\delta} = Evaluate(\delta)$;\\\label{taspflow:2}
	}
	$r = Expectation(\alpha)$;\\\label{taspflow:3}
	$\kappa = \argmax_j r_j$;\\\label{taspflow:4}
	return $\kappa$;\\
\end{algorithm}

\section{Bottom layer: TASP} \label{sec:TASP} We now explain TASP, our new POMDP solver that solves each \textit{intermediate POMDP} in HEAL's bottom layer. Given an \textit{intermediate POMDP} and the uncertain network it is defined on, as input, TASP goes through four steps (see Algorithm 4). 

First, Step \ref{taspflow:1} makes our \textit{intermediate POMDP} more tractable by creating an ensemble of smaller sized \textit{sampled POMDPs}. Each member of this ensemble is created by sampling uncertain edges of the input network to get an \textit{instantiated} network. Each uncertain edge in the input network is randomly kept with probability $u(e)$, or removed with probability $1-u(e)$, to get an \textit{instantiated} network with no uncertain edges. We repeat this sampling process to get $\Delta$ (a variable parameter) different \textit{instantiated} networks. These $\Delta$ different \textit{instantiated} networks are then mapped onto to $\Delta$ different POMDPs, which form our ensemble of \textit{sampled POMDPs}. Each \textit{sampled POMDP} shares the same action space (defined on the input partitioned network) as the different POMDPs only differ in the sampling of uncertain edges. Note that each member of our ensemble is a POMDP as even though sampling uncertain edges removes uncertainty in the $F$ portion of POMDP states, there is still partial observability in the $W$ portion of POMDP state.

In Step \ref{taspflow:2} (called the Evaluate Step), for each instantiated network $\delta \in [1,\Delta]$, we generate an $\alpha^{\delta}$ list of rewards. The $i^{th}$ element of $\alpha^{\delta}$ gives the long term reward achieved by taking the $i^{th}$ action in \textit{instantiated} network $\delta$. In Step \ref{taspflow:3}, we find the expected reward $r_i$ of taking the $i^{th}$ action, by taking a reward expectation across the $\alpha^{\delta}$ lists (for each $\delta \in [1,\Delta]$) generated in the previous step. For e.g., if $\alpha^{\delta_1}_1 = 10$ and $\alpha^{\delta_2}_1 = 20$, i.e., the rewards of taking the $1^{st}$ action in instantiated networks $\delta_1$ and $\delta_2$ (which occurs with probabilities $P(\delta_1)$ and $P(\delta_2)$) are 10 and 20 respectively, then the expected reward $r_1 = P(\delta_1)\times 10 + P(\delta_2)\times 20$. Note that $P(\delta_1)$ and $P(\delta_2)$ are found by multiplying existence probabilities $u(e)$ (or $1-u(e)$) for uncertain edges that were kept (or removed) in $\delta_1$ and $\delta_2$. Finally, in Step \ref{taspflow:4}, the action $\kappa = \argmax_j r_j$ is returned by TASP. Next, we discuss the Evaluate Step in detail (Step \ref{taspflow:2}).

\begin{algorithm}[t!]
\label{alg:Evaluate}
\caption{Evaluate Step}
\KwIn{Instantiated network $\delta$, Number of simulations $\bf{NSim}$}
\KwOut{Ranked Ordering of actions $\alpha^{\delta}$}
$tree = Initialize\_K\_Level\_Tree()$;\\\label{evalflow:0}
$counter = 0$;\\\label{evalflow:1}
\While {$counter++ < \bf{NSim}$} {
	$K\_Node\_Act =  FindStep(tree)$;\\\label{evalflow:2}
	$LT\_Reward = SimulateStep(K\_Node\_Act)$;\\\label{evalflow:3}
	$UpdateStep(tree, LT\_Reward, K\_Node\_Act)$;\\\label{evalflow:4}
	}
	$\alpha^{\delta} = Get\_All\_Leaf\_Values(tree)$;\\\label{evalflow:7}
	return $\alpha^{\delta}$;
\end{algorithm}

\subsection{Evaluate Step} Algorithm 5 generates the $\alpha^{\delta}$ list for a single instantiated network $\delta \in [1,\Delta]$. This algorithm works similarly for all instantiated networks. For each instantiated network, the Evaluate Step uses $\bf{NSim}$ (we use $2^{10}$) number of MC simulations to evaluate the long term reward achieved by taking actions in that network. Due to the combinatorial action space, the Evaluate Step uses a UCT \cite{kocsis2006bandit} driven approach to strategically choose the actions whose long term rewards should be calculated. UCT has been used to solve POMDPs in \cite{silver2010monte,yadav2015preventing}, but these algorithms suffer from a ${N \choose K}$  branching factor (where $K$ is number of nodes picked per round, $N$ is number of network nodes). We exploit the structure of our domain by creating a $K$-level UCT tree which has a branching factor of just $N$ (explained below). This $K$-level tree allows storing reward values for smaller sized node subsets as well (instead of just $K$ sized subsets), which helps in guiding the UCT search better.

Algorithm 5 takes an \textit{instantiated} network and creates the aforementioned $K$-level tree for that network. The first level of the tree has $N$ branches (one for each network node). For each branch $i$ in the first level, there are $N-1$ branches in the second tree level (one for each network node, except for node $i$, which was covered in the first level). Similarly, for every branch $j$ in the $m^{th}$ level ($m \in [2,K-1]$), there are $N-m$ branches in the $(m+1)^{th}$ level. Theoretically, this tree grows exponentially with $K$, however, the values of $K$ are usually small in practice (e.g., 4).

In this $K$ level tree, each leaf node represents a particular POMDP action of $K$ network nodes. Similarly, every non-leaf tree node $v$ represents a subset $S_v$ of network nodes. Each tree node $v$ maintains a value $R_v$, which represents the average long term reward achieved by taking our POMDP's actions (of size $K$) which contain $S_v$ as a subset. For example, in Figure \ref{fig:backgrnd_uncertainG}, if $K=5$, and for tree node $v$, $S_v = \{A,B,C,D\}$, then $R_v$ represents the average long term reward achieved by taking POMDP actions $A_1 = \{A,B,C,D,E\}$ and $A_2 = \{A,B,C,D,F\}$, since both $A_1$ and $A_2$ contain $S_v = \{A,B,C,D\}$ as a subset. To begin with, all nodes $v$ in the tree are initialized with $R_v=0$ (Step \ref{evalflow:0}). By running $\bf{NSim}$ number of MC simulations, we generate good estimates of $R_v$ values for each tree node $v$.

Each node in this $K$-level tree runs a UCB1 \cite{kocsis2006bandit} implementation of a multi-armed bandit. The arms of the multi-armed bandit running at tree node $v$ correspond to the child branches of node $v$ in the $K$-level tree. Recall that each child branch corresponds to a network node. The overall goal of all the multi-armed bandits running in the tree is to construct a POMDP action of size $K$ (by traversing a path from the root to a leaf), whose reward is then calculated in that MC simulation. Every MC simulation consists of three steps: Find Step (Step \ref{evalflow:2}), Simulate Step (Step \ref{evalflow:3}) and Update Step (Step \ref{evalflow:4}).

\subsection{Find Step} The Find Step takes a $K$-level tree for an instantiated network and  \textit{finds} a $K$ node action, which is used in the Simulate Step. Algorithm 6 details the process of \textit{finding} this $K$ node action, which is found by traversing a path from the root node to a leaf node, one edge/arm at a time. Initially, we begin at the root node with an empty action set of size 0 (Steps \ref{findflow:1} and \ref{findflow:2}). For each node that we visit on our way from the root to a leaf, we use its multi-armed bandit (denoted by $MAB_{node}$ in Step \ref{findflow:3}) to choose which tree node do we visit next (or, which network node do we add to our action set). We get a $K$ node action upon reaching a leaf.

\begin{algorithm}[t!]
\label{alg:FindStep}
\caption{FindStep}
\KwIn{$K$ level deep tree - $tree$}
\KwOut{Action set of size $K$ nodes - $Act$}
$Act = \Phi$;\\\label{findflow:1}
$tree\_node  = tree.Root$;\\\label{findflow:2}
\While {$is\_Leaf(tree\_node) == false$} {
	$MAB_{node} = Get\_UCB\_at\_Node(node)$;\\\label{findflow:3}
	$next\_node = Ask\_UCB(MAB_{node})$;\\\label{findflow:4}
	$Act = Act \cup next\_node$;\\
	$tree\_node = tree\_node.branch(next\_node)$;\\
	}
	return $Act$;\\
\end{algorithm}

\subsection{Simulate Step} The Simulate Step takes a $K$ node action from the Find Step, to \textit{evaluate} the long term reward of taking that action (called $Act$) in the instantiated network. Assuming that $T_0$ interventions remain (i.e., we have already conducted $T - T_0$ interventions), the Simulate Step first uses  action $Act$ in the generative model $\Lambda$ to generate a reward $r_0$. For all remaining $(T_0 - 1)$ interventions, Simulate Step uses a rollout policy to randomly select K node actions, which are then used in the generative model $\Lambda$ to generate future rewards $r_i \mbox{ }\forall \mbox{ } i \in [1, T_0 - 1]$ . Finally, the long term reward returned by Simulate Step is $r_0 + r_1 + ... + r_{T_0-1}$.

\subsection{Update Step} The Update Step uses the long term reward returned by Simulate Step to update relevant $R_v$ values in the $K$-level tree. It updates the $R_v$ values of all nodes $v$ that were traversed in order to find the $K$ node action in the Find Step. First, we get the tree's leaf node corresponding to the $K$ node action that was returned by the Find Step. Then, we go and update $R_v$ values for all ancestors (including the root) of that leaf node. 

After running the Find, Simulate and Evaluate for $\bf{NSim}$ simulations, we return the $R_v$ values of all leaf nodes as the $\alpha^{\delta}$ list. Recall that we then find the expected reward $r_i$ of taking the $i^{th}$ action, by taking an expectation of rewards across the $\alpha^{\delta}$ lists. Finally, TASP returns the action $\kappa = \argmax_j r_j$.

\section{Top layer: Using Graph Partitioning}\label{sec:parti} We now explain HEAL's top layer, in which we use METIS \cite{lasalle2013multi}, a state-of-the-art graph partitioning technique, to subdivide our original uncertain network into different partitioned networks. These partitioned networks form the ensemble of \textit{intermediate POMDPs} (in Figure \ref{fig:Flow}) in HEAL. Then, TASP is invoked on each intermediate POMDP independently, and their solutions are aggregated to get the final DIME solution. We try two different partitioning/aggregation techniques, which leads to two variants of HEAL:

\textbf{K Partition Variant (HEAL): } Given the \textit{uncertain} network $G$ and the parameters $K$, $L$ and $T$ as input, we first partition the uncertain network into $K$ partitions. In each round from 1 to $T$, we invoke the bottom layer TASP algorithm to select 1 node from each of the $K$ clusters. These singly selected nodes from the $K$ clusters give us an action of $K$ nodes, which is given to shelter officials to execute. Based on the \textit{observation} (about uncertain edges) that officials get while executing the action, we update the partition networks (which are input to the \textit{intermediate POMDPs}) by either replacing the \textit{observed} uncertain edges with certain edges (if the edge was \textit{observed} to exist in reality) or removing the uncertain edge altogether (if the edge was \textit{observed} to \textit{not exist} in reality). The list of $K$ node actions that Algorithm 4 generates serves as an online policy for use by the homeless shelter. 

\textbf{T Partition Variant (HEAL-T): } Given the \textit{uncertain} network $G$ and the parameters $K$, $L$ and $T$ as input, we first partition the uncertain network into $T$ partitions and TASP picks $K$ nodes from the $i^{th}$ partition ($i \in [1,T]$) in the $i^{th}$ round.

\section{Experimental Results}\label{sec:exp}
In this section, we analyze HEAL and HEAL-T's performance in a variety of settings. All our experiments are run on a 2.33 GHz 12-core Intel machine having 48 GB of RAM. All experiments are averaged over 100 runs. We use a metric of ``\textit{Indirect Influence}" throughout this section, which is number of nodes ``\textit{indirectly}" influenced by intervention participants. For example, on a 30 node network, by selecting 2 nodes each for 10 interventions (horizon), 20 nodes (a lower bound for any strategy) are influenced with certainty. However, the total number of influenced nodes might be 26 (say) and thus, the \textit{Indirect Influence} is $26-20 = 6$. In all experiments, the propagation and existence probability values on all network edges were uniformly set to $0.1$ and $0.6$, respectively. This was done based on findings in Kelly et. al.\cite{kelly1997randomised}. However, we relax these parameter settings later in the section. All experiments are statistically significant under bootstrap-t ($\alpha = 0.05$).

\textbf{Baselines: } We use two algorithms as baselines. We use PSINET-W as a benchmark as it is the most relevant previous algorithm, which was shown to outperform heuristics used in practice; however, we also need a point of comparison when PSINET-W does not scale. No previous algorithm in the influence maximization literature accounts for uncertain edges and uncertain network state in solving the problem of sequential selection of nodes; in-fact we show that even the standard Greedy algorithm \cite{kempe2003maximizing,golovin2011adaptive} has no approximation guarantees as our problem is not adaptive submodular. Thus, we modify Greedy by replacing our uncertain network with a certain network (in which each uncertain edge $e$ is replaced with a certain edge $e_0$ having propagation probability $p(e_0) = p(e) \times u(e)$), and then run the Greedy algorithm on this \textit{certain network}. We use the Greedy algorithm as a baseline as it is the best known algorithm known for influence maximization and has been analyzed in many previous papers \cite{cohen2014sketch,Borgs14,tang2014influence,kempe2003maximizing,leskovec2007cost,golovin2011adaptive}.

\textbf{Datasets: } We use \textit{four real world social networks} of homeless youth, provided to us by our collaborators. All four networks are friendship based social networks of homeless youth living in Los Angeles. The first and second networks are of homeless youth living in Venice Beach (VE) and Hollywood (HD), two large areas in Los Angeles, respectively. These two networks (each having $\sim$150-170 nodes, 400-450 edges) were created through surveys and interviews of homeless youth (conducted by our collaborators) living in these areas. The third and fourth networks are relatively small-sized online social networks of these youth created from their Facebook (34 nodes, 120 edges) and MySpace (107 nodes, 803 edges) contact lists, respectively. When HEALER is deployed, we anticipate even larger networks, (e.g., 250-300 nodes) than the ones we have in hand and we also show run-time results on artificial networks of these sizes.

\begin{figure}[h]
\subfloat[\small Solution Quality]{\includegraphics[height=1.5in,width=0.48\columnwidth]{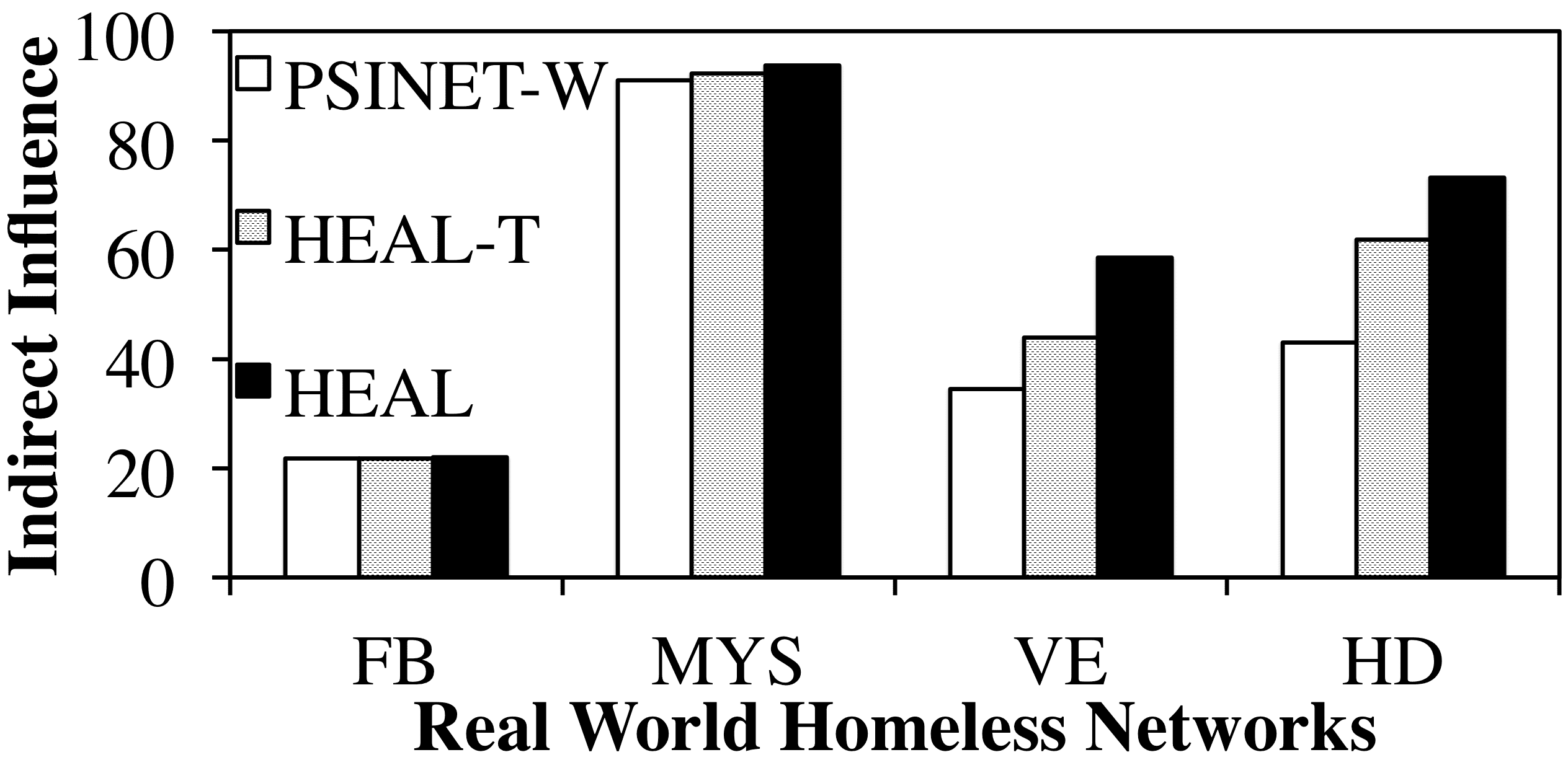}\label{fig:SolQual}}
\hspace{2mm}
\subfloat[\small Runtime]{\includegraphics[height=1.5in,width=0.48\columnwidth]{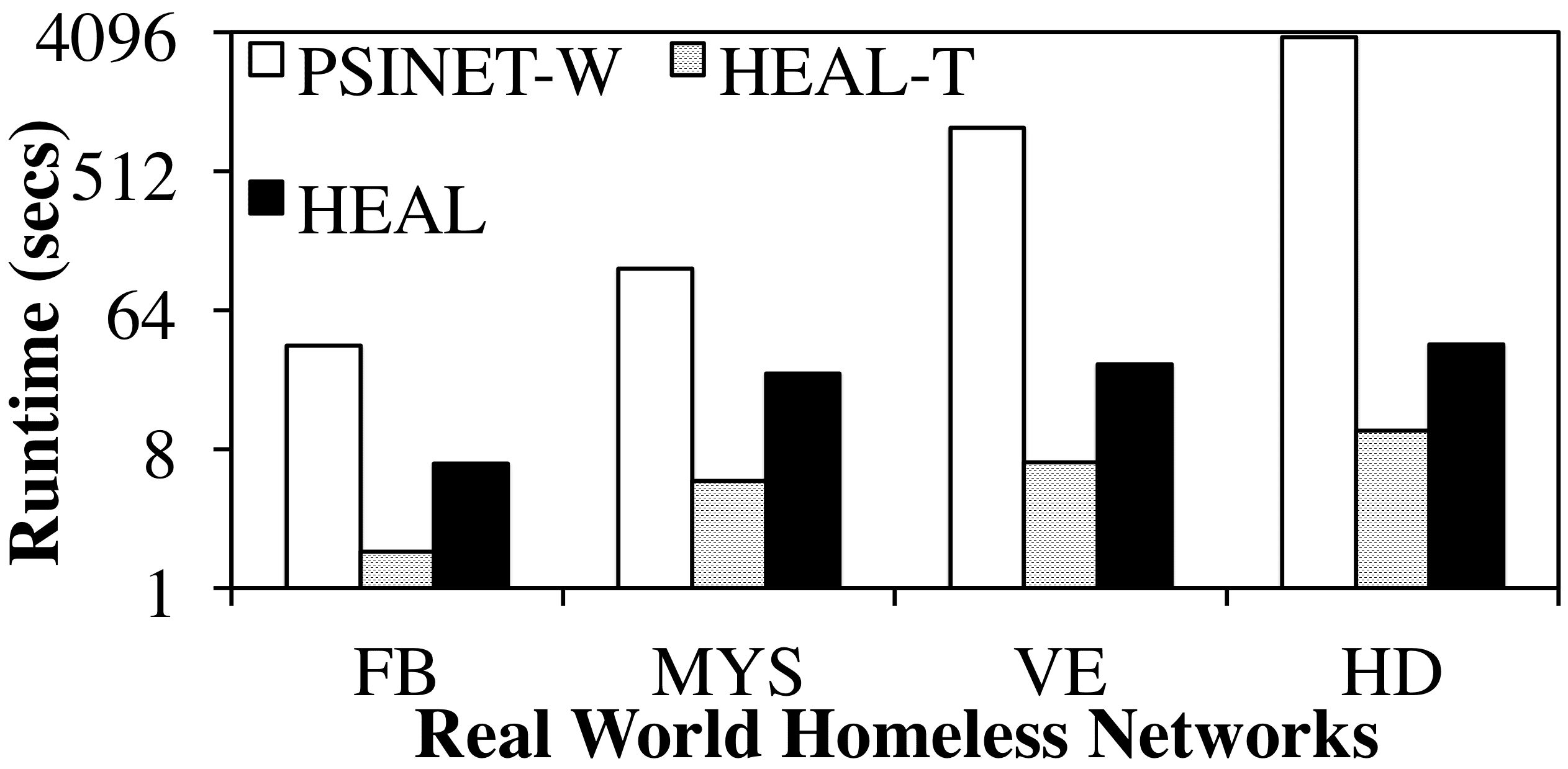}\label{fig:Runtime}}
\caption{Solution Quality and Runtime on Real World Networks}
\end{figure}  

\textbf{Solution Quality/Runtime Comparison. }We compare \textit{Indirect Influence} and run-times of HEAL, HEAL-T and PSINET-W on all four real-world networks. We set $T=5$ and $K=2$ (since PSINET-W fails to scale up beyond $K=2$ as shown later). Figure \ref{fig:SolQual} shows the \textit{Indirect Influence} of the different algorithms on the four networks. The X-axis shows the four networks and the Y-axis shows the \textit{Indirect Influence} achieved by the different algorithms. This figure shows that (i) HEAL outperforms all other algorithms on every network; (ii) \textit{it achieves $\sim$70\% improvement over PSINET-W} in VE and HD networks; (iii) it achieves $\sim$25\% improvement over HEAL-T. The difference between HEAL and other algorithms is not significant in the Facebook (FB) and MySpace (MYS) networks, as HEAL is already influencing almost all nodes in these two relatively small networks. Thus, in experiments to come, we focus more on the VE and HD networks.

Figure \ref{fig:Runtime} shows the run-time of all algorithms on the four networks. The X-axis shows the four networks and the Y-axis (in log scale) shows the run-time (in seconds). This figure shows that (i) \textit{HEAL achieves a 100X speed-up over PSINET-W}; (ii) PSINET-W's run-time increases exponentially with increasing network sizes; (iii) HEAL runs 3X slower than HEAL-T but achieves 25\% more \textit{Indirect Influence}. Hence, HEAL is our algorithm of choice.

Next, we check if PSINET-W's run-times become worse on larger networks. Because of lack of larger real-world datasets, we create relatively large artificial Watts-Strogatz networks (model parameters $p=0.1,k=7$). Figure \ref{fig:big-runtime} shows the run-time of all algorithms on Watts-Strogatz networks. The X-axis shows the size of networks and the Y-axis (in log scale) shows the run-time (in seconds). This figure shows that \textit{PSINET-W fails to scale beyond 180 nodes, whereas HEAL runs within 5 minutes}. Thus, PSINET-W fails to scale-up to network sizes that are of importance to us.

\begin{figure}[h]
\subfloat[\small VE Network]{\includegraphics[height=1.5in,width=0.48\columnwidth]{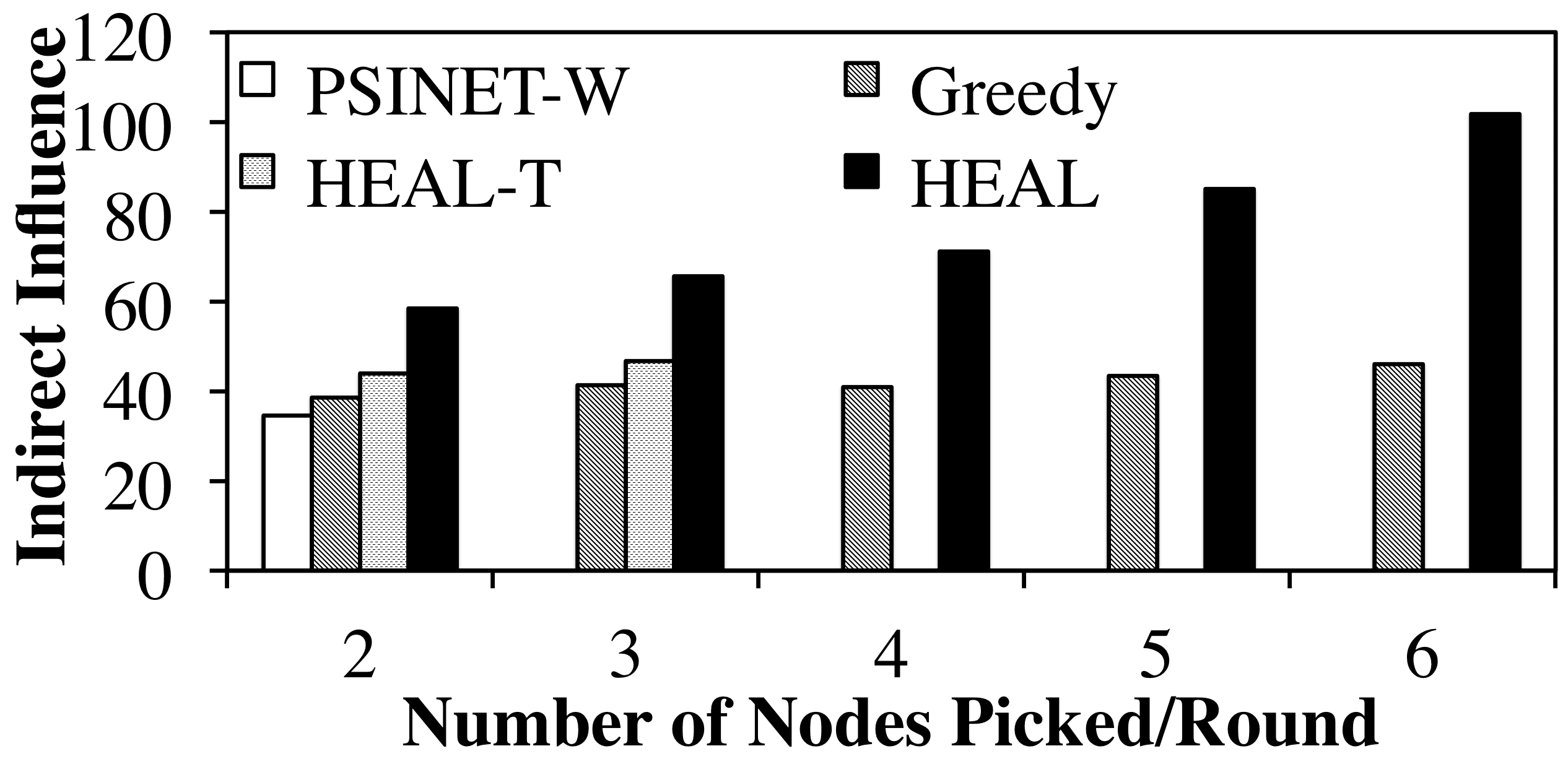}\label{fig:VeniceNet}}
\hspace{2mm}
\subfloat[\small HD Network]{\includegraphics[height=1.5in,width=0.48\columnwidth]{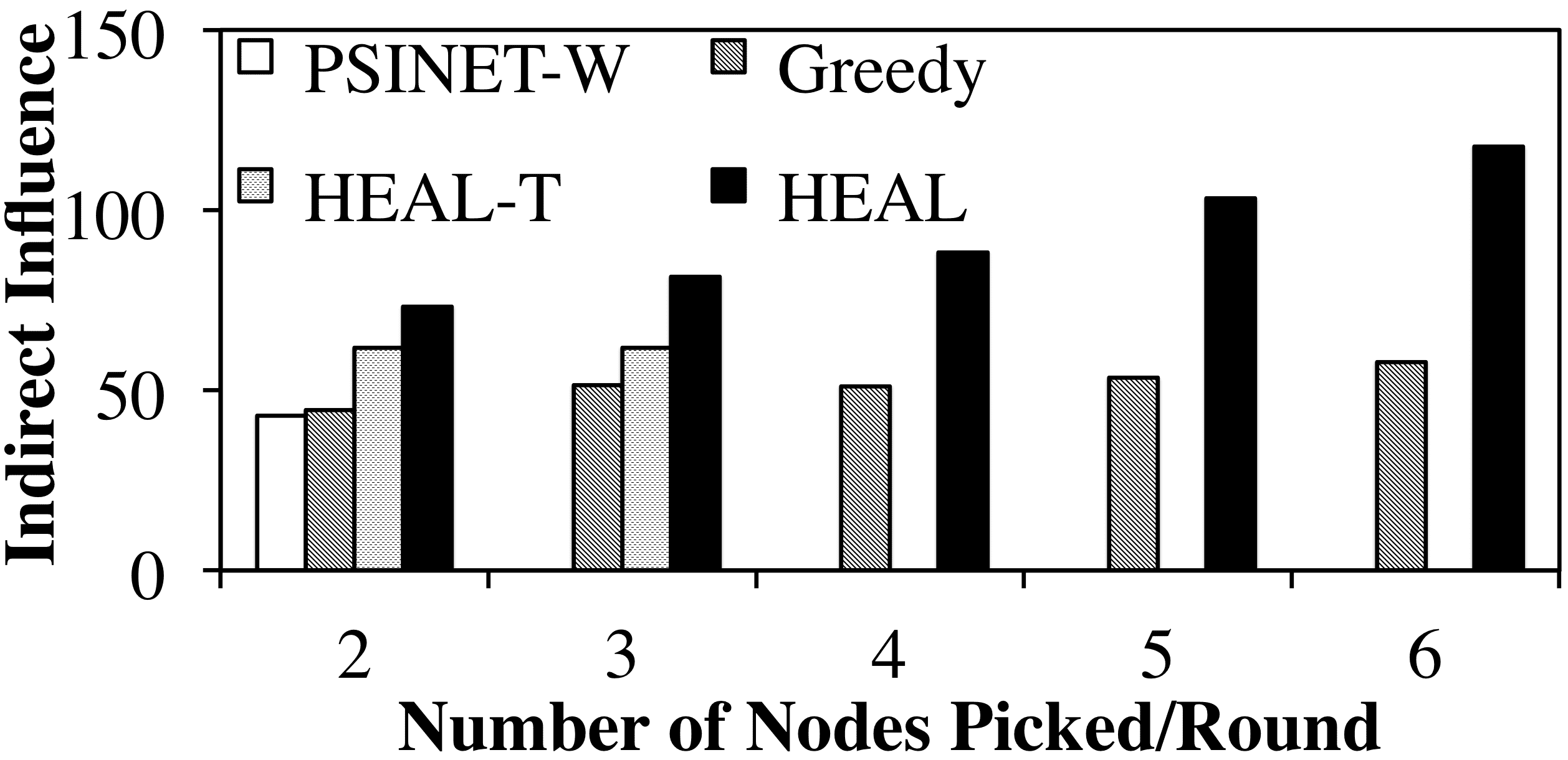}\label{fig:HollywoodNet}}
\caption{Scale up in number of nodes picked per round}
\end{figure}

\textbf{Scale Up Results. } Not only does PSINET-W fail in scaling up to larger network sizes, it even fails to scale-up with increasing number of nodes picked per round (or $K$), on our real-world networks. Figures \ref{fig:VeniceNet} and \ref{fig:HollywoodNet} show the \textit{Indirect Influence} achieved by HEAL, HEAL-T, Greedy and PSINET-W on the VE and HD networks respectively ($T=5$), as we scale up $K$ values. The X-axis shows increasing $K$ values, and the Y-axis shows the \textit{Indirect Influence}. These figures show that (i) PSINET-W and HEAL-T fail to scale up  -- they cannot handle more than $K=2$ and $K=3$ respectively (thereby not fulfilling real world demands); (ii) HEAL outperforms all other algorithms, and the difference between HEAL and Greedy increases linearly with increasing $K$ values. Also, \textit{in the case of $K=6$, HEAL runs in less than $40.12$ seconds on the HD network and $34.4$ seconds on the VE network}.

Thus, Figures \ref{fig:SolQual}, \ref{fig:Runtime}, \ref{fig:VeniceNet} and \ref{fig:HollywoodNet} show that PSINET-W (the best performing algorithm from previous work) fails to scale up with increasing network nodes, and with increasing $K$ values. Even for $K=2$ and moderate sized networks, it runs very slowly. Moreover, HEAL is the best performing algorithm that runs quickly, provides high-quality solutions, and can scale-up to real-world demands. Since only HEAL and Greedy scale up to $K=6$, we now analyze their performance in detail.

\begin{figure}[t]
\subfloat[\small Solution Quality]{\includegraphics[height=1.5in,width=0.48\columnwidth]{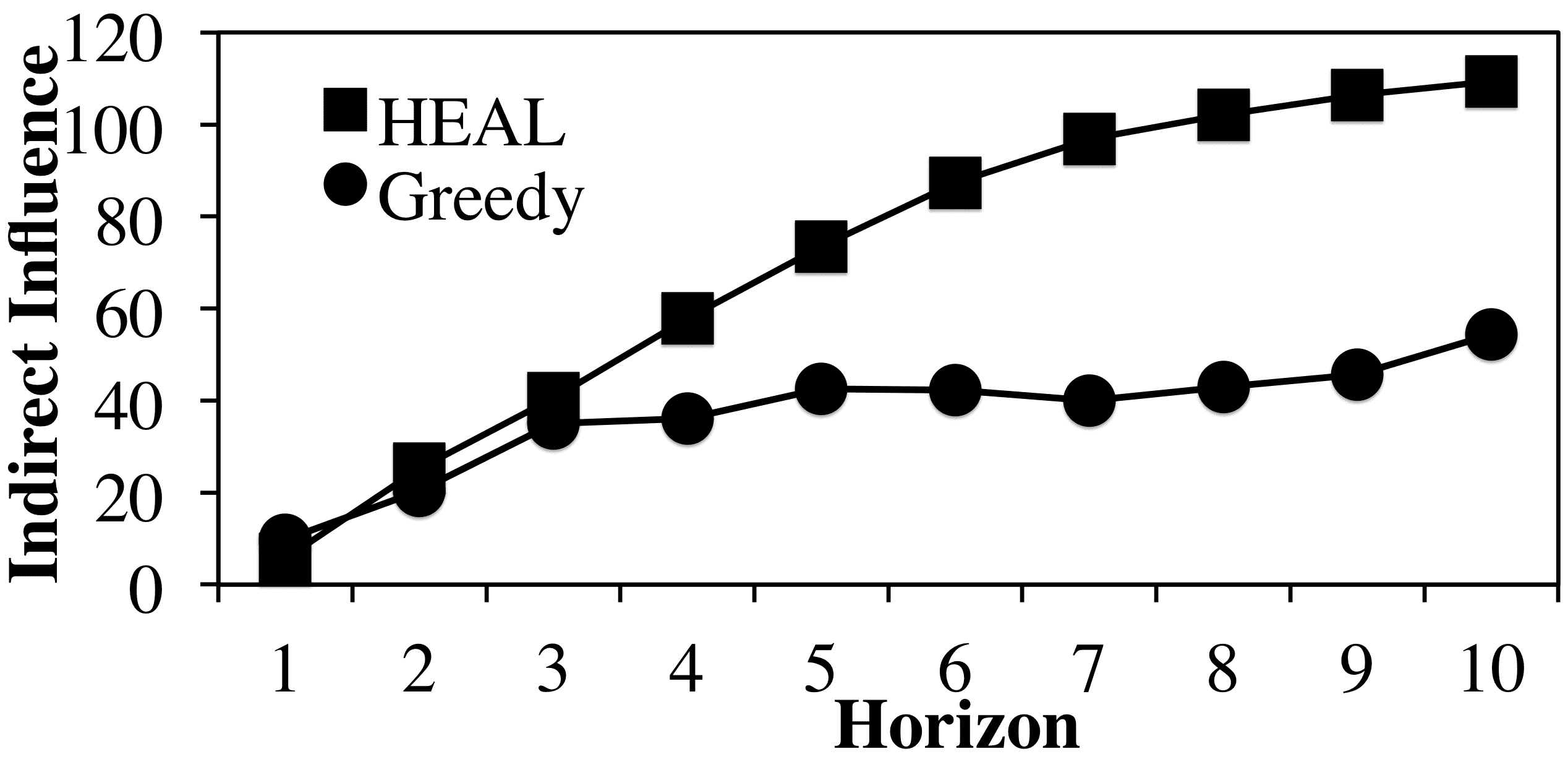}\label{fig:Horizon-HollySolQual}}
\hspace{2mm}
\subfloat[\small Maximum Relative Gain]{\includegraphics[height=1.5in,width=0.48\columnwidth]{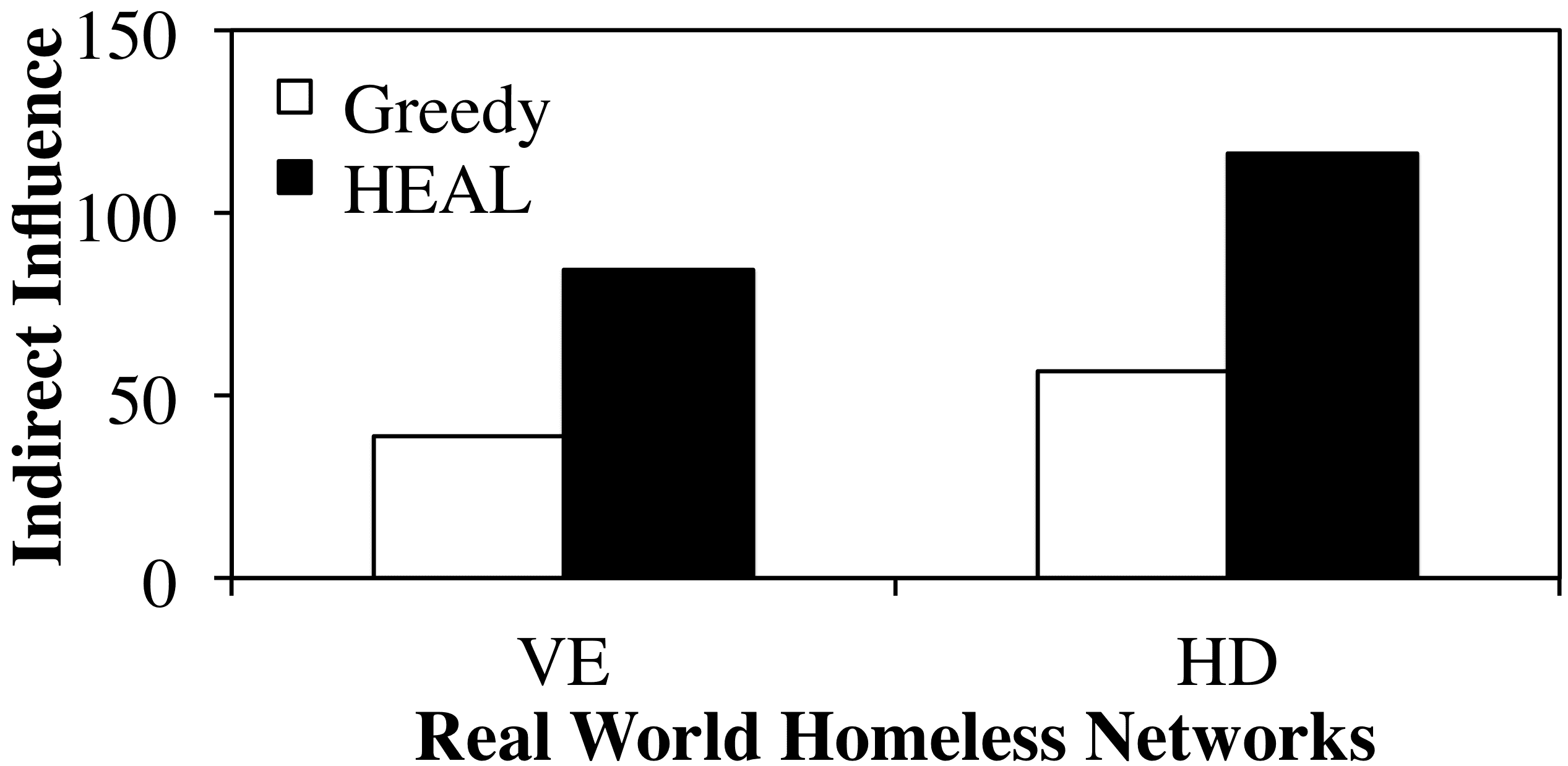}\label{fig:ws-SolQual}}
\caption{Horizon Scale up \& Maximum Gain on HD Network}
\end{figure}

\textbf{Scaling up Horizon. }Figure \ref{fig:Horizon-HollySolQual} shows HEAL and Greedy's \textit{Indirect Influence} in the HD network, with varying horizons (see appendix for VE network results). The X-axis shows increasing horizon values and the Y-axis shows the \textit{Indirect Influence} ($K=2$). This figure shows that the relative difference between HEAL and Greedy increases significantly with increasing $T$ values.

Next, we scale up $K$ values with increased horizon settings to find the maximum attainable solution quality difference between HEAL and Greedy. Figure \ref{fig:ws-SolQual} shows the \textit{Indirect Influence} achieved by HEAL and Greedy (with $K=4$ and $T=10$) on the VE and HD networks. The X-axis shows the two networks and the Y-axis shows the \textit{Indirect Influence}. This figure shows that with these settings, \textit{HEAL achieves $\sim$110\% more Indirect Influence than Greedy (i.e., more than a 2-fold improvement) in the two real-world networks}.


\textbf{HEAL vs Greedy. }Figure \ref{tab:inc-over-greedy} shows the percentage increase (in \textit{Indirect Influence}) achieved by HEAL over Greedy with varying $u(e)$/$p(e)$ values. The columns and rows of Figure \ref{tab:inc-over-greedy} show varying $u(e)$ and $p(e)$ values respectively. The values inside the table cells show the percentage increase (in \textit{Indirect Influence}) achieved by HEAL over Greedy when both algorithms plan using the same $u(e)$/$p(e)$ values. For example, with $p(e)=0.7$ and $u(e)=0.1$, HEAL achieves 45.62\% more \textit{Indirect Influence} than Greedy. This figure shows that \textit{HEAL continues to outperform Greedy across varying $u(e)$/$p(e)$ values}. Thus, on a variety of settings, HEAL dominates Greedy in terms of both \textit{Indirect Influence} and run-time.

\begin{figure}
\begin{center}
\begin{tabular}{|l|c|c|c|}\hline
\theadfont\diagbox[width=4em]{$p(e)$}{$u(e)$}&
\thead{$0.1$}&\thead{$0.2$}&\thead{$0.3$}\\    \hline
$0.7$ & $45.62$ & $44.37$ & $30.85$ \\    \hline
$0.6$ & $48.95$ & $24.56$ & $30$ \\    \hline
$0.5$ & $29.5$ & $55.18$ & $28.21$ \\    \hline
\end{tabular}
\end{center}
\caption{\label{tab:inc-over-greedy} Percentage Increase in HEALER Solution over Greedy}
\end{figure}

\begin{figure}[t]
\subfloat[\small Deviation Tolerance]{{\includegraphics[height=1.5in,width=0.48\columnwidth]{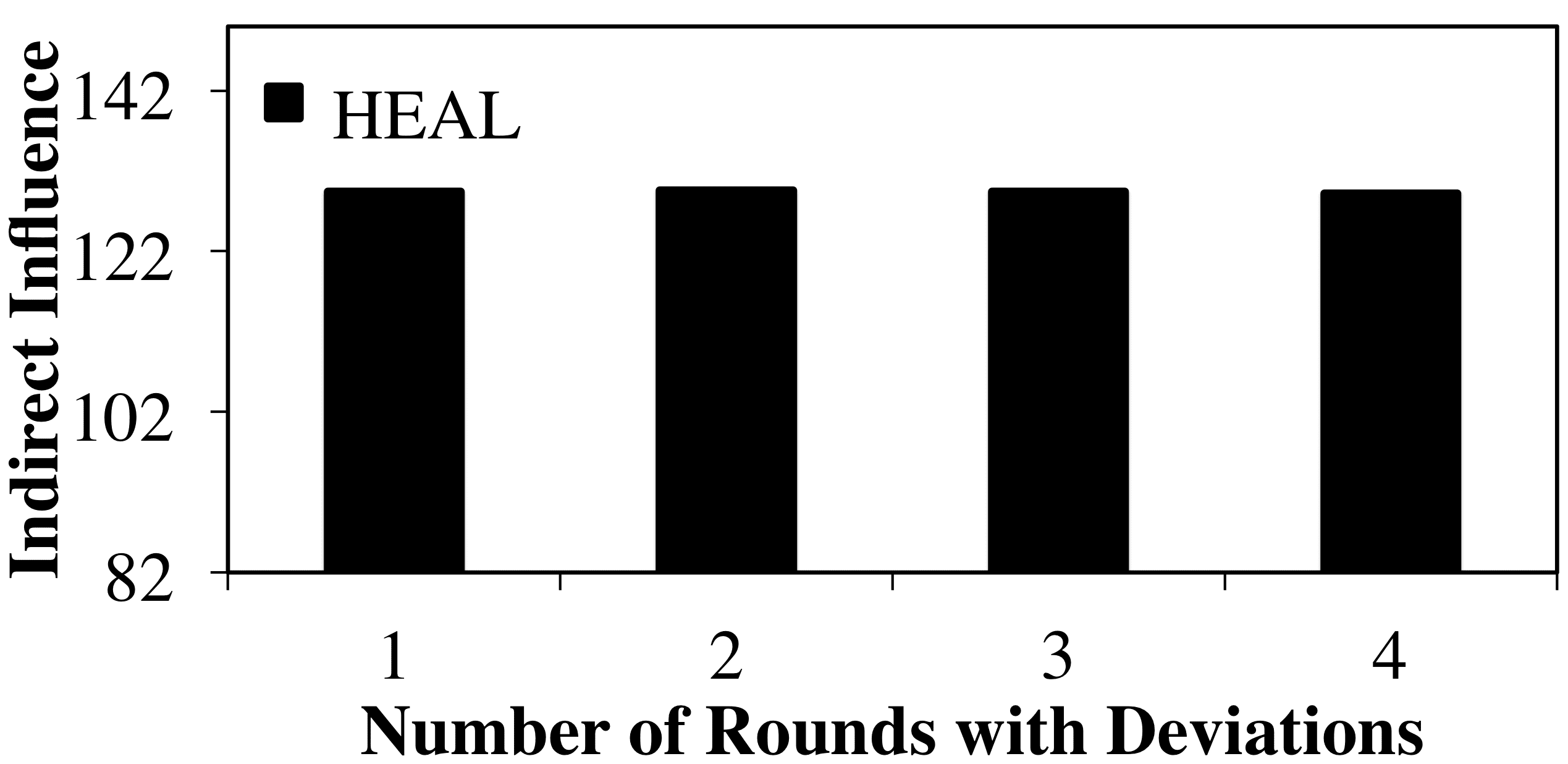}\label{fig:ws-deviation}}}
\hspace{2mm}
\subfloat[\small Artificial Networks]{{\includegraphics[height=1.5in,width=0.48\columnwidth]{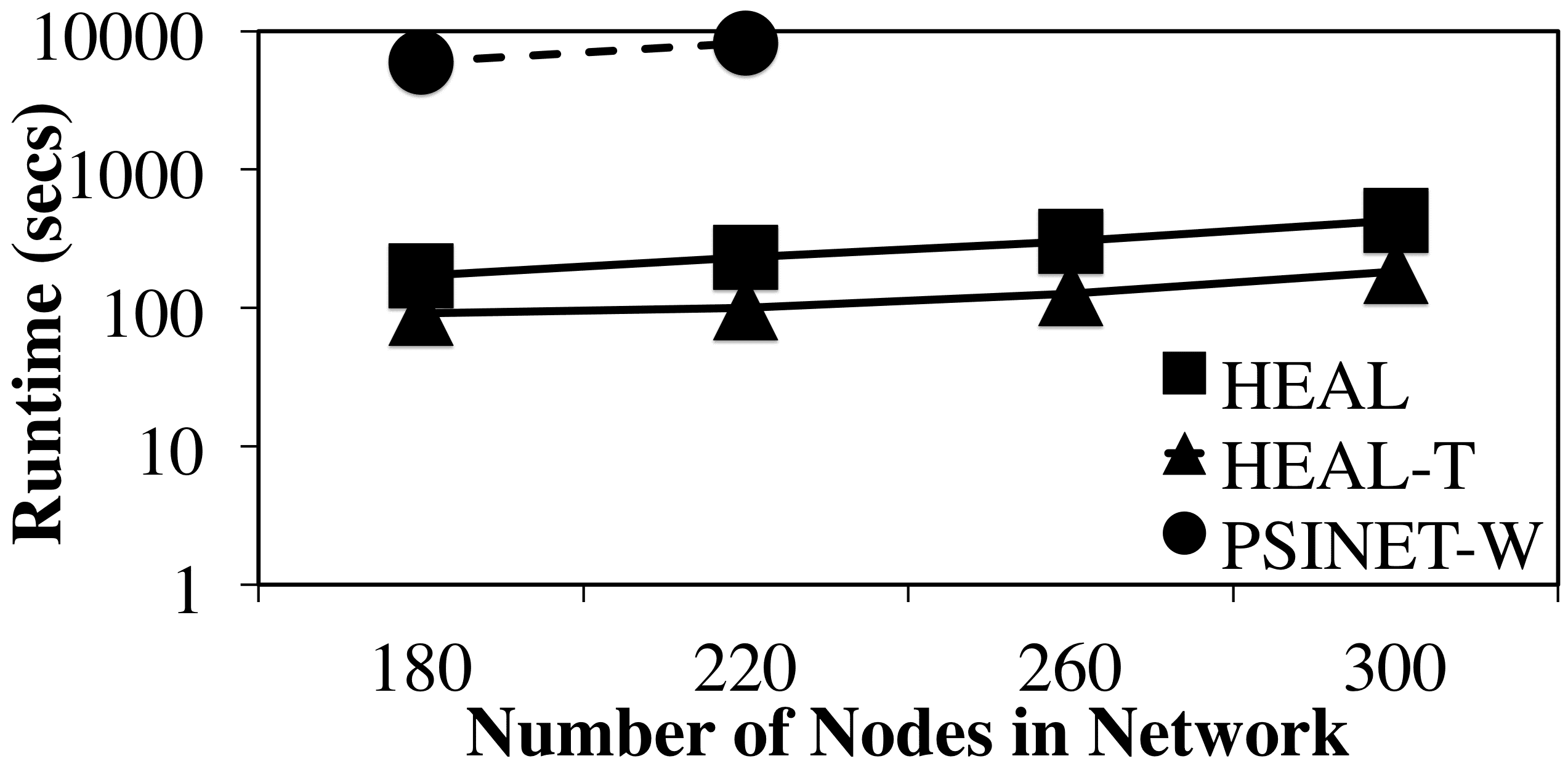}\label{fig:big-runtime}}}
\caption{ Deviation Tolerance \& Results on Artificially Generated Networks}
\end{figure}

\textbf{Deviation Tolerance. } We show HEAL's tolerance to deviation by replacing a fixed number of actions recommended by HEAL with randomly selected actions. Figure \ref{fig:ws-deviation} shows the variation in \textit{Indirect Influence} achieved by HEAL ($K=4$,$T=10$) with increasing number of random deviations from the recommended actions. The X-axis shows increasing number of deviations and the Y-axis shows the \textit{Indirect Influence}. For example, when there were 2 random deviations (i.e., two recommended actions were replaced with random actions), HEAL achieves 100.23 \textit{Indirect Influence}. This figure shows that HEAL is highly deviation-tolerant.

\begin{figure}
\begin{center}
\begin{tabular}{|l|c|c|c|}\hline
\theadfont\diagbox[width=4em]{$p(e)$}{$u(e)$}&
\thead{$0.1$}&\thead{$0.2$}&\thead{$0.3$}\\    \hline
$0.7$ & $24.42$ & $21.02$ & $16.85$ \\    \hline
$0.6$ & $0.0$ & $18.26$ & $12.46$ \\    \hline
$0.5$ & $11.58$ & $10.53$ & $8.11$ \\    \hline
\end{tabular}
\end{center}
\caption{\label{tab:sensitive} Percentage Loss in HEAL Solution on HD Network}
\end{figure}


\textbf{Sensitivity Analysis. } Finally, we test the robustness of HEAL's solutions in the HD network, by allowing for error in HEAL's understanding of $u(e)$/$p(e)$ values. We consider the case that $u(e)=0.1$ and $p(e)=0.6$ values that HEAL plans on, are wrong. Thus, HEAL plans its solutions using $u(e)=0.1$ and $p(e)=0.6$, but those solutions are evaluated on different (correct) $u(e)$/$p(e)$ values to get \textit{estimated solutions}. These \textit{estimated solutions} are compared to \textit{true solutions} achieved by HEAL if it planned on the correct $u(e)$/$p(e)$ values. Figure \ref{tab:sensitive} shows the percentage difference (in \textit{Indirect Influence}) between the \textit{true} and \textit{estimated} solutions, with varying $u(e)$ and $p(e)$ values. For example, when HEAL plans its solutions with wrong $u(e)=0.1$/$p(e)=0.6$ values (instead of correct $u(e)=0.3$/$p(e)=0.5$ values), it suffers a 8.11\% loss. This figure shows that HEAL is relatively robust to errors in its understanding of $u(e)$/$p(e)$ values, as it only suffers an average-case loss of $\sim$ 15\%.

\section{Conclusion}
This chapter presents HEALER, a software agent that recommends sequential intervention plans for use by homeless shelters, who organize these interventions to raise awareness about HIV among homeless youth. HEALER's sequential plans (built using knowledge of social networks of homeless youth) choose intervention participants strategically to maximize influence spread, while reasoning about uncertainties in the network. While previous work presents influence maximizing techniques to choose intervention participants, they do not address three real-world issues: (i) they \textit{completely fail} to scale up to real-world sizes; (ii) they do not handle deviations in execution of intervention plans; (iii) constructing real-world social networks is an expensive process. HEALER handles these issues via four major contributions: (i) HEALER casts this influence maximization problem as a POMDP and solves it using a novel planner which scales up to previously unsolvable real-world sizes; (ii) HEALER allows shelter officials to modify its recommendations, and updates its future plans in a deviation-tolerant manner; (iii) HEALER constructs social networks of homeless youth at low cost, using a Facebook application.

\chapter{Real World Deployment of Influence Maximization Algorithms}
\label{chapter:PILOT}
This chapter focuses on a topic that is insufficiently addressed in the literature, i.e., challenges faced in transitioning agents from an emerging phase in the lab, to a deployed application in the field. Specifically, we focus on challenges faced in transitioning HEALER and DOSIM \cite{wilder2017uncharted}, two agents for social influence maximization, which assist service providers in maximizing HIV awareness in real-world homeless-youth social networks.  While prior chapters have shown that these agents/algorithms have promising performance in simulation, this chapter illustrates that transitioning these algorithms from the lab into the real-world is not straightforward, and outlines three major lessons. First, it is important to conduct real-world pilot tests; indeed, due to the health-critical nature of the domain and complex influence spread models used by these algorithms, it is important to conduct field tests to ensure the real-world usability and effectiveness of these algorithms. We present results from three real-world pilot studies, involving 173 homeless youth in Los Angeles. These are the \textit{first such pilot studies} which provide head-to-head comparison of different algorithms for social influence maximization, including a comparison with a baseline approach. Second, we present analyses of these real-world results, illustrating the strengths and weaknesses of different influence maximization approaches we compare. Third, we present research and deployment challenges revealed in conducting these pilot tests, and propose solutions to address them.

\section{Pilot Study Pipeline}\label{sec:4}
Starting in Spring 2016, we conducted three different pilot studies \cite{yadav2018ijcai,yadav2017influence} at two service providers (see Figures \ref{fig:shelter1} and \ref{fig:shelter2}) in Los Angeles, over a seven month period. Each pilot study recruited a unique network of youth. Recall that these pilot studies serve three purposes. First, they help in justifying our assumptions about whether peer leaders actually spread HIV information in their social network, and whether they provide meaningful information about the social network structure (i.e., observations) during the intervention training. Second, these pilots help in exposing unforeseen challenges, which need to be solved convincingly before these agents can be deployed in the field. Third, they provide a head-to-head comparison of two different software agent approaches for social influence maximization, including a comparison with a baseline approach. 

\begin{figure}[htp]
\subfloat[\small Safe Place for Youth]{\includegraphics[height=1.5in,width=0.47\columnwidth]{Figures/1.jpg}\label{fig:shelter1}}
\hspace{2mm}
\subfloat[\small Emergency Resource Shelf]{\includegraphics[height=1.5in,width=0.47\columnwidth]{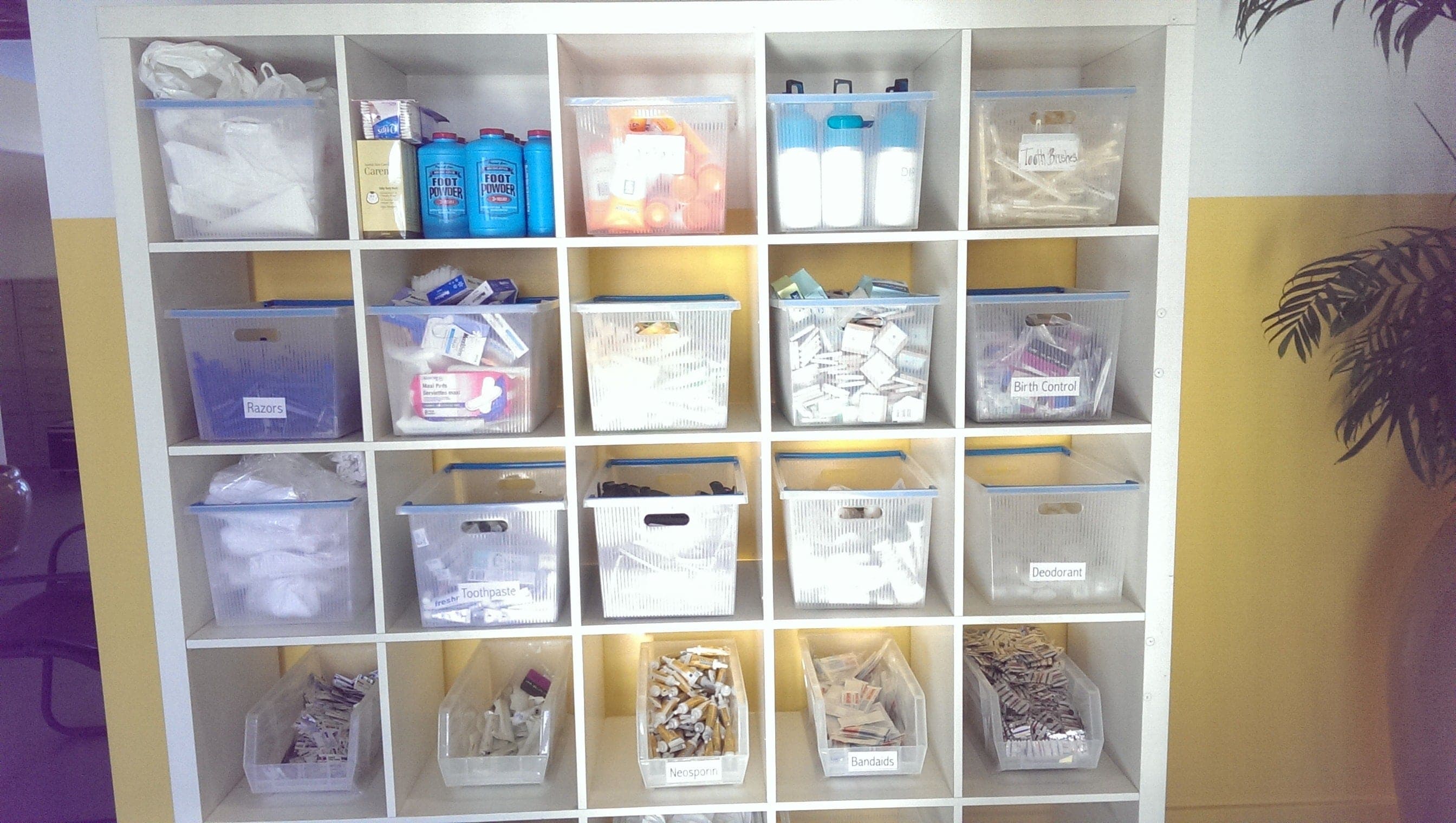}\label{fig:shelter2}}\\
\subfloat[\small Desks for Intervention Training]{\includegraphics[height=1.5in,width=0.47\columnwidth]{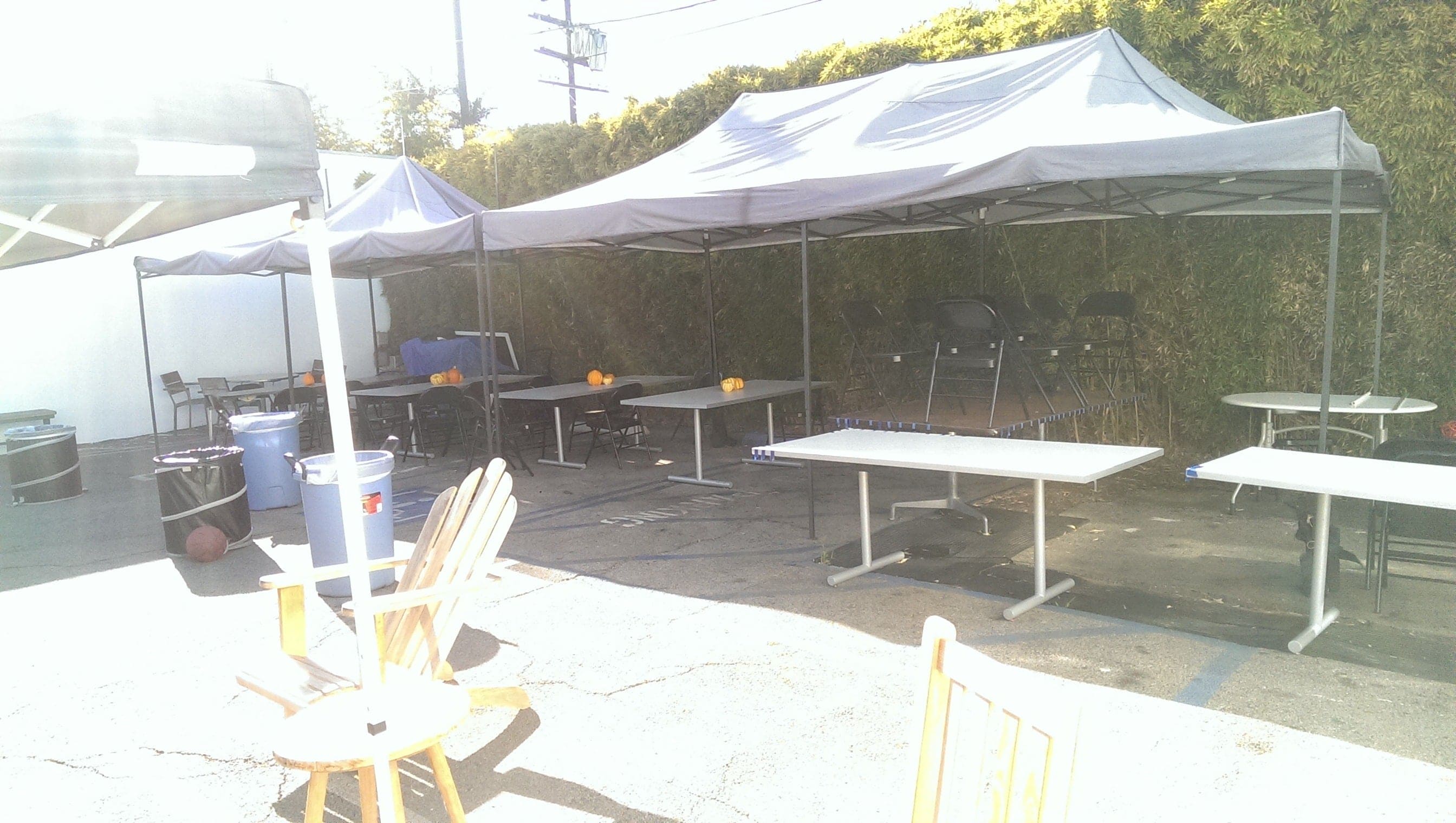}\label{fig:shelter3}}
\hspace{2mm}
\subfloat[\small Computer Kiosks at Homeless Shelter]{\includegraphics[height=1.5in,width=0.47\columnwidth]{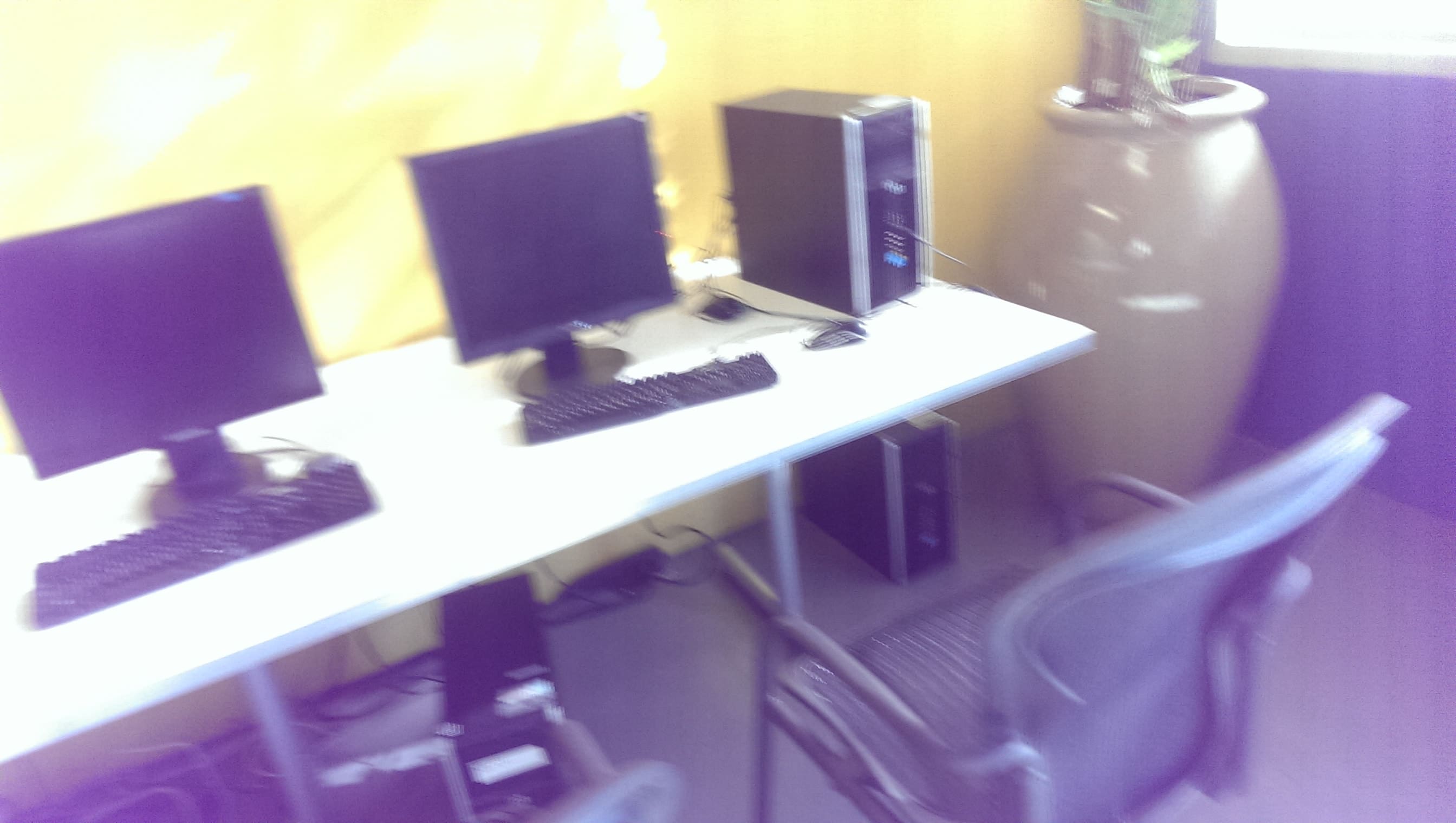}\label{fig:shelter4}}\\

\caption{\small Facilities at our Collaborating Service Providers}
\end{figure}

Each of these pilot studies had a different intervention mechanism, i.e., a different way of selecting actions (or a set of $K$ peer leaders). The first and second studies used HEALER and DOSIM (respectively) to select actions, whereas the third study served as the control group, where actions were selected using Degree Centrality (i.e., picking $K$ nodes in order of decreasing degrees). We chose Degree Centrality (DC) as the control group mechanism, because this is the current modus operandi of service providers in conducting these network based interventions \cite{valente2012network}.

\begin{figure}[t]
\center{\includegraphics[scale=.2]
{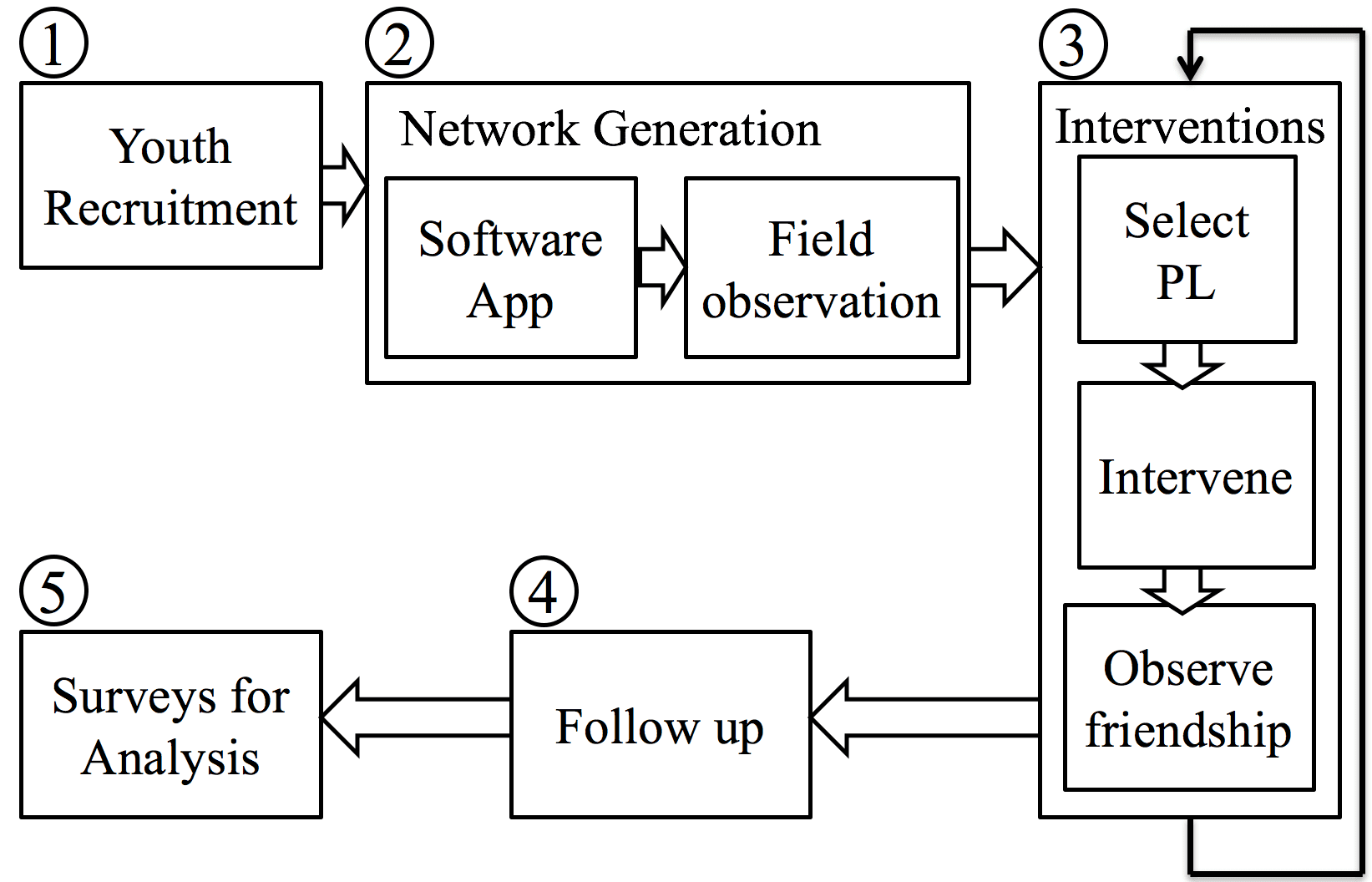}}
\caption{\label{fig:pipe} Real World Pilot Study Pipeline}
\end{figure} 

\subsection{Pilot Study Process} The pilot study process consists of five sequential steps. Figure \ref{fig:pipe} illustrates these five steps.

\begin{enumerate}

\item \textbf{Recruitment}: First, we recruit homeless youth from a service provider into our study. We provide youth with general information about our study, and our expectations from them (i.e., if selected as a peer leader, they will be expected to spread information among their peers). The youth take a 20 minute baseline survey, which enables us to determine their current risk-taking behaviors (e.g., they are asked about the last time they got an HIV test, etc.).  Every youth is given a 20 USD gift card as compensation for being a part of the pilot study. All study procedures were approved by our Institutional Review Board.

\item \textbf{Network Generation}: After recruitment, the friendship based social network that connects these homeless youth is generated. We rely on two information sources to generate this network: (i) online contacts of homeless youth; and (ii) field observations made by the authors and service providers. To expedite the network generation phase, online contacts of homeless youth are used (via a software application that the youth are asked to use) to build a first approximation of the real-world social network of homeless youth. Then, this network is refined using field observations (about additional real-world friendships) made by the authors and the service providers. All edges inferred in this manner are assumed to be \textit{certain edges}. More information on uncertain edges is provided later.

\item \textbf{Interventions}: Next, the generated network is used by the software agents to select actions (i.e., $K$ peer leaders) for $T$ stages. In each stage, an action is selected using the pilot's intervention strategy. The $K$ peer leaders of this chosen action are then trained as peer leaders (i.e., informed about HIV) by pilot study staff during the intervention. These peer leaders also reveal more information (i.e., provide observation) about newer friendships which we did not know about. These friendships are incorporated into the network, so that the agents can select better actions in the next stage of interventions. Every peer leader is given a 60 USD gift card.

\item \textbf{Follow Up}: The follow up phase consists of meetings, where the peer leaders are asked about any difficulties they faced in talking to their friends about HIV. They are given further encouragement to keep spreading HIV awareness among their peers. These follow-up meeting occur on a weekly basis, for a period of one month after Step 3 ends.

\item \textbf{Analysis}: For analysis, we conduct in-person surveys, one month after all interventions have ended. Every youth in our study is given a 25 USD gift card to show up for these surveys. During the surveys, they are asked if some youth from within the pilot study talked to them about HIV prevention methods, after the pilot study began. Their answer helps determine if information about HIV reached them in the social network or not. Thus, these surveys are used to find out the number of youth who got informed about HIV as a result of our interventions. Moreover, they are asked to take the same survey about HIV risk that they took during recruitment. These post-intervention surveys enable us to compare HEALER, DOSIM and DC in terms of information spread (i.e., how successful were the agents in spreading HIV information through the social network) and behavior change (i.e., how successful were the agents in causing homeless youth to test for HIV), the two major metrics that we use in our evaluation section. 
\end{enumerate}

We provide these behavior change results in order to quantify the true impact of these social influence maximization agents in the homeless youth domain. In these results, we measure behavior change by asking youth if they have taken an HIV test at baseline and repeating this question during the follow up surveys.  If the youth reported taking an HIV test at one month (after interventions) but not at baseline and that youth also reported getting informed about HIV, we attribute this behavior change to our intervention. This allows us to measure whether our interventions led to a reduction in risk attitudes.

\textbf{Uncertain network parameters} While there exist many link prediction techniques \cite{kim2011network} to infer \textit{uncertain edges} in social networks, the efficacy of these techniques is untested on homeless youth social networks. Therefore, we took a simpler, less "risky" approach -- each edge \textit{not created} during the network generation phase (i.e., Step 2 above) was added to the network as an \textit{uncertain edge}.  Thus, after adding these uncertain edges, the social network in each pilot study became a completely connected network, consisting of \textit{certain edges} (inferred from Step 2), and \textit{uncertain edges}. The existence probability on each uncertain edge was set to $u=0.01$. Our approach to adding uncertain edges ensures that no potential friendship is missed in the social network because of our lack of accurate knowledge.

\begin{figure}[t]
\center{\includegraphics[scale=.1]
{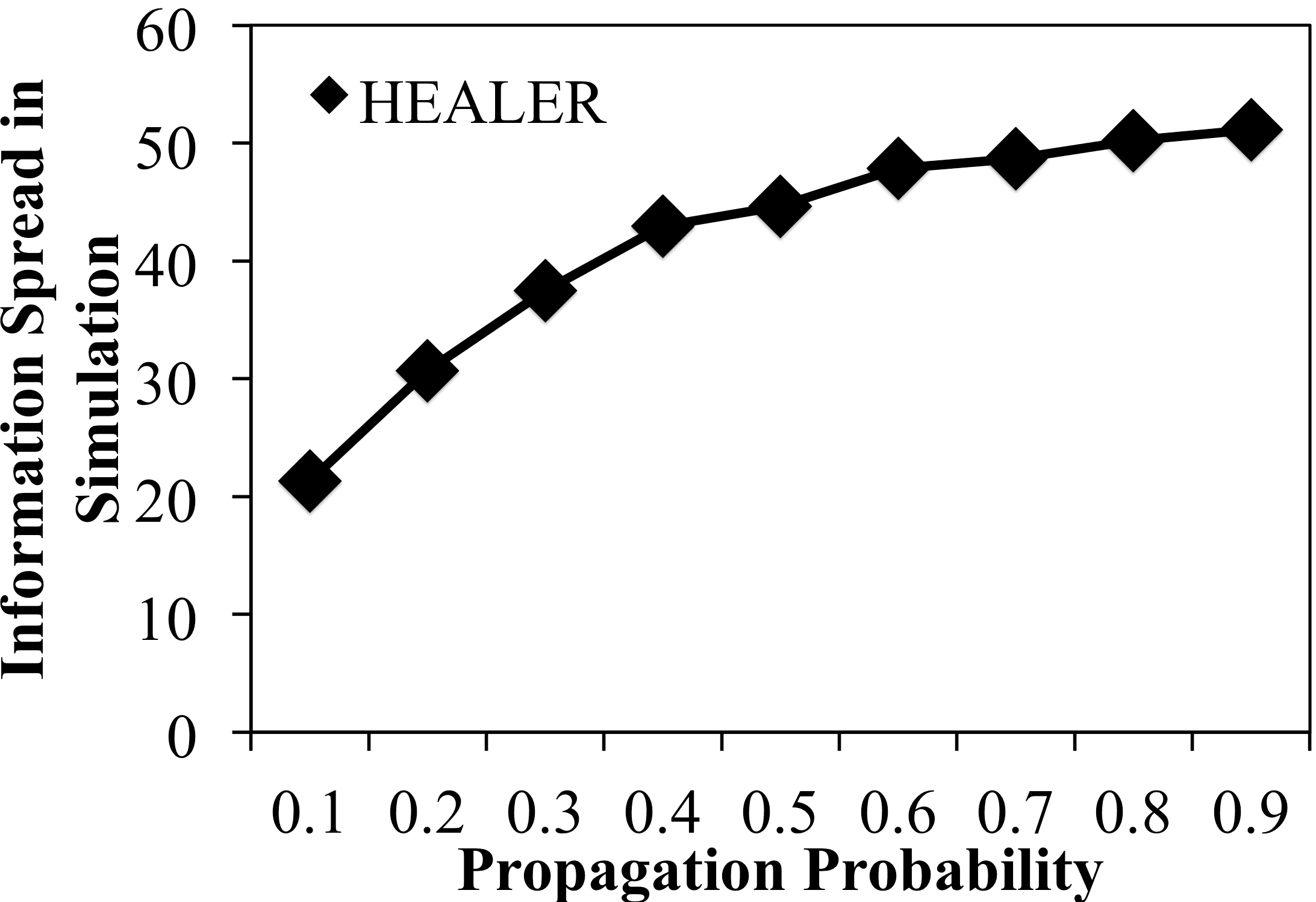}}
\caption{\label{fig:pilot6} Information Spread with $p_e$ on HEALER's Pilot Network}
\end{figure}

Getting propagation probabilities ($p_e$) values was also challenging. In HEALER's pilot, service providers estimated that the true $p_e$ value would be somewhere around 0.5. Since the exact value was unknown, we assumed an interval of $[0.4, 0.8]$ and simulated HEALER's performance with $p_e$ values in this range.  Figure \ref{fig:pilot6} shows how information spread achieved by HEALER on its pilot study network is relatively stable in simulation for $p_e$ values around 0.5. The Y-axis shows the information spread in simulation and the X-axis shows increasing $p_e$ values. This figure shows that information spread achieved by HEALER varied by $\sim$11.6\% with $p_e$ in the range $[0.4, 0.8]$. Since influence spread is relatively stable in this range, we selected $p_e=0.6$ (the mid point of $[0.4,0.8]$) on all network edges. Later, we provide ex-post justification for why $p_e=0.6$ was a good choice, atleast for this pilot study.

In DOSIM's pilot, we did not have to deal with the issue of assigning accurate $p_e$ values to edges in the network. This is because DOSIM can work with intervals in which the exact $p_e$ is assumed to lie. For the pilot study, we used the same interval of $[0.4, 0.8]$ to run DOSIM. Finally, the control group pilot study did not require finding $p_e$ values, as peer leaders were selected using Degree Centrality, which does not require knowledge of $p_e$.

\section{Results from the Field}
We now provide results from all three pilot studies \cite{rice2018piloting}. In each study, three interventions were conducted (or, $T=3$), i.e., Step 3 of the pilot study process (Figure \ref{fig:pipe}) was repeated three times. The actions (i.e., set of $K$ peer leaders) were chosen using intervention strategies (policies) provided by HEALER \cite{yadav2016using,yadav2017ibm,yadav2017explanation,soriano2016simultaneous}, DOSIM \cite{wilder2017uncharted}, and Degree Centrality (DC) in the first, second and third pilot studies, respectively. Recall that we provide comparison results on two different metrics. First, we provide results on information spread, i.e., how well different software agents were able to spread information about HIV through the social network. Second, even though HEALER and DOSIM do not explicitly model behavior change in their objective function (both maximize the information spread in the network), we provide results on behavior change among homeless youth, i.e., how successful were the agents in inducing behavior change among homeless youth.

\begin{figure}[ht]
\center{\includegraphics[scale=.15]
{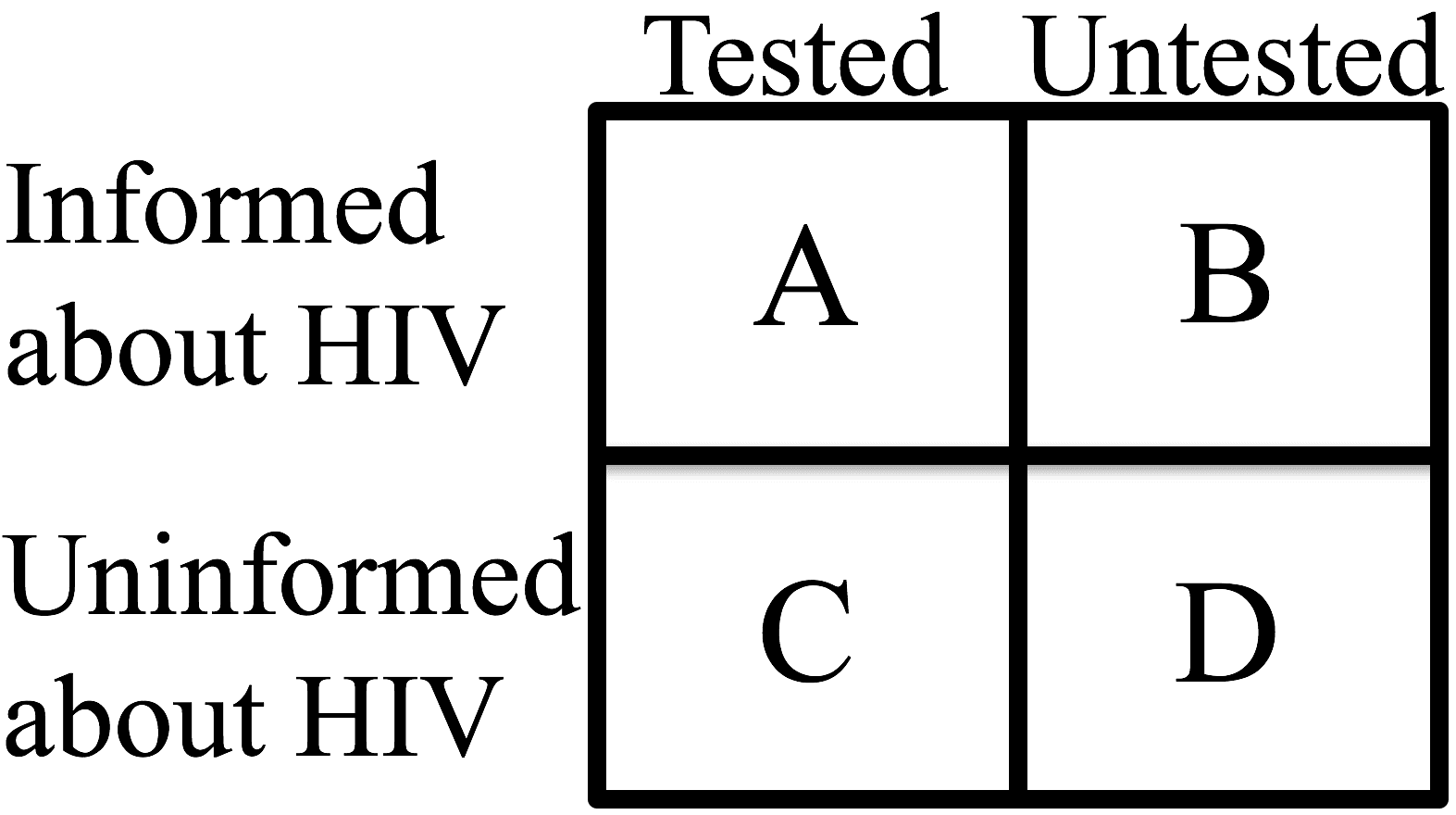}}
\caption{\label{fig:venn} Set of Surveyed Non Peer-Leaders}
\end{figure}

Figure \ref{fig:venn} shows a Venn diagram that explains the results that we collect from the pilot studies. To begin with, we exclude peer leaders from all our results, and focus only on non peer-leaders. This is done because peer leaders cannot be used to differentiate the information spread (and behavior change) achieved by HEALER, DOSIM and DC. In terms of information spread, all peer leaders are informed about HIV directly by study staff in the intervention trainings. In terms of behavior change, the proportion of peer leaders who change their behavior does not depend on the strategies recommended by HEALER, DOSIM and DC. Thus, Figure \ref{fig:venn} shows a Venn diagram of the set of all non peer-leaders (who were surveyed at the end of one month). This set of non peer-leaders can be divided into four quadrants based on (i) whether they were informed about HIV or not (by the end of one-month surveys in Step 5 of Figure \ref{fig:pipe}); and (ii) whether they were already tested for HIV at baseline (i.e., during recruitment, they reported that they had got tested for HIV in the last six months) or not.

For information spread results, we report on the percentage of youth in this big rectangle, who were informed about HIV by the end of one month (i.e., boxes A+B as a fraction of the big box). For behavior change results, we exclude youth who were already tested at baseline (as they do not need to undergo ``\textit{behavior change}", because they are already exhibiting desired behavior of testing). Thus, we only report on the percentage of \textit{untested informed youth}, (i.e., box B), who now tested for HIV (i.e., changed behavior) by the end of one month (which is a fraction of youth in box B). We do this because we can only attribute conversions (to testers) among youth in box B (Figure \ref{fig:venn}) to strategies recommended by HEALER and DOSIM (or the DC baseline). For example, non peer-leaders in box D who convert to testers (due to some exogenous reasons) cannot be attributed to HEALER or DOSIM's strategies (as they converted to testers \textit{without} getting HIV information).

\begin{figure}
\begin{center}
\begin{tabular}{|l|c|c|c|}
    \hline
     & HEALER & DOSIM & DC\\
    \hline
    Youth Recruited & 62 & 56 & 55\\
    \hline
    PL Trained & 17.7\% & 17.85\% & 20\%\\
    \hline
    Retention \% & 73\% & 73\% & 65\%\\
    \hline
    Avg. Observation Size & 16 & 8 & 15\\
    \hline
\end{tabular}
\end{center}
\caption{\label{tab:detail} Logistic Details of Different Pilot Studies}
\end{figure}

\textbf{Study Details} Figure \ref{tab:detail} shows details of the pilot studies. This figure shows that the three pilots had fairly similar conditions as (i) all three pilots recruited $\sim$60 homeless youth; (ii) peer leader training was done on 15-20\% of these youth, which is recommended in social sciences literature \cite{rice2010positive}; and (iii) retention rates of youth (i.e., percentage of youth showing up for post-intervention surveys)  were fairly similar ($\sim$70\%) in all three pilots. This figure also shows that peer leaders provided information about 13 uncertain friendships on average in every intervention stage (across all three pilot studies), which validates HEALER and DOSIM's assumption that peer leaders provide \textit{observations} about friendships \cite{wilder2017uncharted,yadav2016using}.

\begin{figure}[ht]
\subfloat[\small Comparison of Information Spread Among Non Peer-Leaders]{\includegraphics[height=1.7in,width=0.48\columnwidth]{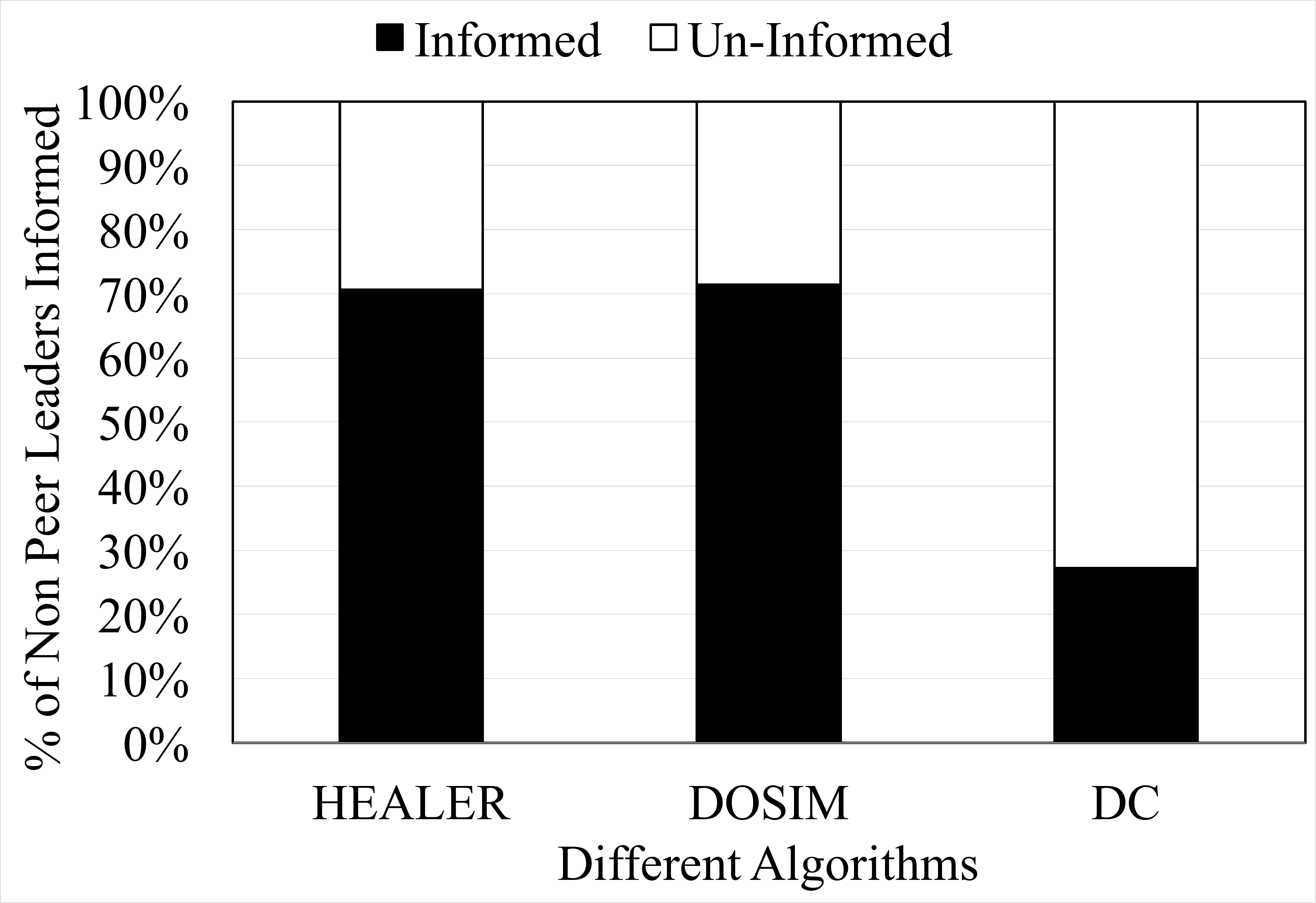}\label{fig:inf-spread}}
\hspace{2mm}
\subfloat[\small \% of edges between PL]{\includegraphics[height=1.7in,width=0.48\columnwidth]{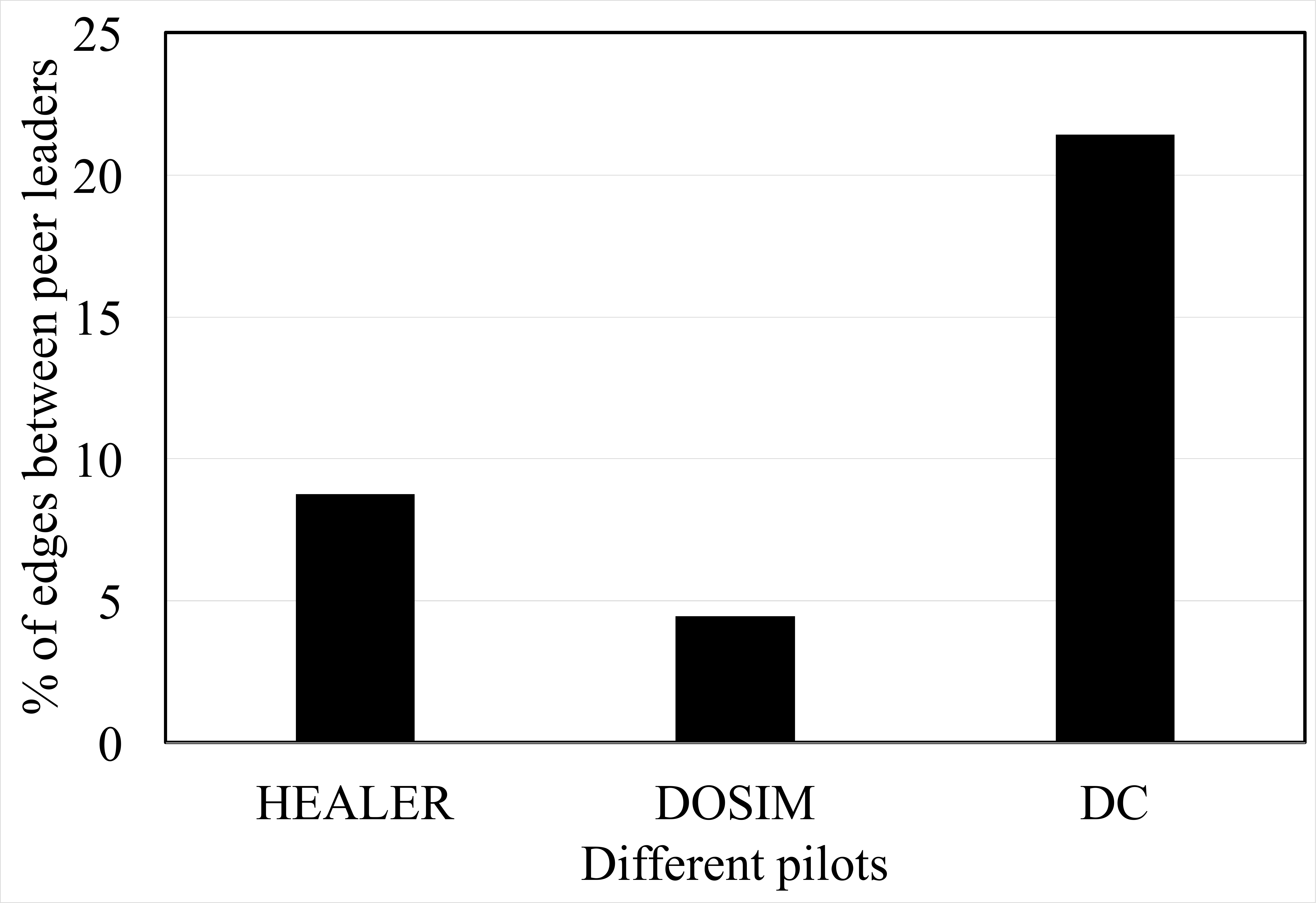}\label{fig:edge}}
\caption{Information Spread Comparison \& Analysis}
\end{figure}

\textbf{Information Spread} Figure \ref{fig:inf-spread} compares the information spread achieved by HEALER, DOSIM and DC in the pilot studies. The X-axis shows the three different intervention strategies and the Y-axis shows the percentage of non-peer-leaders to whom information spread (box A+B as a percentage of total number of non-peer leaders in Figure \ref{fig:venn}). This figure shows that PL chosen by HEALER (and DOSIM) are able to spread information among $\sim$70\% of the non peer-leaders in the social network by the end of one month. Surprisingly, PL chosen by DC were only able to inform $\sim$27\% of the non peer-leaders. This result is surprising, as it means that \textit{HEALER and DOSIM's strategies were able to improve over DC's information spread by over 160\%}. We now explain reasons behind this significant improvement in information spread achieved by HEALER and DOSIM (over DC).

Figure \ref{fig:edge} illustrates a big reason behind DC's poor performance. The X-axis shows different pilots and the Y-axis shows what percentage of network edges were \textit{redundant}, i.e., they connected two peer leaders. Such edges are \textit{redundant}, as both its nodes (peer leaders) already have the information. This figure shows that redundant edges accounted for only 8\% (and 4\%) of the total edges in HEALER (and DOSIM's) pilot study. On the other hand, 21\% of the edges in DC's pilot study were redundant. Thus, DC's strategies picks PL in a way which creates a lot of redundant edges, whereas HEALER picks PL which create only 1/3 times the number of redundant edges. DOSIM performs best in this regard, by selecting nodes which creates the fewest redundant edges ($\sim5$X less than DC, and even 2X less than HEALER), and is the key reason behind its good performance in Figure \ref{fig:inf-spread}. Concomitantly to the presence of redundant edges, HEALER also spreads out its PL selection across different communities within the homeless youth network, that also aids in information spreading, as discussed below.

\begin{figure}[t]
\subfloat[\small Community Structure]{\includegraphics[height=1.7in,width=0.48\columnwidth]{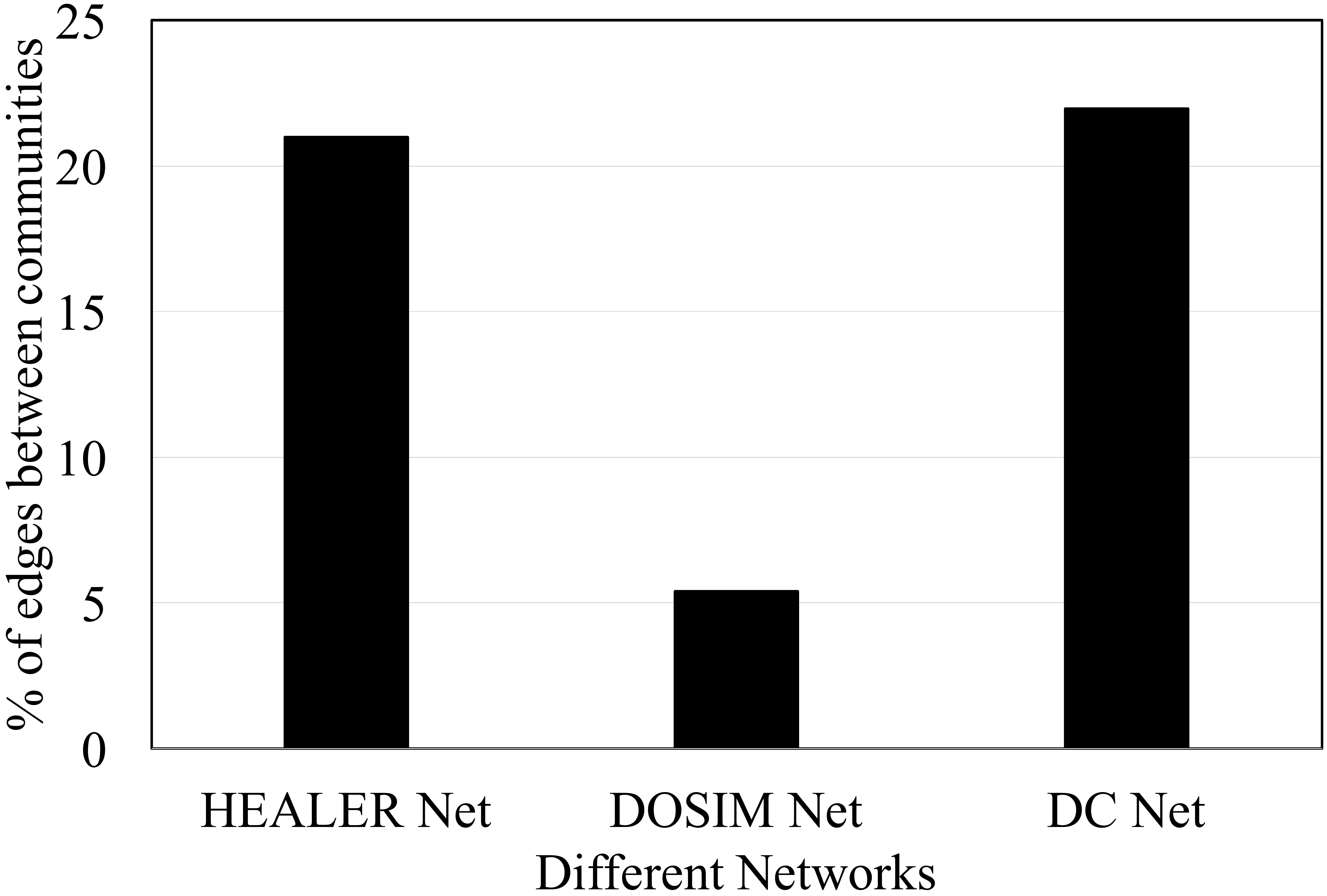}\label{fig:com}}
\hspace{2mm}
\subfloat[\small Coverage of communities by agents]{\includegraphics[height=1.7in,width=0.48\columnwidth]{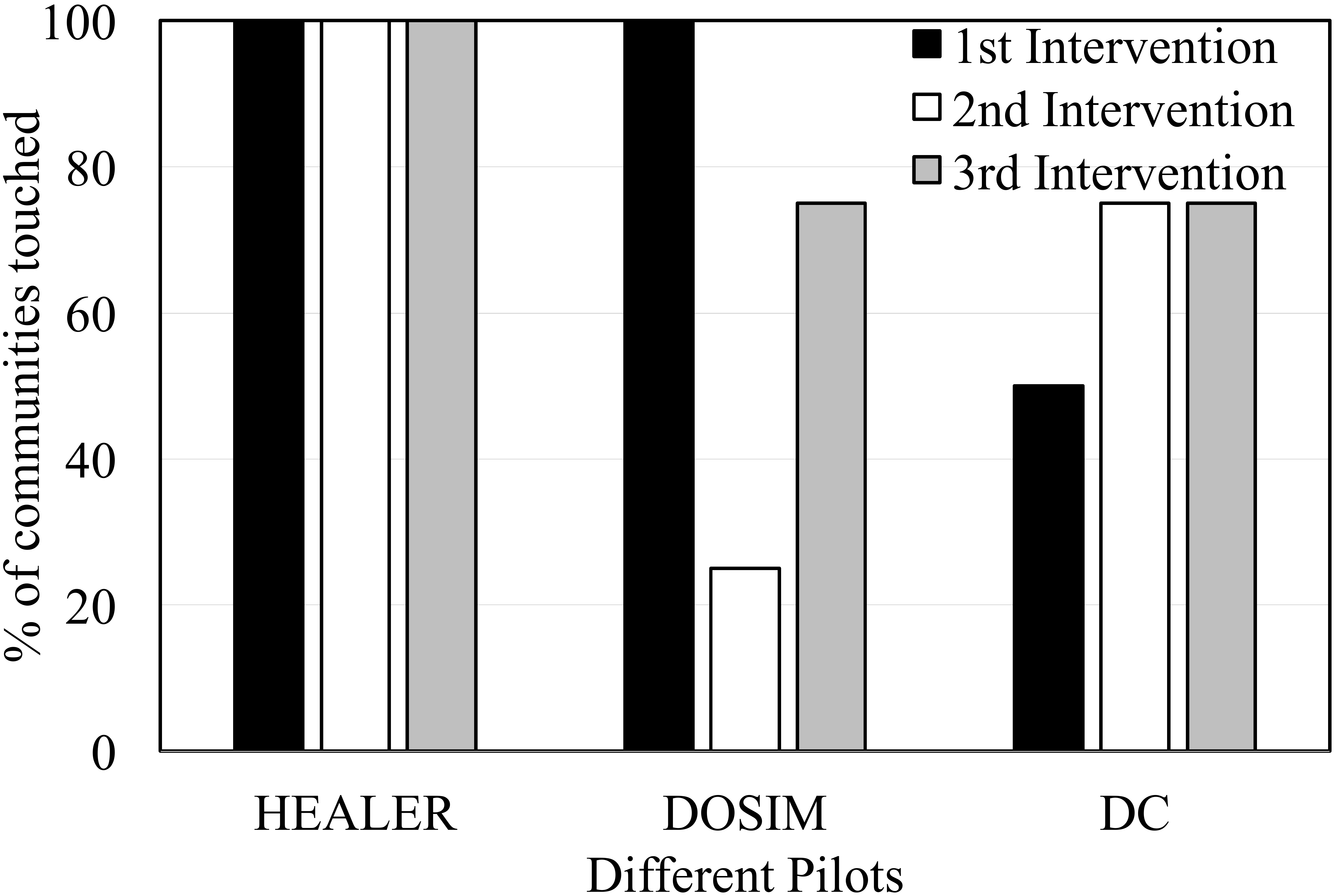}\label{fig:comcov}}
\caption{Exploiting community structure of real-world networks}
\end{figure} 

\begin{figure}[t]
\center{\includegraphics[scale=0.13]
{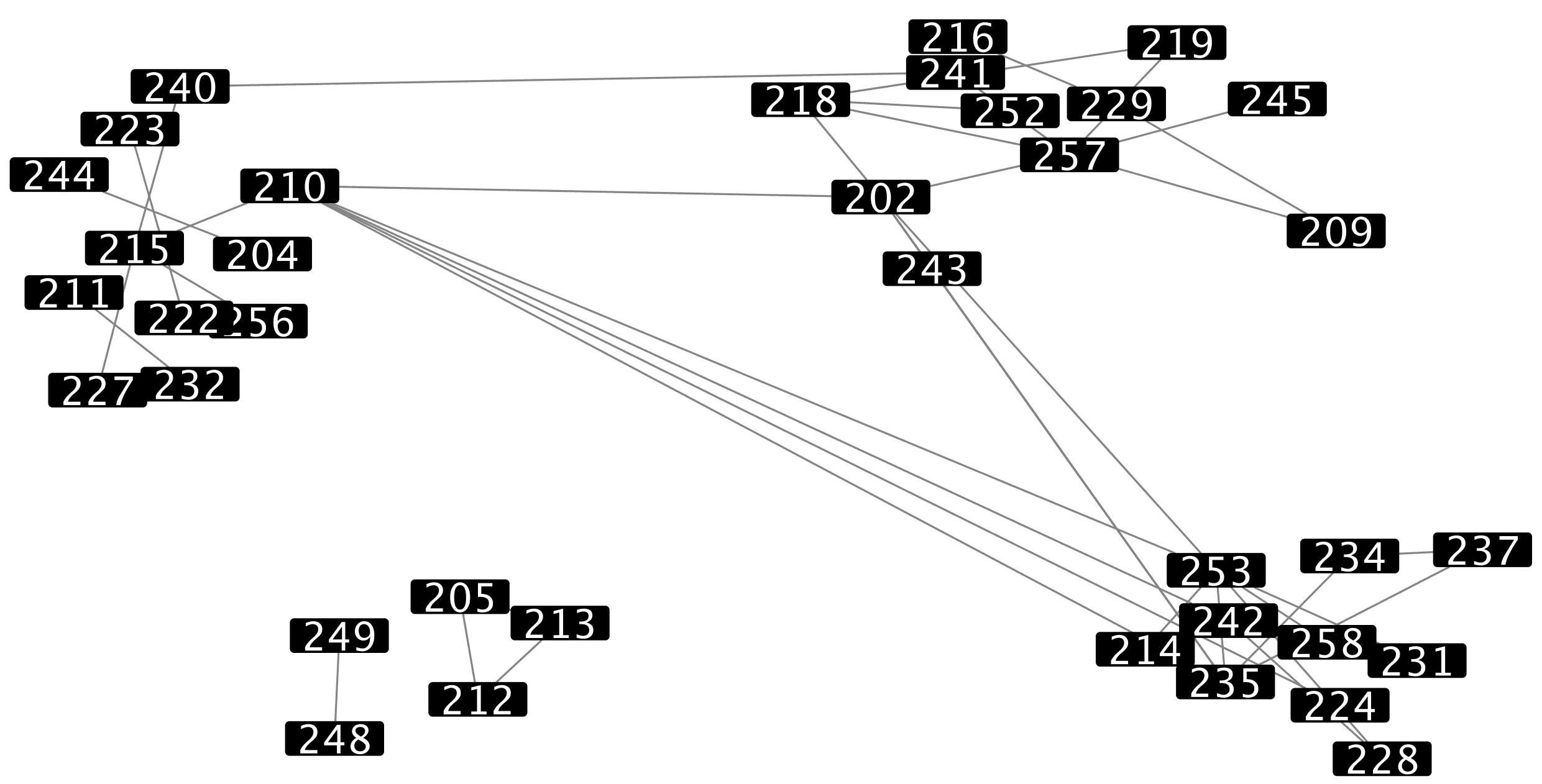}}
\caption{\label{fig:commun} Four Partitions of DC's Pilot Network}
\end{figure}

Figure \ref{fig:com} shows the community structure of the three pilot study social networks. To generate this figure, the three networks were partitioned into communities using METIS \cite{lasalle2013multi}, an off-the-shelf graph partitioning tool. We partitioned each network into four different communities (as shown in Figure \ref{fig:commun}) to match the number of PL (i.e., $K=4$) chosen in every stage. The X-axis shows the three pilot study networks and the Y-axis shows the percentage of edges that go across these four communities. This figure shows that all three networks can be fairly well represented as a set of reasonably disjointed communities, as only 15\% of edges (averaged across all three networks) went across the communities. Next, we show how HEALER and DOSIM exploit this community structure by balancing their efforts across these communities simultaneously to achieve greater information spread as compared to DC.

Figure \ref{fig:comcov} illustrates patterns of PL selection (for each stage of intervention) by HEALER, DOSIM and DC across the four different communities uncovered in Figure \ref{fig:com}. Recall that each pilot study comprised of three stages of intervention (each with four selected PL). The X-axis shows the three different pilots. The Y-axis shows what percentage of communities had a PL chosen from within them. For example, in DC's pilot, the chosen PL covered 50\%  (i.e., two out of four) communities in the $1^{st}$ stage, 75\% (i.e., three out of four) communities in the $2^{nd}$ stage, and so on. This figure shows that HEALER's chosen peer leaders cover all possible communities (i.e., 100\% communities touched) in the social network in all three stages. On the other hand, DC concentrates its efforts on just a few clusters in the network, leaving $\sim$50\% communities untouched (on average). Therefore, while HEALER ensures that its chosen PL covered most real-world communities \textit{in every intervention}, the PL chosen by DC focused on a single (or a few) communities in each intervention. This further explains why HEALER is able to achieve greater information spread, as it spreads its efforts across communities unlike DC. While DOSIM's coverage of communities is similar to DC, it outperforms DC because of $\sim$5X less redundant edges than DC (Figure \ref{fig:edge}).

\begin{figure}[t]
\subfloat[\small Behavior Change]{\includegraphics[height=1.7in,width=0.48\columnwidth]{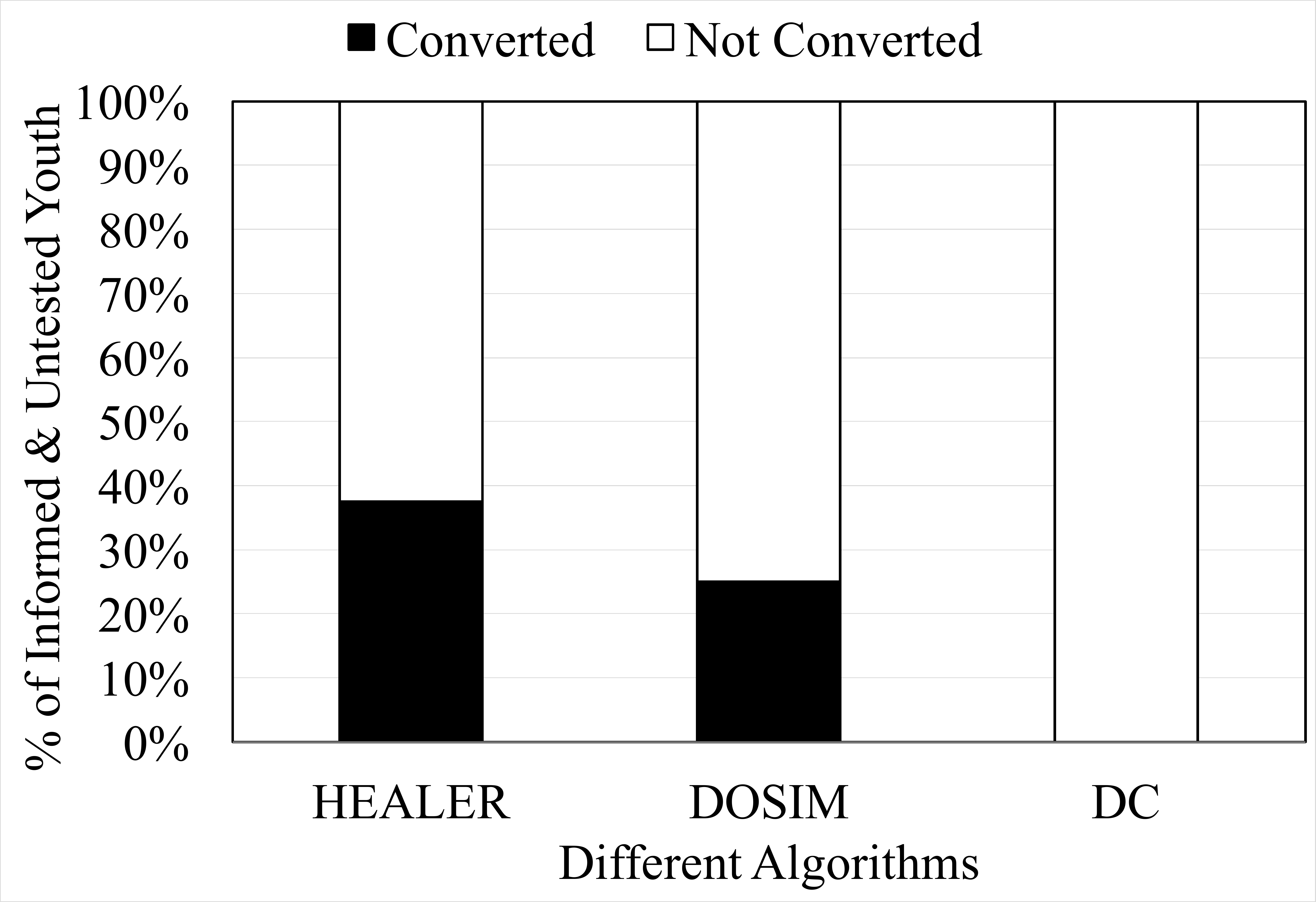}\label{fig:behav}}
\hspace{2mm}
\subfloat[\small Simulation of information spread]{\includegraphics[height=1.7in,width=0.48\columnwidth]{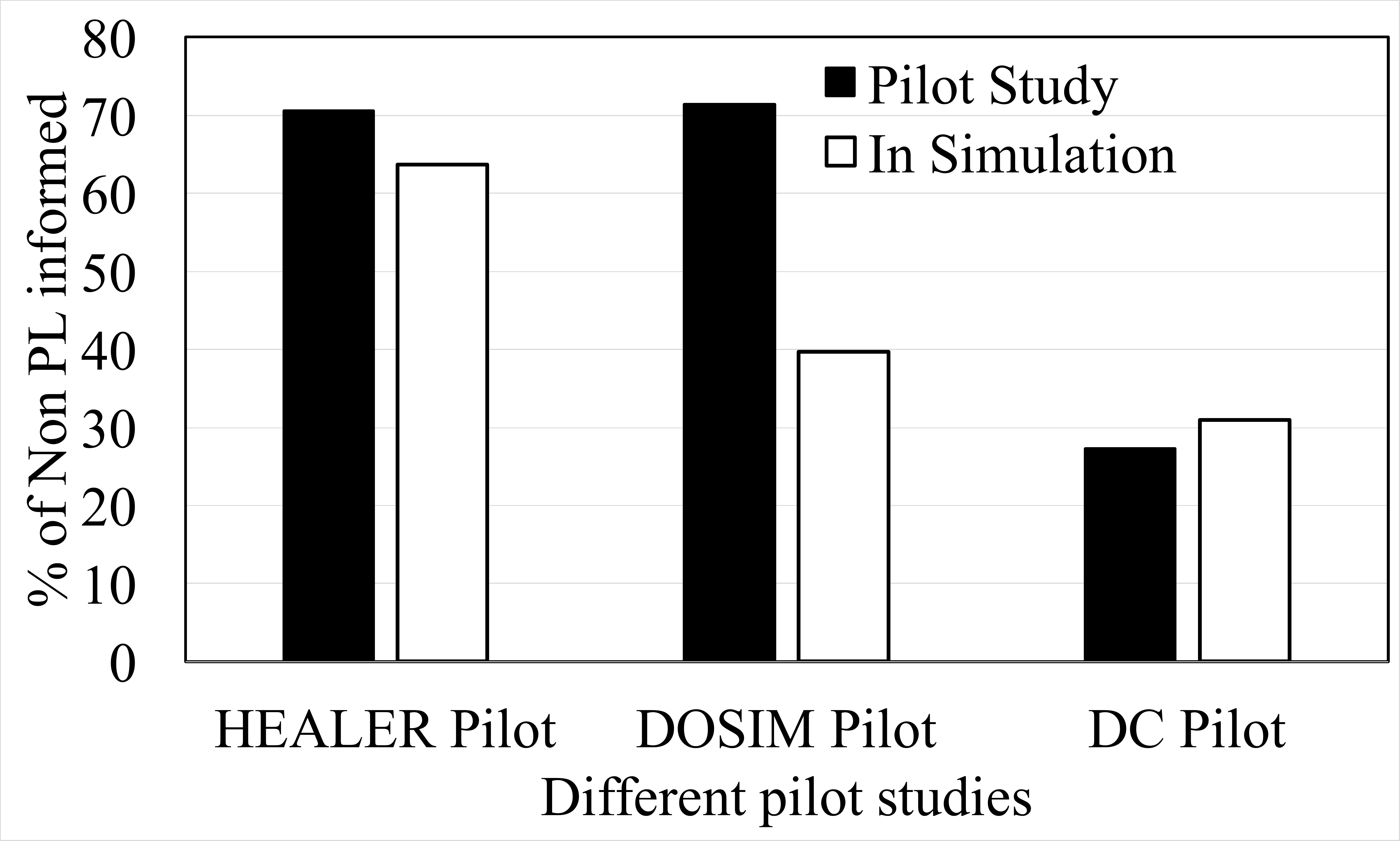}\label{fig:stat1}}
\caption{Behavior Change \& Information Spread in Simulation}
\end{figure}

\textbf{Behavior Change} Figure \ref{fig:behav} compares behavior change observed in homeless youth in the three pilot studies. The X-axis shows different intervention strategies, and the Y-axis shows the percentage of non peer-leaders who were untested for HIV at baseline and were informed about HIV during the pilots (i.e. youth in box B in Figure \ref{fig:venn}). This figure shows that PL chosen by HEALER (and DOSIM) converted 37\% (and 25\%) of the youth in box B to HIV testers. In contrast, \textit{PL chosen by DC did not convert any youth in box B to testers}. DC's information spread reached a far smaller fraction of youth (Figure \ref{fig:inf-spread}), and therefore it is unsurprising that DC did not get adequate opportunity to convert anyone of them to testing. This shows that even though HEALER and DOSIM do not explicitly model behavior change in their objective function, the agents strategies still end up outperforming DC significantly in terms of behavior change.

\section{Challenges Uncovered} \label{sec:6}
This section highlights research and methodological challenges that we uncovered while deploying these agent based interventions in the field. While handling these challenges in a principled manner is a subject for future research, we explain some heuristic solutions used to tackle these challenges in the three pilot studies (which may help in addressing the longer term research challenges).

\textbf{Research Challenges} While conducting interventions, we often encounter an inability to execute actions (i.e., conduct intervention with chosen peer leaders), because a subset of the chosen peer leaders may fail to show up for the intervention (because they may get incarcerated, or find temporary accommodation). Handling this inability to execute actions in a principled manner is a research challenge. Therefore, it is necessary that algorithms and techniques developed for this problem are robust to these errors in execution of intervention strategy. Specifically, we require our algorithms to be able to come up with alternate recommendations for peer leaders, when some homeless youth in their original recommendation are not found. We now explain how HEALER, DOSIM and DC handle this challenge by using \textit{heuristic solutions}. 

Recall that for the first pilot, HEALER's intervention strategies were found by using online planning techniques for POMDPs \cite{yadav2016using}. Instead of offline computation of the entire policy (strategy), online planning only finds the best POMDP action (i.e., selection of $K$ network nodes) for the current belief state (i.e., probability distribution over state of influence of nodes). Upon reaching a new belief state, online planning again plans for this new belief. This interleaving of planning and execution works to our advantage in this domain, as every time we have a failure which was not anticipated in the POMDP model  (i.e., a peer leader which was chosen in the current POMDP action did not show up), we can recompute a policy quickly by marking these unavailable nodes, so that they are ineligible for future peer leader selection. After recomputing the plan, the new peer leader recommendation is again given to the service providers to conduct the intervention.

For the second pilot study, we augmented DOSIM to account for unavailable nodes by using its computed policy to produce a list of alternates for each peer leader. This alternate list ensures that unlike HEALER, DOSIM does not require rerunning in the event of a failure. Thus, if a given peer leader does not show up, then study staff work down the list of alternates to find a replacement. DOSIM computes these alternates by maintaining a parameter $q_v$ (for each node $v$), which gives the probability that node $v$ will show up for the intervention. This $q_v$ parameter enables DOSIM to reason about the inability to execute actions, thereby making DOSIM's policies robust to such failures. To compute the alternate for $v$, we \textit{condition} on the following event $\sigma_v$: node $v$ fails to show up (i.e., set $q_v = 0$), while every other peer leader $u$ shows up with probability $q_u$.  Conditioned on this event $\sigma_v$, we find the node which maximizes the \textit{conditional} marginal gain in influence spread, and use it as the alternate for node $v$. Hence, each alternate is selected in a manner which is robust with respect to possible failures on other peer leader nodes. Finally, in the DC pilot, in case of a failure, the node with the next highest degree is chosen as a peer leader.

\textbf{Methodological Challenges} A methodological challenge was to ensure a fair comparison of the performance of different agents in the field. In the real-world, HEALER, DOSIM and DC could not be tested on the same network, as once we spread HIV messages in one network as part of one pilot study, fewer youth are unaware about HIV (or uninfluenced) for the remaining pilots. Therefore, each agent (HEALER, DOSIM or DC) is tested in a different pilot study with a different social network (with possibly different structure). Since HEALER, DOSIM and DC's performance is not compared on the same network, it is important to ensure that HEALER and DOSIM's superior performance (observed in Figure \ref{fig:inf-spread}) is not due to differences in network structure or any extraneous factors. 

\begin{figure}
\begin{center} 
\begin{tabular}{|l|c|c|c|}
    \hline
     & HEALER & DOSIM & DC\\
    \hline
    Network Diameter & 8 & 8 & 7\\
    \hline
    Network Density & 0.079 & 0.059 & 0.062\\
    \hline
    Avg. Clustering Coefficient & 0.397 & 0.195 & 0.229\\
    \hline
    Avg. Path Length & 3.38 & 3.15 & 3.03\\
    \hline
    Modularity & 0.568 & 0.568 & 0.602\\
    \hline
\end{tabular}
\end{center}
\caption{\label{tab:netsimilar} Similarity of social networks in different pilot studies}
\end{figure}

First, we compare several well-known graph metrics for the three distinct pilot study social networks. Figure \ref{tab:netsimilar} shows that most metrics are similar on all three networks, which establishes that the social networks generated in the three pilot studies were structurally similar. This suggests that comparison results would not have been very different, had all three algorithms been tested on the same network. Next, we attempt to show that HEALER and DOSIM's superior performance (Figure \ref{fig:inf-spread}) was not due to extraneous factors.

\begin{figure}[ht]
\subfloat[\small Comparison on DC's network]{\includegraphics[height=1.7in,width=0.48\columnwidth]{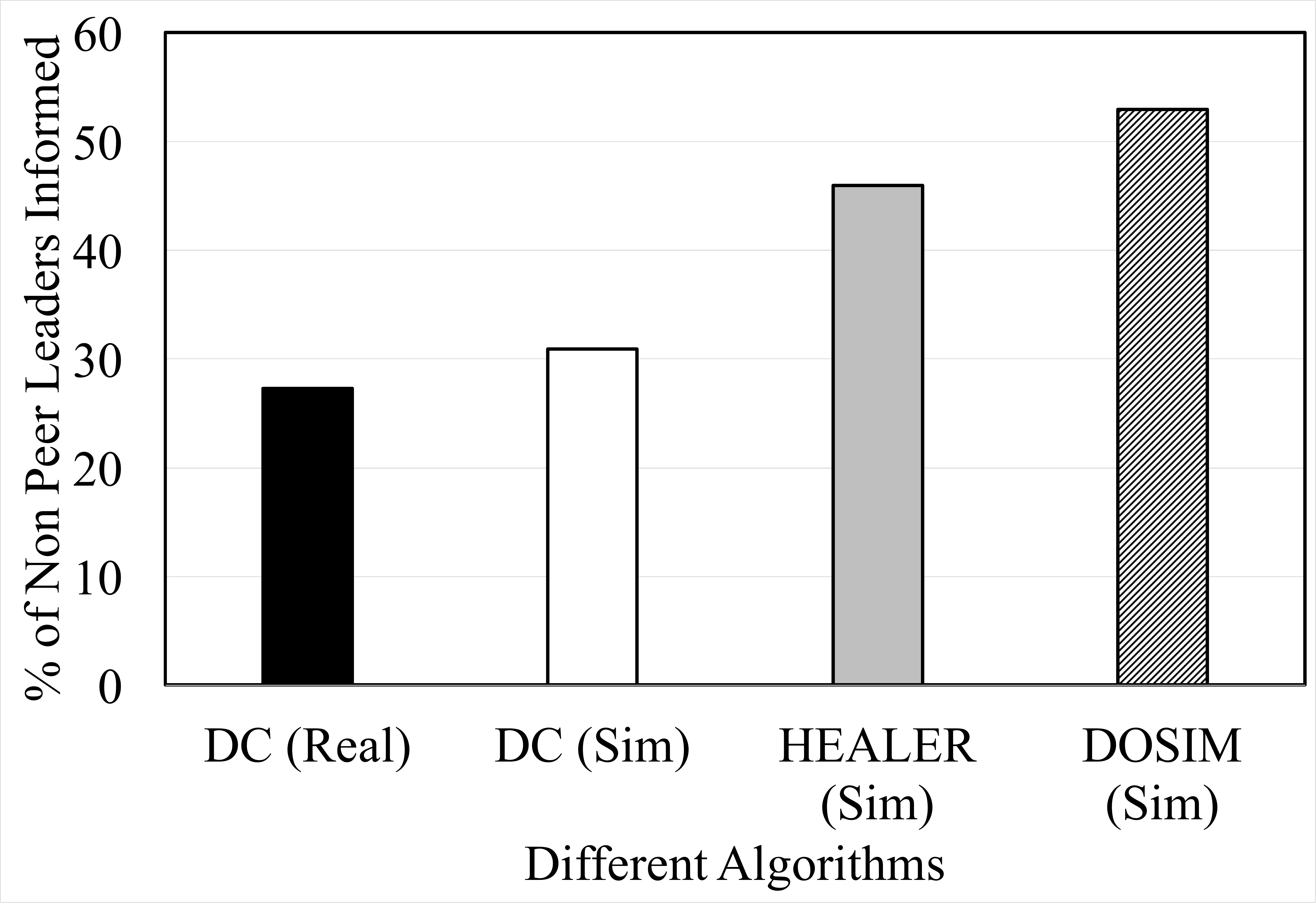}\label{fig:stat2}}
\hspace{2mm}
\subfloat[\small Comparison on perturbed networks]{\includegraphics[height=1.7in,width=0.48\columnwidth]{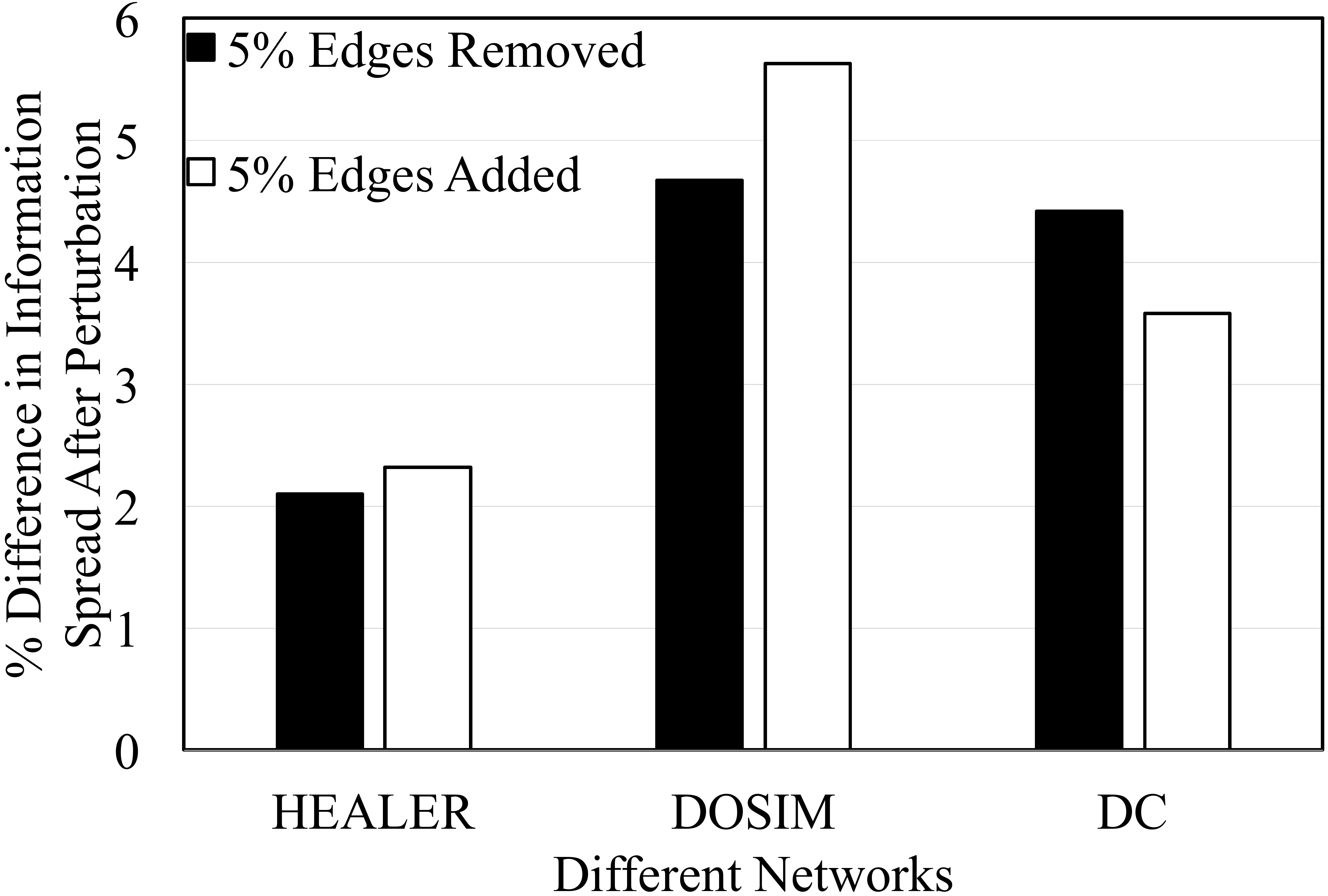}\label{fig:stat3}}
\caption{Investigation of peculiarities in network structure}
\end{figure} 

Figure \ref{fig:stat1} compares information spread achieved by peer leaders in the actual pilot studies with that achieved by the same peer leaders in simulation. The simulation (averaged over 50 runs)  was done with propagation probability set to $p_e=0.6$ in our influence model. The X-axis shows the different pilots and the Y-axis shows the percentage of non peer-leaders informed in the pilot study networks.  First, this figure shows that information spread in simulation closely mirrors pilot study results in HEALER and DC's pilot ($\sim$10\% difference), whereas it differs greatly in DOSIM's pilot. This shows that using $p_e=0.6$ as the propagation probability modeled the real-world process of influence spread in HEALER and DC's pilot study network fairly well, whereas it was not a good model for DOSIM's pilot network. This further suggests that information spread achieved in the real world (atleast in HEALER and DC's pilot) was indeed due to the respective strategies used, and not some extraneous factors. In other words, DC's poor performance may not be attributed to some real-world external factors at play, since its poor performance is mimicked in simulation results (which are insulated from real-world external factors) as well. Similarly, HEALER's superior performance may not be attributed to external factors working in its favor, for the same reason.

On the other hand, since DOSIM's performance in the pilot study does not mirror simulation results in Figure \ref{fig:stat1}, it suggests the role of some external factors, which were not considered in our models. However, the comparison of simulation results in this figure is statistically significant ($p-value = 9.43E-12$), which shows that even if DOSIM's performance in the pilot study matched its simulation results, i.e., even if DOSIM achieved only $\sim$40\% information spread in its pilot study (as opposed to the 70\% spread that it actually achieved), it would still outperform DC by $\sim$33\%.

Having established that DC's poor performance was not due to any external factors, we now show that DC's poor performance in the field was also not tied to some peculiar property/structure of the network used in its pilot study. Figure \ref{fig:stat2} compares information spread achieved by different agents (in simulation over 50 runs), when each agent was run on DC's pilot study network. Again, the simulation was done using $p_e=0.6$ as propagation probability, which was found to be a reasonable model for real-world influence spread in DC's network (see Figure \ref{fig:stat1}). The X-axis in Figure \ref{fig:stat2} shows different algorithms being run on DC's pilot study network (in simulation). The Y-axis shows the percentage of non peer-leaders informed. This figure shows that even on DC's pilot study network, HEALER (and DOSIM) outperform DC in simulation by $\sim$53\% (and 76\%) ($p-value = 9.842E-31$), thereby establishing that HEALER and DOSIM's improvement over DC was not due to specific properties of the networks in their pilot studies, i.e., HEALER and DOSIM's superior performance may not be attributed to specific properties of networks (in their pilot studies) working in their favor. In other words, this shows that DC's poor performance may not be attributed to peculiarities in its network structure working against it, as otherwise, this peculiarity should have affected HEALER and DOSIM's performance as well, when they are run on DC's pilot study network (which does not happen as shown in Figure \ref{fig:stat2}). 

Figure \ref{fig:stat3} shows information spread achieved by peer leaders (chosen in the pilot studies) in simulation (50 runs), averaged across 30 different networks which were generated by perturbation of the three pilot study networks. The X-axis shows the networks which were perturbed. The Y-axis shows the percentage difference in information spread achieved on the perturbed networks, in comparison with the unperturbed network. For example, adding 5\% edges randomly to HEALER's pilot study network results in only $\sim$2\% difference ($p-value = 1.16E-08$) in information spread (averaged across 30 perturbed networks). These results support the view that HEALER, DOSIM and DC's performance are not due to their pilot study networks being on the knife's edge in terms of specific peculiarities. Thus, HEALER and DOSIM outperform DC on a variety of slightly perturbed networks as well.

\section{Conclusion \& Lessons Learned}
This paper illustrates challenges faced in transitioning agents from an emerging phase in the lab, to a deployed application in the field. It presents first-of-its-kind results from three real-world pilot studies, involving 173 homeless youth in Los Angeles. Conducting these pilot studies underlined their importance in this transition process -- \textit{they are crucial milestones in the arduous journey of an agent from an emerging phase in the lab, to a deployed application in the field}. The pilot studies helped in answering several questions that were raised in Section \ref{sec:intro}. First, we learnt that peer-leader based interventions are indeed successful in spreading information about HIV through a homeless youth social network (as seen in Figures \ref{fig:inf-spread}). Moreover, we learnt that peer leaders are very adept at providing lots of information about newer friendships in the social network (Figure \ref{tab:detail}), which helps software agents to refine its future strategies. 

These pilot studies also helped to establish the superiority (and hence, their need) of HEALER and DOSIM -- we are using complex agents (involving POMDPs and robust optimization), and they outperform DC (the modus operandi of conducting peer-led interventions) by 160\% (Figures \ref{fig:inf-spread}, \ref{fig:behav}). The pilot studies also helped us gain a deeper understanding of how HEALER and DOSIM beat DC (shown in Figures \ref{fig:edge}, \ref{fig:comcov}, \ref{fig:com}) --  by minimizing redundant edges and exploiting community structure of real-world networks. Out of HEALER and DOSIM, the pilot tests do not reveal a significant difference in terms of either information spread or behavior change (Figures \ref{fig:inf-spread}, \ref{fig:behav}). Thus, carrying either of them forward would lead to significant improvement over the current state-of-the-art techniques for conducting peer-leader based interventions. However, DOSIM runs significantly faster than HEALER ($\sim40\times$), thus, it is more beneficial in time-constrained settings \cite{wilder2017uncharted}. 

These pilot studies also helped uncover several key challenges (e.g., inability to execute actions, estimating propagation probabilities, etc.), which were tackled in the pilot studies using heuristic solutions. However, handling these challenges in a principled manner is a subject for future research. Thus, while these pilot studies open the door to future deployment of these agents in the field (by providing positive results about the performance of HEALER and DOSIM), they also revealed some challenges which need to be resolved convincingly before these agents can be deployed. 

To the best of our knowledge, this is the first deployment of an influence maximization algorithm in the real world. Further, it is the first time that influence maximization has been applied for social good. The success of this pilot study illustrates one way (among many others) in which AI and influence maximization can be harnessed for benefiting low-resource communities.

\chapter{CAIMS}
\label{chapter:CAIMS}
Both PSINET, HEALER and DOSIM \cite{wilder2017uncharted} rely on the following key assumption: seed nodes can be influenced with certainty. Unfortunately, in most public health domains, this assumption does not hold as ``influencing" seed nodes entails training them to be ``\textit{peer leaders}" \cite{valente2007identifying}. For example, seed nodes promoting HIV awareness among homeless youth need to be trained so that they can communicate information about supposedly private issues in a safe manner \cite{schneider2015new}. This issue of training seed nodes leads to two practical challenges. First, it may be difficult to contact seed nodes in a timely manner (e.g., contacting homeless youth is challenging since they rarely have fixed phone numbers, etc). Second, these seed nodes may decline to be influencers (e.g., they may decline to show up for training sessions). In this chapter, we refer to these two events as contingencies in the influence maximization process. 

Unsurprisingly, these contingencies result in a wastage of valuable time/money spent in unsuccessfully contacting/convincing the seed nodes to attend the training. Moreover, the resulting influence spread achieved is highly sub-optimal, as very few seed nodes actually attend the training session, which defeats the purpose of conducting these interventions. Clearly, contingencies in the influence maximization process need to be considered very carefully.


This chapter discusses a principled approach to handle these inevitable contingencies via the following contributions. First, we introduce the \textbf{C}ontingency \textbf{A}ware \textbf{I}nfluence \textbf{M}aximization (or CAIM) problem to handle cases when seed nodes may be unavailable, and analyze it theoretically. The principled selection of alternate seed nodes in CAIM (when the most preferred seed nodes are not available) sets it apart from any other previous work in influence maximization, which mostly assumes that seed nodes are always available for activation. Second, we cast the CAIM problem as a Partially Observable Markov Decision Process (POMDP) and solve it using CAIMS (\textbf{CAIM} \textbf{S}olver), a novel POMDP planner which provides an adaptive policy which explicitly accounts for contingency occurrences. CAIMS is able to scale up to real-world network sizes by leveraging the community structure (present in most real-world networks) to factorize the action space of our original POMDP into several smaller community-sized action spaces. Further, it utilizes insights from social network literature to represent belief states in our POMDP in a compact, yet accurate manner using Markov networks. Our simulations show that CAIMS outperforms state-of-the-art influence maximization algorithms by $\sim$60\%. Finally, we evaluate CAIMS's usability in the real-world by using it to train a small set of homeless youth (the seed nodes) to spread awareness about HIV among their peers. This domain is an excellent testbed for CAIMS, as the transient nature of homeless youth increases the likelihood of the occurrence of contingencies \cite{rice2013should}.

\section{CAIM Model \& Problem}
In practice, the officials from the homeless youth service providers typically only have 4-5 days to locate/invite the desired youth to be trained as peer leaders. However, the transient nature of homeless youth (i.e., no fixed postal address, phone number, etc) makes contacting the chosen peer leaders difficult for homeless shelters. Further, most youth are distrustful of adults, and thus, they may decline to be trained as peer leaders \cite{milburn2009adolescents}. As a result of these ``\textit{contingencies}", the shelter officials are often forced to conduct their intervention with very few peer leaders in attendance, despite each official spending 4-5 days worth of man hours in trying to find the chosen peer leaders \cite{yadav2017influence}. Moreover, the peer leaders who finally attend the intervention are usually not influential seed nodes. This has been the state of operations even though peer-led interventions have been conducted by social workers for almost a decade now.

%

\begin{figure}[t]
\subfloat[Social Network 1]{\includegraphics[width=0.5\columnwidth]{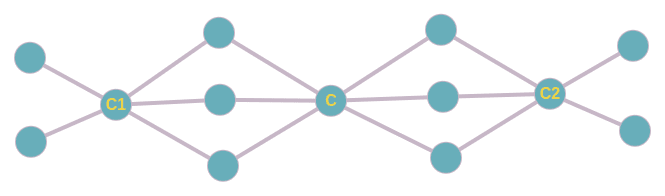}\label{fig:ex1}}
\subfloat[Social Network 2]{\includegraphics[width=0.5\columnwidth]{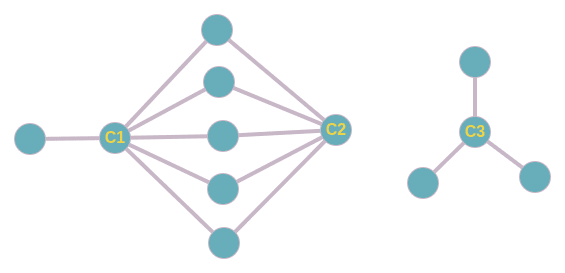}\label{fig:ex2}}
\caption{Examples illustrating harm in overprovisioning}
\end{figure}

To avoid this outcome, ad-hoc measures have been proposed \cite{yadav2017influence}, e.g., contacting many more homeless youth than they can safely manage in an intervention. However, one then runs the risk that lots of youth may agree to be peer leaders, and shelter officials would have to conduct the intervention with all these youth (since it's unethical to invite a youth first and then ask him/her not to come to the intervention), even if the total number of such participants exceeds their maximum capacity \cite{rice2012mobilizing}. This results in interventions where the peer leaders may not be well trained, as insufficient attention is given to any one youth in the training. Note that if contingencies occurred infrequently, then inviting a few extra nodes (over the maximum capacity) may be a reasonable solution. However, as we show in the real-world feasibility trial conducted by us, contingencies are very common (\textit{$\sim$80\%, or 14 out of 18 invitations in the real-world study resulted in contingencies}), and thus, overprovisioning by a small number of nodes is not an option. An ad-hoc fix for this over-attendance, is to first select (say) twice the desired number of homeless youth, invite them one at a time, and stop as soon as the desired number of homeless youth have accepted the invitation. However, we will show that this intuitive ad-hoc overprovisioning based solution performs poorly.


\subsection{Overprovisioning May Backfire} Let $K$ denote the number of nodes (or homeless youth) we want at the intervention. Now, suppose we overprovision by a factor of $2$ and use the algorithm mentioned before. This means that instead of searching for the optimal set of $K$ seed nodes, the algorithm finds the optimal set of $2K$ seed nodes and then influences the \textit{first} $K$ of these nodes that accept the invitation. Naturally, this algorithm should perform better (under contingencies) than the algorithm without overprovisioning. Surprisingly, we show that overprovisioning may make things worse. This happens because of two key ideas: (i) No $K$-sized subset of the optimal set of $2K$ nodes may be as good as the optimal set of $K$ nodes (this indicates that we may not be looking for the right nodes when we search for the optimal set of $2K$ nodes), and (ii) An arbitrary $K$-sized subset of the optimal set of $2K$ nodes (obtained because we stick to the \textit{first} $K$ nodes that accept the invitation) may perform arbitrarily bad.


We now provide two examples that concretize these facts. For simplicity of the examples, we assume that influence spreads only for one round, number of nodes required for the intervention is $K = 1$ and the propagation probability $p(e)$ is 0.5 for every edge. We use $\I(S)$ to denote the expected influence in the network when nodes of set $S$ are influenced. Firstly, consider the example social network graph in Figure~\ref{fig:ex1}. Suppose $C$ and $C1$ are nodes that are regularly available, and are likely to accept the invitation. Now, let's find the best single node to influence for maximum influence spread. We don't need to consider nodes other than $\{C1, C, C2\}$ since they're obviously suboptimal. For the remaining nodes, we have $\I(C1) = 5*0.5 = 2.5$, $\I(C) = 6*0.5 = 3$ and $\I(C2)=5*0.5=2.5$, and so the best single node to influence is $C$. Now, suppose we overprovision by a factor of $2$, and try to find the optimal set of $2$ nodes for maximum influence spread. The influence values are $\I(\{C1, C\}) = \I(\{C2, C\}) = 5*0.5 + 3*0.75 = 4.75$ and $\I(\{C1, C2\})=10*0.5=5$. So, the optimal set of $2$ nodes to influence is $\{C1, C2\}$. But, since we need only one node, we would eventually be influencing either $C1$ or $C2$, giving us an expected influence of $2.5$. On the other hand, if we did not overprovision, we would go for node $C$ (the best single node to influence) and have an expected influence of $3$. This example demonstrates that no $K$-sized subset of the optimal set of $2K$ nodes may be as good as the optimal set of $K$ nodes. Note that, for clarity, the example considered here was small and made simple, and hence the difference between $3$ and $2.5$ may seem small. But, the example can be extended such that the difference is arbitrarily larger.


On a different note, suppose in this second example, node $C1$ is unavailable (because say it declines the invitation). In this case, the overprovisioning algorithm would have to go for $C3$ (the only other node in the optimal set of $2$ nodes), leading to an expected influence of $1.5$. However, an adaptive solution, would look for node $C1$ and after finding that its unavailable, would go for the next best node which is node $C2$. This gives an adaptive solution an expected influence of $2.5$.


\begin{figure}[t]
\subfloat[SBM Networks]{\includegraphics[height=1.7in,width=0.47\columnwidth]{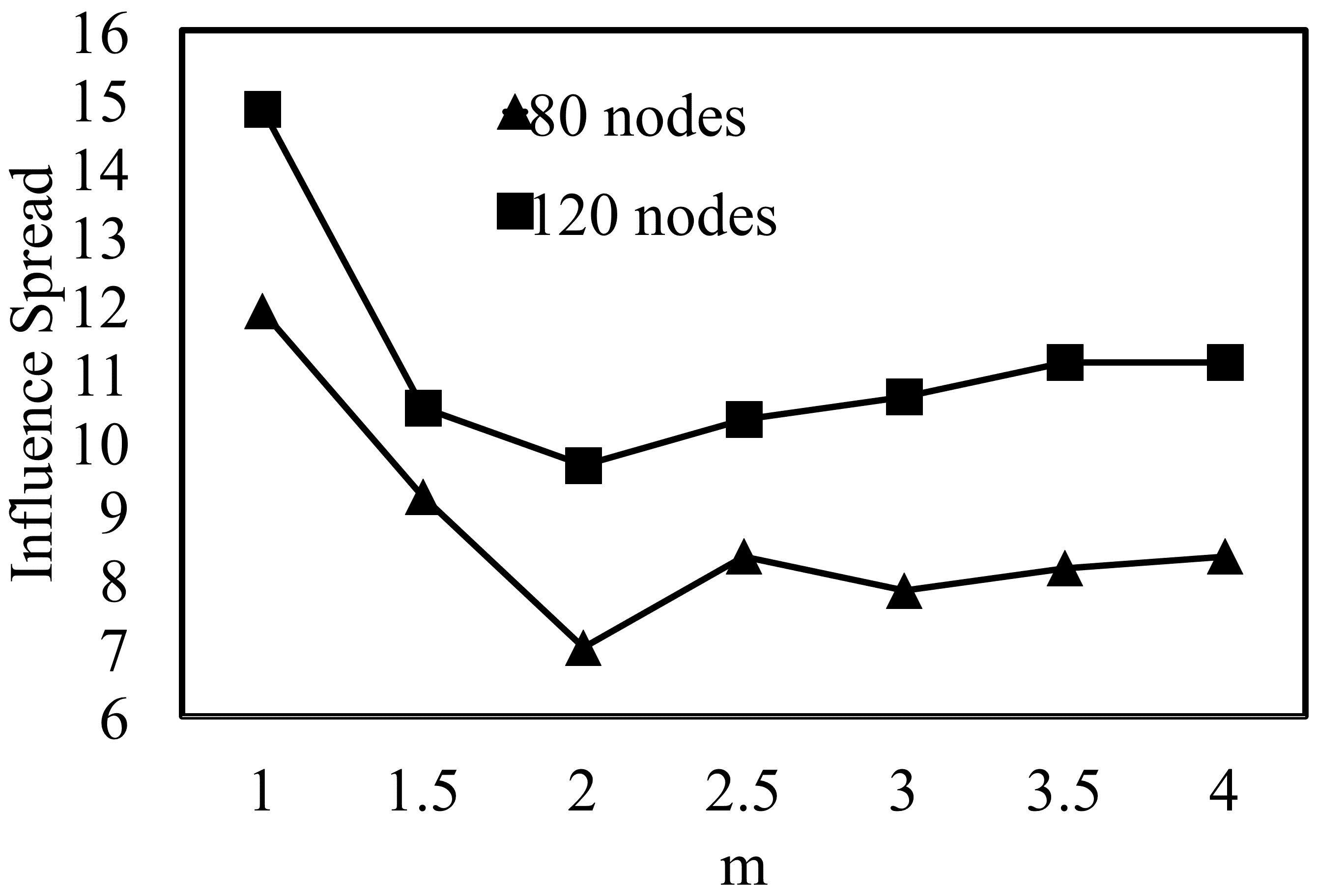}\label{fig:9}}
\hspace{2mm}
\subfloat[PA Networks]{\includegraphics[height=1.7in,width=0.47\columnwidth]{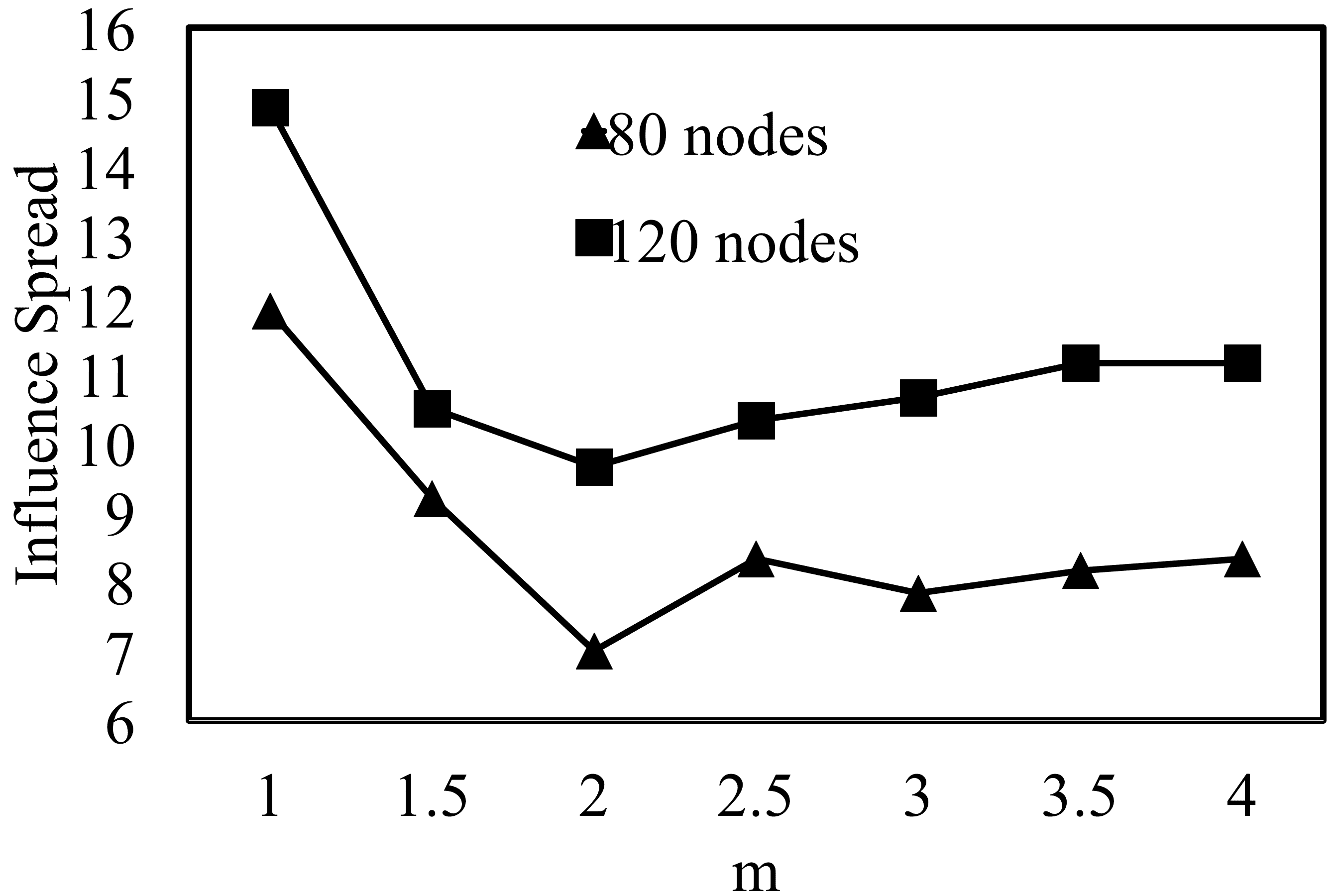}\label{fig:10}}
\caption{The Harm in Overprovisioning}
\end{figure}

Having provided examples which provide intuition as to why simple ad-hoc overprovisioning based algorithms may backfire, we now provide empirical support for this intuition by measuring the performance of the Greedy algorithm \cite{kempe2003maximizing} (the gold standard in influence maximization) under varying levels of overprovisioning. Figures \ref{fig:9} and \ref{fig:10} compare influence spread achieved by Greedy on stochastic block model (SBM) and preferential attachment (PA) networks \cite{seshadhri2012community}, respectively, as it finds the optimal set of $m*K$ nodes ($K=2$) to invite (i.e., overprovision by factor $m$) and influence the first $K$ nodes that accept the invitation (the order in which nodes are invited is picked uniformly at random). The x-axis shows increasing $m$ values and the y-axis shows influence spread. This figure shows that in both SBM and PA networks of different sizes, overprovisioning hurts, i.e., optimizing for larger seed sets in anticipation of contingencies actually hurts influence spread, which confirms our intuition outlined above. Overprovisioning's poor performance reveals that simple solutions do not work, thereby necessitating careful modeling of contingencies, as we do in CAIM.

\subsection{Problem Setup} Given a \textit{friendship based social network}, the goal in CAIM is to invite several network nodes for the intervention until we get $K$ nodes who agree to attend the intervention. The problem proceeds in $T$ sequential sessions, where $T$ represents the number of days that are spent in trying to invite network nodes for the intervention. In each session, we assume that nodes are either available or unavailable for invitation. This is because on any given day (session), homeless youth may either be present at the shelter (i.e., available) or not (i.e., unavailable). We assume that only nodes which are available in a given session can accept invitations in that session. This is because homeless youth frequently visit shelters, hence we utilize this opportunity to issue invitations to them if we see them at the shelter.


Let $\phi^t \in \{0,1\}^N$ (called a realization) be a binary vector which denotes the availability or unavailability (for invitation) of each network node in session $t \in [1,T]$. We take a Bayesian approach and assume that there is a known prior probability distribution $\bm{\Phi}$ over realizations $\phi^t$ such that $p(\phi^t) := \mathcal{P}[\bm{\Phi} = \phi^t]$. In our domain, this prior distribution is represented using a Markov Network. We assume that the realization $\phi^t$ for each session $t \in [1,T]$ is drawn i.i.d. from the prior distribution $\bm{\Phi}$, i.e., the presence/absence of homeless youth at the shelter in every session $t \in [1,T]$ is assumed to be an i.i.d. sample from $\bm{\Phi}$. We further assume that while the prior distribution $\bm{\Phi}$ is provided to the CAIM problem as input, the complete i.i.d. draws from this distribution (i.e., the realizations $\phi^t\mbox{ }\forall t \in [1,T]$) are not observable. This is because while querying the availability of a small number of nodes ($\sim$3-4) is feasible, querying each node in the social network (which can have 150-160 nodes) for each session/day (to completely observe $\phi^t$) requires a lot of work which is not possible with the shelters limited resources \cite{rice2010positive}. 

In each session $t \in [1,T]$, a maximum of $L$ actions can be taken, each of which can be of three possible types: \textit{queries}, \textit{invites} and \textit{end-session} actions. Query action $q_{\bm{a}}$ in session $t \in [1,T]$ ascertains the availability/unavailability of a subset of nodes $\bm{a}$ ($\|\bm{a}\| \leqslant Q_{max}$, the maximum query size) in session $t$ with certainty. Thus, query actions in session $t$ provide partial \textit{observations} about the realization of nodes $\phi^t$ in session $t$. On the other hand, invite action $m_{\bm{a}}$ invites a subset of nodes $\bm{a} \subset V$ ($\|\bm{a}\| \leqslant K$) to the intervention. Upon taking an invite action, we \textit{observe} which invited nodes are present (according to $\phi^t$) in the session and which of them accepted our invitation. Each invited node that is present accepts the invitation with a probability $\epsilon$. We refer to the nodes that accept our invitation  as ``\textit{locked} nodes" (since they are guaranteed to attend the intervention). Finally, we can also take an \textit{end-session} action, if we choose not to invite/query any more nodes in that session. 

The observations received from query and invite actions (\textit{end-session} action provides no observation) taken in a session allows us to update the original prior distribution $\bm{\Phi}$ to generate a posterior distribution $\bm{\Phi}^{pos}_t(i)\mbox{ }\forall i \in [0,L]$ for session $t$ (where $i$ actions have been taken in session $t$ so far). These posteriors can then be used to decide future actions that need to be taken in a session. Note that for every session $t$, $\bm{\Phi}^{pos}_t(0) = \bm{\Phi}$, i.e., at the beginning of each session, we start from the original prior distribution $\Phi$ and then get new posteriors every time we take an action in the session.


Note that even though query actions provide strictly lesser information than invite actions (for the same subset of nodes), their importance in CAIM is highlighted as follows: recall that the optimal set of $2$ nodes in Figure~\ref{fig:ex2} is $\{C1,C3\}$. If we remove the ability to query, we would invite nodes $C1$ and $C3$. In case $C1$ is not present and $C3$ accepts our invitation, we would be stuck with conducting intervention with only node $C3$ (since invited nodes who accept the invitation cannot be un-invited). Thus, we realize that inviting $C3$ is desirable only if $C1$ is present and accepts our invitation. Query actions allow us to query the presence or absence of both nodes $C1$ and $C3$ (so that we don't waste an invite action in case node $C1$ is found to be not present according to the query action's observation).

Informally then, given a friendship based social network $G=(\bm{V},\bm{E})$, the integers $T$, $K$, $L$, $Q_{max}$ and $\epsilon$, and prior distribution $\bm{\Phi}$, the goal of CAIM is to find a policy for choosing $L$ sequential actions for $T$ sessions s.t. the expected influence spread (according to our influence model) achieved by the set of locked nodes (i.e., nodes which finally attend the intervention) is maximized.

Let $\bm{\mathcal{Q}} = \{ q_{\bm{a}}\mbox{ s.t. } 1 \leqslant \|\bm{a}\| \leqslant Q_{max}\}$ denote the set of all possible query actions that can be taken in any given session $t \in [1,T]$. Similarly, let $\bm{\mathcal{M}} = \{ m_{\bm{a}}\mbox{ s.t. } 1 \leqslant \|\bm{a}\| \leqslant K\}$ denote the set of all possible invite actions that can be taken in any given session $t \in [1,T]$. Also, let $\bm{\mathcal{E}}$ denote the end-session action. Let $\mathcal{A}^t_i \in  \bm{\mathcal{Q}} \cup \bm{\mathcal{M}} \cup \bm{\mathcal{E}}$ denote the $i^{th}$ action ($i \in [1,L]$) chosen by CAIM's policy in session $t \in [1,T]$.

Upon taking action $\mathcal{A}^t_i$ ($i \in [1,L], t \in [1,T]$), we receive observations which allow us to generate posterior distribution $\bm{\Phi}^{pos}_t(i)$. Denote by $\bm{M}^t_i$ the set of all locked nodes after the $i^{th}$ action is executed in session $t$. Denote by $\bm{\Delta}$ the set of all possible posterior distributions that we can obtain during the CAIM problem. Denote by $\bm{\Gamma}$ all possible sets of locked nodes that we can obtain during the CAIM problem. Finally, we define CAIM's policy $\bm{\Pi} : \bm{\Delta} \times \bm{\Gamma} \times [0, L] \times [1,T] \rightarrow \bm{\mathcal{Q}} \cup \bm{\mathcal{M}} \cup \bm{\mathcal{E}}$ as a function that takes in a posterior distribution, a set of locked nodes, the number of actions taken so far in the current session, and the session-id as input, and outputs an action $\mathcal{A}^t_i$ for the current timestep. 


\begin{problem}{\textbf{CAIM Problem}}
Given as input a social network $G=(\bm{V}, \bm{E})$ and integers $T$, $K$, $L$, $Q_{max}$ and $\epsilon$, and a prior distribution $\bm{\Phi}$ (as defined above), denote by $\mathcal{R}(\bm{M}^T_{L})$ the \textit{expected total influence spread (i.e., number of nodes influenced)} achieved by nodes in $\bm{M}^T_{L}$ (i.e., locked nodes at the end of $T$ sessions). Let $\mathbb{E}_{\bm{M}^T_L \sim \bm{\Pi}}[\mathcal{R}(\bm{M}^T_{L})]$ denote the expectation over the random variable $\bm{M}^T_L$, where $\bm{M}^T_L$ is updated according to actions recommended by policy $\bm{\Pi}(\bm{\Phi}^{pos}_T(L-1), \bm{M}^T_{L-1}, L-1, T)$. More generally, in session $t \in [1,T]$, $\bm{M}^t_i \mbox{ }\forall i \in [0,L]$  is updated according to actions recommended by policy $\bm{\Pi}(\bm{\Phi}^{pos}_t(i-1), \bm{M}^t_{i-1}, i-1, t)$. Then, the objective of CAIM is to find an optimal  policy $\bm{\Pi^*} = argmax_{\Pi} \mathbb{E}_{\bm{M}^T_L \sim \bm{\Pi}}[\mathcal{R}(\bm{M}^T_{L})]$. 
\end{problem} 


We now theoretically analyze the CAIM problem. 

\begin{lemma}\label{CaimTh:1}
The CAIM problem is NP-Hard.
\end{lemma}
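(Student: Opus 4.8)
The plan is to establish NP-hardness by a polynomial-time reduction from the classical influence maximization problem, which is known to be NP-hard \cite{kempe2003maximizing}; this parallels the argument used for Theorem~\ref{Th:1}. The idea is to choose CAIM's parameters so that every feature distinguishing it from classical influence maximization---namely the availability uncertainty encoded in $\bm{\Phi}$, the acceptance probability $\epsilon$, and the multi-session sequential structure---is switched off, leaving a bare single-shot seed-selection problem.

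Concretely, given an instance of influence maximization (a graph $G$, an influence model $M$, and a budget $K$), I would build a CAIM instance on the \emph{same} graph $G$ with the same influence model, set $\epsilon = 1$ so that every invited node that is present accepts with certainty, and let the prior $\bm{\Phi}$ place all of its probability mass on the all-available realization $\phi = \mathbf{1}$ (so $p(\mathbf{1}) = 1$ and every node is present in every session). I would further set $T = 1$ and $L \geq 1$ (with $Q_{max}$ arbitrary), so that a single invite action $m_{\bm{a}}$ with $\|\bm{a}\| = K$ is available. This construction is clearly computable in polynomial time.

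First I would argue that under these parameter choices the CAIM policy space collapses to a choice of a single $K$-subset. Because availability is deterministic ($\phi = \mathbf{1}$ with probability one), query actions return no information that is not already known, and because $\epsilon = 1$ every present invited node is locked immediately; hence for any invite action $m_{\bm{a}}$ the resulting locked set is exactly $\bm{a}$. Thus the best achievable objective is $\max_{\|\bm{a}\| = K} \mathcal{R}(\bm{a})$, where $\mathcal{R}(\bm{a})$ is the expected influence spread of seed set $\bm{a}$ under $M$. This is precisely the classical influence maximization objective, so an optimal CAIM policy for the constructed instance yields (and is yielded by) an optimal seed set for the original influence maximization instance.

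The reduction therefore shows that solving CAIM optimally solves classical influence maximization, and since the latter is NP-hard, so is CAIM. The only point requiring care---and hence the main, though modest, obstacle---is verifying that no more elaborate policy (e.g.\ one interleaving queries and multiple invites within the single session) can outperform a single $K$-invite; this holds because with full deterministic availability and $\epsilon = 1$ the final locked set is completely determined by the union of invited nodes, which can be chosen optimally in one shot. Once this observation is made, the reduction is routine.
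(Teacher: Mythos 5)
Your proposal is correct and matches the paper's own argument: the paper likewise restricts the prior $\bm{\Phi}$ to place all mass on the all-available realization, collapsing CAIM to the classical (NP-hard) influence maximization problem. Your version is in fact more careful than the paper's, since you also pin down $\epsilon=1$, $T=1$ and argue explicitly that queries and multi-step policies confer no advantage.
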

\begin{proof}
Consider an instance of the CAIM problem with prior probability distribution $\bm{\Phi}$ that is the realization $\phi_*$ with probability $1$, where $\phi_*$ is a vector of all $1$s. Such a problem reduces to the standard influence maximization problem, wherein we need to find the optimal subset of $K$ nodes to influence to have maximum influence spread in the network. But, the standard influence maximization problem is an NP-Hard problem, making CAIM NP-Hard too.
\end{proof}

Some NP-Hard problems exhibit nice properties that enable approximation guarantees for them. \cite{golovin2011adaptive} introduced adaptive submodularity, the presence of which would ensure that a simple greedy algorithm provides a $(1-1/e)$ approximation w.r.t. the optimal CAIM policy. However, we show that while CAIM can be cast into the adaptive stochastic optimization framework of \cite{golovin2011adaptive}, our objective function is not adaptive submodular, because of which their Greedy algorithm does not have a $(1-1/e)$ approximation guarantee. 

\begin{lemma}\label{CaimTh:2}
The objective function of CAIM is not adaptive submodular.
\end{lemma}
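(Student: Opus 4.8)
The plan is to refute adaptive submodularity by exhibiting an explicit counterexample, closely paralleling the construction used for DIME in Theorem~\ref{Th:2}. Recall that adaptive submodularity (in the sense of \cite{golovin2011adaptive}) demands that the \emph{expected marginal gain} of locking an additional node, conditioned on a partial realization of node availabilities, be non-increasing as that partial realization becomes more informed: for partial realizations $\psi \subseteq \psi'$ and any node $v$, one needs $\Delta(v \mid \psi) \geqslant \Delta(v \mid \psi')$. To break this inequality it suffices to find two nested partial realizations and a single node whose conditional marginal gain is \emph{strictly larger} under the more-informed realization $\psi'$.

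First I would fix the smallest instance that isolates the phenomenon: a directed path on four nodes $a,b,c,d$ under the multi-round independent-cascade model used throughout this thesis, with a single session ($T=1$) and a horizon $L$ small enough that influence travels only a bounded number of hops. In CAIM the source of partial observability is node availability, so I would design the prior $\bm{\Phi}$ so that observing the availability of downstream nodes governs how far influence can reinforce itself along the chain; the parameter $\epsilon$ controlling availability and the per-edge propagation probability $p$ are the knobs I would tune. This is the natural CAIM analogue of the edge-existence uncertainty $\Psi_1 \subset \Psi_2$ used in Theorem~\ref{Th:2}.

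The crux is the same \emph{chain-reinforcement} effect exploited for DIME: a node sitting at the end of a chain of already-active predecessors receives repeated independent activation attempts and is therefore influenced with much higher probability than a node hanging off a single active predecessor. I would choose $\psi \subset \psi'$ so that $\psi'$ additionally reveals downstream nodes to be available (already feeding influence toward a node $v$); locking $v$'s predecessor then completes a longer reinforced chain, yielding a strictly larger marginal increase under $\psi'$ than under $\psi$. Evaluating the two conditional expectations symbolically in $p$ and $\epsilon$, I expect gains of differing orders (of order $\epsilon$ versus $\epsilon^2$, exactly as in the DIME computation), which violates the required monotonicity and establishes the claim.

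The main obstacle is bookkeeping rather than arithmetic: CAIM carries two independent sources of randomness --- the i.i.d.\ availability realizations $\phi$ (over which the conditional expectations in the definition are taken) and the stochastic influence propagation --- and I must ensure the conditioning in the adaptive-submodularity definition is applied to the availability variables while the influence randomness is integrated out correctly. I also need the ``item'' whose marginal gain I compute to correspond to a legal CAIM operation (locking an available node, which occurs only with probability $\epsilon$ upon an invite), and the partial realizations $\psi, \psi'$ to be genuine sub-realizations in the sense of \cite{golovin2011adaptive}. Once the instance and this correspondence are pinned down, the remaining comparison is routine, so the care lies entirely in the modeling setup.
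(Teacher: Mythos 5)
There is a genuine gap: you have transplanted the mechanism of the DIME counterexample (Theorem~\ref{Th:2}) into a setting where it does not operate. In DIME the partial realizations reveal \emph{edge existence}, which directly changes how influence propagates from a fixed seed set, and that is what produces the $\epsilon$ versus $\epsilon^2$ reinforcement gap along the chain. In CAIM the network and propagation probabilities are fully known; the only uncertainty is node \emph{availability}, and an available node neither is influenced nor propagates influence --- it is merely eligible to be invited. Consequently, conditioning on ``downstream nodes being available'' does not change the expected marginal influence of adding a node to the locked set: the marginal gain of an invite action factors as (probability the node is present given $\psi$) $\times\,\epsilon\,\times$ (marginal influence of that node given the current locked set), and only the first factor depends on the partial realization. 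Your planned symbolic computation of reinforced-chain probabilities would therefore never produce the violation you anticipate; moreover, by fixing $T=1$ you also rule out the one structural feature of CAIM that makes a violation easy to exhibit.

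The paper's proof instead exploits the i.i.d.\ resampling of availability across sessions: query nodes $\{1,2,3\}$ in session~1 and observe node $2$ absent, so the invite action $a_o=\tuple{\{2\},i}$ has marginal gain $0$; then take the end-session action, re-query in session~2, observe node $2$ present, and the \emph{same} action $a_o$ now has strictly positive marginal gain under the strictly larger partial realization. No influence-spread chain, path graph, or tuning of $p$ and $\epsilon$ is needed. If you want to repair your argument, the violation must come from the availability factor --- either via cross-session resampling as in the paper, or within a session by letting a query raise the posterior probability that the invited node is present --- not from the propagation dynamics.
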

\begin{proof}
The key idea is that taking a particular action (say $a_o$) now, may have a low marginal gain because of the realization of the current session, but after a few actions, taking the same action $a_o$ might have a high marginal gain because of a change of session.

More formally, consider the following example. At the beginning of the first session, we take a query action and ask about nodes $\{1,2,3\}$. We get the observation that each of them is absent. At this point, if we take the invite action $a_o = \tuple{\{2\}, i}$, we get a marginal gain of $0$. On the other hand, suppose we took the end-session action after the query, advance to the next session, again take a query action and ask about nodes $\{1,2,3\}$ and this time get the observation that $2$ is present (while others are absent). Now if we take the same invite action $a_o$, we get a positive marginal gain. This shows that the objective function of CAIM is not adaptive submodular.
\end{proof}

These theorems show that CAIM is a computationally hard problem and it is difficult to even obtain any good approximate solutions for it. In this paper, we model CAIM as a POMDP.

\section{POMDP Model}
We cast the CAIM problem using POMDPs \cite{puterman2009markov}, as the uncertainty about the realization of nodes $\phi^t$ is similar to partial state observability in POMDPs. Finally, actions (queries and invites) that are chosen for the current session depend on the actions that are taken in future sessions (for e.g., influencing node $A$ might be really important, but he/she may not be available in session $t$, therefore invite actions in session $t$ can focus on other nodes, and influencing node $A$ can be left to future sessions). This suggests the need to do lookahead search, which is the main motivation behind solving a POMDP. We now explain how we map CAIM onto a POMDP.

\sloppy \textbf{States} A POMDP state consists of four entities $s=\tuple{\phi, \bm{M}, numAct, sessID}$. Here, $sessID \in [1,T]$ identifies the session we are in. Also, $numAct \in [0 , L]$ determines the number of actions that have been taken so far in session $sessID$. $\bm{M}$ denotes the set of locked nodes so far (starting from the first session). Finally, $\phi$ is the node realization $\phi^{sessID}$ in session $sessID$. In our POMDP model, states with $sessID=T$ and $numAct=L$ are terminal states, since they represent the end of all sessions.

\textbf{Actions} A POMDP action is a tuple $a = \tuple{\bm{S}, type}$. Here, $type$ is a symbolic character which determines whether $a$ is a query action (i.e., $type=q$), an invite action (i.e., $type=i$) or an \textit{end-session} action (i.e., $type=e$). Also, $\bm{S} \subset \bm{V}$ denotes the subset of nodes that is queried ($type=q$) or invited ($type=i$). If $type=q$, the size of subset $\|\bm{S}\| \in [1,Q_{max}]$. Similarly, if $type=i$, $\|\bm{S}\| \in [1, K]$ . Finally, if $type=e$, subset $\bm{S}$ is empty. 

\textbf{Observations} Upon taking a query action $a = \tuple{\bm{S}, q}$ in state $s=\tuple{\phi, \bm{M}, numAct, sessID}$, we receive an observation that is completely determined by state $s$. In particular, we receive the observation $o_q=\{\phi(v)\mbox{ } \forall v \in \bm{S}\}$, i.e., the availability status of each node in $\bm{S}$. And, by taking an invite action $a = \tuple{\bm{S}, i}$ in state $s=\tuple{\phi, \bm{M}, numAct, sessID}$, we receive two kinds of observations. Let $\bm{\Gamma} = \{ v \in \bm{S} \mbox{ s.t. } \phi(v)=1\}$ denote the set of available nodes in \textit{invited set} $\bm{S}$. First, we get observation $o^1_i=\{\phi(v)\mbox{ } \forall v \in \bm{S}\}$ which specifies the availability status of each node in invited set $\bm{S}$. We also get an observation $o^2_i=\{b(v) \mbox{ } \forall v \in \bm{\Gamma}\}$ for each available node $v \in \bm{\Gamma}$, which denotes whether node $v$ accepted our invitation and joined the locked set of nodes ($b(v)=1$) or not ($b(v)=0$). Finally, the \textit{end-session} action does not generate any observations. 


\textbf{Rewards} We only get rewards when we reach terminal states $s'=\tuple{\phi, \bm{M}, numAct, sessID}$ with $sessID=T$, $numAct=L$. The reward attained in terminal state $s'$ is the expected influence spread (as per our influence model) achieved by influencing nodes in the locked set $\bm{M}$ of $s'$.

\textbf{Transition And Observation Probabilities} Due to our exponential sized state and action spaces, maintaining transition and observation probability matrices is not feasible. Hence, we follow the paradigm of large-scale online POMDP solvers \cite{silver2010monte} by using a generative model  $\Lambda(s, a) \sim (s', o, r)$ of the transition and observation probabilities. This generative model allows generating on-the-fly samples from the exact distributions $T(s'|s,a)$ and $\Omega(o|a,s')$ at very low computational costs. In our generative model, the state undergoes transitions as follows. On taking a query action, we reach a state $s'$ which is the same as $s$ except that $s'.numAct = s.numAct + 1$. On taking an invite action $\tuple{\bm{S}, i}$, we reach $s'$ which is the same as $s$ except that $s'.numAct = s.numAct + 1$, and $s'.\bm{M}$ is $s.\bm{M}$ appended with nodes of $\bm{S}$ that accept the invitation. Note that binary vector $\phi$ stays unchanged in either case (since the session does not change). Finally, on taking the end-session action, we start a new session by transitioning to state $s'$ s.t., $s'.numAct = 0$, $s'.sessID = s.sessID + 1$, $s'.\bm{M} = s.\bm{M}$ and $s'.\phi$ is resampled i.i.d. from the prior distribution $\bm{\Phi}$. Note that the components $\bm{M}$, $numAct$ and $sessID$ of a state are fully observable. The observations (obtained on taking any action) are deterministically obtained as given in the ``Observations" sub-section given above.

\textbf{Initial Belief State} The prior distribution $\bm{\Phi}$, along with other completely observable state components (such as $sessID =1$, $numAct = 0$, and an empty locked set $\bm{M} = \{\}$) forms our initial belief state.


\section{CAIMS: CAIM Solver}
Our POMDP algorithm is motivated by the design of FV-POMCP, a recent online POMDP algorithm \cite{amato2015scalable}. Unfortunately, FV-POMCP has several limitations which make it unsuitable for solving the CAIM problem. Thus, we propose CAIMS, a Monte-Carlo (MC) sampling based online POMDP algorithm which makes key modifications to FV-POMCP, and solves the CAIM problem for real-world sized networks. Next, we provide a brief overview of FV-POMCP.

\subsection{Background on FV-POMCP} FV-POMCP extends POMCP to deal with large action spaces. It assumes that the action space of the POMDP can be factorized into a set of $\ell$ factors, i.e., each action $a$ can be decomposed into a set of sub-actions $a_l \forall l \in [1,\ell]$. Under this assumption, the value function of the original POMDP is decomposable into a set of overlapping factors. i.e., $Q(b,a) = \sum\limits_{l \in [1,\ell]} \alpha_l Q_l(b,a_l)$, where $\alpha_l$ ($\forall l \in [1,\ell]$) are factor-specific weights. FV-POMCP maintains a single UCT tree (similar to standard POMCP), but it differs in the statistics that are maintained at each node of the UCT tree. Instead of maintaining $\hat{Q}(b_h,a)$ and $n_{ha}$ statistics for every action in the global (unfactored) action space at tree node $h$, it maintains a set of statistics that estimates the values $\hat{Q}_l(b_h,a_l)$ and $n_{ha_l}\mbox{ } \forall l \in [1,\ell]$. 

Joint actions are selected by the UCB1 rule across all factored statistics, i.e., $a = argmax_a \sum\limits_{l \in [1,\ell]} \hat{Q}_l(b_h,a_l) + c \sqrt{log(N_h+1)/n_{ha_l}}$. This maximization is efficiently done using variable elimination (VE) \cite{guestrin2002multiagent}, which exploits the action factorization appropriately. Thus, FV-POMCP achieves scale-up by maintaining fewer statistics at each tree node $h$, and by using VE to find the maximizing joint action. 

However, there are two limitations which makes FV-POMCP unsuitable for solving CAIM. First, the VE procedure used in FV-POMCP (as described above) may return an action (i.e., a set of nodes) which is infeasible in the CAIM problem (e.g., the action may have more than $K$ nodes). We elaborate on this point later. Second, FV-POMCP uses unweighted particle filters to represent belief states, which becomes highly inaccurate with exponentially sized state spaces in CAIM. We address these limitations in CAIMS.

\subsection{CAIMS Solver} 
CAIMS is an online Monte-Carlo sampling based POMDP solver that uses UCT based Monte-Carlo tree search to solve the CAIM problem. Similar to FV-POMCP, CAIMS also exploits action factorization to scale up to large action spaces. We now explain CAIMS's action factorization. 

\textbf{Action Factorization} Real world social networks generally exhibit a lot of community structure, i.e., these networks are composed of several tightly-knit communities (partitions), with very few edges going across these communities \cite{seshadhri2012community}. This community structure dictates the action factorization in CAIMS. As stated before, the POMDP model has each action of the form $\tuple{\bm{S}, type}$, where $\bm{S}$ is a subset of nodes (that are being queried or invited). This (sub)set $\bm{S}$ can be represented as a boolean vector $\vec{S}$ (denoting which nodes are included in the set). Let $Q_q(\vec{S})$ denote the Q-value of the query action $\tuple{\bm{S},q}$, $Q_i(\vec{S})$ denote the Q-value of the invite action $\tuple{\bm{S},i}$ and let $Q_e$ denote the Q-value of the end-session action $\tuple{\{\},e}$. Now, suppose the real-world social network is partitioned into $\ell$ partitions (communities) $P_1, P_2, \cdots P_{\ell}$. Let $\vec{S}_{P_x}$ denote the sub-vector of $\vec{S}$ corresponding to the $x^{th}$ partition. Then, the action factorization used is: $Q_q(\vec{S}) = \sum_{x=1}^\ell Q^{P_x}_q(\vec{S}_{P_x})$ for query actions and $Q_i(\vec{S}) = \sum_{x=1}^\ell Q^{P_x}_i(\vec{S}_{P_x})$ for invite actions.


Intuitively, $Q_i^{P_x}(\vec{S}_{P_x})$ can be seen as the Q-value of inviting only nodes given by $\vec{S}_{P_x}$ (and no other nodes). Now, if querying/inviting nodes of one partition has negligible effect/influence on the other partitions, then the Q-value of the overall invite action $\tuple{\bm{S}, i}$ can be approximated by the sum of the Q-values of the sub-actions $\tuple{\bm{S_{P_x}}, i}$. The same holds for query actions. We now show that this action factorization is appropriate for CAIM as it introduces \textit{minimal} error into the influence spread calculations for stochastic block model (SBM) networks, which mimic many properties of real-world networks \cite{seshadhri2012community}. Note that we consider a single round of influence spread (T=1) as empirical research by \cite{goel2012structure} shows that influence usually does not spread beyond the first hops (T=1) in real-world social networks.


\begin{theorem} \label{thm:factorization-loss}
Let $\mathcal{I}(\bm{S})$ denote the expected influence in the whole network when nodes of set $\bm{S}$ are influenced, and we have one round of influence spread. For an SBM network with $n$ nodes and parameters $(p, q)$ that is partitioned into $\ell$ communities, the difference between the true and factored expected influences can be bounded as $\mathbb{E} \left[ \max_{\bm{S}} \left| \mathcal{I}(\bm{S}) - \sum_{x=1}^\ell \mathcal{I}(S_{P_x}) \right| \right] \leq qn^2 \left(1 - \frac{1}{\ell}\right) p_{m}$, where $p_{m} = \max_{e \in E} p(e)$ is the maximum propagation probability. Note that the (outer) expectation is over the randomness in the SBM network model.
\end{theorem}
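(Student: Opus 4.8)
The plan is to exploit the one-round ($T=1$) structure and show that the entire discrepancy between the true and factored influence is carried by cross-community edges. First I would write both quantities as sums over nodes. Writing $\mathcal{I}(\bm{S}) = \sum_{v \in \bm{V}} \mathcal{P}[v \text{ activated by } \bm{S}]$ (seeds activate with probability $1$), and similarly $\sum_{x=1}^{\ell}\mathcal{I}(S_{P_x}) = \sum_{v \in \bm{V}}\sum_{x=1}^{\ell}\mathcal{P}[v \text{ activated by } S_{P_x}]$, the seed terms cancel exactly (each seed lies in exactly one partition), so only non-seed nodes contribute to the error. Fix such a node $v$ and set $a_x := \mathcal{P}[v \text{ activated by } S_{P_x}]$. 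Because the events ``$v$ is reached from partition $x$'' depend on disjoint edge sets, they are independent, so the true activation probability of $v$ is $1 - \prod_{x=1}^{\ell}(1-a_x)$ whereas the factored contribution is $\sum_{x=1}^{\ell} a_x$.

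The central inequality is a Bonferroni-type bound $\sum_{x} a_x - \bigl(1 - \prod_x (1-a_x)\bigr) \le \sum_{x<y} a_x a_y$, which caps the per-node error by the pairwise overlap. The next observation is that $v$ belongs to a single community, say $P_z$, so in every product $a_x a_y$ at least one index differs from $z$; that factor is a cross-community hit probability. I would bound the within-community factor trivially by $a_z \le 1$ and each cross-community factor by the union bound $a_y \le p_m \cdot |\{\text{cross edges from } S \cap P_y \text{ into } v\}|$, where $p_m = \max_{e} p(e)$. This reduces the per-node error to at most $p_m$ times the number of realized cross-community edges incident to $v$ (plus strictly lower-order overlap terms in which both factors are cross-community, hence of order $q^2$).

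To dispatch the $\max_{\bm{S}}$, note that replacing $S \cap P_y$ by the whole partition $P_y$ only inflates each union bound, and does so uniformly in $\bm{S}$; hence the bound becomes a function of the realized graph alone. Taking the expectation over the SBM randomness, each potential cross-community edge is present independently with probability $q$, and the number of ordered cross-community node pairs equals exactly $n^2(1 - 1/\ell)$ for equal-sized communities. Therefore $\mathbb{E}[\#\text{cross-community edges}] = q\,n^2(1 - 1/\ell)$, and multiplying by the per-edge propagation budget $p_m$ yields the claimed bound $q\,n^2 (1 - 1/\ell)\, p_m$.

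The main obstacle I anticipate is carrying the $\max_{\bm{S}}$ through the expectation cleanly while keeping the bound linear in $q$: the Bonferroni step leaves higher-order terms in which both overlapping partitions are foreign to $v$, and one must argue these are dominated by (or absorbed into) the leading cross-edge count rather than spoiling the $q$-linear, $p_m$-linear form. A secondary technical point is verifying that the independence of the ``hit-from-partition-$x$'' events genuinely holds under the one-round model, which relies on distinct seed-to-$v$ edges being independent both in existence (SBM) and in propagation; this is exactly why the $T=1$ assumption (justified earlier via \cite{goel2012structure}) is needed.
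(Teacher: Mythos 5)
Your overall strategy---attributing the entire discrepancy to cross-community edges and bounding it by $p_m$ times the expected cross-edge count $\E[M_a]\le \frac{qn^2}{2}\left(1-\frac{1}{\ell}\right)$---is the same as the paper's, and your final counting step matches the paper's computation of $\E[M_a]$ exactly. However, two steps in your node-level decomposition do not hold as stated. First, the seed terms do \emph{not} cancel: a seed $v\in \bm{S}\cap P_z$ contributes $1$ to $\mathcal{I}(\bm S)$ but $1+\sum_{y\ne z}\mathcal{P}[v\text{ activated by }S_{P_y}]$ to the factored sum, because $\mathcal{I}(S_{P_y})$ treats $v$ as a non-seed. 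These seed-at-both-endpoints cross edges are precisely where the paper's factor of $2$ (two over-counted nodes per cross edge, one at each endpoint) comes from, so dropping them loses part of the error. Second, and more seriously, the Bonferroni route leaves you with the pairwise terms $a_xa_y$ in which \emph{both} $x$ and $y$ are foreign to $v$'s community. You flag these as lower order, but they are only lower order in $q$, not dominated by the leading term: after replacing $\bm{S}\cap P_y$ by $P_y$, each such term is in expectation roughly $(qp_m|P_x|)(qp_m|P_y|)$, and summed over $v$ and over pairs this is of order $q^2p_m^2n^3$, which exceeds $qn^2p_m$ whenever $qnp_m$ is not small. So the argument as written does not deliver the stated bound.

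Both problems disappear if you replace the second-order Bonferroni inequality with the cruder bound $1-\prod_x(1-a_x)\ge a_z$, which gives a per-node error of at most $\sum_{y\ne z}a_y\le p_m\cdot\left|\{\text{cross edges from }\bm{S}\text{ into }v\}\right|$ for \emph{every} node $v$, seed or not. Summing over $v$ counts each cross edge at most twice (once per endpoint lying in $\bm{S}$), giving exactly the paper's bound $2p_m\E[M_a]\le qn^2\left(1-\frac{1}{\ell}\right)p_m$; the $\max_{\bm S}$ is handled, as you note, by enlarging $\bm{S}\cap P_y$ to $P_y$ so the bound depends only on the realized graph. The paper itself argues directly at the edge level---each activated cross edge over-counts at most two nodes---and then computes $\E[M_a]$ via $\sum_x|P_x|^2\ge n^2/\ell$, which sidesteps the per-node bookkeeping entirely.
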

\begin{proof}
The difference between $\sum_{x=1}^\ell \mathcal{I}(S_{P_x})$ and $\mathcal{I}(\bm{S})$ comes from the fact that $\sum_{x=1}^\ell \mathcal{I}(S_{P_x})$ over-counts influence spread across communities [since $\mathcal{I}(S_{P_x})$ equals the expected influence in the whole graph when $S_{P_x}$ is influenced, assuming no nodes of other communities are influenced, while in fact some actually may be].

Edges going across communities lead to this double counting of influence spread. We'll call these edges as cross-edges. Let $M_a$ denote the total number of such cross-edges, i.e. $M_a = |\{(u,v) \in E : u \in P_x, v \in P_y \text{ and } x \neq y\}|$. Each cross-edge can lead to at most two nodes being double counted. This is because of the following: Let $(u,v)$ be a cross-edge (where $u \in P_x$ and $v \in P_y$), and suppose that both these nodes are influenced. On computing $\mathcal{I}(S_{P_x})$, $v$ might be counted as being influenced by it [even though $v$ is already influenced beforehand], hence leading to an over-count of $1$ [Note that, since we're considering one round of influence spread, $\mathcal{I}(S_{P_x})$ assumes that $v$ does not propagate influence further]. Similar holds with $\mathcal{I}(S_{P_y})$.

Hence, $\sum_{x=1}^\ell \mathcal{I}(S_{P_x}) - \mathcal{I}(\bm{S})$ is bounded by twice the expected number of cross-edges that are activated (for arbitrary $\bm{S}$). Let $E_{ij}$ be the random variable denoting whether there's an edge from node $i$ to $j$ in the SBM network. Then, the number of cross-edges is given as
$$M_a = \frac{1}{2} \sum_{x=1}^\ell \sum_{i \in P_x} \sum_{j \notin P_x} E_{ij},$$
Hence, the expected number of cross edges is
\begin{align*}
\E[M_a] &= \E\left[\frac{1}{2} \sum_{x=1}^\ell \sum_{i \in P_x} \sum_{j \notin P_x} E_{ij}\right]\\
&= \frac{1}{2} \sum_{x=1}^\ell \sum_{i \in P_x} \sum_{j \notin P_x} q
= \frac{1}{2} \sum_{x=1}^\ell \sum_{i \in P_x} (n - |P_x|) q\\
&= \frac{q}{2} \sum_{x=1}^\ell |P_x| (n - |P_x|)\\
&= \frac{q}{2} \left(n^2 - \sum_{x=1}^\ell  |P_x|^2\right).
\end{align*}
Since $\sum_{x=1}^\ell |P_x|$ is equal to $n$, $\sum_{x=1}^\ell  |P_x|^2$ is minimized when each $|P_x|$ is equal to $n/\ell$, i.e.
$$\sum_{x=1}^\ell  |P_x|^2 \geq \sum_{x=1}^\ell \left(\frac{n}{\ell} \right)^2 = \frac{n^2}{\ell}.$$
Substituting it above:
\begin{align*}
\E[M_a] \leq \frac{q}{2}\left(n^2 - \frac{n^2}{\ell}\right) = \frac{qn^2}{2} \left(1 - \frac{1}{\ell}\right).
\end{align*}

Let $p_m = \max_{e \in E} p(e)$. Remember that each cross edge $e$ is activated with probability $p(e)$ ($\leq p_m$). So, we have
$$\E\left[\max_{\bm{S}} \left(\sum_{x=1}^\ell \mathcal{I}(S_{P_x}) - \mathcal{I}(\bm{S})\right)\right] \leq 2 \cdot \E[M_a] \cdot p_{m}$$
And therefore,
$$\E\left[\max_{\bm{S}} \left(\sum_{x=1}^\ell \mathcal{I}(S_{P_x}) - \mathcal{I}(\bm{S})\right)\right] \leq qn^2 \left(1 - \frac{1}{\ell}\right) p_{m}$$

Also, note that $\sum_{x=1}^\ell \mathcal{I}(S_{P_x})$ is always at least as large as $\mathcal{I}(\bm{S})$, i.e. $\sum_{x=1}^\ell \mathcal{I}(S_{P_x}) - \mathcal{I}(\bm{S}) \geq 0$. This gives us the desired result:
$$\E\left[\max_{\bm{S}} \left|\sum_{x=1}^\ell \mathcal{I}(S_{P_x}) - \mathcal{I}(\bm{S})\right|\right] \leq qn^2 \left(1 - \frac{1}{\ell}\right) p_{m}$$
\end{proof}

This action factorization allows maintaining separate Q-value statistics ($\hat{Q}_{type}^{P_x}(\vec{S}_{P_x})\mbox{ }\forall type \in \{q,i,e\}$) for each factor (i.e., network community) at each node of the UCT tree maintained by CAIMS. However, upon running MC simulations in this UCT tree, we acquire samples of only $Q_{type}$ (i.e., rewards of the joint \textit{un-factored} actions). We learn factored estimates $Q_{type}^{P_x}$ from estimates $Q_{type}$ of the \textit{un-factored} actions by using mixture of experts optimization \cite{amato2015scalable}, i.e. we estimate the factors as $\hat{Q}_{type}^{P_x}(\vec{S}_{P_x}) = \alpha_{P_x} \mathbb{E}[Q_{type}(\vec{S}) | \vec{S}_{P_x}]$, where this expectation is estimated by using the empirical mean. Please refer to \cite{amato2015scalable} for more details. We now describe action selection in the UCT tree. 

\textbf{Action Selection} At each node in the UCT tree, we use the UCB1 rule (over all factors) to find the best action. Let $n^q_{h\vec{S}_{P_x}}$ (or $n^i_{h\vec{S}_{P_x}}$) denote the number of times a query (or invite) action with sub-action $\vec{S}_{P_x}$ has been taken from node $h$ of the UCT tree. Let $N_h$ denote the number of times tree node $h$ has been visited. The best query action to be taken is given as $\tuple{\bm{S_q}, q}$, where $\vec{S}_q = argmax_{\|\vec{S}\|_1 \leq Q_{max}} \sum_{x=1}^\ell \hat{Q}^{P_x}_q(b_h, \vec{S}_{P_x}) + c \sqrt{log(N_h + 1)/n^q_{h\vec{S}_{P_x}}}$. Similarly, the best invite action to be taken is given as $\tuple{\bm{S_i}, i}$, where $\vec{S}_i = argmax_{\|\vec{S}\|_1 \leq K - |M|} \sum_{x=1}^\ell \hat{Q}^{P_x}_i(b_h, \vec{S}_{P_x}) + c \sqrt{log(N_h + 1)/n^i_{h\vec{S}_{P_x}}}$ (where $M$ is the set of locked nodes at tree node $h$). Let $V_q$ and $V_i$ denote the value attained at the maximizing query and invite actions, respectively. Finally, let $V_e$ denote the value of the end-session action, i.e. $V_e = \hat{Q}_e + c \sqrt{log(N_h + 1)/n^e_h}$ where $n^e_h$ is the number of times the end-session action has been taken from tree node $h$. Then, the values $V_q, V_i$ and $V_e$ are compared and the action corresponding to $max(V_q,V_i,V_e)$ is chosen.

\textbf{Improved VE} Note that the UCB1 equations to find maximizing query/invite actions (as described above) are of the form $argmax_{\|\vec{a}\|_1 \leq z} \sum_{x=1}^\ell f_x(\vec{a}_x)$ (where $\vec{a} \in \{0,1\}^n$). Unfortunately, plain application of VE (like FV-POMCP) to this results in infeasible solutions which may violate the L-1 norm constraint. Thus, FV-POMCP's VE procedure may not produce feasible solutions for CAIM. 

CAIMS addresses this limitation by using two adjustments. First, we incorporate this L-1 norm constraint as an additional factor in the objective function: $argmax_{\vec{a} \in \{0,1\}^n} \sum_{x=1}^\ell f_x(\vec{a}_x) + f_c(\vec{a})$. This constraint factor $f_c$'s scope is all the $n$ variables (as it represents a global constraint connecting actions selected across all factors), and hence it can be represented using a table of size $O(2^n)$ in VE. Unfortunately, the exponentially sized table of $f_c$ eliminates any speed-up benefits that VE provides, as the induced width of the tree formed (on running VE) will be $n$, leading to a worst possible time-complexity of $O(2^n)$. 

To resolve this, CAIMS leverages a key insight which allows VE to run efficiently even with the additional factor $f_c$. The key idea is that, if all variables of a community are eliminated at once, then both (i)$f_c$; and (ii) the factors derived from a combination of $f_c$ and other community-specific factors during such elimination, can be represented very concisely (using just tables of size $z+1$ elements), instead of using tables of size $O(2^n)$. This fact is straightforward to see for the original constraint factor $f_c$ (as $f_c$'s table only depends on $\|\vec{a}\|_1$, it has value 0 if $\|\vec{a}\|_1 \leq z$ and $-\infty$ otherwise). However, it is not obvious why this holds for derived factors, which need to maintain optimal assignments to community-specific variables, for every possible combination of \textit{un-eliminated} variable values (thereby requiring $O(2^n)$ elements). However, it turns out that we can still represent the derived factors concisely. The key insight is that even for these derived factors, all variable assignments with the same L-1 norm have the same value (Lemma~\ref{lem:concise-factors}). This allows us to represent each of these derived factor as a table of only $z+1$ elements (as we need to store one unique value when the L-1 norm is at most $z$, and we use $-\infty$ otherwise).

\begin{lemma}	\label{lem:concise-factors}
Let $\psi_i(\vec{v})$ denote the $i^{th}$ factor generated during CAIMS's VE. Then, $\psi_i(\vec{v}_1) = \psi_i(\vec{v}_2)$ if $\|v_1\|_1 = \|v_2\|_1$. Further $\psi_i(\vec{v}) = -\infty$ if $\|v\|_1 > z$.
\end{lemma}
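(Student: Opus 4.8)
The plan is to prove this by induction on the elimination steps, eliminating the communities $P_1, \dots, P_\ell$ one at a time, and maintaining throughout the invariant that the running ``constraint-derived'' factor depends on its remaining variables only through their $L_1$ norm, and equals $-\infty$ once that norm exceeds $z$. The setup is as follows: the initial factors are the community-local factors $f_x(\vec{a}_x)$ (each supported only on the variables of $P_x$) together with the single global constraint factor $f_c$, where $f_c(\vec{a}) = 0$ if $\|\vec{a}\|_1 \le z$ and $-\infty$ otherwise, and VE operates in the $(\max,+)$ semiring. The structural feature I will exploit is that, because we eliminate an entire community at once, the only factors touching the variables of $P_i$ at the moment we eliminate it are $f_i$ and the current constraint-derived factor $\psi_{i-1}$; the remaining community-local factors $f_{i+1}, \dots, f_\ell$ are supported on disjoint variable sets and do not participate in this step.

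For the base case I would set $\psi_0 := f_c$, which trivially depends only on $\|\vec{a}\|_1$ and is $-\infty$ above $z$. The inductive step is the heart of the argument. Suppose $\psi_{i-1}(\vec{v})$ depends only on $\|\vec{v}\|_1$, and write $\psi_{i-1}(\vec{v}) = h_{i-1}(\|\vec{v}\|_1)$ with $h_{i-1}(k) = -\infty$ for $k > z$. Eliminating $P_i$ combines $f_i$ and $\psi_{i-1}$ and maximizes out $\vec{a}_i$, producing
$$\psi_i(\vec{w}) = \max_{\vec{a}_i \in \{0,1\}^{|P_i|}} \big[\, f_i(\vec{a}_i) + h_{i-1}(\|\vec{a}_i\|_1 + \|\vec{w}\|_1) \,\big],$$
where $\vec{w}$ ranges over the still-uneliminated variables. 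The key move is to split the maximization over $\vec{a}_i$ by its norm level $j = \|\vec{a}_i\|_1$: setting $F_i(j) := \max_{\|\vec{a}_i\|_1 = j} f_i(\vec{a}_i)$, this becomes $\psi_i(\vec{w}) = \max_{j} \big[\, F_i(j) + h_{i-1}(j + \|\vec{w}\|_1) \,\big]$, which manifestly depends on $\vec{w}$ only through $m := \|\vec{w}\|_1$. Hence $\psi_i(\vec{w}) = h_i(m)$ with $h_i(m) = \max_j \big[ F_i(j) + h_{i-1}(j+m) \big]$, establishing the first claim. For the $-\infty$ claim: if $m > z$ then $j + m > z$ for every $j \ge 0$, so each term $h_{i-1}(j+m) = -\infty$ by the induction hypothesis, whence $\psi_i(\vec{w}) = -\infty$. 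This closes the induction and yields both assertions of the lemma.

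I expect the main obstacle to be not the algebra but the bookkeeping that justifies the community-at-once elimination order and confirms that at each step exactly one community-local factor meets the single running constraint factor, so that no ``interaction'' factor coupling two distinct communities is ever created through $f_c$. This is precisely what forces every derived factor to be supported (for constraint purposes) only on the aggregate count $\|\vec{w}\|_1$, and hence to be representable in $z+1$ entries rather than $O(2^n)$. A secondary point worth verifying carefully is the degenerate case where the admissible set of $\vec{a}_i$ becomes empty under the norm budget, which the $(\max,+)$ convention ($\max \emptyset = -\infty$) handles and which is exactly what propagates the $-\infty$ values correctly.
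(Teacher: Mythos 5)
Your proposal is correct and follows essentially the same route as the paper's proof: the paper also eliminates one community at a time, observes that the combined factor depends on the eliminated block only through $\|\vec{a}_i\|_1$ and on the rest only through $\|\vec{a}_{-1,\dots,-i}\|_1$, and tabulates the per-norm maxima $v^{(i)}_j$ (your $F_i(j)$) before asserting the pattern persists for all subsequent factors. Your version merely makes the induction explicit where the paper argues the first step in detail and says ``in a similar way, this holds for the remaining generated factors.''
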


These compact representations allow CAIMS to efficiently run VE in time $\sum_{i=1}^\ell O\left(2^{s_i}\right)$ ($s_i = $ size of $i^{th}$ community) even after adding the global constraint factor $f_c$ (Lemma~\ref{lem:time-complex}). In fact, this is the best one can do, because any algorithm will have to look at all values of each community-specific factor in order to solve the problem.

\begin{lemma} \label{lem:time-complex}
CAIMS's VE has time-complexity $\sum_{i=1}^\ell O\left(2^{s_i}\right)$, where $s_i$ is the size of the $i^{th}$ factor (community). There exists no procedure with better time complexity.
\end{lemma}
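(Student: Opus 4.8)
The plan is to prove the two halves of Lemma~\ref{lem:time-complex} separately: a matching upper bound on CAIMS's variable elimination, and a lower bound showing no procedure can do asymptotically better. For the upper bound I would fix the elimination order so that all $s_i$ variables of community $P_i$ are eliminated together, processing the communities $P_1,\ldots,P_\ell$ one at a time. The starting observation, supplied by Lemma~\ref{lem:concise-factors}, is that every derived factor $\psi_i$ generated along the way depends on its arguments only through their $\ell_1$-norm and equals $-\infty$ once that norm exceeds $z$; hence each running message can be stored as an array $g_i[\,0\,..\,z\,]$, where $g_i[k]$ records the best value achievable from the already-eliminated communities while consuming exactly $k$ units of the global budget enforced by the constraint factor $f_c$.

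With this representation the whole computation becomes a knapsack-style dynamic program. To eliminate community $P_i$ I would first compute $h_i[j] = \max_{\|\vec{a}_i\|_1 = j} f_i(\vec{a}_i)$ for each $j \in \{0,\ldots,\min(s_i,z)\}$ by a single pass over the $2^{s_i}$ assignments of the community, and then update the message by the max-plus convolution $g_i[k] = \max_{j \le k} \big( g_{i-1}[k-j] + h_i[j] \big)$. The enumeration costs $O(2^{s_i})$ and the convolution costs only $O(z^2)$, a polynomial overhead absorbed into the exponential term; summing over communities yields the claimed $\sum_{i=1}^{\ell} O(2^{s_i})$. The final answer is $\max_{k \le z} g_\ell[k]$, with the optimal assignment recovered by back-pointers, and correctness follows because this DP is exactly VE under the chosen ordering together with the compactness guaranteed by Lemma~\ref{lem:concise-factors}.

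For the matching lower bound I would argue by an adversary. Any algorithm correct on every instance must, for each community $P_i$, inspect essentially all the feasible entries of its factor $f_i$; otherwise take a feasible assignment $\vec{a}_i$ (one with $\|\vec{a}_i\|_1 \le z$) whose value the algorithm never reads, and have the adversary set that entry to an arbitrarily large number while leaving every other community's factor identically zero. The assignment equal to $\vec{a}_i$ on $P_i$ and all-zero elsewhere is then feasible and uniquely optimal, yet the algorithm's output cannot depend on the unread entry, so it errs on some instance. Since $f_i$ has $\Theta(2^{s_i})$ feasible entries in the worst case (for instance whenever $z \ge s_i$), every correct procedure reads $\sum_i \Omega(2^{s_i})$ values, establishing optimality.

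The delicate part I expect is making the lower bound airtight, rather than the upper bound, which is a counting argument once Lemma~\ref{lem:concise-factors} is in hand. The adversary construction must be phrased so that the hidden instance is simultaneously consistent with everything the algorithm has observed and feasible under the $\ell_1 \le z$ constraint, which requires restricting attention to entries with $\|\vec{a}_i\|_1 \le z$ and being careful when $z < s_i$, so that the $2^{s_i}$ in the statement is understood up to the feasibility cap and matches the upper-bound enumeration. A secondary check is confirming that the $O(z^2)$ convolutions and the $\ell$ separate enumerations never dominate the $\sum_i O(2^{s_i})$ term, which holds under the mild, standard assumption that community sizes are not vanishingly small relative to $\log z$.
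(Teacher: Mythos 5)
Your upper-bound argument is essentially the paper's own proof in different notation: the paper also eliminates one community at a time, uses Lemma~\ref{lem:concise-factors} to collapse each derived factor $\psi_i$ to a function of the remaining $\ell_1$-norm, and computes exactly your $h_i[j]$ (there written $v^{(i)}_j$, the maximum of $f_i$ over assignments with exactly $j$ ones) by a single $O(2^{s_i})$ enumeration, followed by what is in effect your max-plus convolution against the running budgeted message. So the first half matches. The more interesting difference is the second half: the paper does not actually prove the optimality claim --- it is asserted in one sentence in the main text (``any algorithm will have to look at all values of each community-specific factor'') and the written proof covers only the VE procedure and its cost. Your adversary argument supplies the missing justification: if a correct procedure skips a feasible entry of some $f_i$, the adversary plants a uniquely optimal value there while zeroing the other factors, and the algorithm errs. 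Your caveat about the case $z < s_i$ is also well taken and is in fact a mild gap in the lemma as stated: the number of \emph{feasible} entries of $f_i$ is $\sum_{j\le z}\binom{s_i}{j}$, which is $\Theta(2^{s_i})$ only when $z$ is not small relative to $s_i$, so both the enumeration cost and the lower bound should really be read up to that feasibility cap. Net: your proof is correct, coincides with the paper's on the upper bound, and is strictly more complete on the lower bound.
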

\begin{proof}[Proof of Lemma 3 \& 4]
We go over the exact procedure of the modified VE algorithm and prove Lemmas 3 and 4 in the process. For the forward pass, we compute $\max_{\vec{a}} \sum_{x=1}^\ell f_x(\vec{a}_x) + f_c(\vec{a})$. We know that $f_c$ depends only on the L-1 norm of $\vec{a}$, so we represent it as $f_c(\|\vec{a}\|_1)$. Also note that, the communities are disjoint, because of which each action bit $a_i$ (of action $\vec{a}$) appears in the argument of exactly one factor $f_x$ (other than the constraint factor $f_c$).

As mentioned in the paper, we eliminate all variables of a community at once. So, to eliminate the first block of variables, we compute $\max_{\vec{a}_1} f_1(\vec{a}_1) + f_c(\|\vec{a}\|_1) = \psi_1(\|\vec{a}_{-1}\|_1)$, where $\vec{a}_{-1}$ denotes all action bits of $\vec{a}$ except those in $\vec{a}_1$. Note that, in the RHS of this expression, we use $\|\vec{a}_{-1}\|_1$ as opposed to $\vec{a}_{-1}$ itself because the LHS (before computing the max) depends only on $\vec{a}_1$ and $\|\vec{a}_1\|_1 + \|\vec{a}_{-1}\|_1$. Also, note that for $\|\vec{a}_{-1}\|_1 > z$, we have $\|\vec{a}\|_1 > z$ making $f_c(\|\vec{a}\|_1)$ and $\psi_1(\|\vec{a}_{-1}\|_1)$ equal to $-\infty$.

To make this more concrete, Table~\ref{ve-table} shows how $\psi_1$ is exactly computed. Here, $v_i^{(x)}$ denotes the maximum value of $f_x$ when exactly $i$ bits of $\vec{a}_x$ are $1$, and $s_x$ denotes the number of bits in $\vec{a}_x$.

\begin{table}[h]
\centering
\begin{tabular}{@{}cc@{}}
\toprule
$\|\vec{a}_{-1}\|_1$  & $\psi_1(\|\vec{a}_{-1}\|_1)$  \\  \midrule
$0$ & $\max\left(v^{(1)}_0 + f_c(0), v^{(1)}_1 + f_c(1), \cdots v^{(1)}_{s_1} + f_c(s_1)\right)$ \\ \midrule
$1$ & $\max\left(v^{(1)}_0 + f_c(1), v^{(1)}_1 + f_c(2), \cdots v^{(1)}_{s_1} + f_c(s_1+1)\right)$ \\ \midrule
$\vdots$ & $\vdots$  \\ \midrule
$z$ & $v^{(1)}_0 + f_c(z)$  \\ \midrule
$> z$ & $-\infty$ \\ \bottomrule
\end{tabular}
\caption{Factor obtained on (first) block elimination}
\label{ve-table}
\end{table}

\sloppy Apart from computing the maximum objective value (forward pass), we also need to compute the maximizing assignment of the problem (backward pass). For this, we maintain another function $\mu_1(\|\vec{a}_{-1}\|_1)$ which keeps track of the value of $\vec{a}_1$ at which this maximum is attained (for each value of $\|\vec{a}_{-1}\|_1$), i.e. $\mu_1(v) = argmax_{\vec{a}_1} \left[f_1(\vec{a}_1) + f_c(\|\vec{a}_1\|_1 + v)\right]$. After eliminating variables of the first community, we are left with $\max_{\vec{a}_{-1}} \sum_{x=2}^\ell f_x(\vec{a}_x) + \psi_1(\|\vec{a}_{-1}\|_1)$. We repeat the same procedure and eliminate $\vec{a}_2$ by computing $\max_{\vec{a}_2} f_2(\vec{a}_2) + \psi_1(\|\vec{a}_{-1}\|_1)$, to obtain $\psi_2(\|\vec{a}_{-1,-2}\|_1)$. Note that, again, $\psi_2$ depends only on the L-1 norm of the remaining variables. Also, for $\|\vec{a}_{-1,-2}\|_1 > z$, $\psi_2$ becomes $-\infty$. In a similar way, this holds for the remaining generated factors, giving Lemma 4.

Once we complete the forward pass, we are left with $\psi_\ell(0)$ which is the maximum value of the objective function. Then, as in standard VE, we backtrack and use the $\mu_x$ functions to obtain the maximizer $argmax_{\vec{a}} \sum_{x=1}^\ell f_x(\vec{a}_x) + f_c(\|\vec{a}\|_1)$, i.e. $\mu_\ell(0)$ gives us the value of $\vec{a}_\ell$, then $\mu_{\ell-1}(\|\vec{a}_\ell\|_1)$ gives us the value of $\vec{a}_{\ell-1}$, $\mu_{\ell-2}(\|\vec{a}_\ell\|_1 + \|\vec{a}_{\ell-1}\|_1)$ gives us the value of $\vec{a}_{\ell-2}$ and so on.

Observe that to compute the $i^{th}$ derived factor, we needed to compute $\max_{\vec{a}_i} f_i(\vec{a}_i) + \psi_{i-1}(\|\vec{a}_{-1, -2, \cdots -(i-1)}\|_1) = \psi_i(\|\vec{a}_{-1, -2, \cdots -i}\|_1)$. And for this, we just need to compute $v^{(i)}_s$ for each $s = 0, 1, \cdots s_i$, as evident from Table~\ref{ve-table}. This takes time $O(2^{s_i})$, where $s_i$ denotes the size of the $i^{th}$ community. Hence, the time complexity of the whole algorithm is $\sum_{i=1}^\ell O\left(2^{s_i}\right)$.
\end{proof}

\textbf{Markov Net Beliefs}  FV-POMCP uses unweighted particle filters to represent beliefs, i.e. a belief is represented by a collection of states (also known as particles), wherein each particle has an equal probability of being the true state. Unfortunately, due to CAIM's exponential state-space, this representation of beliefs becomes highly inaccurate which leads to losses in solution quality.

To address this limitation, CAIMS makes the following assumption: availability of network nodes is positively correlated with the availability of their neighboring nodes in the social network. This assumption is reasonable because homeless youth usually go to shelters with their friends \cite{rice2013should}. Thus, the confirmed availability of one homeless youth increases the likelihood of the availability of his/her friends (and vice versa). Under this assumption, the belief state in CAIM can be represented using a Markov Network. Formally, the belief is given as $b = \tuple{\mathcal{N}, \bm{M}, numAct, sessID}$, where $\mathcal{N}$ is a Markov Network representing our belief of the true realization $\phi$ (note that the other three components of a state are observable). With the help of this Markov Network, we maintain \textit{exact} beliefs throughout the POMCP tree of CAIMS. As mentioned before, the prior distribution $\bm{\Phi}$ that serves as part of the initial belief state is also represented using a Markov Network $\mathcal{N}_0$. This prior can be elicited from field observations made by homeless shelter officials, and can be refined over multiple runs of CAIMS. \textit{In our simulations, the social network structure $G=(\bm{V},\bm{E})$ is used as a surrogate for the Markov network structure, i.e., the Markov network only has potentials over two variables/nodes (one potential for each pair of nodes connected by an edge in social network $G$)}. Thus, we start with the initial belief as $\tuple{\mathcal{N}_0, \{\}, 0, 1}$. Upon taking actions $a = \tuple{S, type}$ and receiving observations $o$, the belief state can be updated by conditioning the Markov network on the observed variables (i.e., by conditioning the presence/absence of nodes based on observations received from past query actions taken in the current session). This helps us maintain exact beliefs throughout the POMCP tree efficiently, which helps CAIMS take more accurate decisions.

\section{Evaluation}\label{sec:eval}
We show simulation results on artificially generated (and real-world) networks to validate CAIMS's performance in a variety of settings. We also provide results from a real-world feasibility study involving 54 homeless youth which shows the real-world usability of CAIMS. For our simulations, all the networks were generated using NetworkX library \cite{networkX}. All experiments are run on a 2.4 GHz 8-core Intel machine having 128 GB RAM. Unless otherwise stated, we set $L=3$, $Q_{max}=2$, $K=2$, and all experiments are averaged over 50 runs. \textit{All simulation results are statistically significant under t-test ($\alpha=0.05$)}.

\textbf{Baselines} We use two different kinds of baselines. For influence maximization solvers, we use Greedy \cite{kempe2003maximizing}, the gold-standard in influence maximization as a benchmark. We subject Greedy's chosen nodes to contingencies drawn from the same prior $\bm{\Phi}$ distribution that CAIMS uses. We also compare against the overprovisioning variant of Greedy (Greedy+) where instead of selecting $K$ nodes, we select $2K$ nodes and influence the first $K$ nodes that accept the invitation. This was proposed as an ad-hoc solution in \cite{yadav2017influence} to tackle contingencies, and hence, we compare CAIMS against this. We also compare CAIMS against state-of-the-art POMDP solvers such as SARSOP and POMCP. Unfortunately, FV-POMCP cannot be used for comparison as its VE procedure is not guaranteed to satisfy the $K$ budget constraint used inside CAIMS.

\textbf{Solution Quality Comparison}  Figures \ref{fig:1}, \ref{fig:2} and \ref{fig:5} compares influence spread of CAIMS, Greedy, Greedy+ and POMCP on SBM ($p=0.4, q=0.1$), Preferential Attachment (PA) ($n=5$) and real-world homeless youth networks (used in \cite{yadav2016using}), respectively. We select $K=2$ nodes, and set $T=6, L=3$ for CAIMS. The X-axis shows the size of the networks and the Y-axis shows the influence spread achieved. Figures \ref{fig:1} and \ref{fig:2} show that on SBM and PA networks, POMCP runs out of memory on networks of size 120 nodes. Further, these figures also show that CAIMS significantly outperforms Greedy and Greedy+ on both SBM (by $\sim$73\%) and PA networks (by $\sim$58\%). Figure \ref{fig:5} shows that even on real-world networks of homeless youth (which had $\sim$160 nodes each) , POMCP runs out of memory, while CAIMS outperforms Greedy and Greedy+ by $\sim$25\%. This shows that state-of-the-art influence maximization solvers perform poorly in the presence of contingencies, and a POMDP based method (CAIMS) outperforms them by explicitly accounting for contingencies. Figures~\ref{fig:1} and~\ref{fig:2} also show that Greedy+ performs worse than Greedy.
 
\begin{figure}[t]
\subfloat[SBM Networks]{\includegraphics[height=1.7in,width=0.47\columnwidth]{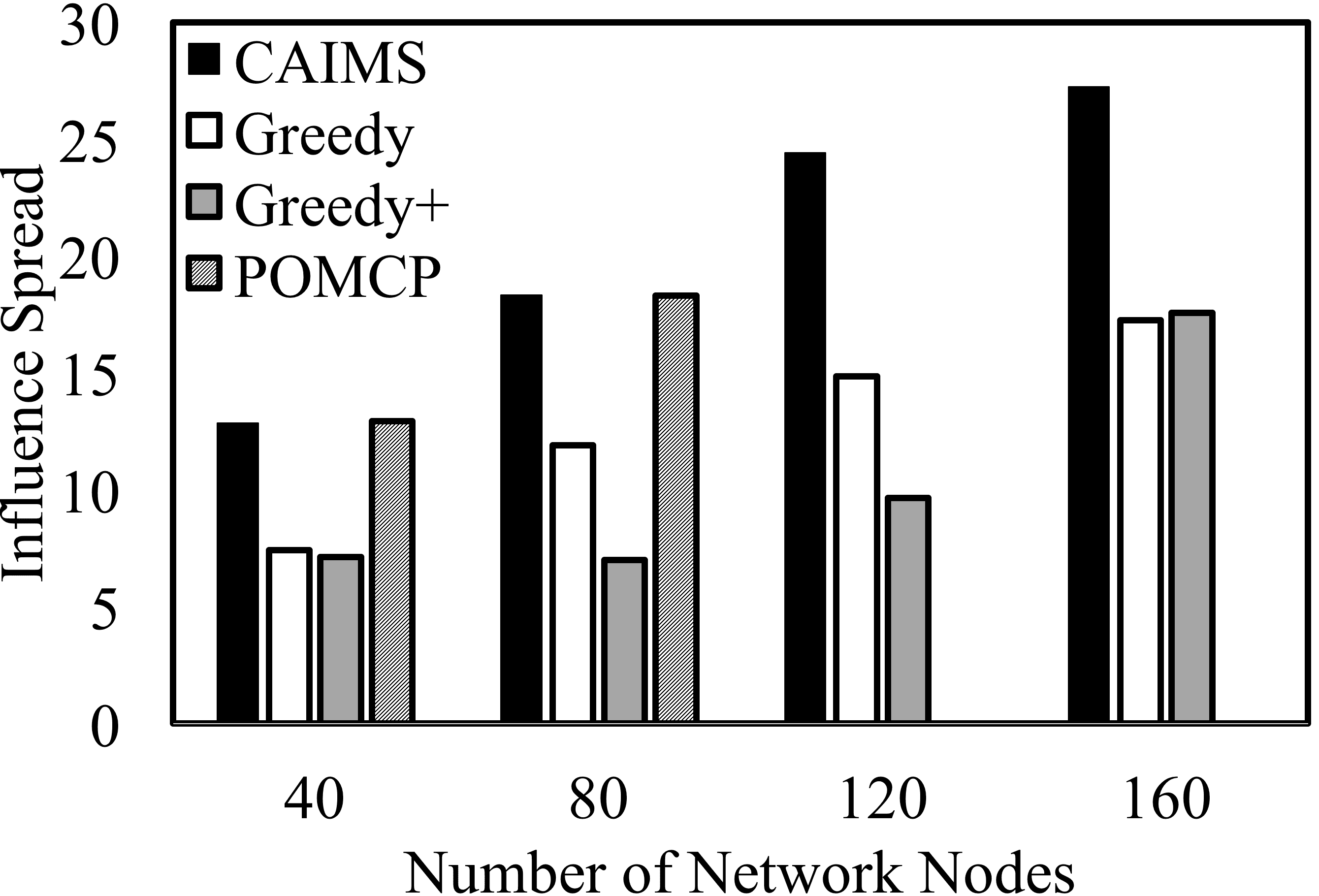}\label{fig:1}}
\hspace{2mm}
\subfloat[PA networks]{\includegraphics[height=1.7in,width=0.47\columnwidth]{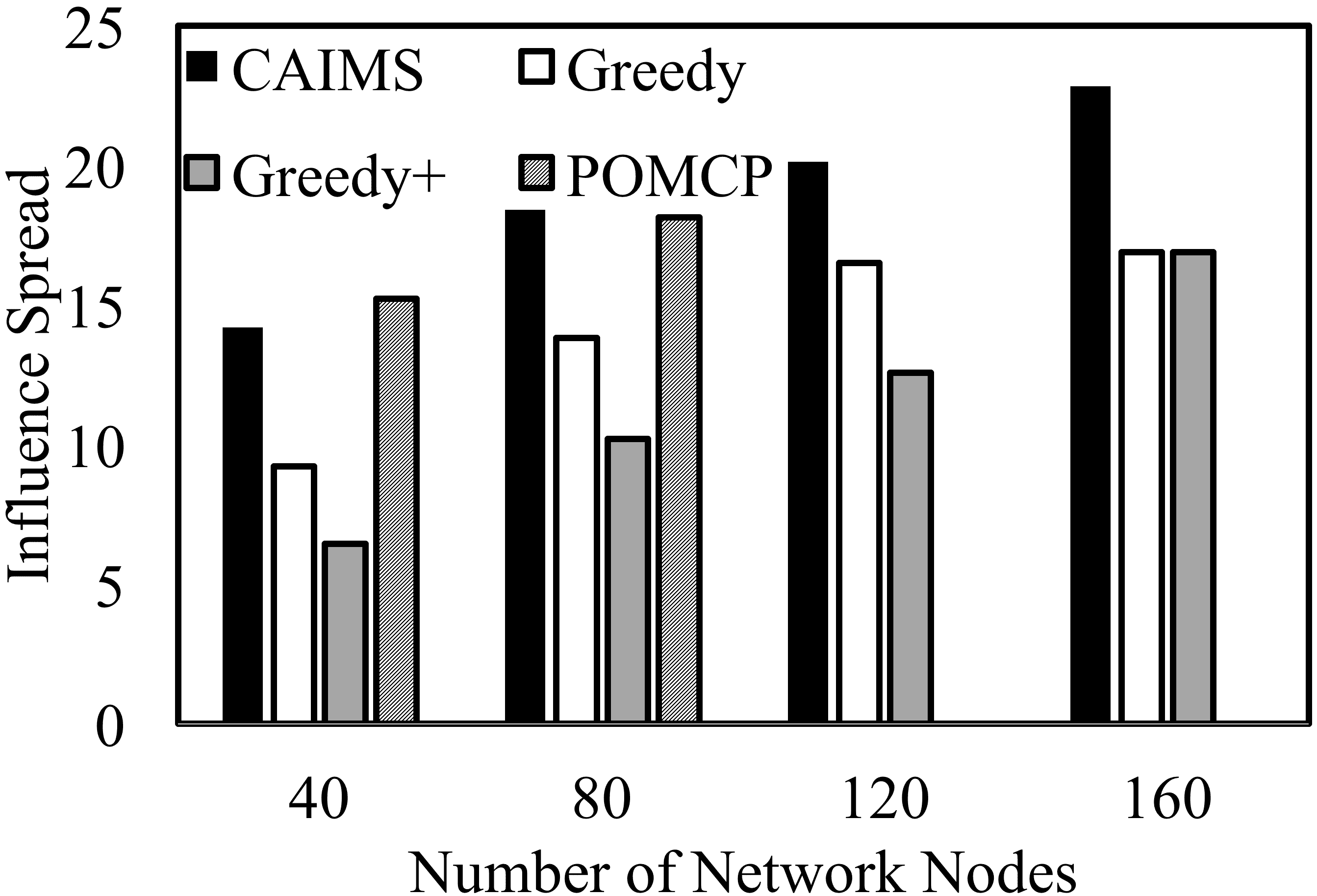}\label{fig:2}}
\caption{Influence Spread Comparison}
\end{figure}

\begin{figure}[t]
\subfloat[Scale up in $T$]{\includegraphics[height=1.7in,width=0.47\columnwidth]{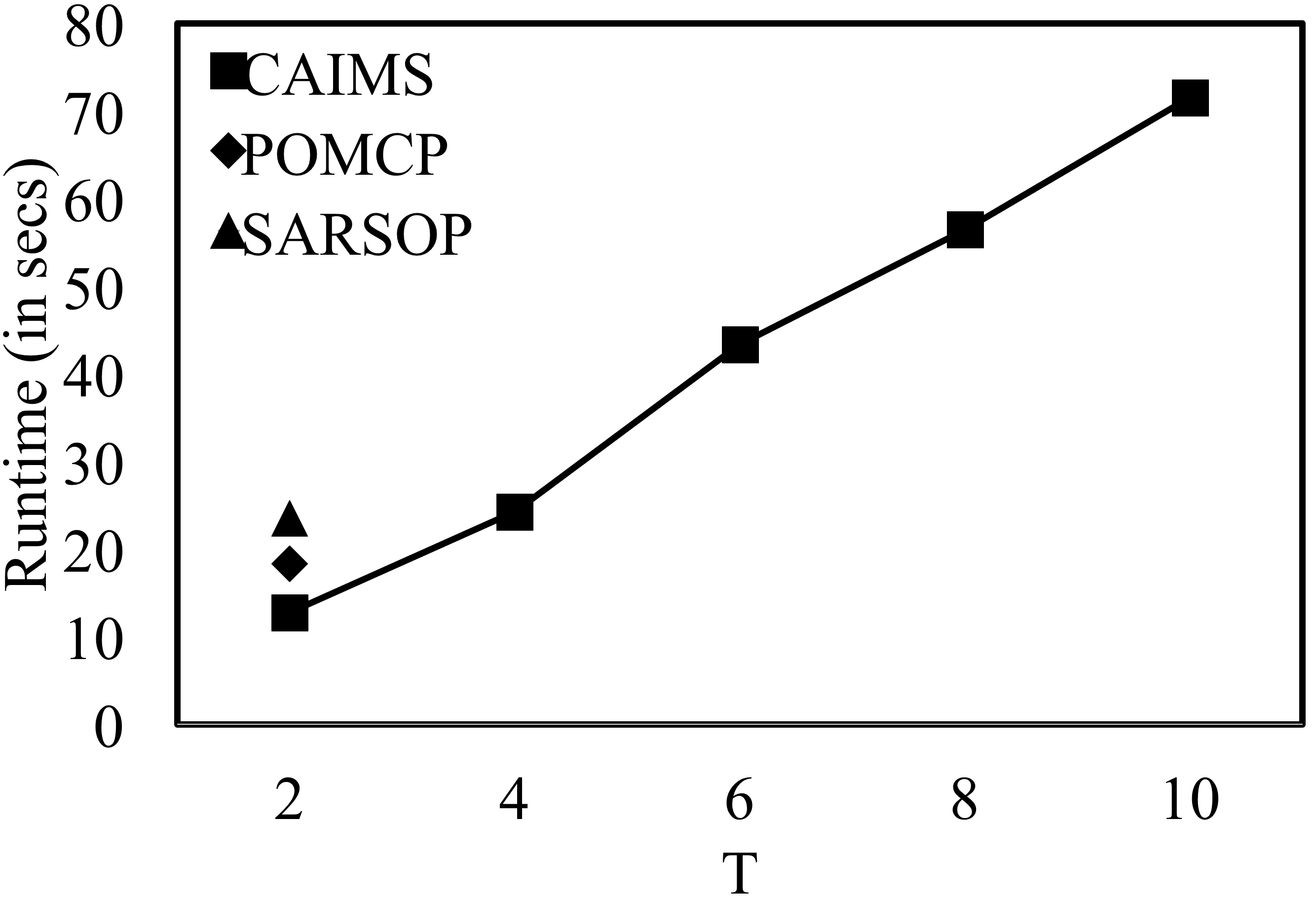}\label{fig:3}}
\hspace{2mm}
\subfloat[Scale up in $K$]{\includegraphics[height=1.7in,width=0.47\columnwidth]{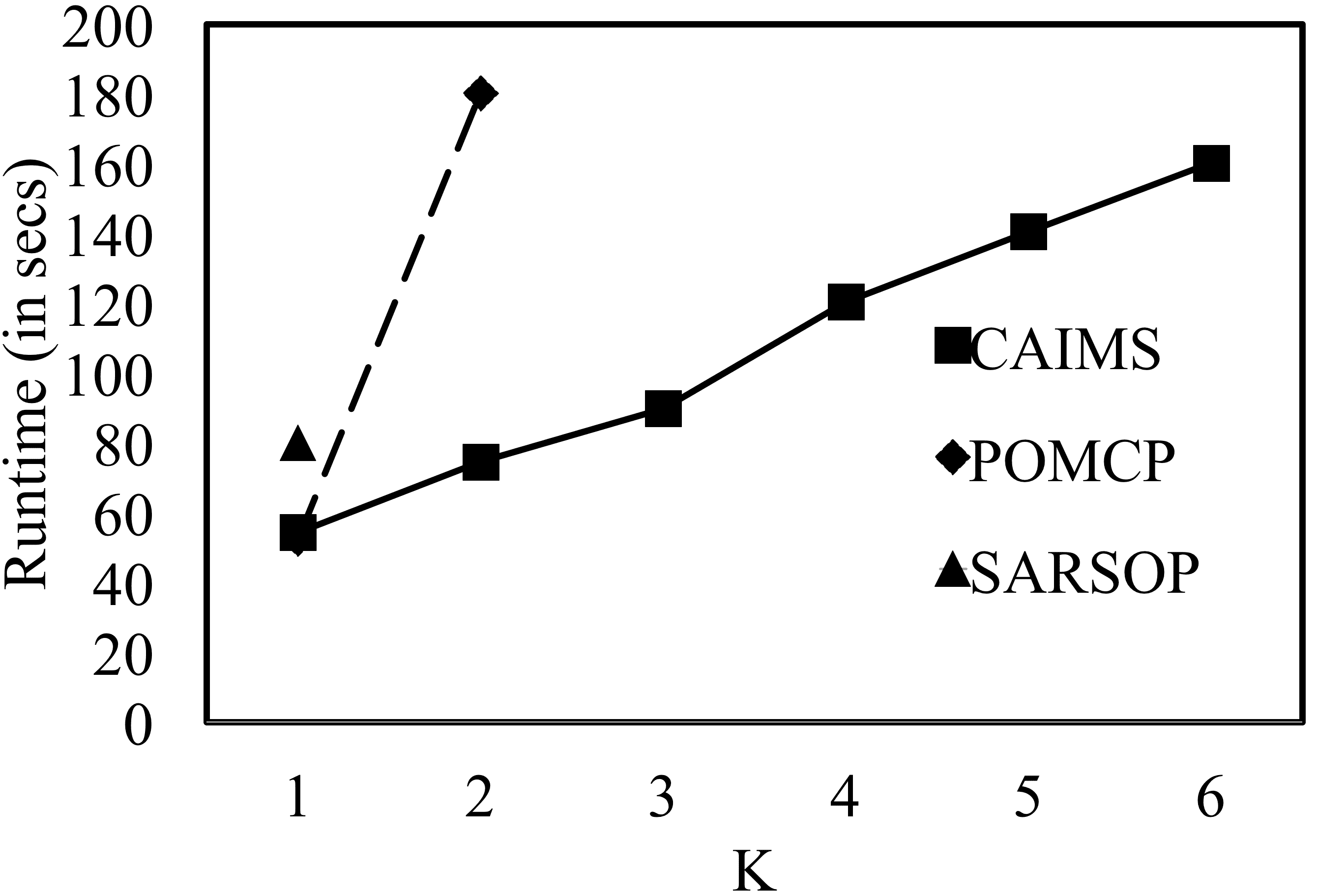}\label{fig:4}}
\caption{Scale Up Results}
\end{figure}

\textbf{Scale up} Having established the value of POMDP based methods, we now compare CAIMS's scale-up performance against other POMDP solvers. Figures \ref{fig:3} and \ref{fig:4} compares the runtime of CAIMS, POMCP and SARSOP on a 100 node SBM network with increasing values of $T$ and $K$ respectively. The X-axis shows $T$ (or $K$) values and the Y-axis shows the influence spread. Figure \ref{fig:3} shows that both POMCP and SARSOP run out of memory at $T=2$ sessions. On the other hand, CAIMS scales up gracefully to increasing number of sessions. Similarly, Figure \ref{fig:4} ($T=10$) shows that SARSOP runs out of memory at $K=1$, whereas POMCP runs out memory at $K=2$, whereas CAIMS scales up to larger values of $K$. These figures establish the superiority of CAIMS over its baselines as it outpeforms them over a multitude of parameters and network classes.

\begin{figure}[ht]
\subfloat[Influence Spread]{\includegraphics[height=1.7in,width=0.47\columnwidth]{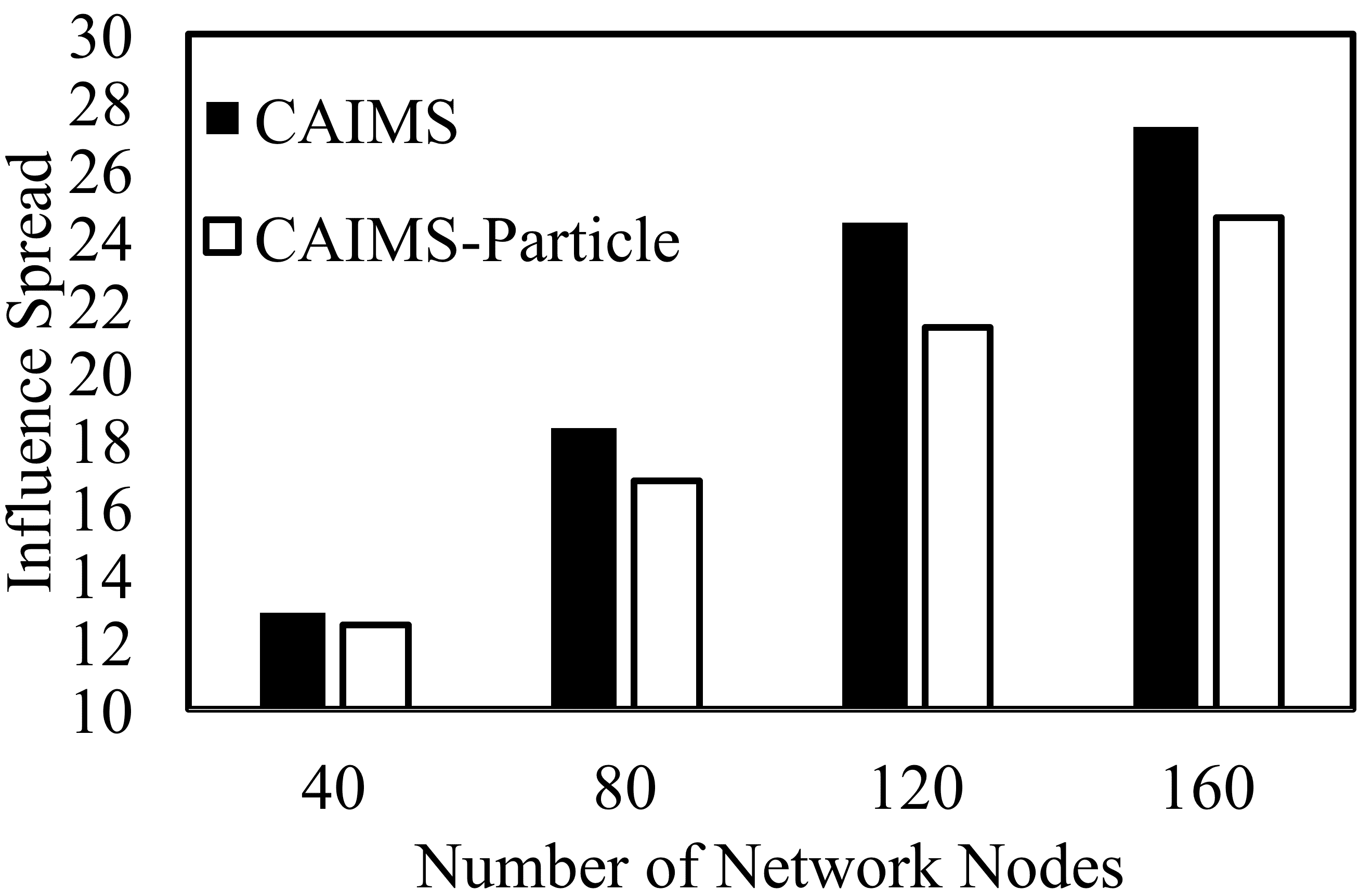}\label{fig:7}}
\hspace{2mm}
\subfloat[Runtime]{\includegraphics[height=1.7in,width=0.47\columnwidth]{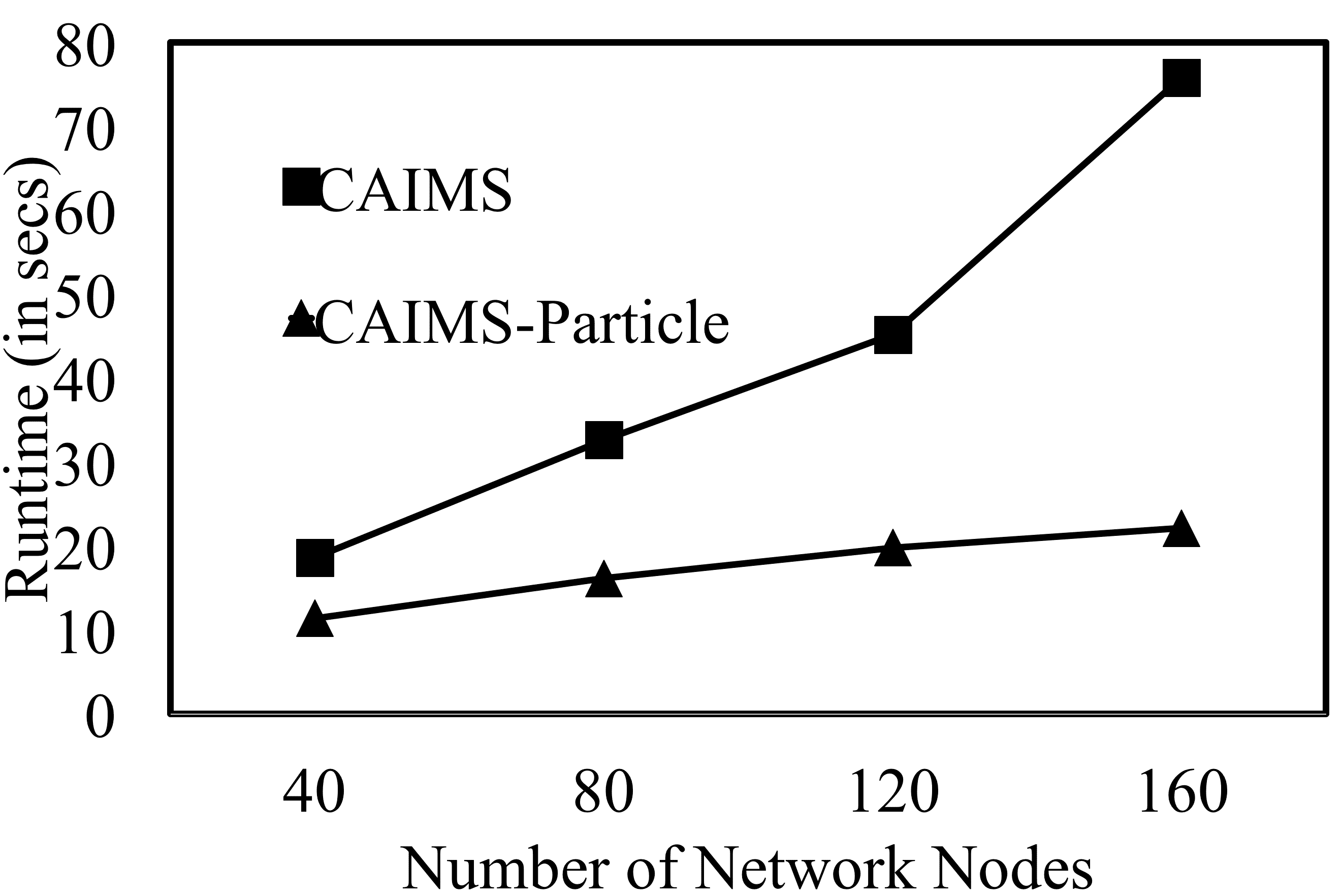}\label{fig:8}}
\caption{Value of using Markov Networks}
\end{figure}

\textbf{Markov Nets} We illustrate the value of Markov networks to represent belief states in CAIMS. We compare CAIMS with and without Markov nets (in this case, belief states are represented using unweighted particle filters) on SBM networks of increasing size. Figure \ref{fig:7} shows influence spread comparison between CAIMS and CAIMS-Particle (the version of CAIMS which uses unweighted particle filters to represent belief states). Figure \ref{fig:8} shows runtime comparison of CAIMS and CAIMS-Particle on the same SBM networks. These figures shows that using a more accurate representation for the belief state (using Markov networks) improved solution qualities by $\sim$15\% at the cost of $\sim$3X slower runtime. However, the loss in speed due to Markov networks is not a concern (as even on 160 node networks, CAIMS with Markov networks runs in $\sim$75 seconds).

\begin{figure}[ht]
\subfloat[Homeless Youth Networks]{\includegraphics[height=1.7in,width=0.47\columnwidth]{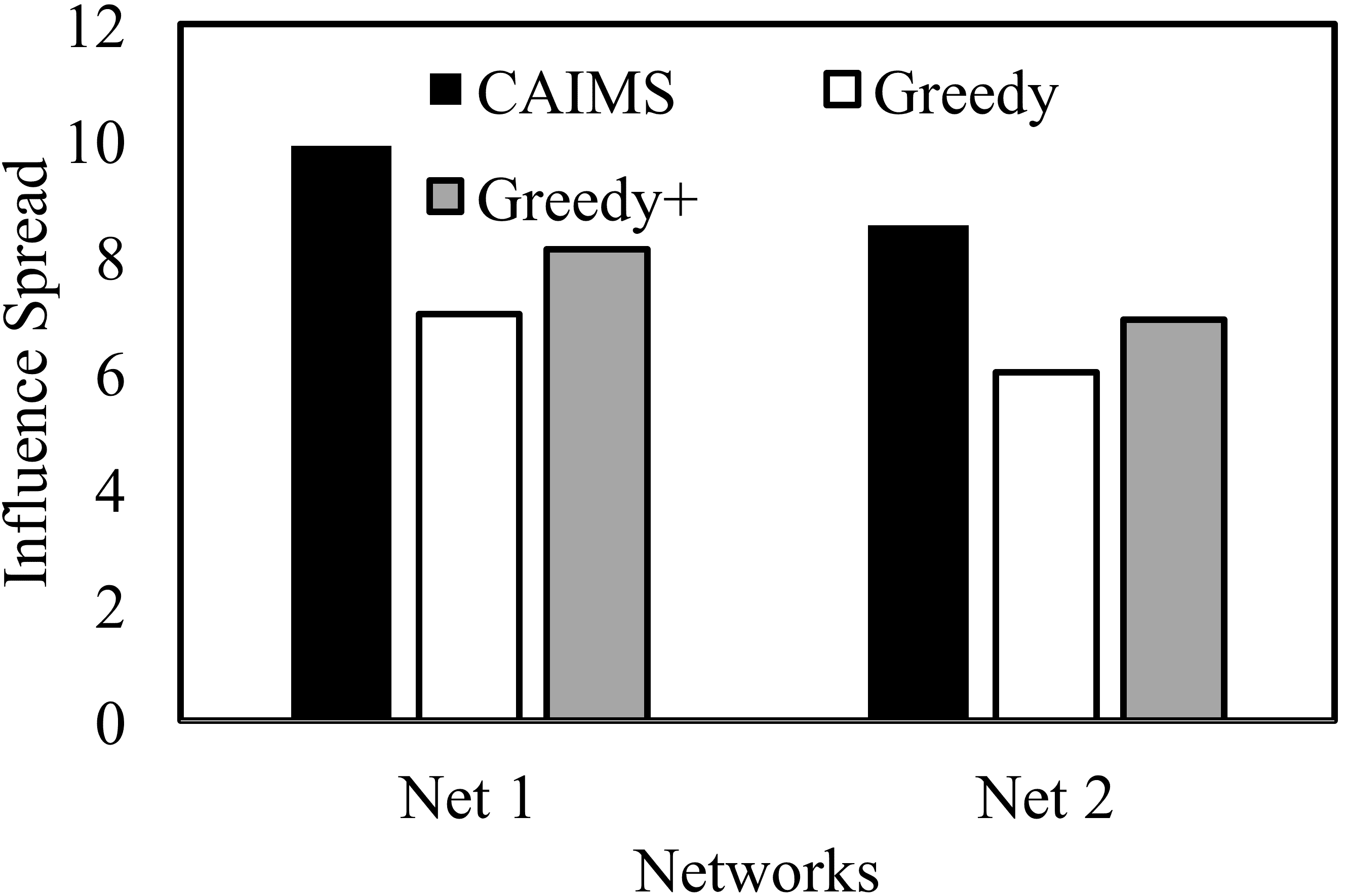}\label{fig:5}}
\hspace{2mm}
\subfloat[Feasibility Trial]{\includegraphics[height=1.7in,width=0.47\columnwidth]{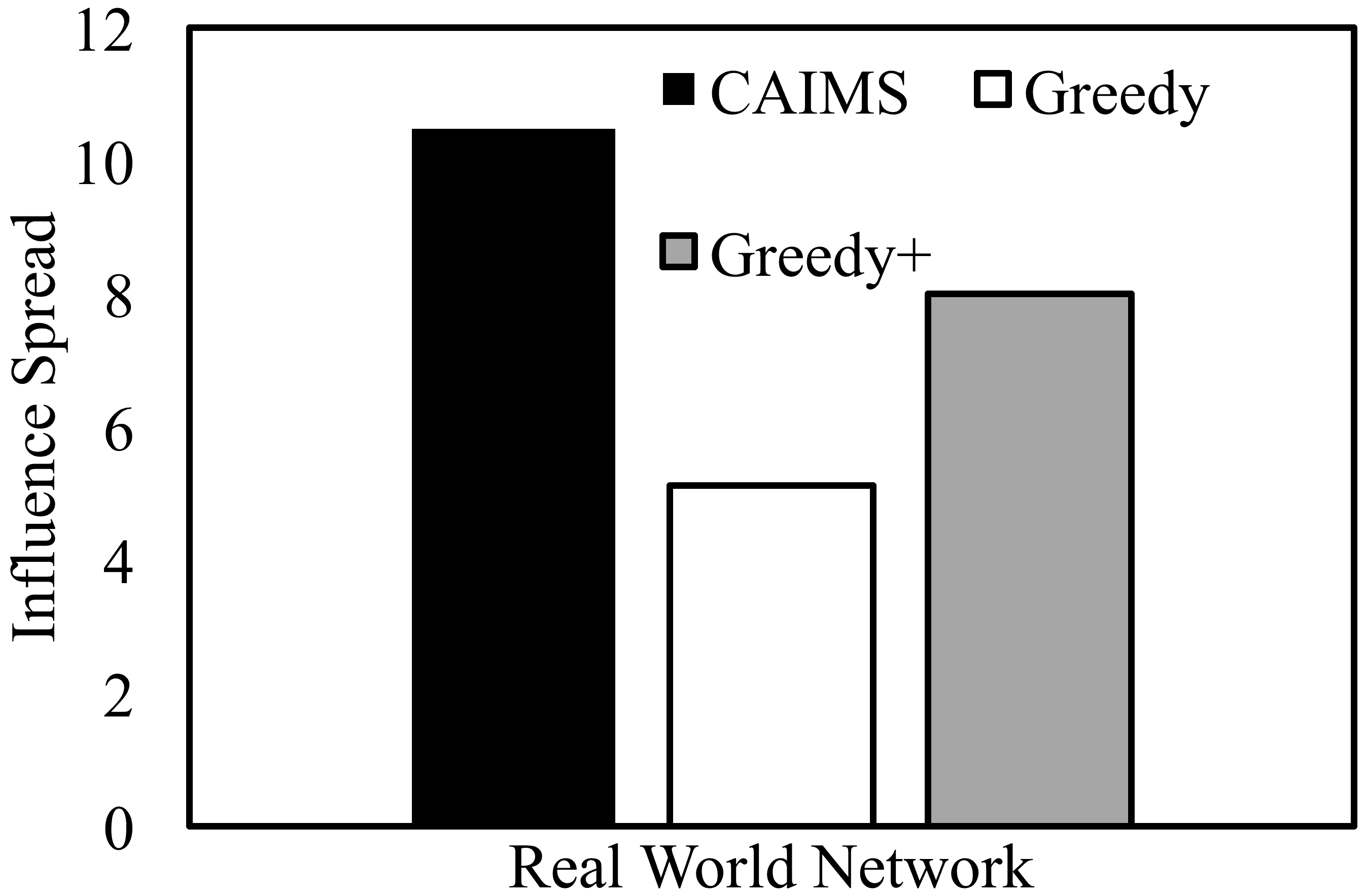}\label{fig:6}}
\caption{Real World Experiments}
\end{figure}

\textbf{Real World Trial}
We conducted a real-world feasibility trial to test out CAIMS with a homeless shelter in Los Angeles. We enrolled 54 homeless youth from this shelter into our trial and constructed a friendship based social network for these youth (using social media contacts). The prior $\bm{\Phi}$ was constructed using field observations made by shelter officials. We then executed policies generated by CAIMS, Greedy and Greedy+ on this network ($K=4$, $Q_{max}=4$ and $L=3$) on three successive days ($T=3$) in the shelter to invite homeless youth to attend the intervention. In reality, 14 out of 18 invitations ($\sim$80\%) resulted in contingency events, which illustrates the importance of accounting for contingencies in influence maximization.  Figure \ref{fig:6} compares influence spread (in simulation) achieved by nodes in invited sets selected by CAIMS, Greedy and Greedy+. This figure shows that CAIMS is able to spread 31\% more influence as compared to Greedy and Greedy+.

\section{Conclusion}
Most previous influence maximization algorithms rely on the following assumption: seed nodes can be influenced with certainty. Unfortunately, this assumption does not hold in most real-world domains. This paper presents CAIMS, a contingency-aware influence maximization algorithm for selecting key influencers in a social network.  Specifically, this paper makes the following five contributions: (i) we propose the Contingency-Aware Influence Maximization problem and provide a theoretical analysis of the same; (ii) we cast this problem as a Partially Observable Markov Decision Process (POMDP); (iii) we propose CAIMS, a novel POMDP planner which leverages a natural action space factorization associated with real-world social networks; (iv) we provide extensive simulation results to compare CAIMS with existing state-of-the-art influence maximization algorithms; and (v) we test CAIMS in a real-world feasibility trial which confirms that CAIMS is indeed a usable algorithm in the real world. Currently, CAIMS is being reviewed by homeless youth service providers for further deployment.

\chapter{Conclusion and Future Work}
\label{chapter:conclusion}
Artificial Intelligence has made rapid advances in the last couple of decades, which has led to an enormous array of AI applications and tools that play an integral role in our society today. However, despite this exciting success, strong market forces ensure that most AI applications are developed to mostly serve people in urban areas, who are financially and geographically well placed to access these AI applications and tools for their personal benefit. Unfortunately, even in 2017, a large proportion of the world population (45\% to be precise) lives in extremely rural areas. Even more tragically, almost 80\% of the world population still lives in extreme poverty (i.e., they survive on less than USD 2.5 a day). Thus, a very large fraction of people on planet Earth do not have either financial resources, or cannot access most AI based applications. Moreover, these low-resource communities suffer from a large range of problems (such as access to healthcare, good education, nutritional food and respectful employment, among others), which have not been tackled by Artificial Intelligence and Computer Science as much.

This thesis takes a forward step towards solving challenges faced by some low-resource communities. In order to better understand these problems, this thesis is the result of long-lasting collaborations and communications with researchers at the USC School of Social Work and practitioners at several homeless youth service providers. I have been very fortunate to work directly with these domain experts, learning from their experience and expertise to improve my influence maximization solutions, and especially going on the field with them to deploy my algorithms in the real-world. While this thesis primarily uses public-health issues faced by homeless youth for motivation and exposition, the algorithms, methodological advances and insights derived from this thesis could easily follow over to other domains involving low-resource communities. In particular, it emphasizes how several challenges faced by these low-resource communities can be tackled by harnessing the real-world social networks of these communities. In the recent past, governments and non-profit organizations have utilized the power of these social networks to conduct social and behavioral interventions among low-resource communities, in order to enable positive behavioral change among these communities. This thesis focuses on how the delivery of these social and behavioral interventions can be improved via algorithmic techniques, in the presence of real-world uncertainties, constraints and challenges.

While a lot of algorithmic techniques exist in the field of influence maximization to conduct these interventions, these techniques completely fail to address the following challenges: (i) unlike previous work, most problems in real-world domains are sequential decision making problems, where influence is spread in multiple stages; (ii) there is a lot of uncertainty in social network structure, evolving network state, availability of which nodes can get influenced, and the overall influenced model, which needs to be handled in a principled manner; (iii) despite two decades of research in influence maximization algorithms, none of these previous algorithms have been tested in the real-world, which leads us to question the effectiveness of influence maximization techniques in the real-world. While adopting such simplistic assumptions is a reasonable start for developing the first generation of influence maximization algorithms, it is critical to address these aforementioned challenges in order to obtain effective strategies which not only work on paper, but also in practice.

This thesis tackles each of these three challenges by providing innovative techniques and significant methodological advances for addressing real-world challenges and uncertainties in influence maximization problems. Specifically, this thesis has the following five key contributions.

\section{Contributions}
\begin{enumerate}
\item \textbf{Definition of the DIME Problem}: Over the past twenty years, researchers have mostly looked at the standard single-stage influence maximization problem (or some slight variations), and proposed increasingly complex and sophisticated algorithms to solve that problem. Informed by lots of discussions with domain experts, this thesis proposes the Dynamic Influence Maximization under Uncertainty (or DIME) problem, which is much more representative of real-world problems faced in many domains involving low-resource communities. Further, this thesis characterizes the theoretical complexity of the DIME problem and shows that it is not amenable to standard approximation techniques.

\item \textbf{POMDP Based Algorithms for the DIME Problem}: This thesis presents two algorithms to solve the DIME problem: PSINET and HEALER. PSINET is a novel Monte Carlo (MC) sampling online POMDP algorithm which makes two significant advances over POMCP (the previous state-of-the-art). First, it introduces a novel transition probability heuristic (by leveraging ideas from social network analysis) that allows storing the entire transition probability matrix in an extremely compact manner. Second, PSINET utilizes the QMDP heuristic to enable scale-up and eliminates the search tree of POMCP. On the other hand, HEALER uses several novel heuristics. First, it uses a novel two-layered \textit{hierarchical ensembling heuristic}. Second, it uses graph partitioning techniques to partition the uncertain network, which generates partitions that minimize the edges going across partitions (while ensuring that partitions have similar sizes). Since these partitions are ``almost" disconnected, we solve each partition separately. Third, it solves the \textit{intermediate POMDP} for each partition by creating smaller-sized \textit{sampled POMDPs} (via sampling uncertain edges), each of which is solved using a novel tree search algorithm, which avoids the exponential branching factor seen in PSINET \cite{yadav2015preventing}. Fourth, it uses novel aggregation techniques to combine solutions to these smaller POMDPs rather than simple plurality voting techniques seen in previous ensemble techniques \cite{yadav2015preventing}. 

\item \textbf{First-ever real-world deployment of influence maximization algorithms}: This thesis also presents first-of-its-kind results from three real-world pilot studies, involving 173 homeless youth in Los Angeles. The pilot studies helped in answering several questions that were raised in Section \ref{sec:intro}. First, we learnt that peer-leader based interventions are indeed successful in spreading information about HIV through a homeless youth social network (as seen in Figures \ref{fig:inf-spread}). These pilot studies also helped to establish the superiority (and hence, their need) of HEALER and DOSIM -- we are using complex agents (involving POMDPs and robust optimization), and they outperform DC (the modus operandi of conducting peer-led interventions) by 160\% (Figures \ref{fig:inf-spread}, \ref{fig:behav}). The pilot studies also helped us gain a deeper understanding of how HEALER and DOSIM beat DC (shown in Figures \ref{fig:edge}, \ref{fig:comcov}, \ref{fig:com}) --  by minimizing redundant edges and exploiting community structure of real-world networks.

\item \textbf{POMDP Based Algorithm to Handle Uncertainty in Availability of Nodes}: Based on experiences faced during the pilot studies, this thesis also proposes the Contingency-Aware Influence Maximization problem and provide a theoretical analysis of the same. Further, it proposes CAIMS, an algorithm to avoid contingencies (common events when homeless youth fail to show up for the interventions). CAIMS is a novel POMDP planner which leverages a natural action space factorization associated with real-world social networks, and belief space compaction using Markov networks. Results from a real-world feasibility trial involving CAIMS are also presented, which confirms that CAIMS is indeed a usable algorithm in the real world.
\end{enumerate}

\section{Future Work}
The field of Artificial Intelligence stands at an inflection point, and there could be many different directions in which the future of AI research could unfold. My long-term vision is to push AI research in a direction where it is used to help solve the most difficult social problems facing the world today. Within this broad goal, I plan to tackle fundamental computer research challenges in areas such as \textit{multiagent systems}, \textit{reasoning with uncertainty}, \textit{multiagent learning}, \textit{optimization}, and others to model social interactions and phenomena, which can then be used to assist decision makers in the real world in critical domains such as healthcare, education, poverty alleviation \cite{yadavFarmerML}, environmental sustainability \cite{yadav2015handling,nguyen2014regret,nguyen2015making,ford2016protecting,gholami2019don,gamesec19}, etc.

\begin{figure}[h]    
    \centering
    \subfloat[PTSD Veterans: Promoting Mental Health Awareness]{
    \includegraphics[height = 0.18\textwidth]{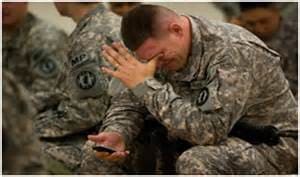}    
    }
    \subfloat[Food for Poverty Stricken: Provisioning Food at Low Costs for Poor Children ]{
    \includegraphics[height = 0.18\textwidth]{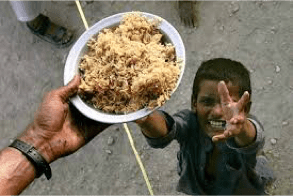}    
    }
    \subfloat[Obesity Prevention among School Children: Promoting Healthier Eating Habits]{
    \includegraphics[height = 0.18\textwidth]{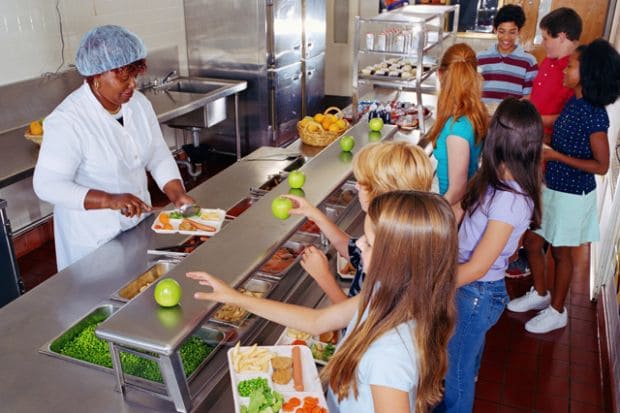}    
    }
    \caption{Potential Domains for Future Work}.
    \label{fig:example2}
\end{figure}

One example  near-term research project I will focus on is \textbf{fundamental research at the intersection of game-theory and influence maximization}, that arises from considering nodes of the social network (i.e., human beings) as self-interested agents. For example, in domains such as poverty alleviation and environment sustainability, humans (in the social network) have their own personal incentives which need to be satisfied in order for them to get influenced (and for them to spread influence). I aim to answer basic questions including how to model game theory and influence maximization in an integrated manner, defining appropriate equilibrium solution concepts, and incentivization mechanisms to achieve these notions of equilibrium. Moreover, introducing game theoretic aspects into influence maximization would require tackling a multitude of fundamental research challenges such as uncertainties about game and model parameters, learning accurate human behavior models to find optimal game theoretic strategies.

I also plan to work on introducing \textbf{spatio-temporal dynamics in influence maximization, and more generally, social network problems in the presence of data}. Most previous work in influence maximization assumes that influence spreads in the network in discrete time steps, with no regards to the spatial and temporal factors that may hinder or facilitate influence spread. As my work on activation jump model (Section 1.2) illustrates, these assumptions are not adequate to model real-world social phenomena. Moreover, in many real-world domains, the nodes in a social network (or the influencers) act in a geographical space over time. Therefore, it is important to develop models and algorithms which tackle spatio-temporal aspects such as continuity of the influence spread process over space and time, complex spatial constraints and dynamic behavior patterns  (that limit possible paths of influence spread).

\begin{singlespace} 

\bibliographystyle{plain}
\bibliography{main}

\begin{thebibliography}{10}

\bibitem{amato2015scalable}
Christopher Amato and Frans~A Oliehoek.
\newblock Scalable planning and learning for multiagent pomdps.
\newblock In {\em Proceedings of the AAAI Conference}, 2015.

\bibitem{arnold2009comparisons}
Elizabeth~Mayfield Arnold and Mary~Jane Rotheram-Borus.
\newblock Comparisons of prevention programs for homeless youth.
\newblock {\em Prevention Science}, 10(1):76--86, 2009.

\bibitem{bai2014thompson}
Aijun Bai, Feng Wu, Zongzhang Zhang, and Xiaoping Chen.
\newblock Thompson {S}ampling based {M}onte-{C}arlo {P}lanning in {POMDP}s.
\newblock In {\em Proceedings of the 24th International Conference on Automated
  Planning and Scheduling (ICAPS 2014)}, 2014.

\bibitem{banerjee2013diffusion}
Abhijit Banerjee, Arun~G Chandrasekhar, Esther Duflo, and Matthew~O Jackson.
\newblock The {D}iffusion of {M}icrofinance.
\newblock {\em Science}, 341(6144), 2013.

\bibitem{bharathi2007competitive}
Shishir Bharathi, David Kempe, and Mahyar Salek.
\newblock Competitive {I}nfluence {M}aximization in {S}ocial {N}etworks.
\newblock In {\em Internet and Network Economics}, pages 306--311. Springer,
  2007.

\bibitem{Borgs14}
Christian Borgs, Michael Brautbar, Jennifer Chayes, and Brendan Lucier.
\newblock Maximizing {S}ocial {I}nfluence in {N}early {O}ptimal {T}ime.
\newblock In {\em Proceedings of the Twenty-Fifth Annual ACM-SIAM Symposium on
  Discrete Algorithms}, SODA '14, pages 946--957. SIAM, 2014.

\bibitem{borodin2010threshold}
Allan Borodin, Yuval Filmus, and Joel Oren.
\newblock Threshold {M}odels for {C}ompetitive {I}nfluence in {S}ocial
  {N}etworks.
\newblock In {\em Internet and network economics}, pages 539--550. Springer,
  2010.

\bibitem{browne2012survey}
Cameron~B Browne, Edward Powley, Daniel Whitehouse, Simon~M Lucas, Peter~I
  Cowling, Philipp Rohlfshagen, Stephen Tavener, Diego Perez, Spyridon
  Samothrakis, and Simon Colton.
\newblock A survey of monte carlo tree search methods.
\newblock {\em IEEE Transactions on Computational Intelligence and AI in
  games}, 4(1):1--43, 2012.

\bibitem{cassandra1997incremental}
Anthony Cassandra, Michael~L Littman, and Nevin~L Zhang.
\newblock Incremental pruning: A simple, fast, exact method for partially
  observable markov decision processes.
\newblock In {\em Proceedings of the Thirteenth conference on Uncertainty in
  artificial intelligence}, pages 54--61. Morgan Kaufmann Publishers Inc.,
  1997.

\bibitem{chen2010scalable}
Wei Chen, Chi Wang, and Yajun Wang.
\newblock Scalable {I}nfluence {M}aximization for {P}revalent {V}iral
  {M}arketing in large-scale {S}ocial {N}etworks.
\newblock In {\em Proceedings of the 16th ACM SIGKDD international conference
  on Knowledge discovery and data mining}, pages 1029--1038. ACM, 2010.

\bibitem{cohen2014sketch}
Edith Cohen, Daniel Delling, Thomas Pajor, and Renato~F Werneck.
\newblock Sketch-based {I}nfluence {M}aximization and {C}omputation: {S}caling
  up with guarantees.
\newblock In {\em Proceedings of the 23rd ACM International Conference on
  Conference on Information and Knowledge Management}, pages 629--638. ACM,
  2014.

\bibitem{cointet2007}
Jean-Philippe Cointet and Camille Roth.
\newblock How {R}ealistic {S}hould {K}nowledge {D}iffusion {M}odels {B}e?
\newblock {\em Journal of Artificial Societies and Social Simulation}, 10(3):5,
  2007.

\bibitem{nchc}
National~{HCH} {C}ouncil.
\newblock {HIV/AIDS} among {P}ersons {E}xperiencing {H}omelessness: {R}isk
  {F}actors, {P}redictors of {T}esting, and {P}romising {T}esting {S}trategies.
\newblock \url{www.nhchc.org/wp-content/uploads/2011/09/InFocus_Dec2012.pdf},
  December 2012.

\bibitem{diestel2005graph}
Reinhard Diestel.
\newblock Graph theory. 2005.
\newblock {\em Grad. Texts in Math}, 2005.

\bibitem{eck2015ask}
Adam Eck and Leen-Kiat Soh.
\newblock To {A}sk, {S}ense, or {S}hare: {A}d {H}oc {I}nformation {G}athering.
\newblock In {\em Proceedings of the 2015 International Conference on
  Autonomous Agents and Multiagent Systems}, pages 367--376. International
  Foundation for Autonomous Agents and Multiagent Systems, 2015.

\bibitem{feld1991your}
Scott~L Feld.
\newblock Why your friends have more friends than you do.
\newblock {\em American Journal of Sociology}, 96(6):1464--1477, 1991.

\bibitem{ford2016protecting}
Benjamin Ford, Matthew Brown, Amulya Yadav, Amandeep Singh, Arunesh Sinha,
  Biplav Srivastava, Christopher Kiekintveld, and Milind Tambe.
\newblock Protecting the nectar of the ganga river through game-theoretic
  factory inspections.
\newblock In {\em International Conference on Practical Applications of Agents
  and Multi-Agent Systems}, pages 97--108. Springer, 2016.

\bibitem{galstyan2008influence}
Aram Galstyan and Paul~R Cohen.
\newblock Influence propagation in modular networks.
\newblock In {\em AAAI Spring Symposium: Social Information Processing}, pages
  21--23, 2008.

\bibitem{galstyan2009maximizing}
Aram Galstyan, Vahe Musoyan, and Paul Cohen.
\newblock Maximizing influence propagation in networks with community
  structure.
\newblock {\em Physical Review E}, 79(5):056102, 2009.

\bibitem{gholami2019don}
Shahrzad Gholami, Amulya Yadav, Long Tran-Thanh, Bistra Dilkina, and Milind
  Tambe.
\newblock Don't put all your strategies in one basket: Playing green security
  games with imperfect prior knowledge.
\newblock In {\em Proceedings of the 18th International Conference on
  Autonomous Agents and MultiAgent Systems}, pages 395--403. International
  Foundation for Autonomous Agents and Multiagent Systems, 2019.

\bibitem{ghosh2009leaders}
Rumi Ghosh and Kristina Lerman.
\newblock Leaders and negotiators: An influence-based metric for rank.
\newblock In {\em ICWSM}, 2009.

\bibitem{ghosh2010community}
Rumi Ghosh and Kristina Lerman.
\newblock Community detection using a measure of global influence.
\newblock In {\em Advances in Social Network Mining and Analysis}, pages
  20--35. Springer, 2010.

\bibitem{goel2012structure}
Sharad Goel, Duncan~J Watts, and Daniel~G Goldstein.
\newblock The structure of online diffusion networks.
\newblock In {\em Proceedings of the 13th ACM conference on electronic
  commerce}, pages 623--638. ACM, 2012.

\bibitem{golovin2011adaptive}
Daniel Golovin and Andreas Krause.
\newblock Adaptive {S}ubmodularity: {T}heory and {A}pplications in {A}ctive
  {L}earning and {S}tochastic {O}ptimization.
\newblock {\em Journal of Artificial Intelligence Research}, 42:427--486, 2011.

\bibitem{Gomes2019}
Carla Gomes, Thomas Dietterich, Christopher Barrett, Jon Conrad, Bistra
  Dilkina, Stefano Ermon, Fei Fang, Andrew Farnsworth, Alan Fern, Xiaoli Fern,
  Daniel Fink, Douglas Fisher, Alexander Flecker, Daniel Freund, Angela Fuller,
  John Gregoire, John Hopcroft, Steve Kelling, Zico Kolter, Warren Powell,
  Nicole Sintov, John Selker, Bart Selman, Daniel Sheldon, David Shmoys, Milind
  Tambe, Weng-Keen Wong, Christopher Wood, Xiaojian Wu, Yexiang Xue, Amulya
  Yadav, Abdul-Aziz Yakubu, and Mary~Lou Zeeman.
\newblock Computational sustainability: Computing for a better world and a
  sustainable future.
\newblock {\em Commun. ACM}, 62(9):56--65, August 2019.

\bibitem{green2013shared}
Harold~D Green, Kayla Haye, Joan~S Tucker, and Daniela Golinelli.
\newblock Shared risk: who engages in substance use with american homeless
  youth?
\newblock {\em Addiction}, 108(9):1618--1624, 2013.

\bibitem{guestrin2002multiagent}
Carlos Guestrin, Daphne Koller, and Ronald Parr.
\newblock Multiagent planning with factored mdps.
\newblock In {\em Advances in neural information processing systems}, pages
  1523--1530, 2002.

\bibitem{yadavFarmerML}
Hangzhi Guo, Alexander Woodruff, and Amulya Yadav.
\newblock Improving lives of indebted farmers using deep learning:
  Predictingagricultural produce prices using convolutional neural networks.
\newblock In {\em Thirty-Fourth IAAI Conference}, 2020.

\bibitem{networkX}
Aric~A. Hagberg, Daniel~A. Schult, and Pieter~J. Swart.
\newblock Exploring network structure, dynamics, and function using networkx.
\newblock In Ga\"el Varoquaux, Travis Vaught, and Jarrod Millman, editors, {\em
  Proceedings of the 7th Python in Science Conference}, pages 11 -- 15,
  Pasadena, CA USA, 2008.

\bibitem{hauskrecht2000value}
Milos Hauskrecht.
\newblock Value-function approximations for partially observable markov
  decision processes.
\newblock {\em Journal of Artificial Intelligence Research}, 13:33--94, 2000.

\bibitem{lilyhu}
Lily Hu, Bryan Wilder, Amulya Yadav, Eric Rice, and Milind Tambe.
\newblock Activating the breakfast club: Modeling influence spread in
  natural-world social networks.
\newblock In {\em Proceedings of the 2018 International Conference on
  Autonomous Agents \& Multiagent Systems}. International Foundation for
  Autonomous Agents and Multiagent Systems, 2018.

\bibitem{kelly1997randomised}
Jeffrey~A Kelly, Debra~A Murphy, Kathleen~J Sikkema, Timothy~L McAuliffe,
  Roger~A Roffman, Laura~J Solomon, Richard~A Winett, and Seth~C Kalichman.
\newblock Randomised, {C}ontrolled, {C}ommunity-{L}evel {HIV}-{P}revention
  {I}ntervention for {S}exual-{R}isk {B}ehaviour among {H}omosexual men in {US}
  cities.
\newblock {\em The Lancet}, 350(9090):1500, 1997.

\bibitem{kempe2003maximizing}
David Kempe, Jon Kleinberg, and {\'E}va Tardos.
\newblock Maximizing the {S}pread of {I}nfluence through a {S}ocial {N}etwork.
\newblock In {\em Proceedings of the ninth ACM SIGKDD international conference
  on Knowledge discovery and data mining}, pages 137--146. ACM, 2003.

\bibitem{kim2011network}
Myunghwan Kim and Jure Leskovec.
\newblock The {N}etwork {C}ompletion {P}roblem: {I}nferring {M}issing {N}odes
  and {E}dges in {N}etworks.
\newblock In {\em Proceedings of the SIAM Conference on Data Mining}. SIAM,
  2011.

\bibitem{kimura2006tractable}
Masahiro Kimura and Kazumi Saito.
\newblock Tractable {M}odels for {I}nformation {D}iffusion in {S}ocial
  {N}etworks.
\newblock In {\em Knowledge Discovery in Databases: PKDD 2006}, pages 259--271.
  Springer, 2006.

\bibitem{kocsis2006bandit}
Levente Kocsis and Csaba Szepesv{\'a}ri.
\newblock Bandit based {M}onte-{C}arlo {P}lanning.
\newblock In {\em Machine Learning: ECML 2006}, pages 282--293. Springer, 2006.

\bibitem{kostka2008word}
Jan Kostka, Yvonne~Anne Oswald, and Roger Wattenhofer.
\newblock Word of {M}outh: {R}umor {D}issemination in {S}ocial {N}etworks.
\newblock In {\em Structural Information and Communication Complexity}, pages
  185--196. Springer, 2008.

\bibitem{kurniawati2008sarsop}
Hanna Kurniawati, David Hsu, and Wee~Sun Lee.
\newblock Sarsop: Efficient point-based pomdp planning by approximating
  optimally reachable belief spaces.
\newblock In {\em Robotics: Science and systems}, volume 2008. Zurich,
  Switzerland., 2008.

\bibitem{lasalle2013multi}
Dominique LaSalle and George Karypis.
\newblock Multi-threaded {G}raph {P}artitioning.
\newblock In {\em Parallel \& Distributed Processing (IPDPS), 2013 IEEE 27th
  International Symposium on}, pages 225--236. IEEE, 2013.

\bibitem{lerman2016majority}
Kristina Lerman, Xiaoran Yan, and Xin-Zeng Wu.
\newblock The" majority illusion" in social networks.
\newblock {\em PloS one}, 11(2):e0147617, 2016.

\bibitem{leskovec2007cost}
Jure Leskovec, Andreas Krause, Carlos Guestrin, Christos Faloutsos, Jeanne
  VanBriesen, and Natalie Glance.
\newblock Cost-effective {O}utbreak {D}etection in {N}etworks.
\newblock In {\em Proceedings of the 13th ACM SIGKDD international conference
  on Knowledge discovery and data mining}, pages 420--429. ACM, 2009.

\bibitem{littman1995learning}
Michael~L. Littman, Anthony~R. Cassandra, and Leslie~Pack Kaelbling.
\newblock Learning {P}olicies for {P}artially {O}bservable {E}nvironments:
  {S}caling {U}p.
\newblock In {\em International Conference on Machine Learning (ICML)}, 1995.

\bibitem{littman1996algorithms}
Michael~Lederman Littman.
\newblock Algorithms for sequential decision making.
\newblock {\em Brown University Technical Report}, 1996.

\bibitem{medley2009effectiveness}
Amy Medley, Caitlin Kennedy, Kevin O'Reilly, and Michael Sweat.
\newblock Effectiveness of peer education interventions for hiv prevention in
  developing countries: a systematic review and meta-analysis.
\newblock {\em AIDS education and prevention: official publication of the
  International Society for AIDS Education}, 21(3):181, 2009.

\bibitem{milburn2009adolescents}
Norweeta~G Milburn, Eric Rice, Mary~Jane Rotheram-Borus, Shelley Mallett,
  Doreen Rosenthal, Phillip Batterham, Susanne~J May, Andrea Witkin, and Naihua
  Duan.
\newblock Adolescents {E}xiting {H}omelessness over two years: {T}he {R}isk
  {A}mplification and {A}batement {M}odel.
\newblock {\em Journal of Research on Adolescence}, 19(4):762--785, 2009.

\bibitem{monahan1982state}
George~E Monahan.
\newblock State of the art?a survey of partially observable markov decision
  processes: theory, models, and algorithms.
\newblock {\em Management Science}, 28(1):1--16, 1982.

\bibitem{mothilal2019optimizing}
Ramaravind~Kommiya Mothilal, Amulya Yadav, and Amit Sharma.
\newblock Optimizing peer referrals for public awareness using contextual
  bandits.
\newblock In {\em Proceedings of the Conference on Computing \& Sustainable
  Societies}, pages 74--85. ACM, 2019.

\bibitem{nguyen2015making}
Thanh~H Nguyen, Francesco~M Delle~Fave, Debarun Kar, Aravind~S
  Lakshminarayanan, Amulya Yadav, Milind Tambe, Noa Agmon, Andrew~J Plumptre,
  Margaret Driciru, Fred Wanyama, et~al.
\newblock Making the most of our regrets: Regret-based solutions to handle
  payoff uncertainty and elicitation in green security games.
\newblock In {\em International Conference on Decision and Game Theory for
  Security}, pages 170--191. Springer, 2015.

\bibitem{gamesec19}
Thanh~H. Nguyen, Amulya Yadav, Branislav Bosansky, and Yu~Liang.
\newblock Tackling sequential attacks in security games.
\newblock In Tansu Alpcan, Yevgeniy Vorobeychik, John~S. Baras, and Gy{\"o}rgy
  D{\'a}n, editors, {\em Decision and Game Theory for Security}, pages
  331--351, Cham, 2019. Springer International Publishing.

\bibitem{nguyen2014regret}
Thanh~Hong Nguyen, Amulya Yadav, Bo~An, Milind Tambe, and Craig Boutilier.
\newblock Regret-based optimization and preference elicitation for stackelberg
  security games with uncertainty.
\newblock In {\em Twenty-Eighth AAAI Conference on Artificial Intelligence},
  2014.

\bibitem{paquet2005online}
S{\'e}bastien Paquet, Ludovic Tobin, and Brahim Chaib-Draa.
\newblock An {O}nline {POMDP} {A}lgorithm for {C}omplex {M}ultiagent
  {E}nvironments.
\newblock In {\em Proceedings of the fourth international joint conference on
  Autonomous agents and multiagent systems}, pages 970--977. ACM, 2005.

\bibitem{pineau2006anytime}
Joelle Pineau, Geoffrey Gordon, and Sebastian Thrun.
\newblock Anytime point-based approximations for large pomdps.
\newblock {\em Journal of Artificial Intelligence Research}, 27:335--380, 2006.

\bibitem{pomerol2000multicriterion}
Jean-Charles Pomerol and Sergio Barba-Romero.
\newblock {\em Multicriterion {D}ecision in {M}anagement: {P}rinciples and
  {P}ractice}, volume~25.
\newblock Springer Science \& Business Media, 2000.

\bibitem{poupart2011closing}
Pascal Poupart, Kee-Eung Kim, and Dongho Kim.
\newblock Closing the {G}ap: {I}mproved {B}ounds on {O}ptimal {POMDP}
  {S}olutions.
\newblock In {\em International Conference on Automated Planning and
  Scheduling}, 2011.

\bibitem{puterman2009markov}
Martin~L Puterman.
\newblock {\em Markov {D}ecision {P}rocesses: {D}iscrete {S}tochastic {D}ynamic
  {P}rogramming}.
\newblock John Wiley \& Sons, 2009.

\bibitem{rahmattalabi2019exploring}
Aida Rahmattalabi, Phebe Vayanos, Anthony Fulginiti, Eric Rice, Bryan Wilder,
  Amulya Yadav, and Milind Tambe.
\newblock Exploring algorithmic fairness in robust graph covering problems.
\newblock In {\em Advances in Neural Information Processing Systems}, pages
  15750--15761, 2019.

\bibitem{rice2010positive}
Eric Rice.
\newblock The {P}ositive {R}ole of {S}ocial {N}etworks and {S}ocial
  {N}etworking {T}echnology in the {C}ondom-using {B}ehaviors of {H}omeless
  {Y}oung {P}eople.
\newblock {\em Public health reports}, 125(4):588, 2010.

\bibitem{rice2012position}
Eric Rice, Anamika Barman-Adhikari, Norweeta~G Milburn, and William Monro.
\newblock Position-specific {HIV} risk in a {L}arge {N}etwork of {H}omeless
  {Y}ouths.
\newblock {\em American journal of public health}, 102(1):141--147, 2012.

\bibitem{rice2012}
Eric Rice, Anthony Fulginiti, Hailey Winetrobe, Jorge Montoya, Aaron Plant, and
  Timothy Kordic.
\newblock Sexuality and {H}omelessness in {L}os {A}ngeles public schools.
\newblock {\em American Journal of Public Health}, 102, 2012.

\bibitem{rice2013should}
Eric Rice and Harmony Rhoades.
\newblock How {S}hould {N}etwork-based {P}revention for {H}omeless {Y}outh be
  {I}mplemented?
\newblock {\em Addiction}, 108(9):1625, 2013.

\bibitem{rice2012mobilizing}
Eric Rice, Eve Tulbert, Julie Cederbaum, Anamika~Barman Adhikari, and
  Norweeta~G Milburn.
\newblock Mobilizing {H}omeless {Y}outh for {HIV} {P}revention: a {S}ocial
  {N}etwork {A}nalysis of the {A}cceptability of a face-to-face and {O}nline
  {S}ocial {N}etworking {I}ntervention.
\newblock {\em Health education research}, 27(2):226, 2012.

\bibitem{rice2018piloting}
Eric Rice, Amanda Yoshioka-Maxwell, Robin Petering, Laura Onasch-Vera, Jaih
  Craddock, Milind Tambe, Amulya Yadav, Bryan Wilder, Darlene Woo, Hailey
  Winetrobe, et~al.
\newblock Piloting the use of artificial intelligence to enhance hiv prevention
  interventions for youth experiencing homelessness.
\newblock {\em Journal of the Society for Social Work and Research},
  9(4):551--573, 2018.

\bibitem{Katt17ICML}
Katt Sammie, Frans Oliehoek, and Christopher Amato.
\newblock Learning in pomdps with {Monte Carlo} tree search.
\newblock In {\em ICML17}, pages 1819--1827, aug 2017.

\bibitem{schneider2015new}
John~A Schneider, A~Ning Zhou, and Edward~O Laumann.
\newblock A new {HIV} {P}revention {N}etwork {A}pproach: {S}ociometric {P}eer
  {C}hange {A}gent {S}election.
\newblock {\em Social Science \& Medicine}, 125:192--202, 2015.

\bibitem{seshadhri2012community}
C~Seshadhri, Tamara~G Kolda, and Ali Pinar.
\newblock Community {S}tructure and {S}cale-free {C}ollections of
  {E}rd{\H{o}}s-{R}{\'e}nyi {G}raphs.
\newblock {\em Physical Review E}, 85(5):056109, 2012.

\bibitem{silver2010monte}
David Silver and Joel Veness.
\newblock Monte-{C}arlo {P}lanning in large {POMDP}s.
\newblock In {\em Advances in Neural Information Processing Systems}, pages
  2164--2172, 2010.

\bibitem{singer2012win}
Yaron Singer.
\newblock How to win friends and influence people, truthfully: influence
  maximization mechanisms for social networks.
\newblock In {\em Proceedings of the fifth ACM international conference on Web
  search and data mining}, pages 733--742. ACM, 2012.

\bibitem{smallwood1973optimal}
Richard~D Smallwood and Edward~J Sondik.
\newblock The optimal control of partially observable markov processes over a
  finite horizon.
\newblock {\em Operations research}, 21(5):1071--1088, 1973.

\bibitem{zmdp}
Trey Smith.
\newblock {ZMDP} {S}oftware for {POMDP/MDP} {P}lanning.
\newblock \url{www.longhorizon.org/trey/zmdp/}, March 2013.

\bibitem{somani2013despot}
Adhiraj Somani, Nan Ye, David Hsu, and Wee~Sun Lee.
\newblock {DESPOT}: {O}nline {POMDP} {P}lanning with {R}egularization.
\newblock In {\em Advances In Neural Information Processing Systems}, pages
  1772--1780, 2013.

\bibitem{soriano2016simultaneous}
Leandro Soriano~Marcolino, Aravind Lakshminarayanan, Amulya Yadav, and Milind
  Tambe.
\newblock Simultaneous influencing and mapping social networks.
\newblock 2016.

\bibitem{spaan2005perseus}
Matthijs~TJ Spaan and Nikos Vlassis.
\newblock Perseus: {R}andomized {P}oint-based {V}alue {I}teration for {POMDP}s.
\newblock {\em Journal of artificial intelligence research}, pages 195--220,
  2005.

\bibitem{tang2014influence}
Youze Tang, Xiaokui Xiao, and Yanchen Shi.
\newblock Influence maximization: {N}ear-{O}ptimal {T}ime {C}omplexity meets
  {P}ractical {E}fficiency.
\newblock In {\em Proceedings of the 2014 ACM SIGMOD international conference
  on Management of data}, pages 75--86. ACM, 2014.

\bibitem{Tsai12a}
Jason Tsai, Thanh~H. Nguyen, and Milind Tambe.
\newblock Security {G}ames for {C}ontrolling {C}ontagion.
\newblock In {\em Conference on Artificial Intelligence (AAAI)}, 2012.

\bibitem{tsai2013bayesian}
Jason Tsai, Yundi Qian, Yevgeniy Vorobeychik, Christopher Kiekintveld, and
  Milind Tambe.
\newblock Bayesian {S}ecurity {G}ames for {C}ontrolling {C}ontagion.
\newblock In {\em Social Computing (SocialCom), 2013 International Conference
  on}, pages 33--38. IEEE, 2013.

\bibitem{unaids}
UNAIDS.
\newblock Report on the {G}lobal {AIDS} {E}pidemic.
\newblock
  \url{www.unaids.org/en/resources/campaigns/20121120_globalreport2012/globalreport/},
  March 2012.

\bibitem{valente2012network}
Thomas~W Valente.
\newblock Network interventions.
\newblock {\em Science}, 337(6090):49--53, 2012.

\bibitem{valente2007identifying}
Thomas~W Valente and Patchareeya Pumpuang.
\newblock Identifying {O}pinion {L}eaders to {P}romote {B}ehavior {C}hange.
\newblock {\em Health Education \& Behavior}, 2007.

\bibitem{ver2013information}
Greg Ver~Steeg and Aram Galstyan.
\newblock Information-theoretic measures of influence based on content
  dynamics.
\newblock In {\em Proceedings of the sixth ACM international conference on Web
  search and data mining}, pages 3--12. ACM, 2013.

\bibitem{wilder122018maximizing}
Bryan Wilder, Nicole Immorlica, Eric Rice, and Milind Tambe.
\newblock Maximizing influence in an unknown social network.
\newblock In {\em Proceedings of the AAAI Conference}. AAAI Press, 2018.

\bibitem{wilder2018end}
Bryan Wilder, Laura Onasch-Vera, Juliana Hudson, Jose Luna, Nicole Wilson,
  Robin Petering, Darlene Woo, Milind Tambe, and Eric Rice.
\newblock End-to-end influence maximization in the field.
\newblock In {\em Proceedings of the 2018 International Conference on
  Autonomous Agents \& Multiagent Systems}. International Foundation for
  Autonomous Agents and Multiagent Systems, 2018.

\bibitem{wilder2017uncharted}
Bryan Wilder, Amulya Yadav, Nicole Immorlica, Eric Rice, and Milind Tambe.
\newblock Uncharted but not uninfluenced: Influence maximization with an
  uncertain network.
\newblock In {\em Proceedings of the 16th Conference on Autonomous Agents and
  MultiAgent Systems}, pages 1305--1313. International Foundation for
  Autonomous Agents and Multiagent Systems, 2017.

\bibitem{yadav2015preventing}
A~Yadav, L~Marcolino, E~Rice, R~Petering, H~Winetrobe, H~Rhoades, M~Tambe, and
  H~Carmichael.
\newblock Preventing {HIV} {S}pread in {H}omeless {P}opulations {U}sing
  {PSINET}.
\newblock In {\em Proceedings of the Twenty-Seventh Conference on Innovative
  Applications of Artificial Intelligence (IAAI-15)}, 2015.

\bibitem{yadav-ideas}
Amulya Yadav, Hau Chan, Albert Jiang, Eric Rice, Ece Kamar, Barbara Grosz, and
  Milind Tambe.
\newblock Pomdps for {A}ssisting {H}omeless {S}helters--{C}omputational and
  {D}eployment {C}hallenges.
\newblock In {\em International Workshop on Issues with Deployment of Emerging
  Agent-Based Systems}, pages 67--87. Springer, 2016.

\bibitem{yadav2017ibm}
Amulya Yadav, Hau Chan, Albert~Xin Jiang, Haifend Xu, Eric Rice, Robin
  Petering, and Miland Tambe.
\newblock Using social networks to raise hiv awareness among homeless youth.
\newblock {\em IBM Journal of Research and Development}, 61(6):4--1, 2017.

\bibitem{yadav2017maximizing}
Amulya Yadav, Hau Chan, Albert~Xin Jiang, Haifeng Xu, Eric Rice, and Milind
  Tambe.
\newblock Maximizing awareness about hiv in social networks of homeless youth
  with limited information.
\newblock In {\em Proceedings of the 26th International Joint Conference on
  Artificial Intelligence}, pages 4959--4963. AAAI Press, 2017.

\bibitem{yadav2016using}
Amulya Yadav, Hau Chan, Albert Xin~Jiang, Haifeng Xu, Eric Rice, and Milind
  Tambe.
\newblock Using social networks to aid homeless shelters: Dynamic influence
  maximization under uncertainty.
\newblock In {\em Proceedings of the 2016 International Conference on
  Autonomous Agents \& Multiagent Systems}, pages 740--748. International
  Foundation for Autonomous Agents and Multiagent Systems, 2016.

\bibitem{yadav2016psinet}
Amulya Yadav, Leandro~Soriano Marcolino, Eric Rice, Robin Petering, Hailey
  Winetrobe, Harmony Rhoades, Milind Tambe, and Heather Carmichael.
\newblock Psinet: assisting hiv prevention amongst homeless youth by planning
  ahead.
\newblock {\em AI magazine}, 37(2):47--62, 2016.

\bibitem{yadav2015handling}
Amulya Yadav, TH~Nguyen, Francesco Delle~Fave, Milind Tambe, Noa Agmon, Manish
  Jain, Widodo Ramono, and Timbul Batubara.
\newblock Handling payoff uncertainty with adversary bounded rationality in
  green security domains.
\newblock In {\em IJCAI-15 Workshop on Algorithmic Game Theory (AGT-15)}, 2015.

\bibitem{yadav2018}
Amulya Yadav, Ritesh Noothigattu, Eric Rice, Laura Onasch-Vera, Leandro
  Soriano~Marcolino, and Milind Tambe.
\newblock Please be an influencer? contigency-aware influence maximization.
\newblock In {\em Proceedings of the 2018 International Conference on
  Autonomous Agents \& Multiagent Systems}. International Foundation for
  Autonomous Agents and Multiagent Systems, 2018.

\bibitem{yadav-explain}
Amulya Yadav, Aida Rahmattalabi, Ece Kamar, Phebe Vayanos, Milind Tambe, and
  Venil~Loyd Noronha.
\newblock Explanation systems for influence maximization algorithms.
\newblock In {\em International Workshop on Social Influence Analysis}.
  Springer, 2017.

\bibitem{yadav2017explanation}
Amulya Yadav, Aida Rahmattalabi, Ece Kamar, Phebe Vayanos, Milind Tambe, and
  Venil~Loyd Noronha.
\newblock Explanation systems for influence maximization algorithms.
\newblock In {\em SocInf@ IJCAI}, pages 8--19, 2017.

\bibitem{yadav2018ijcai}
Amulya Yadav, Bryan Wilder, Eric Rice, Robin Petering, Jaih Craddock, Amanda
  Yoshioka-Maxwell, Mary Hemler, Laura Onasch-Vera, Milind Tambe, and Darlene
  Woo.
\newblock Bridging the gap between theory and practice in influence
  maximization: Raising awareness about hiv among homeless youth.
\newblock In {\em Proceedings of the 26th International Joint Conference on
  Artificial Intelligence}, pages 4959--4963. AAAI Press, 2017.

\bibitem{yadav2017influence}
Amulya Yadav, Bryan Wilder, Eric Rice, Robin Petering, Jaih Craddock, Amanda
  Yoshioka-Maxwell, Mary Hemler, Laura Onasch-Vera, Milind Tambe, and Darlene
  Woo.
\newblock Influence maximization in the field: The arduous journey from
  emerging to deployed application.
\newblock In {\em Proceedings of the 16th Conference on Autonomous Agents and
  MultiAgent Systems}, pages 150--158. International Foundation for Autonomous
  Agents and Multiagent Systems, 2017.

\bibitem{yan2011influence}
Qiuling Yan, Shaosong Guo, and Dongqing Yang.
\newblock Influence {M}aximizing and {L}ocal {I}nfluenced {C}ommunity
  {D}etection based on {M}ultiple {S}pread {M}odel.
\newblock In {\em Advanced Data Mining and Applications}, pages 82--95.
  Springer, 2011.

\bibitem{zhang2001speeding}
Nevin~Lianwen Zhang and Weihong Zhang.
\newblock Speeding up the convergence of value iteration in partially
  observable markov decision processes.
\newblock {\em Journal of Artificial Intelligence Research}, 14:29--51, 2001.

\end{thebibliography}

\end{singlespace}  

%

\end{document}